\newtheorem{property}{Property}
\newcommand{\bbP}{\mathbb{P}}
\newcommand{\bbR}{\mathbb{R}}
\newcommand{\bbE}{\mathbb{E}}
\newcommand{\calF}{\mathcal{F}}
\newcommand{\calT}{\mathcal{T}}
\newcommand{\calI}{\mathcal{I}}
\newcommand{\calC}{\mathcal{C}}
\newcommand{\calD}{\mathcal{D}}
\newcommand{\calE}{\mathcal{E}}
\newcommand{\calN}{\mathcal{N}}
\newcommand{\calG}{\mathcal{G}}
\newcommand{\calS}{\mathcal{S}}
\newcommand{\calY}{\mathcal{Y}}
\newcommand{\calL}{\mathcal{L}}
\newcommand{\calA}{\mathcal{A}}
\newcommand{\zero}{\mathbf{0}}
\newcommand{\diag}{\mbox{diag}}
\newcommand{\one}{\mathbf{1}}
\newcommand{\Var}{\mathrm{Var}}
\newcommand{\Cov}{\mathrm{Cov}}
\newcommand{\Clip}{\mathrm{Clip}}
\newcommand{\naive}{\textsc{Naive}}
\newcommand{\consistent}{\textsc{Consistent}}
\newcommand{\oracle}{\textsc{Oracle}}
\newcommand{\feasible}{\textsc{Feasible}}
\newcommand{\Tr}{\mathrm{Tr}}
\newcommand{\ci}{{\perp\!\!\!\perp}}
\newcommand{\y}{\ensuremath{Y}} 
\newcommand{\bb}[1]{\left[#1\right]}
\newcommand{\bp}[1]{\left(#1\right)}
\newcommand{\bc}[1]{\left\{#1\right\}}
\newcommand{\bn}[1]{\left\|#1\right\|}
\newcommand{\ba}[1]{\left|#1\right|}
\newcommand{\br}[1]{\left\langle#1\right\rangle}
\newenvironment{model}[1][]
  {%
   \begin{algorithm}[#1]
   }
  {\end{algorithm}}
\renewenvironment{proof}{
  \par\noindent\textbf{Proof.} \ignorespaces
}{
  \hfill $\square$\par
}
\begin{document}

\RUNTITLE{Post Reinforcement Learning Inference}

\TITLE{Post Reinforcement Learning Inference}

\ARTICLEAUTHORS{%
\AUTHOR{Vasilis Syrgkanis\thanks{Stanford University, \EMAIL{vsyrgk@stanford.edu}. Vasilis Syrgkanis was supported by NSF Award IIS-2337916.}\quad Ruohan Zhan\thanks{University College London, \EMAIL{ruohan.zhan@ucl.ac.uk}}}



\RUNAUTHOR{Syrgkanis and Zhan}
}

\ABSTRACT{%
We study estimation and inference using data collected by reinforcement learning (RL) algorithms. These algorithms adaptively experiment by interacting with individual units over multiple stages, updating their strategies based on past outcomes. Our goal is to evaluate a counterfactual policy after data collection and estimate structural parameters, such as dynamic treatment effects, that support credit assignment and quantify the impact of early actions on final outcomes. These parameters can often be defined as solutions to moment equations, motivating moment-based estimation methods developed for static data. In RL settings, however, data are often collected adaptively under nonstationary behavior policies. As a result, standard estimators fail to achieve asymptotic normality due to time-varying variance. We propose a weighted
generalized method of moments (GMM)
approach that uses adaptive weights to stabilize this variance. We characterize weighting schemes that ensure consistency and asymptotic normality of the weighted
GMM estimators,
enabling valid hypothesis testing and uniform confidence region construction. Key applications include dynamic treatment effect estimation and dynamic off-policy evaluation.
}

\KEYWORDS{reinforcement learning, 
GMM estimators,
adaptive weighting, asymptotic normality, strong Gaussian approximation, hypothesis testing, dynamic treatment effects, dynamic off-policy evaluation }

\maketitle

\section{Introduction}

Adaptive data collection has become a staple of the digital economy. Most major digital platforms invoke adaptive experimentation algorithms to optimize their service. Adaptive experimentation allows one to progressively update their experimentation strategy and leads to efficient sample usage \citep{chu2011contextual,agrawal2013thompson} and efficient use of the experimentation budget for increased hypothesis testing power \citep{russo2016simple}. Moreover, frequent adaptive experimentation is starting to become adopted in other domains such as personalized healthcare \citep{murphy2005experimental,offer2021adaptive}.
The popularity of adaptive experiments has increased the availability of data collected from such designs. However, adaptive data collection raises many new research challenges, especially in the case of post-collection statistical analysis. For instance, constructing confidence intervals and testing statistical significance for alternative candidate policies or for structural parameters like average treatment effects, from adaptively collected data, has been shown to be theoretically challenging.

Prior work has mostly focused on the bandit setup, where at each time the experimenter only interacts with units sampled from the environment once and observes the immediate outcome~\citep{deshpande2018accurate,zhang2021statistical,hadad2021confidence,bibaut2021post,zhan2021off}. This unfortunately cannot accommodate many applications including adaptive clinical trials and dynamic treatment regimes, where a patient often receives multiple rounds of treatments to improve the outcome~\citep{murphy2003optimal,lei2012smart}, or a digital platform interacts with their users over a sequence of multiple page visits~\citep{chen2019top}.  

The goal of our work is to address this largely un-explored area of post-adaptive data collection inference, from such adaptive experiments that, within each experiment phase, involve multiple interactions with the same treated unit. In particular,  we consider data that are collected from  reinforcement learning~(RL) algorithms \citep{sutton1998introduction} and provide  estimation and inference for structural parameters of interest under semi-parametric assumptions~\citep{neyman1979c,laan2003unified,chernozhukov2022locally}.

In   RL, an adaptive experimentation algorithm (from now on ``the agent'') interacts with units that are sampled independently and identically (i.i.d.) from the environment.   For each unit, the agent observes an initial state, then applies multiple treatments that cause state transitions, and finally observes an outcome at the end.\footnote{We focus on cases where only the final outcome is observed. Our framework can be generalized to settings where intermediate outcomes are also revealed.} We term the sequence of interactions with a unit as an ``episode''.  For example, consider an educational platform aiming to encourage users to enroll in a course. The platform, acting as the agent, might first interact with a user on the homepage, then guide them to an enrollment page, and finally to a payment page, with each step representing a state transition. Throughout these stages, the platform can experiment with different page designs (treatments) to achieve the final goal of course enrollment.


The agent's behavior policy is typically adapted over time to incorporate learning from past realizations. In particular, we focus on a broad class of adaptive data collection mechanisms, where the agent interacts with a batch of units before updating its policy for subsequently arriving units. This setting encompasses the \emph{episodic reinforcement learning} framework \citep{neu2020unifying} and generalizes the classical bandit framework, in which the agent interacts with each unit for only a single round in an adaptive manner.

Such adaptivity progressively improves the agent's performance  but results in a nonstationary behavior policy and introduces dependence between  observations. 
As a result, we cannot simply view the data from each episode as i.i.d.~samples and pass them to traditional estimation pipelines \citep{lewis2020double}. 
Even when employing estimation techniques designed to address time-series correlation, these often require stationarity—a condition not met by most adaptive experimentation processes.
This nonstationarity causes  evolving discrepancy between the behavior and target policies, an issue known as changing ``overlap'' and resulting in varied estimation variances for sequentially collected samples \citep{imbens2004nonparametric,hadad2021confidence}. 
Therefore, averaging samples uniformly can be suboptimal, resulting in significant variance and a non-normal asymptotic distribution, which complicates post-experimental inference.
This problem, evident even in single-interaction bandit scenarios, has prompted recent literature to suggest re-weighting samples to stabilize time-varying variance,  such that the resulting estimators are consistent and asymptotically normal~\citep{deshpande2018accurate,hadad2021confidence,zhang2021statistical,zhan2021off,bibaut2021post}.

RL settings further complicate the estimation problem in two ways. 
First, exogenous random shocks during state transitions within an episode affect subsequent states and are correlated with future treatments; these shocks, being unobservable, cause  unmeasured confounding~\citep{robins1986new,robins2004optimal,chakraborty2013semi}.
To address the identification issue, we follow the semi-parametric  inference literature,  formulating the dynamic treatment effect estimation problem as estimating the structural parameters in a structural nested mean model~\citep{robins2004optimal,lok2012impact,vansteelandt2016revisiting}. 
In our model, each stage is linked to a specific structural parameter that quantifies the treatment effect at that stage. We demonstrate that these parameters are the solutions to stage-wise moment equations, derived through $g$-estimation and constructed in reverse order, from the last stage to the first. Given the nonstationarity of RL data, which often lead to time-varying estimates across units as discussed above, we apply nonuniform and adaptive weights to the unit samples to stabilize these variances. 

Second, estimation with RL data often faces the ``curse of horizon'' challenge, where the overlap between behavior and target policies deteriorates across episodic stages, leading to accumulated estimation variance.
 Prior work suggests heuristics like weight clipping, which---while effective in controlling variance---introduce a small bias \citep{precup2000eligibility,chen2019top}.
However, our approach, through nuanced modeling, ensures that moment equations across stages remain uncorrelated with zero covariance. This allows for the application of stage-wise adaptive weights, which stabilize the variance at each stage based on information available up to that point, effectively circumventing the problem of variance accumulation.


This work also enriches the inference literature when using adaptively collected data by providing  results for GMM-estimation. Prior work mostly addresses inference in the context of M-estimation \citep{deshpande2018accurate,zhang2021statistical}, i.e. parameters that can be defined as the minimizers of a population loss function. However, structural parameters in problems defined by moment conditions, which often arise in the dynamic treatment regime settings, cannot be phrased as M-estimators (rather they can be better thought as instrumental variable problems).

\subsection{Our Contributions}

Our main contributions are outlined as follows.
First, we propose a weighted GMM-estimator using adaptive RL data to estimate   structural parameters of counterfactual policy values and dynamic treatment effects.
These parameters are defined by moment conditions within a dynamic treatment regime, modeled through a semiparametric structure nested mean model for each unit's episodic data.
 Specifically, for a   target counterfactual policy $\pi^*$, we aim to estimate its policy value  $\theta_0^*$---the expected final outcome $Y$ under policy $\pi$. We further define $\theta_j^*$ as the stage-wise dynamic treatment effect of treatment $T_j$ at stage $j$, reflecting the expected change in the final outcome $Y$ due to $T_j$, assuming future treatments follow $\pi^*$. Let $L$ be total number of stages in an episode for each unit.  Our objective is to derive the structural parameter set $\theta^*=(\theta_0^*, \theta_1^*,\dots, \theta_L^*)$ from RL data, capturing both the policy value and stage-wise dynamic treatment effects.

We hereby  illustrate the core concept of our approach under the episodic RL setup.
We consider $n$ sequentially collected episodes $\{Z_1, \dots, Z_n\}$. Let  $\calF_i:=\sigma(Z_{1:i-1})$ represent the $\sigma$-algebra with respect to which the random variables $Z_{1:i-1}$ are measurable.
The true parameter vector $\theta^*$ satisfies the moment condition $\mathbb{E}[m(Z_i;\theta^*) \mid \mathcal{F}_i] = 0$, where $m(\cdot)$ denotes bounded linear moment functions derived from our semiparametric model. To enhance the stability and efficiency of the estimation, we adopt the generalized method of moments (GMM) throughout this paper.\footnote{The results presented here can be directly extended to Z-estimation.} The Jacobian of the moment function, denoted by $J_i := \nabla_{\theta} m(Z_i; \theta^*)$, is nonstationary across episodes due to adaptivity. As a result, the traditional (unweighted) GMM estimator, which is based on minimizing the norm of empirical moment equations, may exhibit unstable asymptotic behavior. To address this, we introduce adaptive weights $H_i$, measurable with respect to $\mathcal{F}_i$, to stabilize the estimation.

Let $\hat{\theta}_n$ denote the weighted GMM estimator that minimizes $\bn{\frac{1}{n}\sum_{i=1}^n H_i m (Z_i;\theta)}_A^2$, where $A$ is a positive-definite matrix and $\|\cdot\|_A$ is the norm induced by $A$. When $\hat{\theta}_n$ lies in the interior of $\Theta$, where $\Theta$ is the bounded parameter space of $\theta^*$, it satisfies the first-order condition: $\{\frac{1}{n}\sum_{i=1}^n H_iJ_i\}A\{\frac{1}{n}\sum_{i=1}^n H_im (Z_i;\hat\theta_n)\}=\zero$. 
By Taylor expansion, we have:
\begin{equation}
\label{eq:intro}
      B_n^\top A B_n\, (\hat{\theta}_n-\theta^*)=B_n^\top A S_n, \quad\text{where}\quad B_n=-\frac{1}{n}\sum_{i=1}^nH_i\,J_i, \quad S_n =\frac{1}{n}\sum_{i=1}^n H_i\,m(Z_i;\theta^*).
\end{equation}
    The property of $\hat{\theta}_n$ thus depends on the behavior of $B_n$ and $S_n$, both of which can be stabilized by the choice of weights $\{H_i\}$.

Our second contribution is to identify generic weighting schemes to achieve  consistency and asymptotic normality of the weighted GMM-estimator $\hat{\theta}_n$.
The choice of weights $\{H_i\}$ should address two critical aspects: ensuring the normalizing matrix $B_n^\top A B_n$ in \eqref{eq:intro} remains well-posed to prevent the explosion of the estimated parameter $\hat{\theta}_n$, and   the right-hand side $B_n^\top A S_n$ converges to a Gaussian distribution for asymptotic normality. 
 We show that $S_n$
is the sum of a martingale difference sequence, and we establish its convergence by leveraging recent advances in martingale limiting theorems  under a crucial homoscedasticity assumption. 
We note that simply providing a central limit theorem for \eqref{eq:intro} is not enough to achieve inference results on policy value $\theta_0^*$ or dynamic treatment effect $\theta_j^*$, since the normalizing matrix $B_n^\top A B_n$ is likely to not concentrate by the sheer nature of adaptivity. 
Addressing this, we adapt recent work in \cite{cattaneo2022yurinskii}, which   develops strong Gaussian approximation results for martingale data,  to our context and show that \eqref{eq:intro} converges to a Gaussian distribution at a uniform  rate. This allows us to invert the normalizing matrix $B_n^\top A B_n$ and approximate the estimation error, $\hat{\theta}_n-\theta^*$, with a Gaussian distribution, paying the way for inference results that are the primary focus of this paper. 

Our third contribution is to substantiate the generic weighting schemes and offer specific  weighting choices to achieve consistency and asymptotic normality  for  a wide range of RL algorithms with polynomially decaying exploration rates.
We focus on bilinear parametrization, which includes  common setups such as categorical treatment and polynomial scalar treatment. We show that the adaptive weights for consistency can be directly derived from the collected data. 
Further, we identify the ``oracle weights'' to achieve strong Gaussian approximation. While these  weights
may require additional estimations, 
 the necessary quantities to be estimated only depend on the state transition dynamics, which are independent of the behavior policy; this allows for their consistent estimation through online regression techniques, which we term ``feasible weights''.
We prove that the GMM-estimator, when applied with these feasible weights, 
 preserves strong Gaussian approximation, thereby facilitating  post-RL inference.

Finally, we apply our estimation and inference framework to high-dimensional  Markovian models.
We offer estimation guarantees for feasible weights and show that the adaptively weighted GMM-estimator, when using  these feasible weights, achieves strong Gaussian approximation.  This approximation enjoys a uniform convergence rate of $O(n^{-(1-L\alpha)/10})$, where $L$ represents the episode length, and $\alpha$ indicates the exploration decaying rate of the RL agent. 
We further substantiate our theoretical guarantees with numerical evidence.
Our findings show that the standard GMM-estimator, when unweighted, deviates from asymptotic normality, resulting in either insufficient coverage or overly conservative inferences for structural parameters. Conversely, our weighted GMM-estimator, applied with feasible weights, consistently achieves near-perfect coverage, tighter confidence intervals, and more precise estimates for evaluation policy values. Its performance closely matches that of estimations under oracle weights, which, though ideal, require inaccessible true structural parameters and are generally impractical.
Furthermore, we demonstrate our method's robustness under model misspecification,  employing a polynomial model to approximate the true exponential model within our semiparametric framework. Even under misspecification, our weighted GMM-estimator with feasible weights outperforms the standard approach, offering more accurate policy value estimates and maintaining near-nominal coverage as approximation complexity increases. This robustness highlights our method's suitability for real-world scenarios, where model misspecification is often inevitable.

\section{Setup}
This section establishes our problem framework by detailing the data generating process. We start by describing the stochastic control process, which involves multiple stages of interaction with individual units. Subsequently, we define the  structural parameter estimation problem within this framework.

\subsection{Episodic Potential Outcome}
We start with describing the stochastic control process to roll out an episode for a unit and the corresponding potential outcome. 
Consider each unit  starts from an  initial state~$S_1$ i.i.d.~sampled from a fixed distribution $P_S$. 
An RL agent then runs an episode of $L$ stages for each unit.
At each stage $j \in [1:L]$, the agent observes the unit’s current state $S_j \in \calS \subset \mathbb{R}^{d_s}$ and then assigns a treatment $T_j \in \calT \subset \mathbb{R}^{d_\tau}$ (also known as an action or intervention), where $d_s$ and $d_\tau$ denote the dimensions of the state and treatment, respectively. The treatment transitions the unit to a new state $S_{j+1}$ for the next stage. The long-term outcome of interest, denoted by $\y \in \calY \subset \mathbb{R}$, is observed only at the end of the episode $(S_1, T_1, S_2, \dots, S_L, T_L)$.
 
 Following the potential outcome framework, we  denote the potential outcome under a sequence of  treatment assignments  $\{\tau_1,\dots, \tau_L: \tau_j\in\calT\}$ as $\y(\tau_{1:L})$.\footnote{We use $A_{n_1:n_2}$ as a short hand for the set $\{A_{n_1},\dots, A_{n_2}\}$ for any indexed variable $A_i$ and any two integers $n_1\leq n_2$.} 
A dynamic treatment assignment policy $\pi = \{\pi_1, \dots, \pi_L\}$ specifies a sequence of treatment rules applied across stages of an episode. Specifically, each $\pi_j$ determines the treatment $T_j$ at stage $j$ based on the history $(S_{1:j}, T_{1:j-1})$, with $\pi_j : \calS^j \times \calT^{j-1} \rightarrow \calT$. We  overload the notation and use $\y(\pi_{1:L})$ to denote the potential outcome under policy $\pi$.
We introduce an assumption critical to our analysis: all confounders influencing both the treatment assignment and the final outcome are observed by the states.
\begin{assumption}[Sequential Conditional Exogeneity]
\label{assump:exogeneity}
The data generating process satisfies 
\begin{equation*}
 \y(T_{1:j-1}, \pi_{j:L})\, \ci\, T_j \mid T_{1:j-1}, S_{1:j}, \quad \mbox{for any}~ j=1,\dots,L.
\end{equation*}
\end{assumption}
This assumption is an extension of the unconfoundedness assumption in  causal inference literature to settings where a unit undergoes multiple stages of treatment \citep{rosenbaum1983central,robins2004optimal}. 
\begin{remark}
    A Markov decision process (as depicted in Figure~\ref{fig:cg}) satisfies Assumption~\ref{assump:exogeneity}. In contrast, a partially observable Markov decision process (POMDP) violates this assumption. By POMDP, we refer to settings where the actual treatment received depends on some latent state; this could occur if the behavior policy relies on unobserved information beyond $S_t$, or if there is non-compliance. However, such scenarios do not arise in our analysis framework, as we assume full knowledge of the behavior policy.
\end{remark}

\subsection{Inference Goal: Value of Dynamic Treatment Assignment Policy}
 
Given a policy $\pi^*$ to be evaluated, the goal in this paper is to estimate its policy value, which is defined as the mean outcome under this policy, outlined below: 
\begin{equation*}
    \theta^*_0 := \bbE\bb{Y(\pi^*_{1:L})}.
\end{equation*}
Inference on $\theta^*_0$ allows one to argue whether the observed outcome achieved under the behavior policy is statistically larger than the outcome under some simple baseline policies, and therefore whether the data collection agent (for example, from some RL algorithm) leads to any statistically significant benefits as compared to these baseline policies.

\begin{figure}
\centering 
\begin{subfigure}[t]{0.45\textwidth}
    \hspace*{-4.5cm}
    \centering
    \begin{tikzpicture}[scale=1.7]
    \node (S) at (0,0) {$S_1$};
    \node (T1) at (1,1) {$T_1$};
    \node (X) at (2,0) {$S_{2}$};
    \node (T2) at (3,1) {$T_{2}$};
    \node  (Y) at (4,0) {$Y$};
    \draw[thick, ->] (S) -- (T1);
    \draw[thick, ->] (S) -- (X);
    \draw[thick, ->] (T1) -- (X);
    \draw[thick, ->] (X) -- (Y);
    \draw[thick, ->] (X) -- (T2);
    \draw[thick, ->] (T2) -- (Y);
    \hspace{10mm}
    \end{tikzpicture}
    \hspace{-3cm}
    \caption{\footnotesize{Causal graph for observed data}}
    \label{fig:cg}
\end{subfigure}
\begin{subfigure}[t]{0.45\textwidth}
    \hspace*{-.7cm}
    \centering
    \begin{tikzpicture}[scale=1.7]
    \node (S) at (0,0) {$S_1$};
    \node (T1) at (0.67, 1) {$T_1$};
    \node (pi1) at (1.67, 1) {$\pi_1(S_1)$};
    \node (X) at (2.33,0) {$S_2(\pi_1)$};
    \node (T2) at (2.8,1) {$T_2$};
    \node (pi2) at (4,1) {$\pi_2(S_2(\pi_1))$};
    \node  (Y) at (4.63,0) {$Y(\pi_{1:2})$};
    \draw[thick, ->] (S) -- (T1);
    \draw[thick, ->] (S) -- (X);
    \draw[thick, ->, dashed] (S) to[bend right] (pi1);
    \draw[thick, ->, dashed] (pi1) -- (X);
    \draw[thick, ->] (X) -- (Y);
    \draw[thick, ->] (X) -- (T2);
    \draw[thick, ->, dashed] (X) to[bend right] (pi2);
    \draw[thick, ->, dashed] (pi2) -- (Y);
    \end{tikzpicture}
    \caption{\footnotesize{Intervention graph for counterfactual data}}
    \label{fig:swig}
\end{subfigure}
\vspace{1em}
\caption{
Causal graphs illustrating the sequential conditional exogeneity assumption for a two-period setting, showing both observed data and counterfactual data for each unit under an alternative policy $\pi$. 
Solid arrows represent the observed treatment assignments and state transitions in the collected data, 
while dotted arrows indicate counterfactual treatment assignments and state transitions under policy $\pi$.
}
\end{figure}
 
We  introduce the following definition to attribute the long-term outcome to intermediate treatments at each stage.

\begin{definition}[Blip function, \cite{robins2004optimal}] 
Given an evaluation policy $\pi^*$, let the  ``blip function'' $\gamma_j^{(\pi^*)}$ at stage $j$ be defined as: 
\begin{align*}
    \gamma_j^{(\pi^*)}(S_{1:j}, T_{1:j}) := \bbE\bb{\y\bp{T_{1:j}, \pi^*_{j+1:L}} - \y\bp{T_{1:j-1}, 0,\pi^*_{j+1:L}} \mid S_{1:j}, T_{1:j}}.
\end{align*}
\end{definition}

In other words, the blip effect $\gamma_j^{(\pi^*)}(\cdot)$, given the realized history $(S_{1:j}, T_{1:j-1})$ and conditional on assigned treatment $T_{i,j}$, captures the effect of assigning treatment $T_{i,j}$ in the current period relative to a baseline control policy $0 \in \calT$ that applies the status quo treatment, assuming future treatments follow policy $\pi^*$. From now on, 
we fix the evaluation policy $\pi^*$  and  omit the superscript in the blip function, i.e. we use $\gamma_j(\cdot)$ to denote $\gamma_j^{(\pi^*)}(\cdot)$ for notation convenience.

\begin{remark}
    This blip function  $\gamma_j(\cdot)$  quantifies the treatment effect of $T_j$ at stage $j$ on the long-term outcome $Y$. This concept is analogous to the ``advantage'' function used in the RL literature, which captures the discrepancy in Q-value functions  from taking alternative actions \citep{baird1995residual,shi2022statistically}.
\end{remark}

To identify the policy value of $\pi^*$ from the observed data, we introduce a semi-parametric structural assumption on the blip functions, 
following \cite{lewis2020double,robins2004optimal}.
\begin{assumption}[Linear blip assumption]
\label{assump:linear_blip_function}
The blip function has the  linear form:
$\gamma_j(S_{1:j}, T_{1:j}) =  \phi_j( S_{1:j}, T_{1:j})^\top \theta_j^* $
for some unknown  structural parameter  $\theta_j^*\in\bbR^d$  and a known feature mapping function $\phi_j:\calS^j\times \calT^j\rightarrow\bbR^d$ that satisfies $\phi_j(s_{1:j}, (\tau_{1:j-1},0))=\mathbf{0}$ for any $s_{1:j}\in\calS^j$ and $\tau_{1:j-1}\in\calT^{j-1}$. 
As shorthand notation, let $\Phi_j:=\phi_j(S_{1:j}, T_{1:j}) - \phi_j(S_{1:j}, T_{1:j-1}, \pi^*(S_{1:j}, T_{1:j-1}))$.
\end{assumption}
Note that even when the blip functions are assumed to be parametrically linear, we do not make any assumption on the conditional mean of the evaluation policy outcome  $\bbE\bb{\y(\pi^*_{1:L})\mid S_1}$, which can be potentially nonlinear in the initial state $S_1$. In later sections we  
 shall use moment equations to identify the unknown   structural parameter  $\theta_j$, and this assumption implies that the moment conditions are linear in target parameters. 
This assumption allows for a wide range of model classes including high-dimensional Markovian models substantiated  in Section \ref{sec:partial_linear_model}.

To this end, we can explicitly characterize the policy value $\theta^*_0$ for evaluation policy $\pi^*$, following  results in \cite{robins2004optimal} -- which we adapt to our setup and provide its proof (along with all other proofs) in the appendix for completeness.\footnote{This result can be viewed as an analogue of the \emph{performance difference lemma} that is frequently used in the reinforcement learning literature \cite{kakade2002}.}
\begin{lemma}[Expected Outcome via Blip Functions]
\label{lemma:identification_policy_value}
Under Assumptions~\ref{assump:exogeneity}~\&~\ref{assump:linear_blip_function}, it holds that 
\[
    \bbE\bb{\y(T_{1:j-1}, \pi^*_{j:L})\mid  S_{1:j}, T_{1:j}} = \bbE\bb{Y -\sum_{j'=j}^L\Phi_{j'}^\top \theta_{j'}^* \mid S_{1:j}, T_{1:j}}.
\]
Hence, the evaluation policy value is $\theta^*_0= \bbE\bb{\y-\sum_{j=1}^L\Phi_j^\top \theta_j^*}$.
\end{lemma}

\subsection{Research Question}

We conclude this section by formally describing our problem within the context formalized above. 

\paragraph{Adaptive Episodic RL Data.} 
We consider the setting where an RL agent sequentially rolls out episodes for $n$ units, each over $L$ stages. Before episode~$i$ begins, the agent updates the treatment assignment policy $\pi_i := \{\pi_{i,1}, \dots, \pi_{i,L}\}$ based on data from previous episodes. Let $Z_i := \{S_{i,1}, T_{i,1}, \dots, S_{i,L}, T_{i,L}, Y_i\}$ denote the data collected during episode~$i$. The distribution of $Z_i$ generally depends on the past realizations $Z_{1:i-1}$. The full dataset consists of observations $\{Z_i\}_{i=1}^n$. Our results directly extend to batched episodic RL settings, where the RL agent may interleave interactions with units within a batch and simultaneously progress these units through their respective episodes. This extension is possible because our technical analysis is based on filtrations defined with respect to unit-level episodes. As long as the behavior policy remains fixed during the rollout of each individual unit, similar filtrations can be constructed based on episode completion, and our results continue to hold.

We assume that the behavior policy $\pi_i$ used to generate data for episode~$i$ is known and recorded by the RL agent. Without this information, recovering the behavior policy from the data is generally difficult or even statistically impossible, since each $\pi_i$ may correspond to only one observed episode. This assumption is also common in prior work on inference with adaptively collected bandit data \citep{hadad2021confidence,zhang2021statistical}.

\paragraph{Goal.} Given the data, we aim to estimate and perform inference on the structural parameters  in the parameter space $\Theta\subset \bbR^{1+dL}$:
\[\theta^* = (\theta_0^*, \theta_1^* ,\dots,\theta_L^*)\in\Theta,\]
where $\theta_0^*$ corresponds to the value of the evaluation policy $\pi^*$, and  $\{\theta_j^* \in \bbR^d:j\in[1:L]\}$ denote the dynamic treatment effect parameters that appear in the linearization of the  blip functions. 
In particular, we assume that $\theta^*$ is an interior point of $\Theta$.

\paragraph{Notation.} 
We  use $i$ to index episode and $j$ to index stage within an episode. Define the $\sigma$-fields $\calF_{i} := \sigma(\{Z_{1:i-1}\})$  and $\calF_{i,j} := \sigma(\{Z_{1:i-1}, S_{i, 1:j-1}, T_{i,1:j-1}\})$ with the convention that $\calF_{i,1}\equiv \calF_{i}$ and $\calF_{i,0}\equiv \calF_{i}$. 
We also introduce an augmented $\sigma$-field: $\calF_{i,j}^+$, which extends $\calF_{i,j}$ by including the $\sigma$-algebra generated by the next state $S_{i,j}$; this additional information about state $S_{i,j}$ is available prior to the treatment assignment by the agent for unit $i$ at stage $j$.
We  use $\bbE_{i}[\cdot]$ as a short hand for the conditional expectation $\bbE[\cdot\mid \calF_{i}]$ and use $\bbE_{i,j}[\cdot]$ for $\bbE[\cdot \mid \calF_{i,j}]$, $\bbE_{i,j}^+[\cdot]$ for $\bbE[\cdot \mid \calF_{i,j}^+]$. 
Similarly, we define $\Var_{i}(\cdot), \Var_{i,j}(\cdot)$ and $\Var_{i,j}^+(\cdot)$ as shorthand for the conditional variances $\Var(\cdot\mid \calF_{i}), \Var(\cdot\mid \calF_{i,j})$ and $\Var(\cdot\mid \calF_{i,j}^+)$, respectively. For a random vector $X$, we define $\Var_i(X)=\bbE_i[XX^\top] - \bbE_i[X]\bbE_i[X]^\top$, $\Var_{i,j}(X)=\bbE_{i,j}[XX^\top] - \bbE_{i,j}[X]\bbE_{i,j}[X]^\top$ and $\Var_{i,j}^+(X) = \bbE_{i,j}^+[XX^\top]-\bbE_{i,j}^+[X] \bbE_{i,j}^+[X]^\top$.

\section{Identification and Estimation}
In this section, we provide conditional moment equations for identifying the target parameters. We then propose our adaptively weighted   method of  moments (AW-GMM) for  parameter estimation using the adaptively collected RL data.

\subsection{Identification via Moment Equations}
We construct moment equations to identify $\theta^*$, as formalized by the  lemma below.  This approach is known as the $g$-estimation framework, typically used in the context of structural nested mean models~\citep{robins2004optimal}. 
However, most prior work focuses on data collected by a fixed behavior policy \citep{robins2004optimal,chakraborty2013semi,lewis2020double}, while here we apply the $g$-estimation approach to adaptive RL data and show that the true parameter vector $\theta^*$ also satisfies the moment restrictions proposed in \cite{robins2004optimal}. 

\begin{lemma}[Identification of Blip Functions]
\label{lemma:identification_parameter}
For unit $i$ and stage $j$, define 
\begin{equation}
    \Phi_{i,j}:=\phi_j(S_{i, 1:j}, T_{i,1:j}) - \phi_j(S_{i, 1:j}, T_{i,1:j-1}, \pi^*(S_{i, 1:j}, T_{i,1:j-1})),
\end{equation}
and
\begin{equation}
    \label{eq:residual}
    R_{i,j}:= \y_i - \sum_{j'=j}^L \Phi_{i,j'}^\top \theta_{j'}^*.
\end{equation}
Here, $ R_{i,j}$ represents the  \textbf{residual}  by subtracting the  blip effects of the future treatments $T_{i,j:L}$ from the final outcome~$\y_i$ 
and adding the blip effects of the treatments assigned by the evaluation policy $\pi^*$.
Given  known functions $\psi_j:\calS^j\times \calT^j\rightarrow\bbR^{d'}$ with $d'\geq d$, write 
the instruments $\Psi_{i,j}:=\psi_j(S_{i,1:j}, T_{i,1:j})$, and define $\bar{\Psi}_{i,j}:=\bbE^+_{i,j}[\Psi_{i,j}]$.
Under Assumptions~\ref{assump:exogeneity}~\&~\ref{assump:linear_blip_function}, the true parameter vector $\theta^*$ satisfies:
\begin{equation}
\label{eq:conditional_moment_identification}
 \bbE_{i,j}^+\bb{ 
    R_{i,j}
    \bp{ \Psi_{i,j} - \bar{\Psi}_{i,j} } 
 }= 0, \quad \forall i\in\{1:n\}, j\in\{0:L\},
\end{equation}
with the convention that $\Psi_{i,0}=1$, $\bar{\Psi}_{i,0}=0$, and $\Phi_{i,0}=1$.

\end{lemma}

\begin{remark}
One approach to constructing instruments is to set $\Psi_{i,j} = \Phi_{i,j}$ by defining
\[\psi_j(s_{1:j}, \tau_{1:j})=\phi_j(s_{1:j}, \tau_{1:j}) - \phi_j(s_{1:j}, \tau_{1:j-1},\pi^*(s_{1:j}, \tau_{1:j-1}))\]
for any $(s_{1:j},\tau_{1:j})\in\calS^j\times \calT^j$. When additional information is available, it is possible to include more instruments (such that $\Psi_{i,j}$ has  higher dimension than $\Phi_{i,j}$) to improve estimation efficiency.
\end{remark}
The moment conditions in the above lemma can be expressed more compactly as a single vector of moment constraints. Given a unit $i$ with episodic data $Z_i$, 
define  $\beta_i:=(\beta_{i,0},\beta_{i,1},\dots, \beta_{i,L})$, where each component is given by $\beta_{i,j}=Y_i(\Psi_{i,j} -\bar \Psi_{i,j})$; define 
\begin{align*}
    J_i =- \begin{bmatrix}
  (\Psi_{i,0}-\bar\Psi_{i,0})\Phi_{i,0}^\top  
  &\quad(\Psi_{i,0}-\bar\Psi_{i,0})\Phi_{i,1}^\top &\quad\dots &\quad(\Psi_{i,0}-\bar\Psi_{i,0})\Phi_{i,L}^\top \\
\zero &\quad(\Psi_{i,1}-\bar\Psi_{i,1}) \Phi^\top_{i,1} &\quad \dots &\quad(\Psi_{i,0}-\bar\Psi_{i,0})\Phi^\top_{i,L}\\
\vdots &\quad \ddots &\quad\ddots &\quad\vdots\\
\zero &\quad \dots &\quad \zero &\quad (\Psi_{i,L}-\bar\Psi_{i,L})\Phi^\top_{i,L}
   \end{bmatrix}. 
\end{align*}
 Then
for any  parameter vector $\theta$, the moment function $m(Z_i;\theta)$ can be written   as:
\begin{equation}
    m(Z_i;\theta) =~ \beta_i + J_i \theta.
\label{eq:moment}
\end{equation}
In particular, the moment condition in \eqref{eq:moment} can be constructed directly from observed data. The key quantity, $\bar\Psi_{i,j} = \bbE_{i,j}^+[\Psi_{i,j}]$, is the conditional expectation over the treatment assignment $T_{i,j}$ under the behavior policy. Since the behavior policy is assumed to be known, this expectation can be computed exactly. As a result, the moment function $m(Z_i; \theta)$ is also computable from  the observed data.

Finally, Lemma \ref{lemma:identification_parameter} implies that the true model parameter vector $\theta^*$ solves the expected moment equation:
\begin{align}
\label{eq:moment_condition}
     \bar\beta_i+\bar J_i \theta^* =\zero, \quad  \forall i = 1,\dots, n,
\end{align}
where  $\bar\beta_i:=(\bar\beta_{i,0},\bar\beta_{i,1},\dots, \bar\beta_{i,L})$ for $\bar\beta_{i,j}=\bbE_{i,j}^+\bb{Y_i(\Psi_{i,j} -\bar \Psi_{i,j})}$, and 
\begin{align*}
   \bar{J}_i :=-  \begin{bmatrix}
  \bbE_{i,0}^+[(\Psi_{i,0}-\bar{\Psi}_{i,0})\Phi_{i,0}^\top]
  &\quad 
  \bbE_{i,0}^+[(\Psi_{i,0}-\bar{\Psi}_{i,0})\Phi_{i,1}^\top] 
  &\quad 
  \dots 
  &\quad 
  \bbE_{i,0}^+[(\Psi_{i,0}-\bar{\Psi}_{i,0})\Phi_{i,L}^\top] \\
\zero &\quad 
\bbE_{i,1}^+[(\Psi_{i,1}-\bar{\Psi}_{i,1})\Phi_{i,1}^\top] &\quad 
\dots &\quad 
\bbE_{i,1}^+[(\Psi_{i,1}-\bar{\Psi}_{i,1})\Phi_{i,L}^\top]\\
\vdots &\quad 
\ddots &\quad \ddots &\quad  \vdots\\
\zero &\quad  \dots &\quad  \zero &\quad  \bbE_{i,L}^+[(\Psi_{i,L}-\bar{\Psi}_{i,L})\Phi_{i,L}^\top]
   \end{bmatrix}.
\end{align*}
This observation will become handy in our subsequent analysis.

\subsection{Adaptively Weighted Generalized Method of Moments (AW-GMM)}
Given the moment conditions in \eqref{eq:moment_condition}, one can estimate the target parameter $\theta^*$ using moment-based methods such as Z-estimation or the generalized method of moments (GMM). To improve stability and efficiency (particularly when $\Psi$ has higher dimension than $\Phi$), we focus on GMM throughout this paper. The results also apply directly to Z-estimation. Specifically, the standard GMM estimator $\tilde{\theta}_n$ minimizes the norm of the empirical moment equations:
\begin{equation*}
   \tilde{\theta}_n \in \argmin_{\theta\in\Theta}~\bn{\frac{1}{n}\sum_{i=1}^n m (Z_i;\theta)}_A^2 =
   \bc{\frac{1}{n}\sum_{i=1}^n m (Z_i;\theta)}^\top A \bc{\frac{1}{n}\sum_{i=1}^n m (Z_i;\theta)},
\end{equation*}
where $A$ is a positive-definite  matrix and $\|\cdot\|_A$ denotes the norm induced by $A$. When $\tilde{\theta}_n$ lies in the interior of $\Theta$, it satisfies the first-order condition: $\{\frac{1}{n}\sum_{i=1}^n J_i\}A\{\frac{1}{n}\sum_{i=1}^n m (Z_i;\tilde\theta_n)\}=\zero$.

To build more intuition, consider the case where  the sample average  Jacobian $\frac{1}{n}\sum_{i=1}^n J_i$ is invertible. Then the first-order condition implies that the GMM estimator solves the empirical moment equations such that: $\frac{1}{n}\sum_{i=1}^n m (Z_i;\tilde\theta_n)=\zero$. The  asymptotic behavior of $\tilde{\theta}_n$ is typically analyzed through a Taylor expansion around the true parameter $\theta^*$, which links the normalized estimation error $J_i(\tilde{\theta}_n-\theta^*)$ with an empirical influence function $m(Z_i; \theta^*)$:
\begin{align}
\frac{1}{n} \sum_{i=1}^n J_i (\tilde{\theta}_n - \theta^*)  = \frac{1}{n} \sum_{i=1}^n m(Z_i; \tilde\theta_n) -\frac{1}{n} \sum_{i=1}^n m(Z_i; \theta^*) =-\frac{1}{n} \sum_{i=1}^n m(Z_i; \theta^*) 
\end{align}
However, since the behavior policy is adaptively evolving over time, the Jacobian $J_i$ is likely to diverge, 
and the variance of the empirical influence functions on the right-hand-side might fluctuate and never converge.
This risk of instability  motivates us to modify the GMM estimator by applying  \emph{non-uniform} and \emph{time-varying} adaptive weights, which are carefully chosen to stabilize the variance of the empirical influence functions $m(Z_i;\theta^*)$ over time, 
thereby preserving the desirable asymptotic characteristics of the GMM-estimator.

We hereby propose our \emph{adaptively weighted} generalized method of moments (AW-GMM) estimator $\hat{\theta}_n$, which  minimizes the norm of a \emph{non-uniform} average of empirical moment equations:
\begin{align}
\label{eq:gmm_estimator_linear}
\hat{\theta}_n \in \argmin_{\theta\in\Theta}~\calL_n(\theta):=\bn{\frac{1}{n}\sum_{i=1}^n H_i\,m (Z_i;\theta)}_A^2,
\end{align}
where $H_{i}$ denotes the time-varying  weighting matrix designed to counterbalance the potential divergence in the empirical influence functions. Specifically, we consider a block-diagonal structure for $H_{i}$:  $H_i:= \mbox{diag}\bc{H_{i,0}, H_{i,1}, \dots, H_{i,L}}$, where $H_{i,0}\in\bbR$ and $\{H_{i,j}\in \bbR^{d'\times d'}$ for each  $j\in [1:L]\}$; each diagonal  block $H_{i,j}$ is adapted to filtration $\calF_{i,j}^+$ and
stabilizes the 
empirical influence functions 
of data $Z_i$ involved in estimating  the  structural parameter  $\theta_j^*$.
Similar  weighting schemes have been introduced in the bandit setups \citep{deshpande2018accurate,hadad2021confidence,zhang2021statistical,bibaut2021post,zhan2021off}, though none applies to RL data with more than one stages. 

The technical challenge in identifying proper weights for RL data is two-fold. 
First, it is crucial that these weights remain stable and do not become degenerate or explode, otherwise the resulting estimator would suffer from diverging variance. This is known as the  ``curse of horizon'' challenge, which arises from the deteriorating overlap between the behavior and evaluation policies over lengthy episodes.  Previous RL literature on weighting-based offline evaluation has often turned to heuristic solutions, such as weight clipping, to manage variance, albeit at the expense of introducing bias \citep{precup2000eligibility,chen2019top}. We show that by carefully designing our weighting matrix $H_i$, our estimator not only controls estimation variance but also achieves asymptotic unbiasedness. 
Second,  we want to construct these weights  using information  only  up to current observations, following the practice of constructing weights for adaptive bandit data \citep{hadad2021confidence}. By doing so, we can employ martingale limiting theorems to show 
that the weighted GMM-estimator admits asymptotic normality and achieves post-RL inference.

\section{Consistency of AW-GMM Estimation}
\label{sec:consistency}
We begin by introducing a general weighting scheme that ensures consistent estimation of target parameters from adaptively collected RL data. We then apply this framework to specific RL algorithms later in the section. Recall that the weighted GMM estimator $\hat{\theta}_n$ minimizes the empirical loss given by:
\begin{equation}
\label{eq:gmm_loss}
     \calL_n(\theta):=\bn{\frac{1}{n}\sum_{i=1}^n H_im (Z_i;\theta)}_A^2  =\bn{\frac{1}{n}\sum_{i=1}^nH_i(\beta_i+J_i\theta)}_A^2.
\end{equation}
By the conditional moment equation \eqref{eq:moment_condition} and  weight $H_{i,j}$ adapting to $\calF_{i,j}^+$, we have the true $\theta^*$ satisfies $H_i(\bar\beta_i+\bar J_i\theta^*)=\zero$ and thus is a minimizer of the norm of averaged conditional moments:
\begin{equation}
\label{eq:conditional_moment}
 \calL(\theta):=\bn{\frac{1}{n}\sum_{i=1}^nH_i(\bar\beta_i+\bar J_i\theta)}_A^2,
\end{equation}
with $\calL(\theta^*)=0$.
To establish that $\hat{\theta}_n$ is consistent for $\theta^*$, it suffices to show two conditions:  (i) for every $\theta\in\Theta$, the GMM loss $ \calL_n(\theta)$ converges to the expected loss $ \calL(\theta)$; and   (ii) the  loss $\calL(\theta)$ has a unique minimizer. The following weighting property ensures both conditions are satisfied.

\begin{property}[$(\alpha_1, \alpha_2)$-Regularizing Weights] 
\label{property:weight_regularizing}
We set $H_{i,0}=1$ for  $i\in[1:n]$. Given a matrix $A$, we use $\lambda_{\min}(A)$ to denote its smallest eigenvalue and $\Tr(A)$ to denote its trace (sum of diagonal entries).
For each $j\in[1:L]$, the weights $\{H_{i,j}\}_{i=1}^n$ are adapted to filtration $\{\calF_{i,j}^+\}_{i=1}^n$ and  satisfy that:
\begin{enumerate}[(a)]
\item $\lambda_{\min}\bp{\bc{\frac{1}{n}\sum_{i=1}^n \bbE_{i,j}\bb{H_{i,j} \Cov^+_{i,j}(\Psi_{i,j},\Phi_{i,j})}}^\top \bc{\frac{1}{n}\sum_{i=1}^n \bbE_{i,j}\bb{H_{i,j} \Cov^+_{i,j}(\Psi_{i,j},\Phi_{i,j})}}
    }  \geq ~ c_1^2\cdot  n^{-\alpha_1}$;
    
     \item $ \frac{1}{n}\sum_{i=1}^n \Tr\bp{H_{i,j}H_{i,j}^\top}\leq c_1^{-1}\cdot  n^{\alpha_1}$;
    \item $ \frac{1}{n} \sum_{i=1}^n \Tr\bp{\bbE[H_{i,j} \Var_{i,j}^+(\Psi_{i,j}) H_{i,j}^\top]} \leq c_1^{-1}n^{\alpha_2}$.
\end{enumerate}
for  universal  constants $c_1>0$,  $\alpha_1\in[0,\frac{1}{L+1})$, and  $\alpha_2 \in[0, \frac{1}{L-1}]$.\footnote{Boundedness of $\Psi$ yields $\Tr(H_{i,j}\Var_{i,j}^+(\Psi_{i,j})H_{i,j}) \lesssim \Tr(H_{i,j}H_{i,j})$. As a result, we have $ \frac{1}{n} \sum_{i=1}^n \Tr\bp{\bbE[H_{i,j} \Var_{i,j}^+(\Psi_{i,j}) H_{i,j}^\top]}\lesssim\frac{1}{n}\sum_{i=1}^n \Tr\bp{H_{i,j}H_{i,j}^\top}$. To make Property~\ref{property:weight_regularizing}(c) nontrivial at the presence of  Property~\ref{property:weight_regularizing}(b), we consider $\alpha_2\leq \alpha_1$.}
\end{property}

In particular, Properties \ref{property:weight_regularizing}(a) \&  \ref{property:weight_regularizing}(b) ensure that $ \calL(\theta)$ has a unique minimizer with high probability, and Property \ref{property:weight_regularizing}(c) ensures that $\calL_n(\theta)$ uniformly converges to $ \calL(\theta)$ for any bounded $\theta\in\Theta$.

\begin{theorem}
\label{thm:consistency}
    Suppose Assumptions~\ref{assump:exogeneity}~\&~\ref{assump:linear_blip_function}
    hold and the weights $\{H_i\}$ satisfy the $(\alpha_1, \alpha_2)$-regularizing Property \ref{property:weight_regularizing}. Also assume that $\calS,\calT,\calY$  are bounded. The AM-GMM estimator $\hat{\theta}_n$ specified in \eqref{eq:gmm_estimator_linear} achieves the following consistency result for $j\in[0:L]$: 
\begin{equation}
 \bbE\bb{\bn{\hat{\theta}_{n,j}-\theta_{j}^*}_2}=O\bp{n^{\frac{(L-j+1)(\alpha_1+\alpha_2)-1}{2}}},\quad\mbox{and}\quad \bbE\bb{\bn{\hat{\theta}_{n,j}-\theta_{j}^*}^2_2} =O\bp{n^{(2L-2j+1)\alpha_1+\alpha_2-1}}.
\end{equation}
\end{theorem}

Consider that the  data is collected by an RL agent with a fixed amount of exploration, such as the   $\epsilon$-greedy RL algorithms with a constant $\epsilon$. Then the uniform weighting with $H_i$ being the identity matrix satisfies Property \ref{property:weight_regularizing} with $\alpha_1=\alpha_2=0$, and Theorem \ref{thm:consistency} recovers the $n^{-1/2}$ convergence rate, consistent with results in \citep{lewis2020double} for i.i.d.~data. However, our result  is stronger in the sense that we allow for adaptivity in the experiments, as opposed to the i.i.d.~settings in  \citep{lewis2020double}.

\subsection{Consistency Weights for  Decaying Exploration under Bilinear Features}
\label{sec:consistency_application}

We now provide explicit weighting schemes for common RL algorithms with polynomially decaying exploration rates. 
We particularly focus on scenarios where the instrument $\psi$ adopts the feature map $\phi$ such that $\Psi_{i,j}=\Phi_{i,j}$, and the feature map $\phi$ can be expressed as a bilinear form of $\phi_j(S_{i,1:j}, T_{i,1:j}) = \chi_{j}(S_{i,1:j}, T_{i,1:j-1})  \otimes \mu(T_{i,j})$,
with $\otimes$ being the Kronecker product.\footnote{We remind that for two vectors $v\in R^p$ and $u\in R^d$, the Kronecker product is the vector whose entries contain the product of all pairs of entries of the two vectors, i.e., $v\otimes u = \text{vec}( uv^\top) = (v_1 u_1, \ldots, v_1 u_d, \ldots, v_p u_1, \ldots, v_p u_d)$.} Such  feature maps  accommodate a wide family of categorical treatments and continuous treatments (we defer the examples to the end of this section).  
Without loss of generality, we rename $\mu(T_{i,j})$, the transformation of treatment, as $T_{i,j}$ and focus on the below form throughout:
\begin{assumption}[Bilinear Feature Map]\label{ass:bilinear} The feature map $\phi_j$ of the blip function takes the form:
\begin{equation}
    \label{eq:feature_mapping_bilinear}
    \phi_j(S_{i,1:j}, T_{i,1:j}) =(T_{i,j} - \pi^*(S_{i,1:j}, T_{i,1:j-1}))\otimes \chi_{j}(S_{i,1:j}, T_{i,1:j-1}).
\end{equation}
\end{assumption}
Let $d_\tau$ be the dimension of treatment $T_{i,j}$, and we use $X_{i,j}$ as a shorthand for $\chi_j(S_{i,1:j}, T_{i,1:j-1})$. Thus we can also write that:\footnote{Here we leverage the property that for two vectors $u\in \bbR^p$ and $v\in \bbR^d$, $v\otimes u=(I_d \otimes u) \cdot v$, where $I_d \otimes u = \text{diagonal}(u, \ldots, u)$. Hence, our feature map can equivalently be written as $\Phi_{i,j} = (I_{d_{\tau}} \otimes \Psi_{i,j}) \cdot T_{i,j}$.}
\begin{align*}
    \Phi_{i,j} = 
    (T_{i,j} - \pi^*(S_{i,1:j}, T_{i,1:j-1}))\otimes  X_{i,j}  = (I_{d_{\tau}} \otimes X_{i,j}) \cdot (T_{i,j} - \pi^*(S_{i,1:j}, T_{i,1:j-1}))
\end{align*}

To identify the  structure parameter $\theta^*$, one should expect sufficient overlap condition in the behavior policy and  the state transition dynamics, due to  co-linearities in the linear system that identifies the structural parameters. The following assumption formalizes the overlap condition.

\begin{assumption}[Overlap]
\label{assump:overlap}
Let $c>0$ be some universal constant. For each unit $i$ at stage $j$, 
\begin{enumerate}[(a)]
    \item The behavior policy  satisfies that, 
    \begin{equation}
       \label{eq:require_observational_policy}
       \Var^+_{i,j}(T_{i,j})\succeq c_j\cdot i^{-\alpha}\cdot I_{d_\tau},
    \end{equation}
    for  constants $c_j>0$ and  $\alpha\in[0,1)$.  We refer to $\alpha$ as behavior exploration rate. 
    \item The state transition  satisfies that $\|X_{i,j}\|^2_2\in[c,c^{-1}]$ and $\bbE_{i,j}[X_{i,j}X_{i,j}^\top]\succeq c\cdot I$ almost surely.  
\end{enumerate}
\end{assumption}
Assumption \ref{assump:overlap}(a) imposes an explicit rate on the decay of the amount of randomization (exploration), which is employed by the behavior policy of the RL agent as a function of the number of observations it has collected so far. Similar assumption is also required in ex post inference when using bandit data~\citep{hadad2021confidence,zhan2021off}. In particular, when $\alpha=0$, this becomes a \emph{relaxed} version (by allowing for the dependence among observations) of the commonly made ``overlap'' condition in the causal inference literature on i.i.d.~samples \citep{imbens2004nonparametric}. 
Assumption \ref{assump:overlap}(b) says that the state transition dynamic, which is independent of the behavior policy, is non-degenerate. Specifically, conditional on the past states and actions $(S_{i,1:j-1}, T_{i,1:j-1})$, the next state feature $X_{i,j}$ is not collinear; it can be achieved if the exogenous noise injected at each stage has a full-rank covariance matrix.
With these,  we instantiate the weighting choices that satisfy Property \ref{property:weight_regularizing}, yielding  the  corollary of  Theorem \ref{thm:consistency}.

\begin{corollary}
\label{cor:consistency}
Suppose Assumptions~\ref{assump:exogeneity},~\ref{assump:linear_blip_function},~\ref{ass:bilinear},~\&~\ref{assump:overlap}  hold. 
\begin{enumerate}
\item  If $H_{i,j}$ is the identity matrix (i.e. no weighting), then Property \ref{property:weight_regularizing} is satisfied with $\alpha_1=2\alpha$, $\alpha_2=0$, and we have:
 \begin{equation*}
   \bbE\bb{\bn{\hat{\theta}_{n}-\theta^*}_2} =O\bp{n^{\frac{2(L-j+1)\alpha-1}{2}}
    } .
\end{equation*}
    \item  The weighting scheme $H_{i,j} = (I_{d_\tau}\otimes X_{i,j})\Var^+_{i,j}(T_{i,j})^{-1/2} 
    (I_{d_\tau}\otimes X_{i,j})^\top $ satisfies Property \ref{property:weight_regularizing} with $\alpha_1=\alpha$ and $\alpha_2=0$, and 
    we have:
 \begin{equation*}
   \bbE\bb{\bn{\hat{\theta}_{n}-\theta^*}_2} =O\bp{n^{\frac{(L-j+1)\alpha-1}{2}}
    }.
\end{equation*}
\end{enumerate}
\end{corollary}

\begin{remark}
Corollary~\ref{cor:consistency} highlights the potential improvements enabled by a weighted GMM estimator. The weighting scheme in Corollary~\ref{cor:consistency}(2) depends on $\Var^+_{i,j}(T_{i,j})$, which can be computed exactly given the known behavior policy. Incorporating these weights ensures consistency even under faster rates of exploration decay. For example, in bandit settings, uniform weighting restricts the exploration rate $\alpha$ (as defined in the lower bound on treatment variance in \eqref{eq:require_observational_policy}) to be at most $\frac{1}{2}$, whereas adaptive weighting allows $\alpha$ to reach $1$.
\end{remark}

We conclude this section by instantiating common feature maps and RL algorithms that satisfy \eqref{eq:feature_mapping_bilinear}.

\begin{example}[Categorical Treatment] Consider the action space of the RL agent contains one control action and $K$  treatment action. Omitting the episode index $i$ for simplicity, let $T_{j}=\zero$ denote the control action being taken, and $T_{j}=e_k$ denote the $k$-th treatment being taken, at the $j$-th period of unit $i$, where $e_k$ is the one-hot vector for category $k$. Without loss of generality, the blip function can be represented as 
\begin{align}
\label{eq:categorical_treatment}
     (\theta_j^*)^\top\phi_j(S_{1:j}, T_{1:j}) = \sum_{k=1}^K (\theta_{j,k}^*)^\top \chi_{j}(S_{1:j}, T_{1:j-1}) \one[T_{j}=e_k], 
\end{align}
where $(\theta_{j,k}^*)^\top \chi_{j}(S_{1:j}, T_{1:j-1})$ characterizes the heterogeneous treatment effect of the $k$-th treatment at stage~$j$. Note that \eqref{eq:categorical_treatment} is equivalent to  \eqref{eq:feature_mapping_bilinear} by expanding the Kronecker product in \eqref{eq:feature_mapping_bilinear}, i.e.
\begin{align*}
    (\theta_j^*)^\top\phi_j(S_{1:j}, T_{1:j}) = (\theta_{j}^*)^\top (
    T_{j}\otimes    \chi_{j}(S_{1:j}, T_{1:j-1})  ). 
\end{align*}
To meet the overlap condition on  behavior policy  in Assumption \ref{assump:overlap}, it suffices to run an $\epsilon$-greedy RL algorithm,  where $\epsilon$ decays over time. At each decision time $t$, the agent chooses the treatment that was best performing on historical data for that stage with  probability $1-\epsilon_t$ with $ \epsilon_t= t^{-\alpha}$; alternatively, the agent opts for a random treatment. 
\end{example}

\begin{example}[Continuous or Binary Treatment Vector] Consider the case when the treatment is a continuous (or binary) vector in $\bbR^{d_\tau}$, with the blip function represented as 
\begin{align}
\label{eq:continuous_treatment}
      (\theta_j^*)^\top \phi_j(S_{1:j}, T_{1:j}) = \sum_{k=1}^{d_\tau} (\theta_{j,k}^*)^\top \chi_{j}(S_{1:j}, T_{1:j-1}) T_{j,k}.
\end{align}
Each coordinate can be viewed as a separate continuous treatment applied to the unit, where different treatments can be applied simultaneously to each unit.
Here, $(\theta_{j,k}^*)^\top \chi_{j}(S_{1:j}, T_{1:j-1})$ characterizes the heterogeneous marginal treatment effect from the $k$-th coordinate of the treatment vector $T_{j}$, equivalently, the $k$-th treatment applied to the unit. Similarly, Equation~\eqref{eq:continuous_treatment} falls in the functional form prescribed in Equation~\eqref{eq:feature_mapping_bilinear}.

To satisfy the overlap of behavior policy required in Assumption \ref{assump:overlap}, it suffices to add an exogenous exploration noise $\varepsilon_t\in \bbR^{d_\tau}$ to  treatment assignment at time $t$, where $\varepsilon_t$ has covariance matrix $t^{-\alpha}I_{d_\tau}$. If each of the treatments are binary, then it suffices to independently randomize the assignment of each of the simultaneous binary treatments.
\end{example}

\begin{example}[Polynomial Scalar Treatment] Consider the case when a single scalar treatment $\tau_{i,j}$ can be applied at stage $j$ for unit $i$. We can express non-linear effects of the scalar treatment by considering a fixed expansion to a set of engineered treatment features $\mu(\tau_{i,j})$, as follows:
\begin{align}
\label{eq:polynomial_treatment}
      (\theta_j^*)^\top \phi_j(S_{1:j}, T_{1:j}) = \sum_{k=1}^{d_\tau} (\theta_{j,k}^*)^\top \chi_{j}(S_{1:j}, T_{1:j-1}) \mu_k(\tau_{i.j})
\end{align}
which is equivalent to \eqref{eq:feature_mapping_bilinear} by setting $T_{i,j}$ in \eqref{eq:feature_mapping_bilinear} to  $\mu(\tau_{i,j})$. For instance, $\mu(\tau_{i,j})$ could be chosen to be a high-degree polynomial, i.e. $\mu(\tau_{i,j}) = (\tau_{i,j}, \tau_{i,j}^2, \tau_{i,j}^3, \ldots, \tau_{i,j}^{d_\tau})$. In the context of a pricing application, one can view $\tau_{i,j}$ as the price offering for some product to a buyer $i$, and the outcome as revenue (i.e. whether there was a purchase times the purchase price). Aggregate revenue would typically be some bell-shaped curve, which can be well approximated by a third or fourth degree polynomial of price (a typical choice in empirical work). In such a revenue model, the quantities $(\theta_{j,k}^*)^\top \chi(S_{1:j}, T_{1:j-1})$ correspond to the heterogeneous coefficients in this parametric revenue model, reflecting heterogeneous price elasticities based on the current state of the buyer. 
\end{example}

\section{Asymptotic Normality of AW-GMM Estimation}
\label{sec:normality}
In this section, we identify a family of weighting schemes, under which the AW-GMM estimator is asymptotically normal.  These weights perfectly stabilize the variance of the empirical influence function in the asymptotic regime. Building on this, we prove strong Gaussian approximation results and characterize the uniform convergence rate. 
These results are practically appealing, as they enable the construction of uniformly valid confidence regions for parameters of interest over a large class of RL algorithms.

Recall that the AW-GMM estimator $\hat{\theta}_n$ minimizes the empirical loss $\calL_n(\theta)$ while the true parameter $\theta^*$ minimizes the expected loss $\calL(\theta)$:
\begin{equation*}
   \hat\theta_n\in\argmin_{\theta\in\Theta} \calL_n(\theta) = \bn{\frac{1}{n}\sum_{i=1}^n H_i\,   \bp{\beta_i+J_i\theta}}_A^2\quad \mbox{and}\quad \theta^*\in\argmin_{\theta\in\Theta}  \calL(\theta)=\bn{\frac{1}{n}\sum_{i=1}^n H_i\bp{ \bar{\beta}_i+\bar{J}_i\theta}}_A^2.
\end{equation*}
The key idea is to relate the estimation error $(\hat{\theta}_n - \theta^*)$ to a martingale difference sequence (MDS), which allows us to apply martingale theory to characterize the asymptotic distribution of $\hat{\theta}_n$.

As established in Section~\ref{sec:consistency}, the estimator $\hat{\theta}_n$ is consistent for $\theta^*$. Since $\theta^*$ lies in the interior of $\Theta$, $\hat{\theta}_n$ will also lie in the interior with high probability. In this case, $\hat{\theta}_n$ satisfies the first-order condition of the empirical loss $\calL_n(\theta)$:
\begin{equation}
    \nabla_\theta \calL_n(\hat\theta_n) =\bp{\frac{1}{n}\sum_{i=1}^n H_i J_i}^\top A\bp{\frac{1}{n}\sum_{i=1}^n H_i\, (\beta_i+J_i\hat\theta_n)}=\zero.
\end{equation}
Define  $\xi_i := \beta_i+J_i\theta^* $. Then after some algebra, we have:
\begin{align}
\label{eq:link_error_mds}
 -\bp{\frac{1}{n}\sum_{i=1}^n H_i J_i}^\top A\bp{\frac{1}{n}\sum_{i=1}^n H_i \xi_i}
  =& \bp{\frac{1}{n}\sum_{i=1}^n H_i J_i}^\top A\bp{\frac{1}{n}\sum_{i=1}^n H_iJ_i(\hat\theta_n-\theta^*)}.
\end{align}
Since $\theta^*$ satisfies the conditional moment equations $\bar{\beta}_i+\bar J_i\theta^*=\zero$, the sequence $\{H_i \xi_i\}$ forms an MDS under suitable conditions on the weights $H_i$, as formally established in Section~\ref{sec:mds}. By linking $\{H_i \xi_i\}$ to the estimation error $(\hat{\theta}_n - \theta^*)$, we derive a strong Gaussian approximation for $\hat{\theta}_n$ and develop corresponding inference results in Section~\ref{sec:strong_gaussian}. We then apply these results to common RL algorithms in Section~\ref{sec:normality_application}.

\subsection{Martingale from Adaptive RL Data}
\label{sec:mds}
We now formalize the martingale difference sequence  structure, which plays a central role in our analysis. 
The true parameter $\theta^*$ satisfies the conditional moment equations, so that $  \bar \beta_i + \bar J_i\theta^* =\zero$. Define
\begin{equation}
 \xi_i := \beta_i +J_i\theta^* = (\xi_{i,0}, \xi_{i,1}, \dots, \xi_{i,L}),\quad \mbox{with}\quad \xi_{i,j}=R_{i,j}(\Psi_{i,j} - \bar{\Psi}_{i,j}), \forall j\in[0:L],
\end{equation}
where  $R_{i,j}$, defined in Lemma~\ref{lemma:identification_parameter},
captures the residual component of $Y_i$ after removing the treatment effects from $T_{i,j:L}$ and adding the effects under the evaluation policy $\pi^*_{j:L}$. The sequence $\{\xi_i\}$ forms a martingale difference sequence, as established in the lemma below.
\begin{lemma}
\label{lemma:mds-1}
Suppose Assumptions~\ref{assump:exogeneity} and~\ref{assump:linear_blip_function} hold.
Let $H_i$ be a block-diagonal adaptive weighting matrix, where the $j$-th diagonal block  $H_{i,j}$ is measurable with respect to $\calF^+_{i,j}$.
Then the sequence $\{H_i\xi_i\}_{i=1}^n$ forms a martingale difference sequence adapted to the filtration $\{\calF_i\}_{i=1}^n$.
\end{lemma}
Equation~\eqref{eq:link_error_mds} shows that the estimation error $(\hat{\theta}_n-\theta^*)$ is closely linked to the sum $\sum_{i=1}^nH_i\xi_{i}$.
To analyze the asymptotic properties of  $\hat{\theta}_n$,  it thus suffices to understand the MDS, $\{H_i\xi_i\}$. The following result highlights the key components that characterize its asymptotic behavior.

\begin{proposition}[Martingale CLT,  \cite{hall2014martingale}]
\label{prop:martingale_clt_2}
Let $\{\phi_{t}, \mathcal{F}_{t}\}_{t=1}^\top$, with $\phi_t\in \mathbb{R}$, be a square-integrable scalar martingale difference sequence. Suppose that the  two conditions below are satisfied,
\begin{itemize}
    \item[(a)] conditional variance convergence: $\sum_{t=1}^\top  \bbE[\phi_{t}^2|\calF_{t-1}]\xrightarrow{p} \eta^2$ for some a.s. finite r.v. $\eta^2$;
    \item[(b)] conditional higher-moment decay: $\sum_{t=1}^\top \bbE[\phi_{t}^4|\calF_{t-1}]\xrightarrow{p}0$.
\end{itemize}
Then, $\sum_{t=1}^\top  \phi_{t}\Rightarrow Z$, where the random variable Z has characteristic function $\bbE[\exp(-\frac{1}{2}\eta^2t^2)]$.
\end{proposition}

Proposition~\ref{prop:martingale_clt_2} illuminates how  weights $H_i$ can be designed to improve the asymptotic properties of the MDS sum $\sum_{i=1}^n H_i\xi_i$, and thus  our AW-GMM estimation. A key quantity is the sum of conditional variances, $\sum_{i=1}^n \Var_{i}(H_i\xi_{i})$: when it converges to the identity matrix, this MDS sum is asymptotically normal. 

\subsubsection{Variance and Covariance under Homoscedasticity.}
We now take a closer look at the conditional variance  $\Var_{i}(H_i\xi_{i})$, which forms the basis for our subsequent weighting strategy. 
In full generality, one could design the weights $H_i$ to approximate $\Var_i(\xi_i)^{-1/2}$, so that $\Var_i(H_i \xi_i)$ is close to the identity matrix. However, this approach would require estimating $\Var_i(\xi_i)$, which in turn depends on the state transition dynamics and is thus an impractical task, particularly in high-dimensional continuous state spaces.

To address this challenge, we introduce a homoscedasticity assumption on the outcome process. Under this assumption, the conditional variance matrix $\Var_i(\xi_i)$ becomes block-diagonal: the covariances across different stages (corresponding to the off-diagonal blocks) are zero. As a result, it suffices to design weights conditioning on the realized state $S_{i,j}$ to  stabilize the stage-specific conditional variances of $\xi_{i,j}$, thereby eliminating the need to estimate the state transition dynamics.

\begin{assumption}[Homoscedasticity of residuals with respect to current treatment]
\label{assump:homoscedasticity}
The conditional variance of the residuals $\Var(R_{i,j}\mid T_{i,j}, \calF_{i,j}^+)$ is independent of $T_{i,j}$, i.e. $\Var(R_{i,j}\mid T_{i,j}, \calF_{i,j}^+)=\Var_{i,j}^+(R_j)$.
Moreover, for the residual $R_{i,j}$ defined in Lemma \ref{lemma:identification_parameter}, there exists universal constants $\sigma^2, M^2$ such that $R_{i,j}^2\leq M^2$ and $\Var_{i,j}^+(R_{i,j}) \geq \sigma^2$ almost surely.
\end{assumption}

As shown in Appendix~\ref{appendix:proof_mds}, Assumption~\ref{assump:homoscedasticity} is equivalent to $\bbE_{i,j}^+[R_{i,j}^2 (\Psi_{i,j} - \bar{\Psi}_{i,j})]=0$, while 
Lemma~\ref{lemma:identification_parameter} states that $\bbE_{i,j}^+[R_{i,j} (\Psi_{i,j}-\bar{\Psi}_{i,j})]=0$, reflecting that residuals are uncorrelated with the current treatment conditional on past information. 
 Assumption~\ref{assump:homoscedasticity} strengthens this by requiring the same uncorrelated property for the residual variance.  It is satisfied under the additive, rank-preserving version of the structural nested mean model \cite[Chapter 14.5]{miguel2023causal}, but is more general. 
 It implies that, conditional on the past, variance of $R_{i,j}$ is independent of future treatment assignments and is fully explained by exogenous factors.
 Assumption~\ref{assump:homoscedasticity} holds, for instance, if $R_{i,j}\, \ci\, T_{i,j}\mid \calF^+_{i,j}$, which is natural  if $R_{i,j}$ equals to the potential outcome $\y(T_{i,1:j-1}, \pi^*_{j:L})$ in distribution given the past.\footnote{This equality is  stronger  than Lemma~\ref{lemma:identification_policy_value}, which requires only equality in expectation.}
Given Assumption~\ref{assump:homoscedasticity}, we layout the block diagonal structure of the conditional variance $\Var_i(H_i\xi_i)$.
\begin{lemma}
\label{lemma:mds}
    Suppose Assumptions~\ref{assump:exogeneity}, \ref{assump:linear_blip_function}~\&~\ref{assump:homoscedasticity} hold. 
    Let $H_i$ be a block-diagonal adaptive weighting matrix, where the $j$-th diagonal block $H_{i,j}$ is measurable with respect to $\calF^+_{i,j}$.
   Then the conditional variance matrix $\Var_i(H_i\xi_i)=\bbE_i[H_i\xi_i\xi_i^\top H_i^\top]$ is block-diagonal, such that:
    \begin{enumerate}
        \item[(a)] its $(j_1, j_2)$ off-diagonal block
        $\Cov_i(H_{i,j_1}\xi_{i,j_1}, \xi_{i,j_2}H_{i,j_2}^\top)=0$,  for any $j_1\neq j_2\in[0:L]$;
        \item[(b)] its $j$-th diagonal block $ \Var_{i}(H_i\xi_{i,j}) =\bbE_i\bb{\bbE^+_{i,j}\bb{ R_{i,j}^2 } \cdot H_{i,j}\,\Var^+_{i,j}(\Psi_{i,j})\,
    H_{i,j}^\top}$, for any $j\in[0:L]$.
    \end{enumerate}
\end{lemma}
This lemma inspires us to decompose the weight $H_{i,j}$ into  two components: one to stabilize  $\Var_{i,j}^+(\Psi_{i,j})$ that is known exactly, as it depends on the known behavior policy, and another to stabilize $\bbE^+_{i,j}\bb{ R_{i,j}^2 }$ that can be estimated from data.

\subsection{Strong Gaussian Approximation}
\label{sec:strong_gaussian}
We now describe the construction of weights that ensure asymptotic normality of the estimator. This builds on the connection between the MDS $\{\xi_i\}$ and the estimation error $(\hat{\theta}_n - \theta^*)$, as established in Equation~\eqref{eq:link_error_mds}:
\begin{align}
\label{eq:link_mds_error_2}
 B_n^\top A\bp{\frac{1}{n}\sum_{i=1}^n H_i\xi_i}
  =& B_n^\top A B_n(\hat\theta_n-\theta^*),\quad \mbox{where}\quad B_n:=-\frac{1}{n}\sum_{i=1}^n H_i J_i.
\end{align}
To establish the asymptotic normality of $\hat{\theta}_n$, it suffices to construct weights such that $\sum_{i=1}^n H_i \xi_i$ is asymptotically normal. Proposition~\ref{prop:martingale_clt_2} indicates that this requires stabilizing the conditional variance $\Var_i(H_i \xi_i)$. Lemma~\ref{lemma:mds} further shows that $\Var_i(H_i \xi_i)$ is block-diagonal, so it is enough to stabilize each stage-specific variance $\Var_i(H_{i,j} \xi_{i,j})$ such that $\sum_{i=1}^n \Var_i(H_{i,j} \xi_{i,j}) \to I$ for each stage $j$.
Lemma \ref{lemma:mds} shows that:
\begin{align}
 \label{eq:residual_definition}
\Var_{i}(H_{i,j}\xi_{i,j})=\bbE_{i} \bb{H_{i,j}\,(F_{i,j} \cdot\Var^+_{i,j}(\Psi_{i,j}))\,
    H_{i,j}^\top},\quad\mbox{where}\quad  F_{i,j}:=\bbE^+_{i,j}[R_{i,j}^2].
\end{align}
We thus can split $H_{i,j}$ into two parts:
\[
H_{i,j} = \hat{F}_{i,j}^{-\frac{1}{2}}\cdot W_{i,j},
\]
where the first part $\hat{F}_{i,j}^{-\frac{1}{2}}$ standardizes  $F_{i,j}$ and is estimated using data available up to $\calF_{i,j}^+$. Section \ref{sec:estimate_f} provides further discussion on estimating $\hat{F}_{i,j}$; the second part $W_{i,j}$ is designed to satisfy a stabilizing property below that controls the time-varying variance of $\Var^+_{i,j}(\Psi_{i,j})$.

\begin{property}[$(\alpha_1, \alpha_3)$-Stabilizing Weights] 
\label{property:weight_stabilizing}
Set $W_{i,0}=1$ for all $i\in[1:n]$.
Given any $j\in[1:L]$, the weights $\{W_{i,j}\}_{i=1}^n$ are adapted to the filtration $\{\calF_{i,j}^+\}_{i=1}^n$ and satisfy the following for a universal constant $c_2>0$ independent of the behavior policy:
\begin{enumerate}[(a)]
    \item $\lambda_{\min}\bp{\bc{\frac{1}{n}\sum_{i=1}^n \bbE_{i,j}\bb{W_{i,j} \Cov^+_{i,j}(\Psi_{i,j},\Phi_{i,j})}}^\top \bc{\frac{1}{n}\sum_{i=1}^n \bbE_{i,j}\bb{W_{i,j} \Cov^+_{i,j}(\Psi_{i,j},\Phi_{i,j})}}
    }  \geq ~ c_2^2\cdot  n^{-\alpha_1}$;
    
    \item $\frac{1}{n}\sum_{i=1}^n\Tr\bp{\bbE\bb{  W_{i,j}^\top W_{i,j}}}  \leq c_2^{-1} \cdot n^{\alpha_1}$;
    
    \item
    $
        \Tr\bp{ W_{i,j}\Var^+_{i,j}(\Psi_{i,j})W_{i,j}^\top}\leq c_2^{-1},\\
              \frac{1}{n}
    \sum_{i=1}^n  \bbE
    \left\|\bbE_{i,j}[W_{i,j}\Var^+_{i,j}(\Psi_{i,j})W_{i,j}^\top] - \bbE[W_{i,j}\Var_{i,j}^+(\Psi_{i,j})W_{i,j}^\top]
    \right\|_{2} \leq c_2^{-1} \cdot n^{-\alpha_3},\\
    \lambda_{\min}\bp{\frac{1}{n} \bbE\bb{\sum_{i=1}^n W_{i,j}\Var^+_{i,j}(\Psi_{i,j})W_{i,j}^\top}} \geq c_2.
$
\end{enumerate}
for a universal positive constant $c_2$, with $\alpha_1\in[0,\frac{1}{L+1}), \alpha_3>0$. 
\end{property}
Property \ref{property:weight_stabilizing} looks
similar to Property \ref{property:weight_regularizing} but is stronger. In particular, to achieve asymptotic normality, Property \ref{property:weight_stabilizing}(c) further requires that after weight application, the resulted MDS variance is asymptotically standardized and stabilized around its expectation.

\begin{theorem}[Strong Gaussian Approximation]
\label{thm:be_feasible_2}
 Suppose Assumptions~\ref{assump:exogeneity}, \ref{assump:linear_blip_function}, and \ref{assump:homoscedasticity} hold.
 Suppose the sets $\calS,\calT,\calY$  are bounded. 
Suppose the weights $H_{i,j}$ decompose as $H_{i,j} = \hat{F}_{i,j}^{-1/2} W_{i,j}$, where $W_{i,j}$ satisfies Property~\ref{property:weight_stabilizing}. The term $\hat{F}_{i,j}$ is an estimator of $F_{i,j}$, which is defined in Equation~\eqref{eq:residual_definition} and is bounded in $[\sigma^2, M^2]$ under Assumption~\ref{assump:homoscedasticity}. $\hat{F}_{i,j}$ is constructed using information available up to $\calF^+_{i,j}$, and we assume that it  satisfies the following  condition:
\begin{equation}
\label{eq:f_convergence}
\frac{1}{n} \sum_{i=1}^n \mathbb{E}\left[ \left| \hat{F}_{i,j} - F_{i,j} \right| \right] = O(n^{-\gamma_F}) \mbox{ and }\hat{F}_{i,j} \in [\sigma^2, M^2].
\end{equation}
Let $\hat{\theta}_n$ be the AW-GMM estimator defined in \eqref{eq:gmm_estimator_linear}. Define $\widehat{\Xi}_n$ as a block-diagonal matrix with the $j$-th block equal to $n^{-1}\sum_{i=1}^n W_{i,j}\Var^+_{i,j}(\Psi_{i,j})W_{i,j}^\top$. Then there exists a Gaussian random variable $Z_\xi \sim \mathcal{N}(0,  \widehat{\Xi}_n)$ such that for any convex set $C \in \mathcal{C}$ (the set of convex subsets of $\Theta$),
\begin{align}
       \left| \bbP\bp{ \sqrt{n} (\hat{\theta}_n -\theta^*)\in C}-\bbP\bp{     (B_n^\top A B_n)^{\dagger}B_n^\top A Z_\xi\in C}\right| 
    = O  \Big( L^2\cdot n^{-\min\big(\frac{1-\alpha_1}{12}, \frac{1-(L+1)\alpha_1}{2},  \frac{\alpha_3}{5}, \frac{\gamma_F}{5}\big)}\Big),
\end{align}
where for any matrix $M$, we use $M^\dagger$ to denote its Moore–Penrose pseudo-inverse.

\end{theorem}

\begin{remark}
    Theorem~\ref{thm:be_feasible_2} not only establishes the asymptotic normality of $\hat{\theta}_n$, but also provides an upper bound on the uniform convergence rate of its strong Gaussian approximation. This result is stronger than the central limit theorem for $\hat{\theta}_n$, which can be achieved under a weaker condition: when $\hat{F}_{i,j}$  estimates $F_{i,j}$ consistently (the $L_1$ convergence  is not required as in \eqref{eq:f_convergence}), one can invoke Proposition~\ref{prop:martingale_clt_2} to show that,\footnote{We omit such a proof for conciseness and since it is a weaker result than our uniform convergence.}
\begin{equation}
    \label{eq:clt_feasible} 
 \sqrt{n} B_n^\top A B_n(\hat{\theta}_n-\theta^*) \Rightarrow \calN(0,B_n^\top A \widehat{\Xi}_nA B_n ).
\end{equation}
 However, the normalizing matrix $B_n^\top A B_n$ may not  necessarily converge, particularly in the adaptive settings where the behavior policy is evolving over time; we thus cannot rely on \eqref{eq:clt_feasible}  to construct confidence intervals for any arbitrary, 
 {data-independent,} projection of $\theta^*$ (for example, the  treatment effect $\theta^*_j$ at stage $j$), unless we use the uniform convergence results provided in Theorem \ref{thm:be_feasible_2}.
\end{remark}

\subsubsection{Post-RL Inference.}

Note that both $B_n$ and $\widehat{\Xi}_n$ are measurable from the data.  Theorem~\ref{thm:be_feasible_2} implies  the following Gaussian approximation, 
\begin{align}
\label{eq:strong_gaussian_approx}
\sqrt{n}  (\hat{\theta}_n-\theta^*) \approx \calN\bp{ 0,M_n
},\quad \mbox{where}\quad M_n:=(B_n^\top A B_n)^{\dagger}B_n^\top A\widehat{\Xi}_n
 A B_n(B_n^\top A B_n)^{\dagger}.
\end{align}
This approximation enables the construction of confidence regions,  formalized by the two corollaries below.

\begin{corollary}[Uniform Confidence Intervals]
\label{cor:uniform_ci}
    Suppose the conditions in Theorem \ref{thm:be_feasible_2} hold.
   For a confidence level $a\in (0, 1)$, for any projection $\ell\in\Theta$,  define the confidence interval as
    \[
CI_a=\bigg[\ell^\top \hat{\theta}_n\pm 
n^{-\frac{1}{2}} q_{\ell,\frac{a}{2}}
\bigg],
\]
where $q_{\ell,\frac{a}{2}}$ is the $\frac{a}{2}$-quantile of $ \calN\bp{0,
\ell^\top M_n \ell
}$. It holds that:
\[
\sup_{\ell\in\bbR^{1+dL}}\ba{\bbP(\ell^\top \theta\in CI_a) - (1-a)}=O\Big( L^2\cdot n^{-\min\bp{ \frac{1-\alpha_1}{12}, \frac{1-(L+1)\alpha_1}{2},\frac{\alpha_3}{5},\frac{\gamma_F}{5} }}\Big).
\]
\end{corollary}

\begin{corollary}[Simultaneous Confidence Band]
\label{cor:uniform_cb}
 Suppose the conditions in Theorem \ref{thm:be_feasible_2} hold. 
For a confidence level $a\in (0, 1)$, define the confidence band as
 \[
    CB_a = \bb{\hat{\theta}_n \pm n^{-\frac{1}{2}} q_{\infty,1-a}\cdot \hat{d}_n^{-\frac{1}{2}}},
\]
where $\hat{d}_n$ is the matrix vector that concatenates the diagonal blocks of $M_n$, and   $q_{\infty,1-a}$ is the $(1-a)$-quantile of $\|\calN(0,(\hat{D}_n^{\dagger})^{\frac{1}{2}}
M_n
(\hat{D}_n^{\dagger})^{\frac{1}{2}})\|_\infty$ for   $\hat{D}_n:=\diag\{\hat{d}_n\}$.\footnote{
{By $\|N(0,A)\|_{\infty}$, we denote the random variable that corresponds to the maximum absolute value of any entry in a vector that is distributed according to $N(0,A)$.}} It holds that,
\[
\left|\bbP(\theta\in CB_a)-( 1-a)\right|= O\Big( L^2\cdot n^{-\min\bp{ \frac{1-\alpha_1}{12}, \frac{1-(L+1)\alpha_1}{2}, \frac{\alpha_3}{5}, \frac{\gamma_F}{5} }}\Big).
\] 
\end{corollary}

\subsubsection{Estimating $F_{i,j}$.}
\label{sec:estimate_f}
We now provide a generic framework for estimating $F_{i,j}$ in a way that satisfies the condition in Equation~\eqref{eq:f_convergence}. Recall that $F_{i,j} := \mathbb{E}^+_{i,j}[R_{i,j}^2]$ is, by definition, a function of $(S_{i,1:j}, T_{i,1:j-1})$. If we further assume that this function is invariant to the behavior policy, then it can be consistently estimated using the realized data.
This condition holds, for instance, when the residual $R_{i,j}$ corresponds to the counterfactual outcome $Y(T_{i,1:j-1}, \pi^*_{j:L-1})$, which arises in our application to the Markovian model discussed in Section~\ref{sec:partial_linear_model}. In such cases, we denote this function by $f_j$:

\begin{assumption}\label{assump:invariant_fj}
The conditional expectation function $\mathbb{E}[R_{i,j}^2 \mid S_{i,1:j} = s_{1:j},\ T_{i,1:j-1} = \tau_{1:j-1}]$
is invariant to the episode index $i$ and is thus denoted by $f_j$ as a function of $(s_{1:j}, \tau_{1:j-1})$:
 \begin{equation}
\label{eq:define_f}
f_j(s_{1:j}, \tau_{1:j-1}):=\bbE[R^2_{i,j}\mid S_{i, 1:j}=s_{1:j}, T_{i,1:j-1}=\tau_{1:j-1}].    
\end{equation}
\end{assumption}

Let $\hat{f}_{i,j}$ be a predictor  of $f_j$, fitted on the  previous $i-1$ episodes and satisfying the estimation rate $i^{-\gamma_F}$ for $\gamma_F\in(0,1)$:
\begin{equation}
  \label{eq:estimate_f}
 \|\hat{f}_{i,j}- f_j\|_{1,\infty} = O(i^{-\gamma_F}), 
\end{equation}
where we introduce the norm $\|\cdot\|_{1,\infty}$ for any  $f$ defined on domain $\calS^{j_1}\times\calT^{j_2}\rightarrow \mathbb{R}$:
$\|f\|_{1,\infty} := \bbE_{s_1\sim P_S}\big[\sup_{(s_{2:j_1},\tau_{1:j_2})\in\calS^{j_1-1}\times \calT^{j_2}} |f(s_{1:j_1}, \tau_{1:j_2})|\big].$

Then
 we have Condition \eqref{eq:f_convergence} satisfied:
\begin{align}
 & \frac{1}{n}\sum_{i=1}^n\bbE[|\hat F_{i,j}-F_{i,j}|]=\frac{1}{n}\sum_{i=1}^n \bbE[|\hat{f}_{i,j}(S_{i,1:j}, T_{1,j-1})- f_j(S_{i,1:j}, T_{1,j-1})|] \leq \frac{1}{n}\sum_{i=1}^n  \|\hat{f}_{i,j}- f_j\|_{1,\infty}
 \nonumber\\
 &= O\bp{ \frac{1}{n}\sum_{i=1}^n i^{-\gamma_F}} 
\leq  O\bp{ \Big(\frac{1}{n}\sum_{i=1}^{n} i^{-1} \Big)^{\gamma_F}} =
 O((\log n/n)^{\gamma_F}),\label{eq:f_cumulative_convergence}
\end{align}
where the last inequality is due to that $x^{-\gamma_F}$ is a concave function with $\gamma_F\in(0,1)$.

Estimating $f_j$ to achieve the rate in \eqref{eq:estimate_f} can be addressed by online learning algorithms \citep{rakhlin2014online,daskalakis2022fast}, and in particular \citeauthor{daskalakis2022fast} propose a learning algorithm to achieve fast rates of convergence in nonparametric online regression. Later in Section~\ref{sec:partial_linear_model} we  show how to estimate $f_{j}$ for high-dimensional Markovian models via Lasso (simpler than proposed   algorithm in \cite{daskalakis2022fast}) and establish the guaranteed convergence rate   in \eqref{eq:estimate_f}.

\subsection{Normality Weights for  Decaying Exploration under Bilinear Features}
\label{sec:normality_application}
We now instantiate  weight construction on asymptotic normality for common RL algorithms. 
Similar to Section \ref{sec:consistency_application}, we adopt features as instruments and consider the bilinear feature map in Assumption~\ref{ass:bilinear}:
\begin{equation*}
    \Phi_{i,j} = (T_{i,j} - \pi^*(S_{i,1:j}, T_{i,1:j-1})) \otimes X_{i,j} 
\end{equation*}
where recall that  $X_{i,j}$ serves as a shorthand for $\chi_j(S_{i,1:j}, T_{i,1:j-1})$.
Under the overlap condition specified in Assumption \ref{assump:overlap}, we provide weighting choices in the following corollary to achieve strong Gaussian approximation in Theorem \ref{thm:be_feasible_2}.

\begin{corollary}
\label{cor:normality}
 Suppose Assumptions~\ref{assump:exogeneity},~\ref{assump:linear_blip_function},~\ref{ass:bilinear},~\ref{assump:overlap},~\&~\ref{assump:homoscedasticity} hold.
  Define  the random matrix: 
    \begin{equation}
    V_{i,j}:=\bbE_{i,j}[X_{i,j}X_{i,j}^\top]. 
    \end{equation}
 Let weights $H_{i,j} := \hat{F}_{i,j}^{-1/2}W_{i,j}$, where $\hat{F}_{i,j}$ satisfies Condition \eqref{eq:f_convergence} and 
 \begin{equation}
 \label{eq:w_normality}
      W_{i,j}:=(I_{d_\tau}\otimes\hat{V}_{i,j}^{-1/2}) (I_{d_\tau}\otimes X_{i,j})\Var^+_{i,j}(T_{i,j})^{-1/2} 
    (I_{d_\tau}\otimes X_{i,j})^\top\cdot\|X_{i,j}\|_2^{-2},
 \end{equation}
 where $\|X_{i,j}\|_2$ denotes the $\ell_2$ norm of the vector $X_{i,j}$;
 $ \hat{V}_{i,j}$ is positive definite adapted to $\calF_{i,j}$, satisfies
 $c^{-1}\cdot I \succeq\hat{V}_{i,j}\succeq c\cdot I$ and approximates $V_{i,j}$ (where $c$ and $V_{i,j}$ are introduced  in Assumption \ref{assump:overlap}) with
 \begin{equation}
       \frac{1}{n}
    \sum_{i=1}^n \bbE\bb{\left\|\hat{V}_{i,j}- V_{i,j}
    \right\|_{2}} = O(n^{-\gamma_V}).
    \label{eq:v_convergence}
 \end{equation}
Then, this $W_{i,j}$ satisfies Property \ref{property:weight_stabilizing} with $\alpha_1=\alpha$ and $\alpha_3=\gamma_V$, and thus we have the uniform Gaussian approximation rate in Theorem \ref{thm:be_feasible_2} to be $O(L^{2}n^{-\min(\frac{1-\alpha}{12},\frac{1-(L+1)\alpha}{2},\frac{\gamma_F}{5},\frac{\gamma_V}{5})})$.
\end{corollary}

The  choice of $W_{i,j}$ in \eqref{eq:w_normality} requires estimating $V_{i,j}=\bbE_{i,j}[X_{i,j}X_{i,j}^\top]$. 
When $j=1$, we have $V_{i,1}=\bbE\bb{X_{i,1}X_{i,1}^\top}$, with $\{X_{i,1}\}$ being i.i.d.; thus we can estimate $V_{i,1}$   consistently via the sample-mean estimator and achieve $O(n^{-1/2})$ estimation rate. We hence focus on estimating $V_{i,j}$ for stage $j\geq 2$,  where $V_{i,j}$ involves the  expectation over $S_{i,j}$ conditional on  $(S_{i,1:j-1}, T_{i,1:j-1})$. The  randomness in $S_{i,j}$ arises from the  state transition process, which is independent of the behavior policy. As a result, we can express $V_{i,j}$ as the output  of a function $\nu_j(\cdot)$, with input $(S_{i,1:j-1}, T_{i,1:j-1})$. This function $\nu_j$  depends only  on the stage index $j$ but not on the episode index $i$, i.e.,
 \begin{equation}
\label{eq:define_v}
\nu_j(s_{1:j-1}, \tau_{1:j-1}):=\bbE[X_{i,j}X_{i,j}^\top\mid S_{i, 1:j-1}=s_{1:j-1}, T_{i,1:j-1}=\tau_{1:j-1}]. 
\end{equation}

Estimating $\nu_j$  can be similarly addressed by the online learning literature as we have discussed for estimating $f_j$ in Section \ref{sec:estimate_f}, which shall be instantiated with Lasso estimator for the high-dimensional Markovian models in the next section. Below we provide guarantees for generic estimators.

Let $\hat{\nu}_{i,j}$ be a generic predictor  of $\nu_j$, fitted on the  previous $i-1$ episodes and satisfying the estimation rate $i^{-\gamma_V}$, with $\gamma_V\in(0,1)$:
\begin{equation}
  \label{eq:estimate_v}
\|\hat{\nu}_{i,j}- \nu_j\|_{1,\infty} = O(i^{-\gamma_V}).
\end{equation}
Above, we reload the norm notation $\|\cdot\|_{1,\infty}$ for any  $v$ defined on domain $\calS^{j_1}\times\calT^{j_2}\rightarrow \mathbb{R}^{d_1\times d_2}$:
$\|v\|_{1,\infty} := \bbE_{s_1\sim P_S}\big[\sup_{(s_{2:j_1},\tau_{1:j_2})\in\calS^{j_1-1}\times \calT^{j_2}} \|v(s_{1:j_1}, \tau_{1:j_2})\|_{2}\big]$, where $\|\cdot\|_{2}$ denotes the matrix operator norm induced by vector $2$-norm. Then we  have Condition \eqref{eq:v_convergence} satisfied:
\begin{align}
\label{eq:nu_cumulative_convergence}
 &\frac{1}{n}
    \sum_{i=1}^n \bbE\bb{\left\|\hat{V}_{i,j}- V_{i,j}
    \right\|_{2}}=\frac{1}{n}\sum_{i=1}^n \bbE[\|\hat{\nu}_{i,j}(S_{i,1:j-1}, T_{1,j-1})- \nu_j(S_{i,1:j-1}, T_{1,j-1})\|_2] \nonumber\\
  &  \leq \frac{1}{n}\sum_{i=1}^n \|\hat{\nu}_{i,j}- \nu_j\|_{1,\infty}=
 O((\log n/n)^{\gamma_V}).
\end{align}

To this end, we provide a full theory with the generic nuisance estimators.
\begin{corollary}  
\label{cor:full_normality}
Suppose Assumptions~\ref{assump:exogeneity},~\ref{assump:linear_blip_function},~\ref{ass:bilinear},~\ref{assump:overlap},~\ref{assump:homoscedasticity},~\&~\ref{assump:invariant_fj} hold.  
Define functions $f_j$ and $\nu_j$ as in  Equations~\eqref{eq:define_f} and \eqref{eq:define_v} respectively.
Suppose $\hat{f}_{i,j}$ and $\hat{\nu}_{i,j}$ are fitted using the first $i-1$ episodes and satisfy
 $ \|\hat{f}_{i,j}- f_j\|_{1,\infty} = O(i^{-\gamma_F})$ and $\|\hat{\nu}_{i,j}- \nu_j\|_{1,\infty} = O(i^{-\gamma_V})$ respectively.
 Set the weights $H_{i,j}$ as in Corollary \ref{cor:normality}, where we use $\hat{F}_{i,j}:=\hat{f}_{i,j}(S_{i,1:j}, T_{i,1:j-1})$ and $\hat{V}_{i,j}:=\hat{\nu}_{i,j}(S_{i,1:j-1}, T_{i,1:j})$. Then the uniform Gaussian approximation rate in Theorem \ref{thm:be_feasible_2} holds with rate $\widetilde O(L^{2}n^{-\min(\frac{1-\alpha}{12},\frac{1-(L+1)\alpha}{2},\frac{\gamma_F}{5},\frac{\gamma_V}{5})})$, where we use $\widetilde O(\cdot)$ to omit logarithm terms.
\end{corollary}
This result is a direct application of Corollary \ref{cor:normality}, Equations \eqref{eq:f_cumulative_convergence} and \eqref{eq:nu_cumulative_convergence}; we omit its proof for brevity.

\section{Application to High-dimensional Markovian Models}

\label{sec:partial_linear_model}

In this section, we instantiate the generic inference framework we present in Section \ref{sec:normality} to high-dimensional Markovian models. In particular, we consider the  baseline policy of ``no treatment'' as the evaluation policy, i.e.~$0_{1:L}$, where the treatment $0_j$ represents a pre-existing status quo treatment that would have been applied at stage $j$ in the absence of the RL experimentation process.
Estimating the value of a no-treatment policy is highly valuable in practical RL scenarios. It enables verification, with statistical confidence, that the deployed RL policy resulted in a statistically significant higher reward compared to consistently applying the status quo treatment at each stage.
This allows to test an alternative policy, which introduces a new innovation, product feature, or treatment.

We consider high-dimensional Markovian models that satisfy the high-level assumptions on the data-generating process (DGP) introduced in previous sections. Our focus is on a specific instantiation of the estimation framework in which the features $\Phi_{i,j}$ are used as instruments $\Psi_{i,j}$, and the norm matrix   $A$ in the GMM estimation is set to the identity. While our analysis centers on this setting, extensions to more general cases are straightforward. We introduce concrete statistical estimation algorithms for weight construction and provide the corresponding strong Gaussian approximation rates. Finally, we conduct simulations to empirically validate the inferential ability and robustness of our method in this setting.

\subsection{Data Generating Process}
We start by formalizing the DGP for high-dimensional Markovian models.
Let $\calS, \calT$ be the bounded state and treatment spaces. 
The RL agent sequentially rolls out one episode per unit. Each episode $i$ consists of a length-$L$ trajectory of high-dimensional states, where only a sparse subset of state components influences the final outcome. Specifically, the blip functions depend only on low-dimensional sub-vectors of the state. To facilitate the weight construction discussed in Sections~\ref{sec:consistency_application} and~\ref{sec:normality_application}, we assume the blip function at each stage $j$ takes a bilinear form and is written as $T_{i,j}\otimes\chi(S_{i,j,\Omega})$ for Lipschitz and bounded $\chi(\cdot)$, where $\Omega$ denotes the set of relevant state coordinates. We use $d_\Omega$ and $d_s$ to denote the dimensions of $S_{i,j,\Omega}$ and $S_{i,j}$, respectively.

For each episode $i$, the behavior policy $\pi^{\text{obs}}_i$ is decided based  on the previous $i-1$ episodes. This policy $\pi^{\text{obs}}_i$ is assumed to be known and assigns the treatment via $T_{i,j} := \pi^{\text{obs}}_i(S_{i,j}, \zeta_{i,j})$, where $\zeta_{i,j}$ is an i.i.d.~bounded noise term.
We consider the behavior policy satisfies Assumption \ref{assump:overlap}(a) such that:
\begin{equation}
      \Var^+_{i,j}(T_{i,j}) \succsim i^{-\alpha}\cdot I_{d_\tau},
      \label{eq:plmm_behavior_policy_rate}
\end{equation}
for an exploration rate $\alpha$.
State transitions are governed by unknown matrices $(A_j, B_j, M_j)$, which capture the effects of the current treatment, the current state, and the initial state, respectively, on the next state. Let $\eta_{i,j}$ denote the i.i.d.~mean-zero bounded noise associated with the state transitions.
The final outcome $Y_i$ is modeled as a sparse linear function of the last treatment, the last state, and the initial state, perturbed by an i.i.d.~bounded noise term $\epsilon_i$.
 Model \ref{algo:plmdgp} summarizes the DGP.

\begin{model}
\DontPrintSemicolon 
\KwIn{Time horizon $L$; behavior policy $\pi^{\text{obs}}_i$}
\KwOut{Episodic data $(S_{i,1}, T_{i,1}, \ldots, S_{i,L}, T_{i,L}, Y_i)$}
Observe $S_{i,1} \stackrel{\text{i.i.d.}}{\sim} P_S$

\For{$j=\{1,\dots, L-1\}$}{
    Assign $T_{i,j} \gets \pi^{\text{obs}}_i(S_{i,j}, \zeta_{i,j})$, for $\zeta_{i,j} \stackrel{\text{i.i.d.}}{\sim} P_{\zeta,j}$
  
  Observe $S_{i,j+1} \gets A_{j+1}^\top( T_j\otimes\chi(S_{i,j, \Omega}))  + B_{j+1}^\top S_{i,j} +M_{j+1}^\top S_{i,1} + \eta_{i,j+1}$, for $\eta_{i,j+1}  \stackrel{\text{i.i.d.}}{\sim} P_{\eta,j+1}$}

Assign $T_{i,L} \gets \pi^{\text{obs}}_i({S}_{i,L}, \zeta_{i,L})$,  for $\zeta_{i,L}  \stackrel{\text{i.i.d.}}{\sim} P_{\zeta, L}$

Observe $\y_i \gets \alpha^\top (T_{i,L}\otimes\chi(S_{i,L,\Omega}))) + \beta^\top  S_{i,L} + \kappa_L^\top S_{i,1} + \epsilon_i$, for  $\epsilon_i  \stackrel{\text{i.i.d.}}{\sim} P_{\epsilon}$

\caption{{\sc PLM DGP}: high-dimensional Markovian Data Generating Process}
\label{algo:plmdgp}
\end{model}

\begin{lemma}
\label{lemma:partial_linear_model}
Consider the no-treatment $0_{1:L}$ policy as the evaluation policy. The high-dimensional Markovian model specified in Model \ref{algo:plmdgp} satisfies Assumptions \ref{assump:exogeneity}, \ref{assump:linear_blip_function},   \ref{ass:bilinear}, \& \ref{assump:homoscedasticity} .
\end{lemma}
Given the  DGP outlined in Model \ref{algo:plmdgp} ,  we can  expand the outcome $\y$ recursively: 
\begin{align}
\label{eq:final_unroll_plm}
    &\quad \quad \y_i = \sum_{j'=j}^L\theta_{j'}^\top (  T_{i,j'}\otimes\chi(S_{i,j',\Omega})) + \beta_j^\top S_{i,j} + \kappa_j^\top S_{i,1} + 
    \epsilon_{i,j}\\
    \mbox{where}\quad & \theta_L^*:=\alpha,\quad \beta_L:=\beta; \quad \mbox{For} ~j=L,\cdots,2: \nonumber\\
   & \quad \beta_{j-1} := B_{j}\beta_j,\quad \theta_{j-1}^*:= A_{j}\beta_j, \quad\kappa_{j-1} := \kappa_j + M_{j}\beta_j, \quad \epsilon_{i,j}=  \sum_{j'=j}^{L-1}\beta_{j'+1}^\top \eta_{i,j'} + \epsilon_i.\nonumber
\end{align}
Our goal is to estimate the structural parameters $\{\theta_0^*, \theta_1^*,\dots, \theta_L^*\}$, where 
$\theta_0^*:=\bbE[Y(0_{1:L})]=\beta_1^\top\bbE[S_{i,1}] + \kappa_1^\top\bbE[ S_{i,1}]$ represents the expected outcome under the baseline  policy,
and $\theta_{1:L}^*$ reflect the stage-wise dynamic treatment effects in  Model \ref{algo:plmdgp}  outlined in \eqref{eq:final_unroll_plm}.

\subsection{Constructing the Weights}
\label{sec:estimation_plm}

We now provide details on the  weight construction for  Model \ref{algo:plmdgp}.
With the bilinear blip functions, 
 Corollary~\ref{cor:consistency} provides analytical weight construction to achieve consistency.
To achieve the strong Gaussian approximation, Corollary~\ref{cor:full_normality} shows that we need to estimate functions $f_j$ and
$\nu_j$
to satisfy Conditions \eqref{eq:estimate_f}  \& \eqref{eq:estimate_v} respectively. Recall that in general settings,  $f_{j}$ and $\nu_{j}$ can be estimated  by  online learning algorithms, while here under the   high-dimensional Markovian models,   this estimation procedure can be greatly simplified.

\subsubsection{Estimating $f_{j}$.} 
We first show that Model \ref{algo:plmdgp} satisfies the below homoscedasticity property, which satisfies Assumptions \ref{assump:homoscedasticity} \& \ref{assump:invariant_fj}  and makes estimating $f_{j}$  a regular estimation problem.
 \begin{lemma}[Homoscedasticity]
\label{lemma:homoscedasticity_plm}
Consider the no-treatment $0_{1:L}$ policy as the evaluation policy.  For  Model~\ref{algo:plmdgp}, 
the residual $R_{i,j}$, when conditioning on the $(S_{i,1:j}, T_{1:j-1})$,  is independent from  the behavior policy and regardless of the realization of $(S_{i,1:j}, T_{1:j-1})$, has the same variance, denoted as $\sigma_j^2$.
\end{lemma}
With this property, we can decompose $F_{i,j}$ into two parts: 
\begin{equation}
    \label{eq:decomposition_of_f}
    F_{i,j}:=\bbE^+_{i,j}\bb{R_{i,j}^2}=  G_{i,j}^2 + \sigma_j^2, \quad\mbox{with}\quad G_{i,j}:=\bbE^+_{i,j}\bb{R_{i,j}}\quad \mbox{and}\quad \sigma_j^2=\sigma_{i,j}^2:=\Var^+_{i,j}(R_{i,j}).
\end{equation}
Note that $G_{i,j}$, when conditioning on $(S_{i, 1:j}, T_{i,1:j-1})$, does not  depend   on the behavior policy. Thus we can view $G_{i,j}$ as a function $g_j$ of $(S_{i,1:j}, T_{i,1:j-1})$, which only depends on the stage index $j$ but not the specific episode index $i$, that is,
 \begin{equation}
\label{eq:g_hdmm}
 g_j(S_{i,1:j}, T_{i,1:j-1}):=G_{i,j} = \bbE[R_{i,j}\mid S_{i, 1:j}, T_{i,1:j-1}]=  S_{i,j}^\top\beta_j  +  S_{i,1}^\top \kappa_j,
\end{equation}
where the last equation is due to that $R_{i,j}=\beta_j^\top S_{i,j} + \kappa_j^\top S_{i,1} + \epsilon_{i,j}$ by Equation~\eqref{eq:final_unroll_plm}.
We  can estimate $g_j$ by regressing $\{R_{i,j}\}$ on $\{(S_{i,1}, S_{i,j})\}$ to get $\hat{g}_{i,j}$;   we then estimate 
$\sigma_{j}^2$ by its sample variance $\hat\sigma_{i,j}^2$ up till episode~$i$; 
finally we set $\hat{f}_{i,j}(\cdot):=\Clip_{[\sigma^2, M^2]}\bp{\hat{g}_{i,j}(\cdot)^2 + \hat\sigma_{i,j}^2}$, where $(\sigma^2,M^2)$ denotes the specified range for $F_{i,j}$.
Here, for any real numbers $a < b$ and any $x \in \mathbb{R}$, the clipping operator is defined as:
\begin{equation}
    \label{eq:clip_scalar}
     \text{Clip}_{[a,b]}\bp{x}:=\max(\min(x,b), a).
\end{equation}
 Algorithm \ref{algo:f} summarizes the estimation process. Note that  
we cannot observe the true residual $R_{i,j} =Y_{i}-\sum_{j'>j}\Phi_{i,j'}^\top\theta^*_j $, which requires the knowledge of the true structure parameter $\theta^*$. However, we can  estimate it using an approximated $\hat{\theta}$ (for example, the consistent estimate from Section \ref{sec:consistency}). With that, we obtain  the estimation rate of $\hat{f}_{i,j}$ as $\|\hat f_{i,j}-f_j\|_{1,\infty}=O(i^{-\frac{1-L\alpha}{2}})$; see details in  Appendix~\ref{appendix:proof_hdmm_gaussian}.

\setcounter{algocf}{0}
\begin{algorithm}
\KwIn{The first $i-1$ episodes $\{(S_{i',1}, T_{i',1}, \ldots, S_{i',L}, T_{i',L}, Y_{i'})\}_{i'=1}^{i-1}$; stage index $j$; estimated $\hat{\theta}_{1:L}$ ;   specified range $[\sigma^2, M^2]$ for $F_{i,j}$.}
\KwOut{Estimated $\hat{f}_{i,j}$.}

Set approximated residual as $\hat{R}_{i',j}=Y_{i'} - \sum_{j'=j}^L(T_{i',j'}\otimes \chi(S_{i',j',\Omega}))^\top\hat{\theta}_{j'}$ for $i'\in [1:i-1]$.

Estimate $g_{j}$, defined in \eqref{eq:g_hdmm}, by regressing $\{\hat{R}_{i',j}\}_{i'=1}^{i-1}$ on $\{(S_{i',1}, S_{i',j})\}_{i'=1}^{i-1}$ using Lasso regression:
\begin{equation}
         (\hat{\beta}_{i,j}, \hat{\kappa}_{i,j})= \argmin_{\tilde{\beta}, \tilde{\kappa}} \frac{1}{i-1}\sum_{i'=1}^{i-1} \bp{\hat{R}_{i',j}-\tilde{\beta}^\top S_{i',j} -  \tilde{\kappa}^\top S_{i',1}}^2 + \lambda_g (\|\tilde{\beta}\|_1+ \|\tilde{\kappa}\|_1)\label{eq:lasso_g}.
\end{equation}

Set the estimate as $\hat{g}_{i,j}(s_1, s_j)=s_j^\top \hat{\beta}_{i,j} + s_1^\top  \hat{\kappa}_{i,j}$.

Calculate $
    \hat{\sigma}_{i,j}^2 := \frac{1}{i-1}\sum_{i'=1}^{i-1}\bp{ \hat{R}_{i',j}- \hat{g}_{i,j}(S_{i',1}, S_{i',j}  )}^2.
$

Set $\hat{f}_{i,j}(\cdot)=\text{Clip}_{[\sigma^2, M^2]}\bp{
\hat{g}_{i,j}(\cdot)^2+\hat{\sigma}_{i,j}^2}$ with clipping operator defined in \eqref{eq:clip_scalar}.

\caption{Estimating $f_j$ under Model \ref{algo:plmdgp}}
\label{algo:f}
\end{algorithm}

\subsubsection{Estimating $\nu_{j}$.} We now estimate 
\[\nu_j(s_{1:j-1},\tau_{1:j-1}):=\bbE\bb{\chi(S_{i,j,\Omega})\chi(S_{i,j,\Omega})^\top\mid S_{i,1:j-1}=s_{1:j-1}, T_{i,1:j-1}=\tau_{1:j-1}}.\]
Note that when $j=1$, we have $\nu_{1}\equiv\bbE\bb{\chi(S_{i,1,\Omega})\chi(S_{i,1,\Omega})^\top}$, with $\{S_{i,1}\}$ being i.i.d.; thus we can estimate $\nu_{1}$   consistently via the sample-mean estimator and achieve $O(n^{-1/2})$ estimation rate. We hence focus on estimation of $\nu_{j}$ for stage $j\geq 2$.

Let $h_{j}(S_{i,1:j-1}, T_{i,1:j-1})$ denote the conditional expectation of $S_{i,j}$ when restricted to coordinates $\Omega$~(since only those coordinates enter into $\psi_j$ and contribute to $V_j$):
\begin{align}
& h_{j}(S_{i,1:j-1}, T_{i,1:j-1}):=\bbE_{i,j}[S_{i,j,\Omega}]=A_{j,\Omega}^\top\bp{T_{i,j-1}\otimes\chi(S_{i,j-1, \Omega})} +B_{j-1,\Omega}^\top S_{i,j-1} + M_{j,\Omega}^\top S_{i,1},\label{eq:h_hdmm}
\end{align}
Then $\nu_j$ can be induced by $h_j$: 
$
\nu_j(\cdot)
=~ \bbE_{i,j}\bb{\chi\bp{h_{j}(\cdot) + \eta_{i,j,\Omega}}\chi\bp{h_{j}(\cdot) + \eta_{i,j,\Omega}}^\top},
$
for i.i.d.~exogenous noise  $\eta_{i,j,\Omega}$ during the state transition.
We therefore first  regress $\{S_{i,j,\Omega}\}$ on $\{S_{i,1}, S_{i,j-1},T_{i,j-1}\}$ to get estimate $\hat{h}_{i,j}$, based on which we  obtain estimate $\hat \nu_{i,j}$, as summarized in  Algorithm \ref{algo:v}.
Note that in Algorithm~\ref{algo:v}, we also apply the clipping operator when obtaining $\hat{\nu}_{i,j}$ to ensure its eigenvalues lie within the specified range for the true matrix $V_{i,j}$. 
Here, For any real numbers $a < b$ and a symmetric matrix $X = U \Lambda U^\top \in \mathbb{R}^{d \times d}$, where $U$ is an orthogonal matrix and $\Lambda = \text{diag}(\lambda_1, \dots, \lambda_d)$ is the diagonal matrix of eigenvalues, the clipping operator is defined as:
\begin{equation}
\label{eq:clip_matrix}
     \text{Clip}_{[a,b]}(X):= U\diag\{
  \text{Clip}_{[a,b]}(\lambda_1),\dots, 
    \text{Clip}_{[a,b]}(\lambda_d)U^\top
 \},
\end{equation}
 where recall $\text{Clip}_{[a,b]}(\lambda) := \max(\min(\lambda, b), a)$ for any scalar $\lambda \in \mathbb{R}$ as defined \eqref{eq:clip_scalar}.
The estimation rate of $\hat{\nu}_{i,j}$ satisfies $\|\hat\nu_{i,j}-\nu_j\|_{1,\infty}=O(i^{\frac{-\gamma(1-2\alpha)}{2}})$ (proof deferred to Appendix \ref{appendix:proof_hdmm_gaussian}).

\begin{algorithm}
\KwIn{Data $\{S_{i',1}, T_{i',1}, \ldots, S_{i',L}, T_{i',L}, Y_{i'}\}_{i'=1}^i$; stage index $j$; specified range $[c,c^{-1}]$ for eigenvalues of  $V_{i,j}$.}

\KwOut{Estimated $\hat{\nu}_{i,j}$.}

Estimate $h_{j}$, defined in \eqref{eq:h_hdmm}, by regressing  $\{S_{i',j, \Omega}\}_{i'=1}^i$ on $\{(S_{i',1}, S_{i',j-1}, T_{i',j-1})\}_{i'=1}^{i-1}$ using Lasso regression:
\begin{align}
(\hat{A}_{i,j,k}, \hat{B}_{i,j,k}, \hat{M}_{i,j,k}) &= \argmin_{(\tilde{A}_k, \tilde{B}_k, \tilde{M}_k)} \frac{1}{i-1}\sum_{i'=1}^{i-1} \bp{S_{i',j,k}- \tilde{A}_k^\top \Phi_{i',j-1}- \tilde{B}_k^\top S_{i',j-1} - \tilde{M}_k^\top S_{i',1})}^2\nonumber \\
&\quad\quad\quad\quad\quad\quad\quad\quad + \lambda_k ( \|\tilde{A}_k\|_1 +\|\tilde{B}_k\|_1 +\|\tilde{M}_k\|_1  ), \quad\quad \mbox{for}\quad k\in\Omega.\label{eq:lasso_h}
\end{align}
Set the estimate  $\hat{h}_{i,j}(s_1, s_{j-1},\tau_{j-1})=\hat{A}_{i,j,\Omega}^\top\bp{\tau_{j-1}\otimes\chi(s_{j-1, \Omega})} +\hat B_{i,j,\Omega}^\top s_{j-1} +\hat{M}_{i,j,\Omega}^\top s_1$.

Define $\hat\nu_{i,j}(\cdot)=\Clip_{[c,c^{-1}]}\big(\frac{1}{i-1}\sum_{i'=1}^{i-1}  \chi\big(\hat{h}_{i,j}(\cdot) + S_{i',j,\Omega}- \hat{h}_{i,j}(S_{i',1:j-1},  T_{i,1:j-1})\big)\chi\big(\hat{h}_{i,j}(\cdot) + S_{i',j,\Omega}- \hat{h}_{i,j}(S_{i',1:j-1},  T_{i,1:j-1})\big)^\top\big)$,  with clipping operator defined in \eqref{eq:clip_matrix}.
\caption{Estimating $\nu_{j}$ under  Model \ref{algo:plmdgp}}
\label{algo:v}
\end{algorithm}

\subsubsection{Putting everything together.} 
\label{sec:combine_all_plmm}
With the estimated $\hat{f}_{i,j}$ and $\hat{\nu}_{i,j}$ in the previous sections, we are ready to construct weights to achieve strong Gaussian approximation in Corollary \ref{cor:full_normality}. We summarize the full steps in 
Algorithm \ref{algo:plmdgp_weights} and provide the  strong Gaussian approximation rate  in Corollary \ref{cor:plmm_normality_rate}.

\begin{algorithm}
\DontPrintSemicolon 
\KwIn{Data $\calD=\{S_{i,1}, T_{i,1}, \ldots, S_{i,L}, T_{i,L}, Y_i\}_{i=1}^n$; true  $\theta^*$ range $[-U,U]$; true $F_{i,j}$ range $[\sigma^2, M^2]$.}
\KwOut{Consistent Estimation $\hat{\theta}^{(C)}_{n}$ and Asymptotically Normal Estimation $\hat{\theta}_n^{(N)}$}

\textsc{//Part I: Construct consistent estimation}

Set $H_{i,0}^{(C)}:=1,\forall i=1,\dots, n$.

\For{$i=\{1,\dots, n\},  j=\{1,\dots, L\}$}{
Set $H_{i,j}^{(C)}:= (I_{d_\tau}\otimes \chi(S_{i,j,\Omega}))\Var^+_{i,j}(T_{i,j})^{-1/2} 
    (I_{d_\tau}\otimes \chi(S_{i,j,\Omega}))^\top$.
}
\For{$i=\{1,\dots, n\},  j=\{1,\dots, L\}$}{

Let $\tilde{\theta}^{(C)}_{i,j}$ solve the AW-GMM estimator \eqref{eq:gmm_estimator_linear} using 
 data $\{Z_{i'}\}_{i'=1}^{i-1}$ and weights $\{H_{i',j}^{(C)}\}_{i'=1}^{i-1}$. 
 
}

Set the consistent estimation $\hat{\theta}^{(C)}_{n}$ as $\tilde{\theta}^{(C)}_{n,L}$.

 \hrulefill

\textsc{//Part II: Construct asymptotically normal estimation}

\For{$i=\{1,\dots, n\}$}{
Set $\hat{\sigma}_{i,0}^2 := \frac{1}{n_{i,j}^F}\sum_{i'\in\calI_{i,j}^F}\bp{Y_{i'}-\sum_{j=1}^L  
(T_{i',j}\otimes\chi(S_{i',j,\Omega}))^\top
\tilde{\theta}_{i,j}^{(C)}}^2$.

Set $H_{i,0}^{(N)}=\hat{\sigma}_{i,0}^{-1}$.
}

\For{$i=\{1,\dots, n\}$, $j=\{1,\dots, L\}$}{

Obtain $\hat{f}_{i,j}$ via Algorithm \ref{algo:f}$(\calD, i,j,\tilde{\theta}_{i,j}^{(C)}, [\sigma^2, M^2])$.

Obtain $\hat{\nu}_{i,j}$ via Algorithm \ref{algo:v}$(\calD, i,j)$.

Set $H_{i,j}^{(N)}=\hat{f}_{i,j}(S_{i,1:j}, T_{i,1:j-1})^{-1/2}(I_{d_\tau}\otimes\hat{\nu}_{i,j}(S_{i,1:j-1}, T_{i,1:j-1})^{-1/2}) (I_{d_\tau}\otimes \chi(S_{i,j,\Omega}))\Var^+_{i,j}(T_{i,j})^{-1/2} 
    (I_{d_\tau}\otimes \chi(S_{i,j,\Omega}))^\top\cdot\|\chi(S_{i,j,\Omega})\|_2^{-2}$.
}

Set the  asymptotically normal estimation $\hat{\theta}^{(N)}_{n}$ as the solution to the AW-GMM estimator \eqref{eq:gmm_estimator_linear} using the full data $\calD$ and weights $\{H_{i,j}^{(N)}\}_{i=1}^n$.

\caption{Post RL Estimation and Inference under  Model \ref{algo:plmdgp}}
\label{algo:plmdgp_weights}
\end{algorithm}

\begin{corollary}
\label{cor:plmm_normality_rate}
Consider  Model \ref{algo:plmdgp} and  the no-treatment $0_{1:L}$ policy as the evaluation policy. Suppose that the nuisance components $g_j$ and $h_{j}$ are  estimated via Lasso regression as in Lemma \ref{lemma:estimation_rate_plmm}. Under Assumption \ref{assump:overlap} with behavior exploration rate $\alpha\in[0,\frac{1}{L+1})$, 
the AW-GMM estimation $\hat{\theta}_n^{(N)}$  given by  Algorithm \ref{algo:plmdgp_weights} is asymptotically normal with strong Gaussian approximation rate of $\widetilde O\big(L^2 n^{-\min\big(\frac{1-\alpha}{12}, \frac{1-L\alpha}{10}, \frac{1-(L+1)\alpha}{2}\big)}\big)$.
\end{corollary}

\begin{figure}[t]
  \centering
    \includegraphics[width=\textwidth]{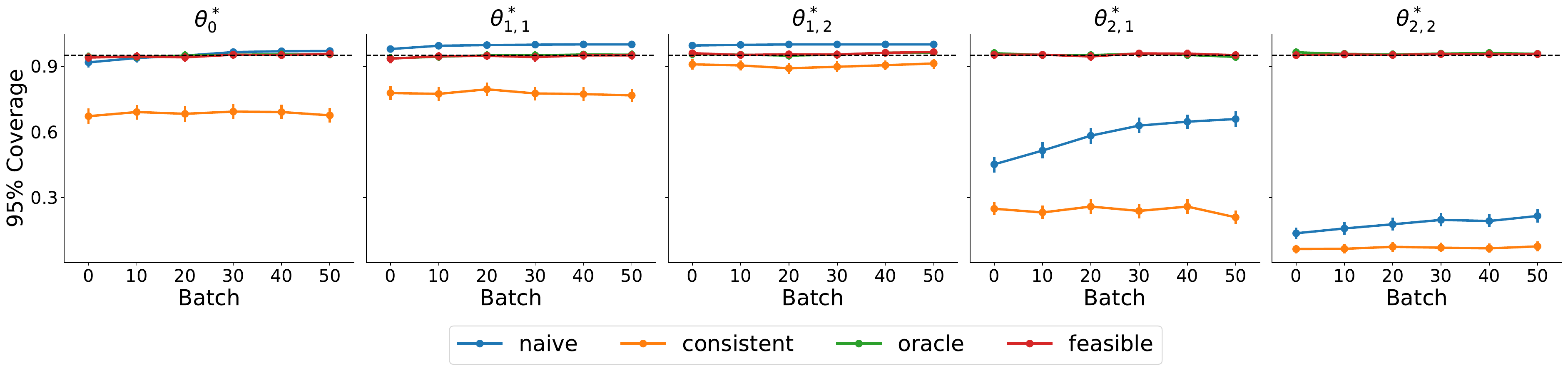}
    \includegraphics[width=\textwidth]{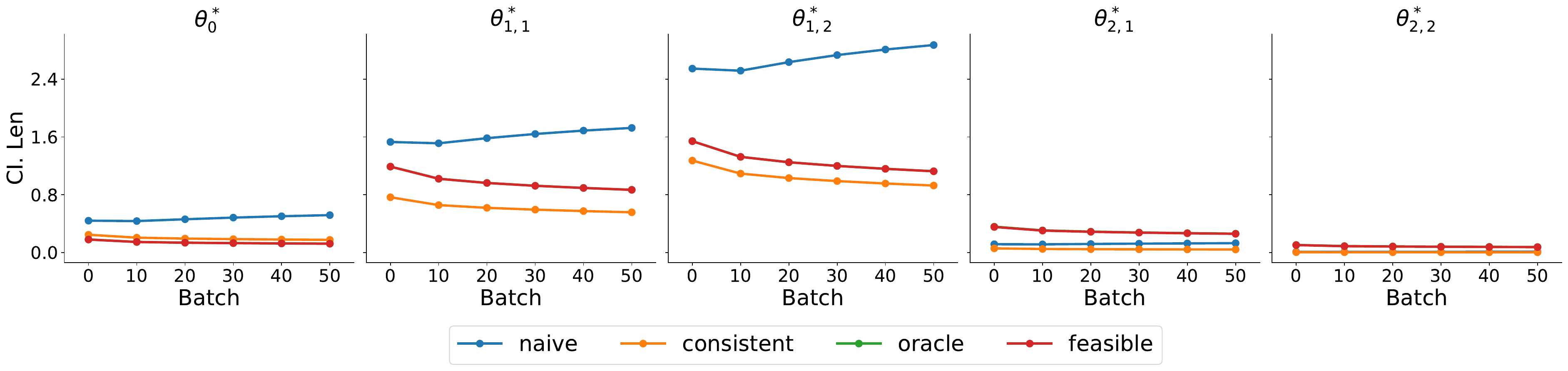}
\caption{
Inference results of  AW-GMM Estimations with different weights across varying sample size. Error bars are $95\%$ confidence intervals derived from $10^3$ simulations.
Results under \oracle~weights are shown in dashed line to indicate that oracle weighting   requires knowledge of ground truth structure parameters and thus cannot be applied in practice.
AW-GMM Estimations with \textsc{Oracle} and \textsc{Feasible} weights meet nominal  coverage, while the \naive~and \consistent~are either under- or over-coverage. 
}
    \label{fig:cov}
\end{figure}

\begin{figure}[t]
    \centering
    \includegraphics[width=\textwidth]{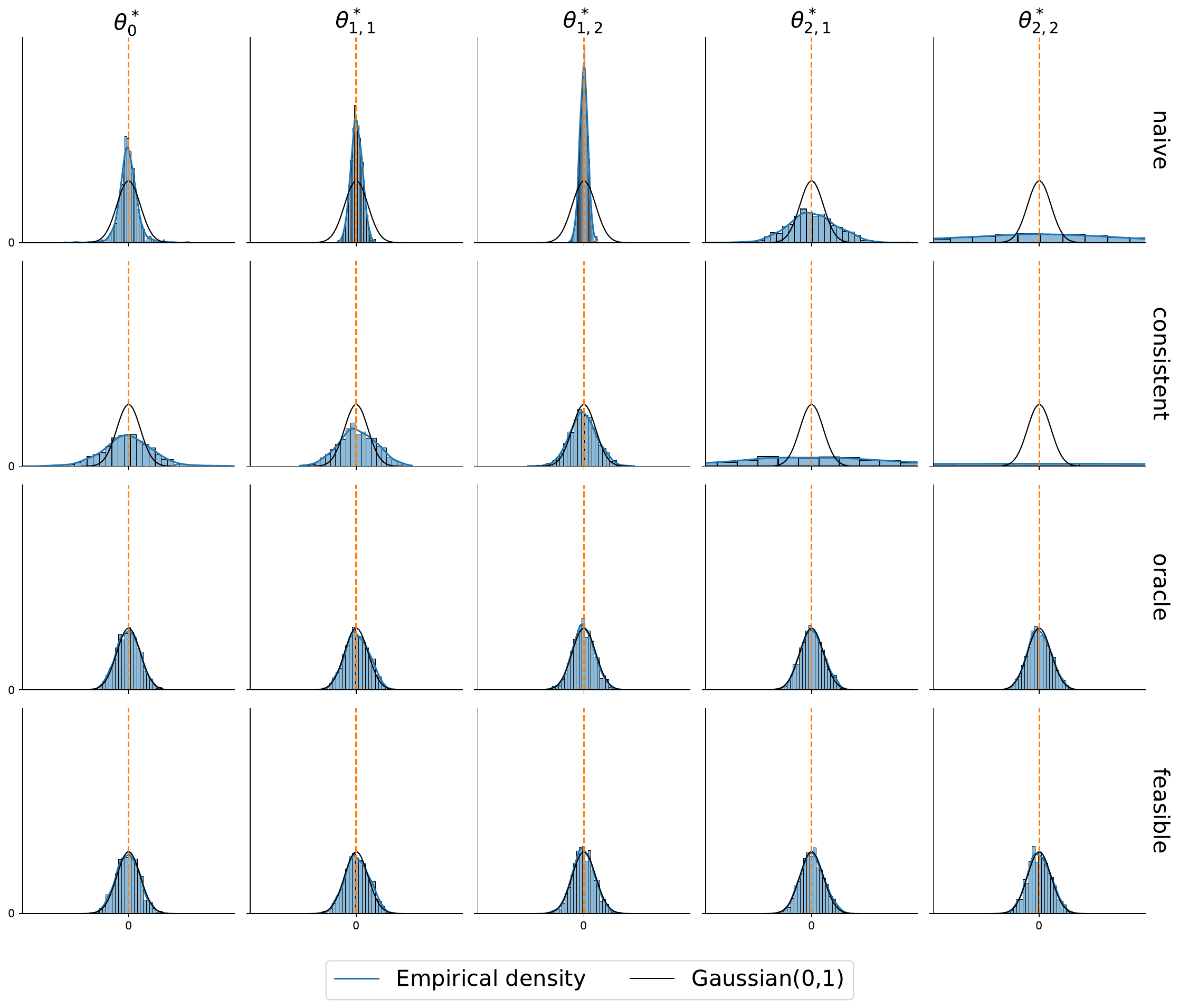}
    \caption{Histogram of studentized statistics from Gaussian approximation \eqref{eq:strong_gaussian_approx} at sample szie $n=5\times 10^3$.  Numbers are aggregated from  $10^3$ simulations. AW-GMM Estimations with \textsc{Oracle} and \textsc{Feasible} weights are asymptotically normal.}
    \label{fig:tstat}
\end{figure}

\begin{figure}[t]
  \centering
    \includegraphics[width=\textwidth]{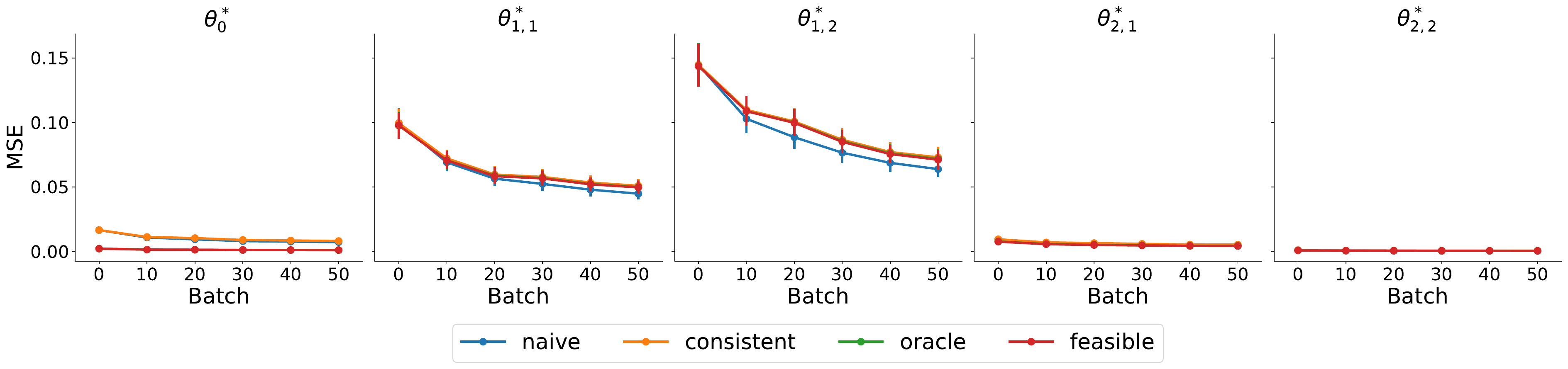}
     \includegraphics[width=\textwidth]{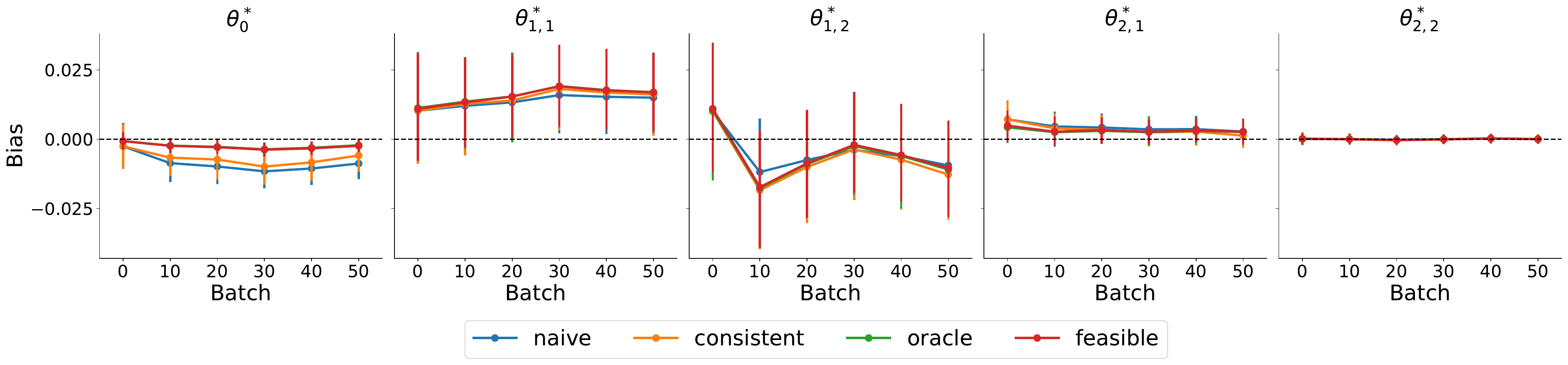}
\caption{Estimation results of  AW-GMM Estimations with different weights across varying sample size. Error bars are $95\%$ confidence intervals derived from $10^3$ simulations. Results under \oracle~weights are shown in dashed line to indicate that oracle weighting   requires knowledge of ground truth structure parameters and thus cannot be applied in practice. AW-GMM Estimations with \oracle~and \feasible~weights provide more accurate estimations for policy value $\theta_0^*$.}
    \label{fig:estimation}
\end{figure}

\begin{figure}[t]
  \centering
    \includegraphics[width=\textwidth]{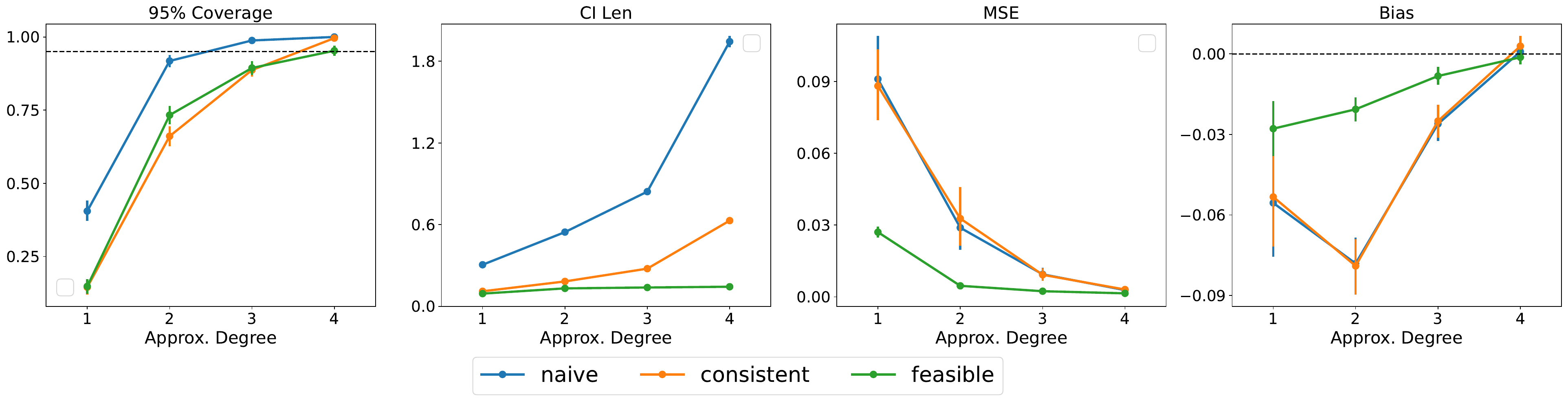}
\caption{Estimation and inference results of AW-GMM Estimations with different weights under  mis-specification at sample size $n=5\times 10^3$. We use polynomial approximations from degrees 1 to 5 for exponential feature mappings.
Error bars are $95\%$ confidence intervals derived from $10^3$ simulations.
AW-GMM Estimations under \textsc{Feasible} weights achieve nominal coverage and tight confidence intervals for degrees above one, consistently offering more accurate estimations with lower MSE and bias across all approximation degrees.}
    \label{fig:misspecifcation}
\end{figure}

\subsection{Numerical Experiments}
We finally present empirical evidence supporting our method's effectiveness in scenarios with both correct and mis-specified feature mapping function $\phi_j$. Our findings illustrate that applying weights enhances outcomes in both cases: under correct specification, it empirically confirms our method's consistency and asymptotic normality across all structural parameters. In cases of mis-specification, it notably improves the accuracy of estimating the evaluation policy value.

We study a two-stage high-dimensional Markovian model (with $L=2$), as outlined in Model \ref{algo:plmdgp}, focusing on binary treatment scenarios. Data collection is performed by an $\epsilon$-greedy episodic RL agent, which sequentially rolls out an episode for each unit. The agent's behavior policy undergoes batch updates, with each batch including $100$ units; the exploration amount $\epsilon$ decays at a polynomial rate such that for any given batch index $b$, the exploration $\epsilon$ is set to be $b^{-0.5}/2$, satisfying Assumption \ref{assump:overlap}.  Throughout the experiment, a total of $5000$ units are collected.

We consider high dimensional states and low dimensional features, with sparse linear models for both the state transition and the final outcome.
Specifically, the state space is in $\bbR^{10}$ with $S_{i,j}= (S_{i,j,1},\dots, S_{i,j,10})$; only  the first coordinate is informative, with others being noises. 
We focus on  GMM-estimations using four different weighting schemes:

\begin{itemize}
    \item \naive: Standard GMM estimation with no weights applied.
    \item \consistent:  AW-GMM estimation $\hat{\theta}^{(C)}_{n}$ under consistency weights as  in  Algorithm \ref{algo:plmdgp_weights}.
    \item \oracle: AW-GMM estimation with oracle weights, using  ground truth $F_{i,j}$ and $V_{i,j}$ in Corollary \ref{cor:normality}.
    \item \feasible: AW-GMM estimation $\hat{\theta}_n^{(N)}$ under asymptotic normality weights as  in  Algorithm~\ref{algo:plmdgp_weights}.
\end{itemize}
\medskip

Note that the \oracle~weights are infeasible, as they require knowledge of the true data-generating process. Conversely, the \naive, \consistent, and \feasible~weights can either be directly computed or estimated from the data. Below we show that \feasible~weighting scheme performs comparably to the \oracle~and significantly outperforms the other two  in  achieving asymptotic normality for post-RL inference; the \feasible~also demonstrates robustness under misspecification.

\subsubsection{Estimation and Inference Validity.}
We first consider cases with correctly specified feature mapping. 
Define the true feature mapping as $\phi_j(S_{i,1:j}, T_{i,1:j})=T_{i,j}\cdot (S_{i,j,1}, S_{i,j,1}^2)\in\bbR^2$. 
Our  estimand,  $\theta^*=(\theta_0^*, \theta_{1,1}^*, \theta_{1,2}^*, \theta_{2,1}^*, \theta_{2,2}^*)$, includes the evaluation policy value $\theta_0^*$  and structure parameters $(\theta_{1,1}^*, \theta_{1,2}^*)$ and $(\theta_{2,1}^*, \theta_{2,2}^*)$, indicating the effect of the treatment at each stage. 
Figure \ref{fig:cov} shows that 
 estimations with \oracle~and \feasible~weights achieve nominal coverage, contrasting with the \naive~no-weighting or \consistent~weights, which either have low coverage or are overly conservative. In particular, the \naive~estimator's confidence intervals for the policy value $\theta_0^*$ and the first-stage treatment effect $(\theta_{1,1}^*,\theta_{1,2}^*)$ even widen with increased sample size. 
Figure \ref{fig:tstat} further shows that the studentized statistics of the AW-GMM estimators, derived from Eq.\eqref{eq:strong_gaussian_approx}, under \oracle~and \feasible~weights conform to asymptotic normality, unlike those from other weighting schemes.
Furthermore, Figure \ref{fig:estimation} shows that while all estimators yield similar results for the stage-wise treatment effect estimation, those with \oracle~and \feasible~weights achieve higher accuracy in estimating the policy value $\theta_0^*$ with smaller MSE and bias.

\subsubsection{Robustness under Misspecification.} Transitioning to cases of mis-specification, we consider a true feature mapping defined as  $\phi_j(S_{i,1:j}, T_{i,1:j})=\exp(T_{i,j}S_{i,j,1}/2) -1$, unknown to the AW-GMM estimators.
These estimators then use polynomial approximations with degree $d\in[1:5]$ instead of the true $\phi_j$, i.e., we use $\hat{\phi}^{(d)}_j(S_{i,1:j}, T_{i,1:j})=T_{i,j}(S_{i,j,1}, \dots, S_{i,j,1}^d)\in\bbR^d$  as the approximated feature mapping for each degree $d$. 
Under this scenario, the stage-wise structural parameters lose their causal interpretation, yet estimating the evaluation policy value $\theta_0^*$ remains relevant. Figure \ref{fig:misspecifcation} shows that AW-GMM estimations under \feasible~weights outperform those with \naive~no-weighting or \consistent~weights, demonstrating better coverage, narrower confidence intervals, and reduced MSE and bias. 
Interestingly, with \feasible~weights, improvements in inference validity (coverage) and efficiency (confidence interval length) plateau for approximation degrees $d \geq 3$, while estimation quality (MSE and bias) slightly deteriorates at $d=5$. 
The decrease in estimation accuracy at higher approximation degrees is due to the requirement to estimate more structural parameters from the same sample size, which instead complicates the estimation process.
This observation highlights a  balance between inference and estimation: without knowing the precise feature mapping, selecting an approximation with appropriate complexity is crucial in optimizing both estimation accuracy and inferential robustness.



\bibliographystyle{informs2014}
\bibliography{reference}

\begin{thebibliography}{39}
\providecommand{\natexlab}[1]{#1}
\providecommand{\url}[1]{\texttt{#1}}
\providecommand{\urlprefix}{URL }

\bibitem[{Agrawal \protect\BIBand{} Goyal(2013)}]{agrawal2013thompson}
Agrawal S, Goyal N (2013) Thompson sampling for contextual bandits with linear
  payoffs. \emph{International conference on machine learning}, 127--135
  (PMLR).

\bibitem[{Baird(1995)}]{baird1995residual}
Baird L (1995) Residual algorithms: Reinforcement learning with function
  approximation. \emph{Machine Learning Proceedings 1995}, 30--37 (Elsevier).

\bibitem[{Barsov \protect\BIBand{} Ul’yanov(1987)}]{barsov1987estimates}
Barsov S, Ul’yanov VV (1987) Estimates of the proximity of gaussian measures.
  \emph{Sov. Math., Dokl}, volume~34, 462--466.

\bibitem[{Bhatia(2010)}]{bhatia2010modulus}
Bhatia R (2010) Modulus of continuity of the matrix absolute value.
  \emph{Indian Journal of Pure and Applied Mathematics} 41(1):99--111.

\bibitem[{Bibaut et~al.(2021)Bibaut, Dimakopoulou, Kallus, Chambaz,
  \protect\BIBand{} van Der~Laan}]{bibaut2021post}
Bibaut A, Dimakopoulou M, Kallus N, Chambaz A, van Der~Laan M (2021)
  Post-contextual-bandit inference. \emph{Advances in neural information
  processing systems} 34:28548--28559.

\bibitem[{Cattaneo et~al.(2022)Cattaneo, Masini, \protect\BIBand{}
  Underwood}]{cattaneo2022yurinskii}
Cattaneo MD, Masini RP, Underwood WG (2022) Yurinskii's coupling for
  martingales. \emph{arXiv preprint arXiv:2210.00362} .

\bibitem[{Chakraborty et~al.(2013)Chakraborty, Moodie, Chakraborty,
  \protect\BIBand{} Moodie}]{chakraborty2013semi}
Chakraborty B, Moodie EE, Chakraborty B, Moodie EE (2013) Semi-parametric
  estimation of optimal dtrs by modeling contrasts of conditional mean
  outcomes. \emph{Statistical Methods for Dynamic Treatment Regimes:
  Reinforcement Learning, Causal Inference, and Personalized Medicine} 53--78.

\bibitem[{Chen et~al.(2019)Chen, Beutel, Covington, Jain, Belletti,
  \protect\BIBand{} Chi}]{chen2019top}
Chen M, Beutel A, Covington P, Jain S, Belletti F, Chi EH (2019) Top-k
  off-policy correction for a reinforce recommender system. \emph{Proceedings
  of the Twelfth ACM International Conference on Web Search and Data Mining},
  456--464.

\bibitem[{Chernozhukov et~al.(2022)Chernozhukov, Escanciano, Ichimura, Newey,
  \protect\BIBand{} Robins}]{chernozhukov2022locally}
Chernozhukov V, Escanciano JC, Ichimura H, Newey WK, Robins JM (2022) Locally
  robust semiparametric estimation. \emph{Econometrica} 90(4):1501--1535.

\bibitem[{Chu et~al.(2011)Chu, Li, Reyzin, \protect\BIBand{}
  Schapire}]{chu2011contextual}
Chu W, Li L, Reyzin L, Schapire R (2011) Contextual bandits with linear payoff
  functions. \emph{Proceedings of the Fourteenth International Conference on
  Artificial Intelligence and Statistics}, 208--214 (JMLR Workshop and
  Conference Proceedings).

\bibitem[{Daskalakis \protect\BIBand{} Golowich(2022)}]{daskalakis2022fast}
Daskalakis C, Golowich N (2022) Fast rates for nonparametric online learning:
  from realizability to learning in games. \emph{Proceedings of the 54th Annual
  ACM SIGACT Symposium on Theory of Computing}, 846--859.

\bibitem[{Deshpande et~al.(2018)Deshpande, Mackey, Syrgkanis, \protect\BIBand{}
  Taddy}]{deshpande2018accurate}
Deshpande Y, Mackey L, Syrgkanis V, Taddy M (2018) Accurate inference for
  adaptive linear models. \emph{International Conference on Machine Learning},
  1194--1203 (PMLR).

\bibitem[{Devroye et~al.(2018)Devroye, Mehrabian, \protect\BIBand{}
  Reddad}]{devroye2018total}
Devroye L, Mehrabian A, Reddad T (2018) The total variation distance between
  high-dimensional gaussians with the same mean. \emph{arXiv preprint
  arXiv:1810.08693} .

\bibitem[{Hadad et~al.(2021)Hadad, Hirshberg, Zhan, Wager, \protect\BIBand{}
  Athey}]{hadad2021confidence}
Hadad V, Hirshberg DA, Zhan R, Wager S, Athey S (2021) Confidence intervals for
  policy evaluation in adaptive experiments. \emph{Proceedings of the national
  academy of sciences} 118(15):e2014602118.

\bibitem[{Hall \protect\BIBand{} Heyde(2014)}]{hall2014martingale}
Hall P, Heyde CC (2014) \emph{Martingale limit theory and its application}
  (Academic press).

\bibitem[{Imbens(2004)}]{imbens2004nonparametric}
Imbens GW (2004) Nonparametric estimation of average treatment effects under
  exogeneity: A review. \emph{Review of Economics and statistics} 86(1):4--29.

\bibitem[{Kakade \protect\BIBand{} Langford(2002)}]{kakade2002}
Kakade S, Langford J (2002) Approximately optimal approximate reinforcement
  learning. \emph{Proceedings of the Nineteenth International Conference on
  Machine Learning}, 267–274, ICML '02 (San Francisco, CA, USA: Morgan
  Kaufmann Publishers Inc.), ISBN 1558608737.

\bibitem[{Laan \protect\BIBand{} Robins(2003)}]{laan2003unified}
Laan MJ, Robins JM (2003) \emph{Unified methods for censored longitudinal data
  and causality} (Springer).

\bibitem[{Lei et~al.(2012)Lei, Nahum-Shani, Lynch, Oslin, \protect\BIBand{}
  Murphy}]{lei2012smart}
Lei H, Nahum-Shani I, Lynch K, Oslin D, Murphy SA (2012) A" smart" design for
  building individualized treatment sequences. \emph{Annual review of clinical
  psychology} 8:21--48.

\bibitem[{Lewis \protect\BIBand{} Syrgkanis(2020)}]{lewis2020double}
Lewis G, Syrgkanis V (2020) Double/debiased machine learning for dynamic
  treatment effects via g-estimation. \emph{arXiv preprint arXiv:2002.07285} .

\bibitem[{Lok \protect\BIBand{} DeGruttola(2012)}]{lok2012impact}
Lok JJ, DeGruttola V (2012) Impact of time to start treatment following
  infection with application to initiating haart in hiv-positive patients.
  \emph{Biometrics} 68(3):745--754.

\bibitem[{Miguel et~al.(2023)Miguel, HERNAN, \protect\BIBand{}
  James}]{miguel2023causal}
Miguel A, HERNAN R, James M (2023) \emph{Causal inference: what if} (CRC
  PRESS).

\bibitem[{Murphy(2003)}]{murphy2003optimal}
Murphy SA (2003) Optimal dynamic treatment regimes. \emph{Journal of the Royal
  Statistical Society: Series B (Statistical Methodology)} 65(2):331--355.

\bibitem[{Murphy(2005)}]{murphy2005experimental}
Murphy SA (2005) An experimental design for the development of adaptive
  treatment strategies. \emph{Statistics in medicine} 24(10):1455--1481.

\bibitem[{Neu \protect\BIBand{} Pike-Burke(2020)}]{neu2020unifying}
Neu G, Pike-Burke C (2020) A unifying view of optimism in episodic
  reinforcement learning. \emph{Advances in Neural Information Processing
  Systems} 33:1392--1403.

\bibitem[{Neyman(1979)}]{neyman1979c}
Neyman J (1979) C ($\alpha$) tests and their use. \emph{Sankhy{\=a}: The Indian
  Journal of Statistics, Series A} 1--21.

\bibitem[{Offer-Westort et~al.(2021)Offer-Westort, Coppock, \protect\BIBand{}
  Green}]{offer2021adaptive}
Offer-Westort M, Coppock A, Green DP (2021) Adaptive experimental design:
  Prospects and applications in political science. \emph{American Journal of
  Political Science} 65(4):826--844.

\bibitem[{Precup(2000)}]{precup2000eligibility}
Precup D (2000) Eligibility traces for off-policy policy evaluation.
  \emph{Computer Science Department Faculty Publication Series} 80.

\bibitem[{Rakhlin \protect\BIBand{} Sridharan(2014)}]{rakhlin2014online}
Rakhlin A, Sridharan K (2014) Online non-parametric regression.
  \emph{Conference on Learning Theory}, 1232--1264 (PMLR).

\bibitem[{Robins(1986)}]{robins1986new}
Robins J (1986) A new approach to causal inference in mortality studies with a
  sustained exposure period—application to control of the healthy worker
  survivor effect. \emph{Mathematical modelling} 7(9-12):1393--1512.

\bibitem[{Robins(2004)}]{robins2004optimal}
Robins JM (2004) Optimal structural nested models for optimal sequential
  decisions. \emph{Proceedings of the Second Seattle Symposium in
  Biostatistics: analysis of correlated data}, 189--326 (Springer).

\bibitem[{Rosenbaum \protect\BIBand{} Rubin(1983)}]{rosenbaum1983central}
Rosenbaum PR, Rubin DB (1983) The central role of the propensity score in
  observational studies for causal effects. \emph{Biometrika} 70(1):41--55.

\bibitem[{Russo(2016)}]{russo2016simple}
Russo D (2016) Simple bayesian algorithms for best arm identification.
  \emph{Conference on Learning Theory}, 1417--1418 (PMLR).

\bibitem[{Shalev-Shwartz \protect\BIBand{}
  Ben-David(2014)}]{shalev2014understanding}
Shalev-Shwartz S, Ben-David S (2014) \emph{Understanding machine learning: From
  theory to algorithms} (Cambridge university press).

\bibitem[{Shi et~al.(2022)Shi, Luo, Le, Zhu, \protect\BIBand{}
  Song}]{shi2022statistically}
Shi C, Luo S, Le Y, Zhu H, Song R (2022) Statistically efficient advantage
  learning for offline reinforcement learning in infinite horizons.
  \emph{Journal of the American Statistical Association} 1--14.

\bibitem[{Sutton et~al.(1998)Sutton, Barto et~al.}]{sutton1998introduction}
Sutton RS, Barto AG, et~al. (1998) \emph{Introduction to reinforcement
  learning}, volume 135 (MIT press Cambridge).

\bibitem[{Vansteelandt \protect\BIBand{}
  Sjolander(2016)}]{vansteelandt2016revisiting}
Vansteelandt S, Sjolander A (2016) Revisiting g-estimation of the effect of a
  time-varying exposure subject to time-varying confounding.
  \emph{Epidemiologic Methods} 5(1):37--56.

\bibitem[{Zhan et~al.(2021)Zhan, Hadad, Hirshberg, \protect\BIBand{}
  Athey}]{zhan2021off}
Zhan R, Hadad V, Hirshberg DA, Athey S (2021) Off-policy evaluation via
  adaptive weighting with data from contextual bandits. \emph{Proceedings of
  the 27th ACM SIGKDD Conference on Knowledge Discovery \& Data Mining},
  2125--2135.

\bibitem[{Zhang et~al.(2021)Zhang, Janson, \protect\BIBand{}
  Murphy}]{zhang2021statistical}
Zhang K, Janson L, Murphy S (2021) Statistical inference with m-estimators on
  adaptively collected data. \emph{Advances in Neural Information Processing
  Systems} 34:7460--7471.

\end{thebibliography}

\newpage

\begin{APPENDICES}
\section{Proofs of Main Lemmas}
\subsection{Proof of Lemma \ref{lemma:identification_policy_value}}
\label{appendix:proof_identification_policy_value}
We follow the proof pattern for Lemma 6 in \cite{lewis2020double}. Note that for the observed $\y$, we always have $\y\equiv \y(T_{1:L})$.  For any stage $j$, we have
\begin{align*}
    \theta_j^\top \phi_j(S_{1:j}, T_{1:j})\stackrel{(i)}{=} \gamma(S_{1:j}, T_{1:j}) \stackrel{(ii)}{=}\bbE\bb{ \y(T_{1:j}, \pi^*_{j+1:L}) - \y(T_{1:j-1}, 0, \pi^*_{j+1:L}) \mid S_{1:j}, T_{1:j}} ,
\end{align*}
where (i) is by  Assumption \ref{assump:linear_blip_function}, (ii) is by the definition of blip function. Moreover, note that:
\begin{align*}
    \theta_j^\top \phi_j(S_{1:j}, T_{1:j-1}, \pi^*(S_{1:j}, T_{1:j-1})) =~& \gamma(S_{1:j}, T_{1:j-1}, \pi^*(S_{1:j}, T_{1:j-1}))\\
    =~& \bbE\bb{ \y(T_{1:j}, \pi^*_{j+1:L}) - \y(T_{1:j-1}, 0, \pi^*_{j+1:L}) \mid S_{1:j}, T_{1:j-1}, T_j=\pi^*(S_{1:j}, T_{1:j-1})}\\
    =~& \bbE\bb{ \y(T_{1:j-1}, \pi^*_{j:L}) - \y(T_{1:j-1}, 0, \pi^*_{j+1:L}) \mid S_{1:j}, T_{1:j-1}, T_j=\pi^*(S_{1:j}, T_{1:j-1})}\\
    \stackrel{(iii)}{=}~& \bbE\bb{ \y(T_{1:j-1}, \pi^*_{j:L}) - \y(T_{1:j-1}, 0, \pi^*_{j+1:L}) \mid S_{1:j}, T_{1:j-1}, T_j},
\end{align*}
where (iii) follows since the counterfactual outcomes are independent of the value of $T_j$ conditional on $S_{1:j}, T_{1:j-1}$ by Sequential Conditional Exogeneity Assumption~\ref{assump:exogeneity}. Subtracting the two equalities, and by the definition of $\Phi_j$, we derive that:
\begin{align}
\bbE\bb{ \y(T_{1:j}, \pi^*_{j+1:L}) - \y(T_{1:j-1}, \pi^*_{j:L}) \mid S_{1:j}, T_{1:j}} = \theta_j^\top \Phi_j
\end{align}
Then we have
\begin{align*}
    \bbE\bb{ \y-\y(T_{1:j-1}, \pi^*_{j:L})|S_{1:j}, T_{1:j} } =~& \bbE\bb{ \y(T_{1:L})-\y(T_{1:j-1}, \pi^*_{j:L}) \mid S_{1:j}, T_{1:j} }\\
    \stackrel{(i)}{=}~& \sum_{j'=j}^L\bbE\bb{ \y(T_{1:j'}, \pi_{l'+1:L})-\y(T_{1:j'-1}, \pi_{l':L}) \mid S_{1:j}, T_{1:j} }\\
     =~& \sum_{j'=j}^L\bbE\bb{ \bbE\bb{\y(T_{1:j'}, \pi_{l'+1:L})-\y(T_{1:j'-1}, \pi_{l':L}) \mid X_{1:j'}, T_{1:j'} } \mid S_{1:j}, T_{1:j} }\\
     =~&\sum_{j'=j}^L\bbE\bb{ \theta_{j'}^\top \Phi_{j'} \mid S_{1:j}, T_{1:j} },
\end{align*}
where (i) uses  a telescoping sum. 
Rearranging the above, we have
\begin{align*}
    \bbE\bb{ \y(T_{1:j-1}, \pi^*_{j:L}) \mid S_{1:j}, T_{1:j} }  = \bbE\bb{ \y -\sum_{j'=j}^L\theta_{j'}^\top \Phi_{j'} \mid S_{1:j}, T_{1:j} }.
\end{align*}

\subsection{Proof of Lemma \ref{lemma:identification_parameter}}
\label{appendix:proof_identification_parameter}
We  adapt  the proof pattern of   Lemma 7 in  \cite{lewis2020double} to the RL data.
\begin{align*}
    \bbE_{i,j}^+\bb{R_{i,j}\, \bp{\Psi_{i,j} - \bar{\Psi}_{i,j}}} =~& \bbE_{i,j}^+\bb{ \bbE\bb{R_{i,j} \mid \calF_{i,j}^+, T_{i,j}}\, \bp{\Psi_{i,j} - \bar{\Psi}_{i,j}}} 
\end{align*}
Moreover, note that by Lemma~\ref{lemma:identification_policy_value} and the definition of $R_{i,j}$, we have:
\begin{align*}
    \bbE\bb{R_{i,j} \mid  \calF_{i,j}^+, T_{i,j}} = \bbE\bb{\y(T_{1:j-1}, \pi^*_{j:L}) \mid \calF_{i,j}^+, T_{i,j}}
\end{align*}
Moreover, by Assumption~\ref{assump:exogeneity}, we have:
\begin{align*}
    \bbE\bb{\y(T_{1:j-1}, \pi^*_{j:L}) \mid  \calF_{i,j}^+, T_{i,j}} = \bbE\bb{\y(T_{1:j-1}, \pi^*_{j:L}) \mid  \calF_{i,j}^+} = \bbE_{i,j}^+\bb{\y(T_{1:j-1}, \pi^*_{j:L})}
\end{align*}
Combining the last three equations, we conclude that:
\begin{align*}
\bbE_{i,j}^+\bb{R_{i,j}\, \bp{\Psi_{i,j} - \bar{\Psi}_{i,j}}}
=~& \bbE_{i,j}^+\bb{ \bbE_{i,j}^+\bb{\y(T_{1:j-1}, \pi^*_{j:L}) }\, \bp{\Psi_{i,j} - \bar{\Psi}_{i,j}}} \\
=~& \bbE_{i,j}^+\bb{\y(T_{1:j-1}, \pi^*_{j:L}) }\, \bbE_{i,j}^+\bb{\Psi_{i,j} - \bar{\Psi}_{i,j}} = 0
\end{align*}
where the last equality holds, since by definition $\bar{\Psi}_{i,j}:=\bbE_{i,j}^+[\Psi_{i,j}]$.

\subsection{Proof of Lemma \ref{lemma:mds-1}}
\label{appendix:proof_mds-1}

We  show that $\sum_{i=1}^n H_i\xi_{i}$ is a sum of  martingale difference sequence adapted to filtration $\{\calF_i\}$. 
By Lemma~\ref{lemma:identification_parameter} and the definition of $\xi_{i,j}$, we have for any $j\in \{0,\dots,L\}$
\begin{align*}
    \bbE_{i,j}^+\bb{H_{i,j} \xi_{i,j} } = \bbE_{i,j}^+\bb{ H_{i,j}\, R_{i,j}\, \bp{\Psi_{i,j} - \bar{\Psi}_{i,j}}} \stackrel{(i)}{=} H_{i,j}\, \bbE_{i,j}^+\bb{ R_{i,j}\, \bp{\Psi_{i,j} - \bar{\Psi}_{i,j}}} \stackrel{(ii)}{=} 0
\end{align*}
where $(i)$ follows by construction that $H_{i,j}$ is measurable with respect to $\calF^+_{i,j}$ and $(ii)$ follows by Lemma~\ref{lemma:identification_parameter}. Therefore we have:
\begin{align*}
    \bbE_i\bb{H_{i,j}\xi_{i,j} } =  \bbE_i     \bb{\bbE_{i,j}^+\bb{H_{i,j}\xi_{i,j} }}=0,\quad \forall j\in[0:L],
\end{align*}
implying $ \bbE_i\bb{H_{i}\xi_{i} }=0$.

\subsection{Proof of Lemma \ref{lemma:mds}}
\label{appendix:proof_mds}

We first prove that $\bbE[R_{i,j}^2\mid T_{i,j}, \calF_{i,j}^+]$ does not depend on $T_{i,j}$ such that
$\bbE[R_{i,j}^2\mid T_{i,j}, \calF_{i,j}^+]=\bbE[R_{i,j}^2\mid  \calF_{i,j}^+]=\bbE_{i,j}^+[R_{i,j}^2]$.
Define $\Bar{R}_{i,j}:=\bbE[R_{i,j}\mid T_{i,j}, \calF^+_{i,j}]$. We have:
\begin{align*}
    \bbE[R_{i,j}^2\mid T_{i,j}, \calF_{i,j}^+] = \bbE[(R_{i,j}-\Bar{R}_{i,j})^2\mid T_{i,j}, \calF_{i,j}^+]  + \Bar{R}_{i,j}^2 = \Var(R_{i,j}\mid T_{i,j}, \calF_{i,j}^+) + \Bar{R}_{i,j}^2.
\end{align*}
By Assumption \ref{assump:homoscedasticity}, $\Var(R_{i,j}\mid T_{i,j}, \calF_{i,j}^+) \equiv\Var(R_{i,j}\mid \calF_{i,j}^+)$ does not depend on $T_{i,j}$. Moreover, Appendix \ref{appendix:proof_identification_parameter} proves that $\Bar{R}_{i,j}$ is independent of $T_{i,j}$ and thus $\Bar{R}_{i,j}=\bbE_{i,j}^+[R_{i,j}]$. We therefore have
\begin{equation}
    \bbE[R_{i,j}^2\mid T_{i,j}, \calF_{i,j}^+]\equiv \bbE_{i,j}^+[R_{i,j}^2], \label{eq:residual_independence}
\end{equation}
which does not depend on $T_{i,j}$.

\paragraph{Part (a).} Consider the conditional covariance. Without loss of generality, assume $j_1<j_2$. Since $H_{i,j}$ are measurable with respect to $\calF^+_{i,j}$, we can write:
\begin{align*}
   \bbE_{i,j_2}^+\bb{H_{i,j_1}\xi_{i,j_1}\xi_{i,j_2}^\top H_{i,j_2}^\top }
    =~& H_{i,j_1} \bp{ \Psi_{i,j_1}-\bar{\Psi}_{i,j_1} }
    \underbrace{\bbE_{i,j_2}^+\bb{ R_{i,j_1}\,R_{i,j_2} \bp{ \Psi_{i,j_2}-\bar{\Psi}_{i,j_2}}^\top }}_{(I)} H_{i,j_2}^\top
\end{align*}
Note that:
\begin{align*}
    R_{i,j_1} = R_{i,j_2} - \sum_{j'=j_1}^{j_2-1}\Phi_{i,j'}^\top\theta_{j'}^*
\end{align*}
Thus:
\begin{align*}
    (I) =~& \bbE_{i,j_2}^+\bb{ R_{i,j_2}^2 \bp{ \Psi_{i,j_2}-\bar{\Psi}_{i,j_2}}^\top } - \sum_{j'=j_1}^{j_2-1}\Phi_{i,j'}^\top\theta_{j'}^* \bbE_{i,j_2}^+\bb{ R_{i,j_2} \bp{ \Psi_{i,j_2}-\bar{\Psi}_{i,j_2}}^\top }\\
    =~& \bbE_{i,j_2}^+\bb{ R_{i,j_2}^2 \bp{ \Psi_{i,j_2}-\bar{\Psi}_{i,j_2}}^\top }  \tag{by Lemma~\ref{lemma:identification_parameter}}\\
      =~& \bbE_{i,j_2}^+\bb{\bbE[ R_{i,j_2}^2 \mid T_{i,j_2}, \calF_{i,j_2}^+]\bp{ \Psi_{i,j_2}-\bar{\Psi}_{i,j_2}}^\top }  \\
     =~& \bbE_{i,j_2}^+\bb{\bbE_{i,j_2}^+[ R_{i,j_2}^2]\bp{ \Psi_{i,j_2}-\bar{\Psi}_{i,j_2}}^\top }  \tag{by \eqref{eq:residual_independence}}\\
     =~& \bbE_{i,j_2}^+\bb{ R_{i,j_2}^2}\bbE_{i,j_2}^+\bb{\bp{ \Psi_{i,j_2}-\bar{\Psi}_{i,j_2}}^\top }   = \zero.
\end{align*}
Therefore, we have:
\begin{equation*}
\Cov_i(H_{i,j_1}\xi_{i,j_1},H_{i,j_2} \xi_{i,j_2})=\bbE_i[H_{i,j_1}\xi_{i,j_1} \xi_{i,j_2}^\top H_{i,j_2}^\top] = \bbE_i\bb{\bbE_{i,j_2}^+[H_{i,j_1}\xi_{i,j_1} \xi_{i,j_2}^\top H_{i,j_2}^\top]}= \zero.
\end{equation*}

\paragraph{Part (b).} By the definition of $\xi_{i,j}$ and the fact that the weights $H_{i,j}$ are measurable in $\calF_{i,j}^+$:
\begin{align*}
 \bbE_{i,j}^+\bb{H_{i,j_1} \xi_{i,j}\xi_{i,j}^\top H_{i,j_2}^\top} =~& H_{i,j}\, \bbE_{i,j}^+\bb{R_{i,j}^2 \bp{\Psi_{i,j} - \bar{\Psi}_{i,j}}\, \bp{\Psi_{i,j} - \bar{\Psi}_{i,j}}^\top}\, H_{i,j}^\top\\
 =~& H_{i,j}\, \bbE_{i,j}^+\bb{\bbE[R_{i,j}^2\mid T_{i,j},\calF_{i,j}^+] \bp{\Psi_{i,j} - \bar{\Psi}_{i,j}}\, \bp{\Psi_{i,j} - \bar{\Psi}_{i,j}}^\top}\, H_{i,j}^\top\\
 =~& H_{i,j}\, \bbE_{i,j}^+\bb{\bbE_{i,j}^+[R_{i,j}^2] \bp{\Psi_{i,j} - \bar{\Psi}_{i,j}}\, \bp{\Psi_{i,j} - \bar{\Psi}_{i,j}}^\top}\, H_{i,j}^\top \tag{by \eqref{eq:residual_independence}}\\
=~& H_{i,j}\, \bbE_{i,j}^+\bb{R_{i,j}^2 }\, \bbE_{i,j}^+ \bb{\bp{\Psi_{i,j} - \bar{\Psi}_{i,j}}\, \bp{\Psi_{i,j} - \bar{\Psi}_{i,j}}^\top}\, H_{i,j}^\top \\
    =~& \bbE_{i,j}^+\bb{R_{i,j}^2 }  H_{i,j} \,\Var_{i,j}^+\bp{\Psi_{i,j}}\, H_{i,j}^\top 
\end{align*}
Therefore,
\begin{align*}
        \Var_{i}(H_{i,j}\xi_{i,j})=\bbE_i[H_{i,j}\xi_{i,j}\xi_{i,j}^\top H_{i,j}^\top]=\bbE_i\bb{ \bbE_{i,j}^+[H_{i,j}\xi_{i,j}\xi_{i,j}^\top H_{i,j}^\top]} = \bbE_i\bb{\bbE^+_{i,j}\bb{ R_{i,j}^2 } \cdot H_{i,j}\,\Var^+_{i,j}(\Psi_{i,j})\,
    H_{i,j}^\top}.
\end{align*}

\section{Proof of Theorem \ref{thm:consistency}}
\label{appendix:consistency}

We hereby show that GMM estimator $\hat{\theta}_n\in\argmin_{\theta\in\Theta}\calL_n(\theta)$ converges to the true parameter $\theta^*\in\argmin_{\theta\in\Theta}\calL(\theta)$, where
\begin{align*}
      \calL_n(\theta) &= \bn{m_n(\theta)
      }_A^2, \quad \mbox{for}\quad m_n(\theta):=\frac{1}{n}\sum_{i=1}^n H_i\, \bp{\beta_i+J_i\theta},\\
      \calL(\theta)&=\bn{\bar{m}_n(\theta)
      }_A^2, \quad \mbox{for}\quad \bar{m}_n(\theta):=\frac{1}{n}\sum_{i=1}^n H_i\bp{ \bar{\beta}_i+\bar{J}_i\theta}.
\end{align*}
In Appendix \ref{appendix:thm_1_a}, we show that  uniformly across $\theta\in\Theta$, we have:
\begin{equation}
\label{eq:empirical_loss_uniform_convergence}
   \bbE\bb{I^2} = O(L^2n^{\alpha_2-1}),\quad \mbox{for}\quad I:=
   \max_{\theta\in\Theta}\|m_n(\theta)-\bar{m}_n(\theta)\|_A.
\end{equation}
Then for $\hat\theta_n$ that minimizes the loss $\calL_n(\theta) = \|m_n(\theta)\|_A^2$, we have:
\begin{align*}
    \|\bar{m}_n(\hat\theta_n)\|_A &= \|m_n(\hat\theta_n)\|_A + \|\bar{m}_n(\hat\theta_n)\|_A - \|m_n(\hat\theta_n)\|_A \\
    &\leq \|m_n(\hat\theta_n)\|_A + \|\bar{m}_n(\hat\theta_n)-m_n(\hat\theta_n)\|_A \tag{by triangular inequality}\\
    &\leq \|m_n(\theta^*)\|_A + \|\bar{m}_n(\hat\theta_n)-m_n(\hat\theta_n)\|_A \tag{$\hat\theta_n$ minimizes $\calL_n(\theta)$}\\
    &\leq \|\bar m_n(\theta^*)\|_A +\|\bar{m}_n(\theta^*)-m_n(\theta^*)\|_A+ \|\bar{m}_n(\hat\theta_n)-m_n(\hat\theta_n)\|_A \tag{by triangular inequality}\\
    &=\|\bar{m}_n(\theta^*)-m_n(\theta^*)\|_A+ \|\bar{m}_n(\hat\theta_n)-m_n(\hat\theta_n)\|_A \tag{by moment equation $\bar m_n(\theta^*)=0$}\\
    &\leq 2I.
\end{align*}
Therefore, 
\begin{equation}
\label{eq:moment_convergence}
    \bbE[ \|\bar{m}_n(\hat\theta_n)\|_A^2]\leq 4\bbE[I^2] \stackrel{\mbox{by \eqref{eq:empirical_loss_uniform_convergence}}}{=} O(L^2n^{\alpha_2-1}).
\end{equation}
On the other hand, we have:
\begin{align}
\label{eq:normalized_error_rate}
    \bbE\bb{
    \bn{\frac{1}{n}\sum_{i=1}^n H_i \bar{J}_i(\hat{\theta}_n-\theta^*)}_A^2
    }& = \bbE\bb{
    \|\bar{m}_n(\hat\theta_n) - \bar{m}_n(\theta^*) \|_A^2
    }\\
    &= \bbE\bb{
    \|\bar{m}_n(\hat\theta_n) \|_A^2
    }\tag{by moment equation $\bar m_n(\theta^*)=0$}\nonumber\\
    & = O(L^2n^{\alpha_2-1})\tag{by \eqref{eq:empirical_loss_uniform_convergence}}.\nonumber
\end{align}
With \eqref{eq:normalized_error_rate}, Appendix \ref{appendix:thm_1_b} applies the induction method to invert $\frac{1}{n}\sum_{i=1}^n H_i \bar{J}_i$ and  show that for each $j\in[0,L]$, it holds that:
\begin{align*}
      \bbE\bb{\bn{\hat{\theta}_{n,j}-\theta_{j}^*}_2} =O\bp{n^{\frac{(L-j+1)(\alpha_1+\alpha_2)-1}{2}}}\quad \mbox{and}\quad \bbE\bb{\bn{\hat{\theta}_{n,j}-\theta_{j}^*}^2_2} =O\bp{n^{(2L-2j+1)\alpha_1+\alpha_2-1}}
\end{align*}
Concluding the proof.

\subsection{Uniform Convergence}
\label{appendix:thm_1_a}
We now show:
\begin{equation}
   \bbE\bb{I^2} = O(L^2n^{\alpha_2-1}),\quad \mbox{for}\quad I:=
   \max_{\theta\in\Theta}\|m_n(\theta)-\bar{m}_n(\theta)\|_A.\tag{\ref{eq:empirical_loss_uniform_convergence}}
\end{equation}
We have:
\begin{align*}
   I&=
   \max_{\theta\in\Theta}\|m_n(\theta)-\bar{m}_n(\theta)\|_A\\
   &=\max_{\theta\in\Theta}\bn{
   \frac{1}{n}\sum_{i=1}^nH_i(\beta_i-\bar \beta_i) + \frac{1}{n}\sum_{i=1}^nH_i(J_i-\bar J_i) \theta
   }_A\\
    &\lesssim\max_{\theta\in\Theta}\bn{
   \frac{1}{n}\sum_{i=1}^nH_i(\beta_i-\bar \beta_i) + \frac{1}{n}\sum_{i=1}^nH_i(J_i-\bar J_i) \theta
   }_2 \tag{$\lambda_{\max}(A)=O(1)$}\\
   &\leq \max_{\theta\in\Theta} \bn{
   \frac{1}{n}\sum_{i=1}^nH_i(\beta_i-\bar \beta_i)}_2 + \bn{\frac{1}{n}\sum_{i=1}^nH_i(J_i-\bar J_i) \theta
   }_2\tag{triangular inequality}\\
    &\lesssim \underbrace{\bn{
   \frac{1}{n}\sum_{i=1}^nH_i(\beta_i-\bar \beta_i)}_2}_{C} + \underbrace{\bn{\frac{1}{n}\sum_{i=1}^nH_i(J_i-\bar J_i) 
   }_2}_{D},
\end{align*}
where for a symmatric matrix $M$, we use $\lambda_{\max}(M)$ to denote its largest eigenvalue and use $\|M\|_2$ to denote its operator norm induced by vector 2-norm. 
Appendix \ref{appendix:term_c} shows that $\bbE[C^2]=O(Ln^{\alpha_2-1})$, and Appendix \ref{appendix:term_d} shows that $\bbE[D^2]=O(L^2n^{\alpha_2-1})$. Therefore,
\[
\bbE[I^2]\lesssim \bbE[C^2]+ \bbE[D^2] = O(L^2n^{\alpha_2-1}),
\]
proving \eqref{eq:empirical_loss_uniform_convergence}.

\subsubsection{Bounding term C.}
\label{appendix:term_c}
We  show:
\begin{equation}
\label{eq:beta_convergence}
       \bbE\bb{C^2}=\bbE\bb{ \bn{\frac{1}{n}\sum_{i=1}^n H_i\, \beta_i - \frac{1}{n}\sum_{i=1}^n H_i\bar{\beta}_i}_2^2}  = O(Ln^{\alpha_2-1}).
\end{equation}
Note that $\{H_i(\beta_i-\bar \beta_i)\}$ is a martingale difference sequence adapted to $\{\calF_i\}$. 
This is because each $j$-th component 
\[
\bbE_i[H_{i,j}(\beta_{i,j}-\bar \beta_{i,j})] = \bbE_i \bbE_{i,j}^+[H_{i,j}(\beta_{i,j}-\bar \beta_{i,j})]  = \bbE_i \bb{H_{i,j}\bbE_{i,j}^+[(\beta_{i,j}-\bar \beta_{i,j})]}=0,
\]
by the definition $\bar \beta_{i,j}=\bbE_{i,j}^+[\beta_{i,j}]$ and  $H_{i,j}$ adapted to $\calF_{i,j}^+$.
We have
\begin{align*}
   \bbE\bb{ \bn{\frac{1}{n}\sum_{i=1}^n H_i\, \beta_i - \frac{1}{n}\sum_{i=1}^n H_i\bar{\beta}_i}_2^2 } &\stackrel{(i)}{=} \frac{1}{n^2}\sum_{i=1}^n\bbE\bb{ (\beta_i - \bar{\beta}_i)^\top H_i^\top H_i(\beta_i - \bar{\beta}_i)}\\
   & = \sum_{j=0}^L\frac{1}{n^2}\sum_{i=1}^n \bbE\bb{ (\beta_{i,j} - \bar{\beta}_{i,j})^\top H_{i,j}^\top H_{i,j}(\beta_{i,j} - \bar{\beta}_{i,j})}\\
   & = \sum_{j=0}^L\frac{1}{n^2}\sum_{i=1}^n \Tr\bp{\bbE\bb{ H_{i,j}(\beta_{i,j} - \bar{\beta}_{i,j})(\beta_{i,j} - \bar{\beta}_{i,j})^\top H_{i,j}^\top }},
\end{align*}
where (i) is due to that $\{H_i(\beta_i-\bar \beta_i)\}$ is a martingale difference sequence.
Now consider each diagonal entry $j$. We have:
\begin{align*}
     &\frac{1}{n^2}\sum_{i=1}^n\Tr\bp{\bbE\bb{ H_{i,j}(\beta_{i,j} - \bar{\beta}_{i,j})(\beta_{i,j} - \bar{\beta}_{i,j})^\top H_{i,j}^\top}}\\
  = & \frac{1}{n^2}\sum_{i=1}^n\Tr\bp{\bbE\bb{ H_{i,j}\Var_{i,j}^+(Y_i(\Psi_{i,j}-\bar{\Psi}_{i,j})) H_{i,j}^\top} }  = O(n^{-2}) \sum_{i=1}^n\Tr\bp{\bbE\bb{ H_{i,j}\Var_{i,j}^+(\Psi_{i,j}) H_{i,j}^\top}}   = O(n^{\alpha_2-1}),
\end{align*}
where the last equality is by Property \ref{property:weight_regularizing}(c). We thus have \eqref{eq:beta_convergence} hold.

\subsubsection{Bounding term D.}
\label{appendix:term_d}
We next show:
\begin{equation}
\label{eq:coeff_convergence}
      \bbE\bb{ \bn{ \frac{1}{n}\sum_{i=1}^n H_i\, J_i - \frac{1}{n}\sum_{i=1}^n H_i\bar{ J}_i}_2^2 }= O(L^2n^{\alpha_2-1}).
\end{equation}
Similarly,  $\{H_i(J_i-\bar J_i)\}$ is a martingale difference sequence adapted to $\{\calF_i\}$.
For any given $\theta\in\Theta$ with $\Theta$ being bounded, we have
\begin{align}
     &\bbE\bb{ \bn{\frac{1}{n}\sum_{i=1}^n H_i\, (J_i -\bar{J}_i)\theta}_2^2 } \leq \bbE\bb{ \bn{\frac{1}{n}\sum_{i=1}^n H_i\, (J_i -\bar{J}_i)}_{2}^2 \cdot \|\theta^*\|_2^2 }\nonumber\\
\stackrel{(i)}{\lesssim} &  \bbE\bb{ \bn{\frac{1}{n}\sum_{i=1}^n H_i\, (J_i -\bar{J}_i)}_{Frob}^2  } =\bbE\bb{\Tr\bp{\bp{\frac{1}{n}\sum_{i=1}^n H_i\, (J_i -\bar{J}_i)} \cdot \bp{\frac{1}{n}\sum_{i=1}^n H_i\, (J_i -\bar{J}_i)}^\top}}\nonumber\\
= &\Tr\bp{\bbE\bb{\bp{\frac{1}{n}\sum_{i=1}^n H_i\, (J_i -\bar{J}_i)} \cdot \bp{\frac{1}{n}\sum_{i=1}^n H_i\, (J_i -\bar{J}_i)}^\top}}\nonumber\\
\stackrel{(ii)}{=}& \frac{1}{n^2}\sum_{i=1}^n \Tr\bp{\bbE\bb{H_i\, (J_i -\bar{J}_i)(J_i -\bar{J}_i)^\top H_i^\top }}\nonumber \\
\stackrel{(iii)}{=}&\frac{1}{n^2}\sum_{i=1}^n \sum_{j=0}^L \sum_{k\geq j}\Tr\bp{\bbE\bb{H_{i,j}\, (J_{i,j,k} -\bar{J}_{i,j,k})(J_{i,j,k} -\bar{J}_{i,j,k})^\top H_{i,j}^\top }}:=\frac{1}{n^2}\sum_{i=1}^n \sum_{j=0}^L \sum_{k\geq j}\bbE\bb{\Lambda_{i,j,k}}\label{eq:jacobian_jk}
\end{align}
where $\Lambda_{i,j,k}:=\Tr\bp{\bbE_{i,j}^+\bb{H_{i,j}\, (J_{i,j,k} -\bar{J}_{i,j,k})(J_{i,j,k} -\bar{J}_{i,j,k})^\top H_{i,j}^\top }}$; (i) uses the boundedness of $\theta$ and that a matrix $\ell_2$ norm is bounded by its Frobenius norm; (ii) uses that $\{H_i\, (J_i -\bar{J}_i)\}_{i=1}^n$ is a martingale difference sequence; (iii) is due to that $H_i$ is a block-diagonal matrix and $J_i$ is a block upper-triangular matrix.
Now let's look into \eqref{eq:jacobian_jk} and analyze $\Lambda_{i,j,k}$.
\begin{align*}
    \Lambda_{i,j,k}:=&\Tr\bp{\bbE_{i,j}^+\bb{H_{i,j}\, (J_{i,j,k} -\bar{J}_{i,j,k})(J_{i,j,k} -\bar{J}_{i,j,k})^\top H_{i,j}^\top }}\\
    = &  \Tr( H_{i,j}\bbE_{i,j}^+[  ((\Psi_{i,j} -\bar{\Psi}_{i,j})\Phi_{i,k}^\top - \bbE_{i,j}^+[(\Psi_{i,j} -\bar{\Psi}_{i,j})\Phi_{i,k}^\top])((\Psi_{i,j} -\bar{\Psi}_{i,j})\Phi_{i,k}^\top - \bbE_{i,j}^+[(\Psi_{i,j} -\bar{\Psi}_{i,j})\Phi_{i,k}^\top])^\top ]H_{i,j}^\top)\\
\stackrel{(i)}{\lesssim} & \Tr(H_{i,j}\Var_{i,j}^+(\Psi_{i,j} -\bar{\Psi}_{i,j})H_{i,j}^\top),
\end{align*}
where (i) is due to Lemma \ref{lemma:var_inequality}, boundedness of $\|\Phi_{i,k}\|_2$, and that if positive semi-definite matrices $A\preceq B$, then $\Tr(CAC^\top)\leq \Tr(CBC^\top)$ for any matrix $C$.

Continuing \eqref{eq:jacobian_jk}, we have:
\begin{align*}
    \bbE\bb{ \bn{\frac{1}{n}\sum_{i=1}^n H_i\, (J_i -\bar{J}_i)\theta}_2^2 } &\lesssim \frac{1}{n^2}\sum_{i=1}^n \sum_{j=0}^L \sum_{k\geq j}\bbE\bb{\Tr(H_{i,j}\Var_{i,j}^+(\Psi_{i,j} -\bar{\Psi}_{i,j})H_{i,j}^\top)}\stackrel{(i)}{=}O(L^2n^{\alpha_2-1}),
\end{align*}
where (i) is by Property \ref{property:weight_regularizing}(c). Thus we have \eqref{eq:coeff_convergence} hold.

\subsection{Induction Step}
\label{appendix:thm_1_b}
We now bound the estimation error $(\hat\theta_n - \theta^*)$. Equation
 \eqref{eq:normalized_error_rate} shows that the normalized error satisfies
$ \bbE[\|\bar B_n (\hat{\theta}_n-\theta^*)  
    \|_{2}^2]=O(n^{\alpha_2-1})$,
for $ \Bar B_{n} := -n^{-1}\sum_{i=1}^nH_i \bar{J}_i$.
Note that $\Bar{B}_n$ is an upper block-triangular matrix. We will show that, for each diagonal block $\Bar{B}_{n,j,j}$, the matrix $\Bar{B}_{n,j,j}^\top \Bar{B}_{n,j,j}$ is invertible with high probability. This allows us to invert $\Bar{B}_{n,j,j}^\top \Bar{B}_{n,j,j}$ to bound the estimation error $(\hat{\theta}_{n,j} - \theta_j^*)$ in a backwards manner.

In particular, we apply induction method to prove the result. We start by introducing a few notations. With $\Bar B_{n} = -n^{-1}\sum_{i=1}^nH_i \bar{J}_i$, define $\Bar B_{n,j,j'}=n^{-1}\sum_{i=1}^n H_{i,j}\bbE_{i,j}^+\bb{(\Psi_{i,j}-\bar{\Psi}_{i,j})\Phi_{i,j'}^\top}$ as its $(j,j')$ block.
Similarly, define $ B^0_{n,j,j'}=n^{-1}\sum_{i=1}^n \bbE_{i,j}\bb{H_{i,j}(\Psi_{i,j}-\bar{\Psi}_{i,j})\Phi_{i,j'}^\top}$. Define their difference $\delta_{n,j,j'}:=B^0_{n,j,j'}-\Bar B_{n,j,j'}$. Appendix \ref{appendix:regularity_Bn} shows that:
\begin{equation}
\label{eq:bn_regularity}
\|\Bar B_{n,j,j'}\|^2_{Frob}=O\bp{n^{\alpha_1}},\quad \|B^0_{n,j,j'}\|^2_{Frob}=O\bp{n^{\alpha_2}}, \quad     \bbE\bb{\bn{\delta_{n,j,j'}}_{Frob}^2}=O\bp{n^{\alpha_2-1}} 
\end{equation}

By Property \ref{property:weight_regularizing}(a), we have $\forall l\in [0:L]$,  $(B^0_{n,j,j})^\top  B^0_{n,j,j} \succeq c_1^2 n^{-\alpha_1}I$, where   $c_1$ is a constant introduced in Property \ref{property:weight_regularizing}. The regularity of $\bar B_{n,j,j}$ inherits  from that of $B^0_{n,j,j}$.
Define the   event $\calE_n$ as:
\begin{align*}
    \calE_n = \left\{\Bar B_{n,j,j}^\top \Bar B_{n,j,j} \succeq \frac{c_1^2}{4} n^{-\alpha_1}I, \  \forall l\in\{0,\dots, L\}\right\},
\end{align*}
Appendix \ref{appendix:regularity_Bn} shows that $\calE_n $ happens  with high probability:
\begin{align}
\label{eq:en_high_probability}
 &1-\bbP(\calE_n)=O(Ln^{\alpha_1+\alpha_2-1}).
\end{align}
We now apply the induction step to bound the estimation error $\hat\theta_n - \theta^*$.

\paragraph{Base case: $j=L$.} We have:
\begin{align*}
    \bbE\bb{\|\hat{\theta}_{n,L}-\theta^*_L\|_2^2}& \lesssim  n^{\alpha_1}\bbE\bb{\bn{\Bar B_{n,L,L}(\hat{\theta}_{n,L}-\theta_L^*)}^2_2\one(\calE_n) }  + \bbP(\calE_n^c)\\
    &\leq  n^{\alpha_1}\bbE\bb{\bn{\Bar B_{n,L,L}(\hat{\theta}_{n,L}-\theta_L^*)}^2_2 }  + \bbP(\calE_n^c)\\
        &\stackrel{(i)}{\leq} n^{\alpha_1}\bbE\bb{\bn{\Bar B_{n}(\hat{\theta}_{n}-\theta^*)}^2_2}  + \bbP(\calE_n^c)\\
    &=  n^{\alpha_1} \bbE\bb{\bn{\frac{1}{n}\sum_{i=1}^n H_i\bar{J}_i \bp{\hat{\theta}_n-\theta^*}  
    }_{2}^2} + \bbP(\calE_n^c)\stackrel{(ii)}{=} O(n^{\alpha_1+\alpha_2-1}),
\end{align*}
where (i) is because $B_n$ is a block upper triangular matrix and 
(ii) is by \eqref{eq:normalized_error_rate} and \eqref{eq:en_high_probability}. This result also yield the $L^1$-norm convergence:
\begin{equation*}
     \bbE\bb{\|\hat{\theta}_{n,L}-\theta^*_L\|_2}\leq   \bbE\bb{\|\hat{\theta}_{n,L}-\theta^*_L\|_2^2}^{1/2} =  O(n^{\frac{\alpha_1+\alpha_2-1}{2}}).
\end{equation*}

\paragraph{Induction step.} Now consider $j<L$ recursively. Assume the induction hypothesis holds that for $j'=j+1, \dots, L$, such that:
\begin{align}
\label{eq:induction}
  \bbE\bb{\bn{\hat{\theta}_{n,j'}-\theta_{j'}^*}_2} =O\bp{n^{\frac{(L-j'+1)(\alpha_1+\alpha_2)-1}{2}}}\quad \mbox{and}\quad \bbE\bb{\bn{\hat{\theta}_{n,j'}-\theta_{j'}^*}^2_2} =O\bp{n^{(2L-2j'+1)\alpha_1+\alpha_2-1}}.    
\end{align}
Let's first prove the $L^2$-norm convergence. We have
\begin{align*}
  &  \bbE\bb{\|\hat{\theta}_{n,j}-\theta^*_j\|_2^2} \lesssim  n^{\alpha_1}\bbE\bb{\bn{\Bar B_{n,j,j}(\hat{\theta}_{n,j}-\theta^*_j)}^2_2\one(\calE_n) }  + \bbP(\calE_n^c)\leq   n^{\alpha_1}\bbE\bb{\bn{\Bar B_{n,j,j}(\hat{\theta}_{n,j}-\theta^*_j)}^2_2}  + \bbP(\calE_n^c)\\
   &=  n^{\alpha_1}\bbE\bb{\bn{\Bar B_{n,j,j}(\hat{\theta}_{n,j}-\theta^*_j) + \sum_{j'>j}\Bar B_{n,j,j'}(\hat{\theta}_{n,j'}-\theta^*_{j'}) 
   - \sum_{j'>j}\Bar B_{n,j,j'}(\hat{\theta}_{n,j'}-\theta^*_{j'})
   }^2_2 } + \bbP(\calE_n^c)\\
    &\stackrel{(i)}{\leq} 2L n^{\alpha_1}\bbE\bb{\bn{\Bar B_{n,j,j}(\hat{\theta}_{n,j}-\theta^*_j) + \sum_{j'>j}\Bar B_{n,j,j'}(\hat{\theta}_{n,j'}-\theta^*_{j'}) }^2_2 } + 2L
    n^{\alpha_1}
    \sum_{j'>j} \bbE\bb{\bn{\Bar B_{n,j,j'}(\hat{\theta}_{n,j'}-\theta^*_{j'})}^2_2 }+ \bbP(\calE_n^c)\\
    &\stackrel{(ii)}{\leq} 2L n^{\alpha_1}\bbE\bb{\bn{\Bar B_{n}(\hat{\theta}_{n}-\theta^*)}^2_2 }+
    2L
    n^{\alpha_1}
    \sum_{j'>j} \bbE\bb{\bn{\Bar B_{n,j,j'}(\hat{\theta}_{n,j'}-\theta^*_{j'})}^2_2 }+ \bbP(\calE_n^c)\\
    &=  2L n^{\alpha_1} \bbE\bb{\bn{\frac{1}{n}\sum_{i=1}^n H_i\bar{J}_i \bp{\hat{\theta}_n-\theta^*}  
    }_{2}^2} +
    2L
    n^{\alpha_1}
    \sum_{j'>j} \bbE\bb{\bn{\Bar B_{n,j,j'}(\hat{\theta}_{n,j'}-\theta^*_{j'})}^2_2 }+ \bbP(\calE_n^c)\\
    &\leq  2L n^{\alpha_1} \bbE\bb{\bn{\frac{1}{n}\sum_{i=1}^n H_i\bar{J}_i \bp{\hat{\theta}_n-\theta^*}  
    }_{2}^2} +
    2L
    n^{\alpha_1}
    \sum_{j'>j} \bbE\bb{\bn{\Bar B_{n,j,j'}}_{Frob}^2\bn{(\hat{\theta}_{n,j'}-\theta^*_{j'})}^2_2 }+ \bbP(\calE_n^c)\\
    &\stackrel{(iii)}{=} 2LO(n^{\alpha_2-1}) + 2L\sum_{j'>j} O\bp{n^{\alpha_1+\alpha_1+(2L-2j'+1)\alpha_1+\alpha_2-1}} + O(n^{\alpha_1+\alpha_2-1}) = O\bp{n^{(2L-2j+1)\alpha_1+\alpha_2-1}},
\end{align*}
where (i) is by triangular inequality and Cauchy-Swarchz inequality, (ii) is because $\Bar B_n$ is a block upper triangular matrix and (iii) is by \eqref{eq:normalized_error_rate}, \eqref{eq:bn_regularity}, \eqref{eq:induction}, and \eqref{eq:en_high_probability}.

We then look into the $L^1$-norm convergence by having:
\begin{align*}
  &  \bbE\bb{\|\hat{\theta}_{n,j}-\theta^*_j\|_2} \lesssim  n^{\frac{\alpha_1}{2}}\bbE\bb{\bn{\Bar B_{n,j,j}(\hat{\theta}_{n,j}-\theta^*_j)}_2\one(\calE_n) }  + \bbP(\calE_n^c)\leq   n^{\frac{\alpha_1}{2}}\bbE\bb{\bn{\Bar B_{n,j,j}(\hat{\theta}_{n,j}-\theta^*_j)}_2}  + \bbP(\calE_n^c)\\
    &\leq  n^{\frac{\alpha_1}{2}}\bbE[\|\Bar B_{n,j,j}(\hat{\theta}_{n,j}-\theta^*_j) + \sum_{j'>j}\Bar B_{n,j,j'}(\hat{\theta}_{n,j'}-\theta^*_{j'}) \|_2 ] + 
    n^{\frac{\alpha_1}{2}}
    \sum_{j'>j} \bbE\bb{\bn{\Bar B_{n,j,j'}(\hat{\theta}_{n,j'}-\theta^*_{j'})}_2 }+ \bbP(\calE_n^c)\\
    &\stackrel{(i)}{\leq} n^{\frac{\alpha_1}{2}}\bbE\bb{\bn{\Bar B_{n}(\hat{\theta}_{n}-\theta^*)}_2 }+
    n^{\frac{\alpha_1}{2}}
    \sum_{j'>j} \bbE\bb{\bn{\Bar B_{n,j,j'}(\hat{\theta}_{n,j'}-\theta^*_{j'})}_2 }+ \bbP(\calE_n^c)\\
    &=  n^{\frac{\alpha_1}{2}} \bbE\bb{\bn{\frac{1}{n}\sum_{i=1}^n H_i\bar{J}_i \bp{\hat{\theta}_n-\theta^*}  
    }_{2}} +
    n^{\frac{\alpha_1}{2}}
    \sum_{j'>j} \bbE\bb{\bn{\Bar B_{n,j,j'}(\hat{\theta}_{n,j'}-\theta^*_{j'})}^2_2 }+ \bbP(\calE_n^c)\\
     &\stackrel{(ii)}{\leq}   O(n^{\frac{\alpha_1+\alpha_2-1}{2}}) +
    n^{\frac{\alpha_1}{2}}
    \sum_{j'>j} \bbE\bb{\bn{B^0_{n,j,j'}(\hat{\theta}_{n,j'}-\theta^*_{j'})}_2 }+
    n^{\frac{\alpha_1}{2}}
    \sum_{j'>j} \bbE\bb{\bn{\delta_{n,j,j'}(\hat{\theta}_{n,j'}-\theta^*_{j'})}_2 } + O(n^{\alpha_1+\alpha_2-1}) \\
    &\stackrel{(iii)}{\leq}  O(n^{\frac{\alpha_1+\alpha_2-1}{2}})  +
    n^{\frac{\alpha_1}{2}}
    \sum_{j'>j} \bbE\bb{\bn{B^0_{n,j,j'}}_{Frob}\bn{(\hat{\theta}_{n,j'}-\theta^*_{j'})}_2 }+
    n^{\frac{\alpha_1}{2}}
    \sum_{j'>j} \bbE\bb{\bn{\delta_{n,j,j'}}^2_{Frob}}^{\frac{1}{2}}\bbE\bb{\bn{(\hat{\theta}_{n,j'}-\theta^*_{j'})}^2_2 }^{\frac{1}{2}} \\
   &\stackrel{(iv)}{=} O(n^{\frac{\alpha_1+\alpha_2-1}{2}})  +     n^{\frac{\alpha_1}{2}}\sum_{j'>j} O(n^{\frac{\alpha_2}{2}+\frac{(L-j'+1)(\alpha_1+\alpha_2)-1}{2}}) + 
        n^{\frac{\alpha_1}{2}}
        \sum_{j'>j} O(n^{\frac{\alpha_2-1}{2} + \frac{(2L-2j'+1)\alpha_1+\alpha_2-1}{2}})\\
    &= O(n^{\frac{\alpha_1+\alpha_2-1}{2}})  +  O(n^{\frac{(L-j+1)(\alpha_1+\alpha_2)-1}{2}}) + O(n^{\frac{(2L-2j)\alpha_1+2\alpha_2-2}{2}})
    \\
    &\stackrel{(v)}{=}O(n^{\frac{(L-j+1)(\alpha_1+\alpha_2)-1}{2}}).
\end{align*}
where (i) uses that $\bar B_n$ is a block upper triangular matrix, (ii) uses \eqref{eq:normalized_error_rate}, triangular inequality, and \eqref{eq:en_high_probability}, (iii) uses Cauchy-Schwartz inequality and that a matrix $\ell_2$ norm is bounded by its Frobenius norm, (iv) uses the induction assumption \eqref{eq:induction}, and (v) uses that $\alpha_2\leq \alpha_1\leq 1/L$ as specified in Property \ref{property:weight_regularizing}.

\subsubsection{Asymptotic regularity of $\Bar{B}_n$.}
\label{appendix:regularity_Bn}
Note that $\Bar{B}_n$ is block upper triangular, where the column sizes of blocks  correspond to the decomposition of $\theta=(\theta_0, \theta_1, \dots, \theta_L)$. For any $j\leq j'$ we have:
 \begin{align*}
  \Bar B_{n,j,j'}=n^{-1}\sum_{i=1}^n H_{i,j}\bbE_{i,j}^+\bb{(\Psi_{i,j}-\bar{\Psi}_{i,j})\Phi_{i,j'}^\top} .
\end{align*}
Define the matrix $B_n^0$ as follows: for each of its $(j,j')$-th block with $j\leq j'$, define
\begin{align*}
        B^0_{n,j,j'}=n^{-1}\sum_{i=1}^n \bbE_{i,j}\bb{H_{i,j}(\Psi_{i,j}-\bar{\Psi}_{i,j})\Phi_{i,j'}^\top} ,\mbox{ and }\delta_{n,j,j'}=B^0_{n,j,j'}-\Bar B_{n,j,j'}.
\end{align*}
First, notice that the singular values of the $j$-th diagonal block $B^0_{n,j,j}$ are lower-bounded when Property \ref{property:weight_regularizing}(a) holds. 
For the $0$-th diagonal block corresponding to $\theta_0^*$, we have   $B^0_{n,0,0}= n^{-1}\sum_{i=1}^n \bbE_{i,0}[H_{i,0}] =1\geq c_1 n^{-\frac{\alpha_1}{2}}$ for $H_{i,0}=1$ and large $n$.

Next we shall show that for $j\leq j'$, we have
  \begin{align*}
  \bbE\bb{\bn{\delta_{n,j,j'}}_{Frob}^2}=O\bp{n^{\alpha_2-1}},  \, \|B^0_{n,j,j'}\|^2_{Frob}=O\bp{n^{\alpha_2}}, \, \|\Bar B_{n,j,j'}\|^2_{Frob}=O\bp{n^{\alpha_1}},
  \end{align*}
  and that the event $\calE_n$ happens with high probability:
  \begin{align*}
      1 - \bbP(\calE_n)=O(Ln^{\alpha_1+\alpha_2-1}).
  \end{align*}

\paragraph{Asymptotic neglibility of $\delta_{n,j,j'}$.}  For a matrix $A$, we use $\Tr(A)$ to denote its trace, i.e. the sum of its diagonal entries. With $j\leq j'$, by the definition of the Frobenius norm, we have
\begin{align*}
    \bbE[\|\delta_{n,j,j'}\|_{Frob}^2 ]=~& \bbE\bb{\Tr\bp{\delta_{n,j,j'}\, \delta_{n,j,j'}^\top}}\\
    =~& n^{-2}\bbE\Big[\Tr\Big(\bc{\sum_{i=1}^n \bp{H_{i,j}\bbE_{i,j}^+[(\Psi_{i,j}-\bar{\Psi}_{i,j})\Phi_{i,j'}^\top] -\bbE_{i,j}[H_{i,j}(\Psi_{i,j}-\bar{\Psi}_{i,j})\Phi_{i,j'}^\top ]}}\\
  &\quad\quad  \times\bc{\sum_{i=1}^n \bp{H_{i,j}\bbE_{i,j}^+[(\Psi_{i,j}-\bar{\Psi}_{i,j})\Phi_{i,j'}^\top] -\bbE_{i,j}[H_{i,j}(\Psi_{i,j}-\bar{\Psi}_{i,j})\Phi_{i,j'}^\top ]}}^\top  \Big)\Big]\\
  \stackrel{(i)}{=}&
  n^{-2}\sum_{i=1}^n\Tr\Big( \bbE\Big[\bp{H_{i,j}\bbE_{i,j}^+[(\Psi_{i,j}-\bar{\Psi}_{i,j})\Phi_{i,j'}^\top] -\bbE_{i,j}[H_{i,j}(\Psi_{i,j}-\bar{\Psi}_{i,j})\Phi_{i,j'}^\top ]}\\
  &\quad \quad \times \bp{H_{i,j}\bbE_{i,j}^+[(\Psi_{i,j}-\bar{\Psi}_{i,j})\Phi_{i,j'}^\top] -\bbE_{i,j}[H_{i,j}(\Psi_{i,j}-\bar{\Psi}_{i,j})\Phi_{i,j'}^\top ]}^\top  
  \Big]\Big)\\
    \stackrel{(ii)}{\leq} &  n^{-2}\sum_{i=1}^n\Tr \bp{
\bbE\bb{ H_{i,j} \bbE^+_{i,j}[(\Psi_{i,j}-\bar{\Psi}_{i,j})\Phi_{i,j'}^\top] \bbE^+_{i,j}[\Phi_{i,j'}(\Psi_{i,j}-\bar{\Psi}_{i,j})^\top]  H_{i,j}^\top }}\\
  \stackrel{(iii)}{\lesssim} & n^{-2}d\sum_{i=1}^n\Tr \bp{
\bbE\bb{ H_{i,j} \bbE^+_{i,j}\bb{(\Psi_{i,j}-\bar{\Psi}_{i,j})(\Psi_{i,j}-\bar{\Psi}_{i,j})^\top } H_{i,j}^\top }
  } \stackrel{(iv)}{=} O(n^{\alpha_2-1})
\end{align*}
where (i) uses the fact  that $\delta_{n,j,j'}$ is a sum of martingale difference sequence, (ii) uses Lemmas \ref{lemma:trace_inequality_3} \& \ref{lemma:trace_inequality_4}, (iii)  uses that $\|\Phi_{i,j'}\|_2$ is  bounded, Lemma  \ref{lemma:matrix_inequality} and Lemma  \ref{lemma:trace_inequality_3}, and (iv) uses Property \ref{property:weight_regularizing}(c).

\paragraph{Uniform bound of $B^0_{n,j,j'}$.} With $j\leq j'$, we have
\begin{align*}
    \|B^0_{n,j,j'}\|_{Frob}^2=&n^{-2}\bn{
    \sum_{i=1}^n \bbE_{i,j}[H_{i,j}(\Psi_{i,j}-\bar{\Psi}_{i,j})\Phi_{i,j'}^\top ]}^2_{Frob}
    \\
  \stackrel{(i)}{\leq}  &n^{-2}
    \bp{\sum_{i=1}^n \bn{\bbE_{i,j}[H_{i,j}(\Psi_{i,j}-\bar{\Psi}_{i,j})\Phi_{i,j'}^\top ]}_{Frob}}^2
   \\
  \stackrel{(ii)}{\leq}  &n^{-1}
    \sum_{i=1}^n \bn{\bbE_{i,j}[H_{i,j}(\Psi_{i,j}-\bar{\Psi}_{i,j})\Phi_{i,j'}^\top ]}^2_{Frob}
   \\
  =  &n^{-1}
    \sum_{i=1}^n \Tr\bp{\bbE_{i,j}[H_{i,j}(\Psi_{i,j}-\bar{\Psi}_{i,j})\Phi_{i,j'}^\top ]\bbE_{i,j}[\Phi_{i,j'}(\Psi_{i,j}-\bar{\Psi}_{i,j})^\top H_{i,j}^\top] }
   \\
\stackrel{(iii)}{\lesssim}  &n^{-1}
    \sum_{i=1}^n \Tr\bp{\bbE_{i,j}[H_{i,j}(\Psi_{i,j}-\bar{\Psi}_{i,j})(\Psi_{i,j}-\bar{\Psi}_{i,j})^\top H_{i,j}^\top] }
 \\
    =  &
    n^{-1}\sum_{i=1}^n \Tr\bp{\bbE_{i,j}[H_{i,j}\Var_{i,j}^+(\Psi_{i,j})
    H_{i,j}^\top] } = O(n^{\alpha_2}).
\end{align*}
where (i) is by triangular inequality, (ii) is by Cauchy-Schwartz inequality, (iii) is by the assumption that $\|\Phi_{i,j'}\|_2$ is bounded,  Lemma \ref{lemma:matrix_inequality} and Lemma \ref{lemma:trace_inequality_2}.

\paragraph{Uniform bound of $\Bar B_{n,j,j'}$.} With $j\leq j'$, we have
\begin{align*}
    \|\Bar B_{n,j,j'}\|_{Frob}^2=&n^{-2}\bn{
    \sum_{i=1}^n H_{i,j}\bbE_{i,j}^+[(\Psi_{i,j}-\bar{\Psi}_{i,j})\Phi_{i,j'}^\top] }^2_{Frob}
  \stackrel{(i)}{\leq}  n^{-1}
    \sum_{i=1}^n \bn{H_{i,j}\bbE_{i,j}^+[(\Psi_{i,j}-\bar{\Psi}_{i,j})\Phi_{i,j'}^\top] }^2_{Frob}\\
     \leq &n^{-1}
    \sum_{i=1}^n \Tr\bp{H_{i,j}\bbE_{i,j}^+[(\Psi_{i,j}-\bar{\Psi}_{i,j})\Phi_{i,j'}^\top]\bbE_{i,j}^+[ \Phi_{i,j'}(\Psi_{i,j}-\bar{\Psi}_{i,j})^\top] H_{i,j}^\top }
   \\
\stackrel{(ii)}{\lesssim}  &n^{-1}
    \sum_{i=1}^n \Tr\bp{H_{i,j}H_{i,j}^\top } = O(n^{\alpha_1}).
\end{align*}
where (i) is by triangular inequality and Cauchy-Schwartz inequality, (ii) is due to the fact that $\Phi_{i,j'}, \Psi_{i,j},\bar{\Psi}_{i,j}$ are bounded such that 
\[
H_{i,j}\bbE_{i,j}^+[(\Psi_{i,j}-\bar{\Psi}_{i,j})\Phi_{i,j'}^\top] \bbE_{i,j}^+[\Phi_{i,j'}(\Psi_{i,j}-\bar{\Psi}_{i,j})^\top] H_{i,j}^\top\precsim H_{i,j}H_{i,j}^\top;
\]
then with Lemma \ref{lemma:trace_inequality_2} we have (ii).

\paragraph{High probability event $\calE_n$.} 
We finally show that $\bbP(\calE^c_n)=O(Ln^{\alpha_1+\alpha_2-1})$. When $\calE_n$ does not happen, there exists a $j\in\{0,1,\dots, L\}$ and an  
eigenvector $x$ of $\Bar B_{n,j,j}^\top \Bar B_{n,j,j}$, with $\|x\|_2=1$ such that $\Bar B_{n,j,j}^\top \Bar B_{n,j,j} x = \lambda x$ and $\lambda<c_1^2n^{-\alpha_1}/4$. In that case:
\begin{equation}
    \label{eq:ct_ex}
    x^\top \Bar B_{n,j,j}^\top \Bar B_{n,j,j} x < \frac{c_1^2 n^{-\alpha_1}}{4}.
\end{equation}
Since $\delta_{n,j,j}=\Bar B_{n,j,j}-B_{n,j,j}^0$,  we have that:
\begin{align}
    \frac{c^2n^{-\alpha_1}}{4} >  x^\top \Bar B_{n,j,j}^\top \Bar B_{n,j,j} x \stackrel{(i)}{\geq}  \frac{1}{2} x^\top (B_{n,j,j}^0)^\top B_{n,j,j}^0 x - 2\,\|\delta_{n,j,j}\|_{Frob}^2 \stackrel{(ii)}{\geq} \frac{c_1^2n^{-\alpha_1}}{2} -2\,\|\delta_{n,j,j}\|_{Frob}^2
\end{align}
where (i) is  by Lemma \ref{lem:lower-eigenvalue}, and (ii) is because the minimum  eigenvalue value of $(B_{n,j,j}^0)^\top B_{n,j,j}^0$ is at least $c_1^2n^{-\alpha}$ by Property \ref{property:weight_regularizing}(a) and the vector $x$ is unit-norm. Rearranging yields:
\begin{align}
    \|\delta_{n,j,j}\|_{Frob}^2 \geq \frac{c_1^2n^{-\alpha_1}}{8}
\end{align}
Thus we have, 
by Markov's inequality:
\begin{align*}
    \bbP(\calE^c_n)&\leq \bbP\bp{ \exists j, \|\delta_{n,j,j}\|_{Frob}^2 \geq \frac{c_1^2n^{-\alpha_1}}{8}}\leq \sum_{j=0}^L\bbP\bp{ \|\delta_{n,j,j}\|_{Frob}^2 \geq \frac{c_1^2n^{-\alpha_1}}{8}}\\
    &\leq \sum_{j=0}^L\frac{8n^{\alpha_1}}{c_1^2}\bbE\bb{\|\delta_{n,j,j}\|_{Frob}^2}=O(Ln^{\alpha_1+\alpha_2-1}).
\end{align*}

\section{Proof of Theorem \ref{thm:be_feasible_2}}
\label{appendix:be_feasible}

For notation convenience, we write 
$F_{i,j}:=f_i(S_{i,1:j}, T_{i,1:j-1})$ and $\hat{F}_{i,j}:=\hat{f}_{i,j}(S_{i,1:j}, T_{i,1:j-1})$. Also, we let 
$F_i:=(F_{i,1},\dots, F_{i,L})$, and $\hat{F}_i=(\hat{F}_{i,1}, \dots, \hat{F}_{i,L})$. 
Define the variance estimate:  $\widehat{\Xi}_n=n^{-1}\sum_{i=1}^n \diag\bc{W_{i,j}\Var^+_{i,j}(\Psi_{i,j})W_{i,j}^\top}$.

Let $\hat{\theta}_n$ be the AW-GMM estimator defined in \eqref{eq:gmm_estimator_linear}. With assumptions in Theorem \ref{thm:be_feasible_2}, the weights $H_{i,j}$ satisfy Property \ref{property:weight_regularizing} with constants $(\alpha_1, 0)$.
We can then invoke Theorem \ref{thm:consistency} and get:
\begin{equation}
    \bbE[\|\hat{\theta}_{n,j}-\theta^*_j\|_2]=O\bp{n^{\frac{(L-j+1)\alpha_1-1}{2}}}, \forall j\in[0:L].
\end{equation}
Thus by Markov inequality, we have for any positive $\delta$:
\begin{equation*}
    \bbP\bp{\|\hat{\theta}_{n,j}-\theta^*_j\|_2\geq \delta}\leq O\bp{n^{\frac{(L-j+1)\alpha_1-1}{2}}}\Big/\delta.
\end{equation*}
 With $\theta^*$ lying in the interior of $\Theta$ and $\Theta$ being bounded,  for $n$ larger than a constant, with probability at least $1-O(n^{\frac{(L+1)\alpha_1-1}{2}})$, $\hat{\theta}_n$  lies in the interior of $\Theta$. Define $\calE_{int}$ as the event that $\hat{\theta}_n$  lies in the interior of $\Theta$. Then for  $n$ larger than a constant, $\bbP(\calE_{int}^c)=O(n^{\frac{(L+1)\alpha_1-1}{2}})$.

Now let's establish the strong Gaussian approximation when event  $\calE_{int}$ happens.
With $\hat{\theta}_n$ being in the interior of $\Theta$, we can invoke the first-order condition of the GMM solution:
\begin{align}
\tag{\ref{eq:link_mds_error_2}}
 B_n^\top A\bp{\frac{1}{n}\sum_{i=1}^n H_i \xi_i}
  =& B_n^\top A B_n(\hat\theta_n-\theta^*),\quad \mbox{where}\quad B_n:=-\frac{1}{n}\sum_{i=1}^n H_i J_i.
\end{align}
Appendix \ref{appendix:uniform_mds} shows that the sum of MDS $\frac{1}{\sqrt{n}}\sum_{i=1}^n   H_i\xi_{i}$ can be approximated uniformly by a  Gaussian random variable $Z_\xi\sim  \calN\bp{0, \widehat{\Xi}_n}$, such that
\begin{align}
\label{eq:high_prob_gaussian_approximation}
   \sup_{C\in\calC}\left| \bbP\bp{n^{-\frac{1}{2}}\sum_{i=1}^n   H_i\xi_{i}\in C}-\bbP\bp{  Z_\xi \in C}\right|  
    = O  \bp{ L^2\cdot n^{-\min(\frac{1-\alpha_1}{12}, \frac{\alpha_3}{5}, \frac{\gamma}{5})}},
\end{align}
where  $\calC$ is the set of all convex subsets in $\bbR^{1+dL}$.

Therefore for any convex set $\bar C$, 
\begin{equation}
    \bbP\bp{ \bc{\sqrt{n}B_n^\top A B_n (\hat{\theta}_n -\theta)\in \bar  C}\cap \calE_{int}}=\bbP\bp{\bc{n^{-\frac{1}{2}}\sum_{i=1}^n B_n^\top A H_i\xi_{i}\in  \bar C}\cap \calE_{int}} \label{eq:nor_eq_1}.
\end{equation}
Now for the convex set $\bar{C}$, define the induced set $\tilde{C}$ as follows:
\begin{equation}
    \tilde{C} = \{x:B_n^\top A x \in \bar  C\}. \label{eq:nor_eq_2}
\end{equation}
This definition yields:
\begin{align}
    \bbP\bp{n^{-\frac{1}{2}}\sum_{i=1}^n  H_i \xi_{i}\in \tilde{C}} = \bbP\bp{n^{-\frac{1}{2}}\sum_{i=1}^n B_n^\top A H_i\xi_{i}\in \bar C} \quad \mbox{and}\quad 
     \bbP\bp{Z_\xi\in \tilde{C}} = \bbP\bp{ B_n^\top A Z_\xi\in \bar C}. \label{eq:nor_eq_3}
\end{align}
Meanwhile, it's straightforward to see that $\tilde{C}$ is also a convex set; thus by the strong Gaussian approximation in \eqref{eq:high_prob_gaussian_approximation}, we have
\begin{align}
    \left| \bbP\bp{n^{-\frac{1}{2}}\sum_{i=1}^n   H_i \xi_{i}\in \tilde{C}}-\bbP\bp{ Z_\xi\in \tilde{C}}\right|  
    = O  \bp{ L^2\cdot n^{-\min(\frac{1-\alpha_1}{12}, \frac{\alpha_3}{5}, \frac{\gamma}{5})}} \label{eq:nor_eq_4}
\end{align}
Combining \eqref{eq:nor_eq_1}, \eqref{eq:nor_eq_3}, \eqref{eq:nor_eq_4}, we have for any convex set $\bar{C}\in\calC$:
\begin{align}
    &\left| \bbP\bp{ \sqrt{n}B_n^\top A B_n (\hat{\theta}_n -\theta)\in \bar C }-\bbP\bp{ B_n^\top A Z_\xi\in \bar C}\right| \label{eq:normalized_error_gaussian_approximation}\\
    \leq &\left| \bbP\bp{ \bc{\sqrt{n}B_n^\top A B_n (\hat{\theta}_n -\theta)\in \bar C}\cap\calE_{int}}-\bbP\bp{\bc{ B_n^\top A Z_\xi\in \bar C} \cap\calE_{int}}\right| + 2\bbP(\calE_{int}^c) \nonumber\\
    =& \left| \bbP\bp{ \bc{n^{-\frac{1}{2}}\sum_{i=1}^n  B_n^\top A  H_i\xi_i \in \bar C}\cap\calE_{int}}-\bbP\bp{\bc{  B_n^\top AZ_\xi\in \bar C} \cap\calE_{int}}\right|+ 2\bbP(\calE_{int}^c)\nonumber\\
     \leq & \left| \bbP\bp{ n^{-\frac{1}{2}}\sum_{i=1}^n   B_n^\top A H_i\xi_i \in \bar C}-\bbP\bp{ B_n^\top A Z_\xi\in \bar C}\right|+ 4\bbP(\calE_{int}^c)\nonumber\\
     \leq & \left| \bbP\bp{ n^{-\frac{1}{2}}\sum_{i=1}^n  H_i\xi_i \in \tilde C}-\bbP\bp{ Z_\xi\in \tilde C}\right|+ 4\bbP(\calE_{int}^c)
     \tag{by definition of $\tilde C$ in \eqref{eq:nor_eq_2}}
     \nonumber\\
    &=  O  \bp{ L^2\cdot n^{-\min(\frac{1-\alpha_1}{12}, \frac{\alpha_3}{5}, \frac{\gamma}{5})}}+ O\bp{n^{\frac{(L+1)\alpha_1-1}{2}}} =  O  \bp{ L^2\cdot n^{-\min(\frac{1-\alpha_1}{12}, \frac{1-(L+1)\alpha_1}{2}, \frac{\alpha_3}{5}, \frac{\gamma}{5})}} .\nonumber
\end{align}

We finally show that the normalizing matrix $B_n^\top AB_n$
is invertible with high probability.
 Define   event $\calE^M_n$ for  $B_n=-n^{-1}\sum_{i=1}^n H_iJ_i$:
\begin{align*}
      \calE^M_n = \left\{B_{n,j,j}^\top B_{n,j,j} \succeq \frac{c_1^2}{4M^2} n^{-\alpha_1} I, \  \forall l\in[0: L]\right\}.
\end{align*}
 Appendix \ref{appendix:regularity_B_normal} shows that $\calE^M_n$ happens with probability at least  $1-O(Ln^{\alpha_1-1})$. When  $\calE^M_n$ happens, each $B_{n,j,j}$ has full column rank. Since $B_n$ is a block upper triangular matrix, $B_n$ also has full column rank. As a result, $B_n^\top A B_n$, which has the same rank as $B_n$ (see Lemma \ref{lemma:col_rank_matrix}), is thus invertible.

Now for any convex set $C\in\calC$, define the induced set
 \begin{equation}
 \label{eq:induce_c_bar}
  \bar C:=\{x: (B_n^\top A B_n)^\dagger x\in C\},   
 \end{equation}
 which is also a convex set by construction. We have:
 \begin{align*}
      &\left| \bbP\bp{ \sqrt{n}(\hat{\theta}_n -\theta)\in C }-\bbP\bp{(B_n^\top A B_n)^\dagger B_n^\top A Z_\xi\in C}\right| \\
    \leq &\left| \bbP\bp{ \bc{\sqrt{n}(\hat{\theta}_n -\theta)\in  C}\cap  \calE^M_n}-\bbP\bp{\bc{(B_n^\top A B_n)^\dagger B_n^\top A Z_\xi\in  C} \cap \calE^M_n}\right| + 2\bbP((\calE^M_n)^c) \\
    = &\left| \bbP\bp{ \bc{\sqrt{n}(B_n^\top A B_n)^{-1}B_n^\top A B_n(\hat{\theta}_n -\theta)\in C}\cap  \calE^M_n}-\bbP\bp{\bc{(B_n^\top A B_n)^\dagger B_n^\top A Z_\xi\in C} \cap \calE^M_n}\right|\\
    &\quad \quad + 2\bbP((\calE^M_n)^c) 
    \tag{When $\calE^M_n$ happens, $B_n^\top A B_n$ is invertible. }
    \\
     = &\left| \bbP\bp{ \bc{\sqrt{n}(B_n^\top A B_n)^{\dagger}B_n^\top A B_n(\hat{\theta}_n -\theta)\in C}\cap  \calE^M_n}-\bbP\bp{\bc{(B_n^\top A B_n)^{\dagger}B_n^\top A Z_\xi\in C} \cap \calE^M_n}\right| + 2\bbP((\calE^M_n)^c) 
        \tag{When $\calE^M_n$ happens, $B_n^\top A B_n$ is invertible and $(B_n^\top A B_n)^{-1}=(B_n^\top A B_n)^\dagger$. }\\
    = & \left| \bbP\bp{ \bc{\sqrt{n}B_n^\top A B_n(\hat{\theta}_n -\theta)\in\bar C}\cap  \calE^M_n}-\bbP\bp{\bc{ B_n^\top A Z_\xi\in \bar C} \cap \calE^M_n}\right| + 2\bbP((\calE^M_n)^c) 
    \tag{by definition of $\bar C$ in \eqref{eq:induce_c_bar}}
    \\
       = & \left| \bbP\bp{ \sqrt{n}B_n^\top A B_n(\hat{\theta}_n -\theta)\in\bar C}-\bbP\bp{B_n^\top A Z_\xi\in \bar C}\right| + 4\bbP((\calE^M_n)^c)  \\
    = & O  \bp{ L^2\cdot n^{-\min(\frac{1-\alpha_1}{12}, \frac{1-(L+1)\alpha_1}{2}, \frac{\alpha_3}{5}, \frac{\gamma}{5})}} + O(Ln^{\alpha_1-1})\tag{by \eqref{eq:normalized_error_gaussian_approximation}}\\
    =& O  \bp{ L^2\cdot n^{-\min(\frac{1-\alpha_1}{12}, \frac{1-(L+1)\alpha_1}{2}, \frac{\alpha_3}{5}, \frac{\gamma}{5})}}.
 \end{align*}
 This concludes our proof for Theorem \ref{thm:be_feasible_2}.

\subsection{Strong Gaussian Approximation of $n^{-\frac{1}{2}}\sum_{i=1}^n  H_i\xi_i$}
\label{appendix:uniform_mds}

We show the strong Gaussian approximation results of $n^{-\frac{1}{2}}\sum_{i=1}^n  H_i \xi_i$. To do it, we follow three steps:
\begin{enumerate}
    \item Connect $n^{-\frac{1}{2}}\sum_{i=1}^n  H_i\xi_i$ with $\calN(0, \Sigma_n)$, where recall that 
    \[
    \Sigma_n = \diag\bc{\Sigma_{n,0}, \Sigma_{n,1}, \dots, \Sigma_{n,L}}, \quad \mbox{where} \quad \Sigma_{n,j}=n^{-1}\sum_{i=1}^n\bbE\bb{\frac{F_{i,j}}{\hat{F}_{i,j}} 
 W_{i,j}\Var^+_{i,j}(\Psi_{i,j})W_{i,j}^\top
 }.
    \]
    \item Connect $\calN(0, \Sigma_n)$ with $\calN(0, \Xi_n)$, where recall that 
    \[
    \Xi_n=\diag\bc{
    \Xi_{n,0}, \Xi_{n,1},\dots \Xi_{n,L}
   },\quad \mbox{where}\quad \Xi_{n,j}= n^{-1}\bbE\bb{\sum_{i=1}^n W_{i,j}\Var^+_{i,j}(\Psi_{i,j})W_{i,j}^\top},
    \]
    and by Property \ref{property:weight_regularizing}(b), we have
    \[
    \Xi_n \succeq C_2 \cdot I.
    \]
    \item Connect $\calN(0, \Xi_n)$ with $\calN(0, \widehat{\Xi}_n)$.\footnote{For any almost surely positive semi-definite random matrix $\Sigma$, we use $\calN(0,\Sigma)$
to define  the distribution of $X:=\Sigma^{1/2}Z$ with $Z\sim \calN(0,1)$ independent of $\Sigma$.} Here we define $\widehat{\Xi}_n$ as 
    \[
  \widehat{\Xi}_n:= 
  \diag\bc{
      \widehat\Xi_{n,0}, \  \widehat\Xi_{n,1},\dots   \widehat\Xi_{n,L}
   },\quad \mbox{where}\quad   \widehat\Xi_{n,j}=
  n^{-1}\diag\bc{\sum_{i=1}^n W_{i,j}\Var^+_{i,j}(\Psi_{i,j})W_{i,j}^\top}.
    \]

\end{enumerate}
Collectively, we shall be able to connect $n^{-1}\sum_{i=1}^n   H_i\xi_i$ with $\calN(0, \widehat{\Xi}_n)$, which achieves the  inferential result we target.

\medskip

\subsubsection{Step 1: Connect $n^{-\frac{1}{2}}\sum_{i=1}^n   H_i\xi_i$  with $\calN(0, \Sigma_n)$.} 
Recall that  $\{  H_i\xi_i\}_{i=1}^n$ is a  martingale difference sequence. To characterize its asymptotic behavior, we shall leverage the following proposition.
\begin{restatable}[Strong Gaussian Approximation for Martingale Vectors, \cite{cattaneo2022yurinskii}]{proposition}{cattaneo}
\label{prop:uniform_clt}
Let $\{\varphi_t\}_{t=1}^T $ be $\mathbb{R}^d$-valued squared integrable martingale difference sequence adapted to $\{\calG_t\}_{t=1}^T$. Define $v_t:=\Var_{t}(\varphi_t) = \bbE[\varphi_t \varphi_t^\top\mid \calG_{t-1}]$. Define $S_T:=\sum_{t=1}^T \varphi_t$, and let $\Sigma_T:=\frac{1}{T}\sum_{t=1}^T\bbE[v_t]$ and $\Omega_T:=\sum_{t=1}^T (v_t-\bbE[v_t])$. Then  there exists a $W_T \sim \calN(0, \Sigma_T)$ such that
\begin{equation*}
    \sup_{C\in\calC}|\bbP(T^{-1/2}S_T\in C)-\bbP(W_T\in C)| \lesssim \inf_{\eta>0} \bc{
    \bp{\frac{\beta_2d}{\eta^3}}^{1/3} +  \bp{\frac{\sqrt{d\bbE[\|\Omega_T\|_2]}}{\eta}}^{2/3} +\Delta_2(\calC, \eta)
    },
\end{equation*}
where:
\begin{itemize}
    \item $\calC$ is a class of measurable subsets of $\bbR^d$.
    \item $\Delta_p(\calC, \eta)$ defines the Gaussian perimetric (anti-concentration) quantity \[
\Delta_2(\calC, \eta) = \sup_{C\in\calC}\bc{\bbP(W_T\in C_2^\eta\setminus C)\vee \bbP(W_T\in  C\setminus C_2^{-\eta})},
    \]
    with $C_2^\eta=\bc{x\in\bbR^d:\|x-C\|_2\leq \eta}, C^{-\eta}_2=\bbR^d\setminus(\bbR^d\setminus C)_2^\eta$ and $\|x-C\|_2=\inf_{x'\in C}\|x-x'\|_2$.
    \item $\beta_2=\sum_{t=1}^T  \bbE\bb{
    \|\varphi_t\|^3_2 + \|v_t^{1/2}Z_t\|_2^3
    }$, with $\{Z_t\}_{t=1}^T$ being i.i.d.~standard Gaussian variables on $\mathbf{R}^d$ independent of $\calF_T$.
\end{itemize}
When $\Sigma_T$ is invertible, and  $\calC$ represents the set of all convex measurable subsets of $\mathbf{R}^d$, we have:
\begin{equation}
\label{eq:strong_gaussian_cattaneo}
    \sup_{C\in\calC}|\bbP(T^{-1/2}S_T\in C)-\bbP(W_T\in C)| \lesssim \inf_{\eta>0} \bc{
    \bp{\frac{\beta_2d}{\eta^3}}^{1/3} +  \bp{\frac{\sqrt{d\bbE[\|\Omega_T\|_2]}}{\eta}}^{2/3} +\eta\sqrt{\|T^{-1}\Sigma^{-1}_T\|_F}
    }.
\end{equation}
\end{restatable}

\begin{remark}
    Proposition \ref{prop:uniform_clt} is adapted from Proposition A.1 in \cite{cattaneo2022yurinskii} by setting the vector norm parameter $p=2$. 
\end{remark}

We now write out the terms in Proposition \ref{prop:uniform_clt} regarding $\{  H_i\xi_i\}_{i=1}^{n}$:
\begin{itemize}
    \item $ v_i = \bbE_{i}\bb{ H_i\xi_i\xi_i^\top  H_i^\top}$, whose $j$-th diagonal block is $\bbE_{i}\bb{\frac{F_{i,j}}{\hat{F}_{i,j}} 
W_{i,j}\Var_{i,j}^+(\Psi_{i,j})W_{i,j}^\top
 }$.\footnote{Recall that by Lemma \ref{lemma:mds} this $v_i$ is block-diagonal.}
 \item $\Sigma_{n} =n^{-1}\bbE\bb{ \sum_{i=1}^{n} v_i }$, which is a block-diagonal matrix with its $j$-th block being $ n^{-1}\sum_{i=1}^n\bbE\bb{\frac{F_{i,j}}{\hat{F}_{i,j}} 
 W_{i,j}\Var^+_{i,j}(\Psi_{i,j})W_{i,j}^\top}$.
 \item $\Omega_{n}=\sum_{i=1}^{n}(v_i-\bbE[v_i])$, which is a block-diagonal matrix with its $j$-th block being 
 \[
 \begin{split}
  &\sum_{i=1}^{n}
 \bc{\bbE_{i}\bb{\frac{F_{i,j}}{\hat{F}_{i,j}} 
W_{i,j}\Var_{i,j}^+(\Psi_{i,j})W_{i,j}^\top
 } -  \bbE\bb{\frac{F_{i,j}}{\hat{F}_{i,j}} 
W_{i,j}\Var_{i,j}^+(\Psi_{i,j})W_{i,j}^\top
 }}.
 \end{split}
 \]
\end{itemize}

\paragraph{Regularity of $\Sigma_{n}$.} We first lower bound the eigenvalues of $\Sigma_{n}$. Note that by construction, $\Sigma_{n}$ is a symmetric block-diagonal matrix, and thus we only need to look at its $j$-th diagonal block $\Sigma_{n,j,j}$. Define $\Xi_{n} := n^{-1}\diag_j\bc{\bbE\bb{\sum_{i=1}^n W_{i,j}\Var^+_{i,j}(\Psi_{i,j})W_{i,j}^\top}}$, and so $\Xi_n$ is also a block-diagonal and symmetric matrix. We have
\begin{align*}
   &\bn{ \Sigma_{n,j,j} -\Xi_{n,j,j}}_{Frob}^2  = 
   \bn{\frac{1}{n}\bp{\sum_{i=1}^n
    \bbE\bb{\frac{F_{i,j}-\hat{F}_{i,j}}{\hat{F}_{i,j}} W_{i,j}\Var^+_{i,j}(\Psi_{i,j})W_{i,j}^\top} }}_{Frob}^2
   \\
   &\leq Ld
   \bp{ \frac{1}{n}\sum_{i=1}^n
    \bbE\bb{\ba{\frac{F_{i,j}-\hat{F}_{i,j}}{\hat{F}_{i,j}}} \Tr \bc{W_{i,j}\Var^+_{i,j}(\Psi_{i,j})W_{i,j}^\top}}}^2
   \tag{by Lemma \ref{lemma:sum_scalar_trace}}\\
   & \leq Ld \bp{ \frac{1}{n\sigma^2}\sum_{i=1}^n  \bbE\bb{
    \ba{F_{i,j}-\hat{F}_{i,j}}\Tr\bc{ W_{i,j}\Var^+_{i,j}(\Psi_{i,j})W_{i,j}^\top}
    }}^2
    \tag{$\hat F_{i,j}\geq \sigma^2$ by condition \eqref{eq:f_convergence}}\\
    &\stackrel{(iii)}{\lesssim} \bp{ \frac{1}{n\sigma^2}\sum_{i=1}^n  \bbE\bb{
    \ba{F_{i,j}-\hat{F}_{i,j}}
    }}^2 
    \tag{
    $\Tr(W_{i,j}\Var^+_{i,j}(\Psi_{i,j})W_{i,j})=O(1)$ by 
     Property \ref{property:weight_stabilizing}(c)
    }
    \\
   & = O(n^{-2\gamma_F})\tag{by convergence condition in \eqref{eq:f_convergence}}.
\end{align*}
Thus we have
\begin{equation}
    \label{eq:diff_sigman_xin}
    \bn{ \Sigma_{n} -\Xi_{n}}_{Frob}^2 = O(Ln^{-2\gamma_F}).
\end{equation}
Together with Property \ref{property:weight_stabilizing}(c) that says that $\Xi_{n}\succeq c_6\cdot I$ and Lemma \ref{lem:lower-eigenvalue}, we have that 
\begin{align}
\label{eq:invertability_of_sigman}
  & \Sigma_{n}^2\succeq \bc{\frac{c_2^2}{2}-O(Ln^{-2\gamma_F})}\cdot I.
\end{align}
Therefore, for large enough $n$, $\Sigma_{n} \succeq \frac{c_2}{2}I$ is invertible. Moreover, we have:
\begin{equation}
\label{eq:bounded_sigman_inverse_frob}
    \|\Sigma_{n}^{-1}\|_{Frob}\leq \sqrt{Ld} \|\Sigma_{n}^{-1}\|_{2}\leq  \frac{2\sqrt{Ld}}{c_2}=O(1).
\end{equation}

\paragraph{Magnitude of $\Omega_{n}$.}
Note that $\Omega_{n}$ by definition is block-diagonal and symmetric.
\begin{align}
    & \frac{1}{n}\bbE\bb{\bn{\Omega_{n}}_2}     \label{eq:omega_n_bound}\\&\leq~ \max_{j=1}^L \frac{1}{n}
    \bbE\bb{\left\|
\sum_{i=1}^{n}
 \bc{\bbE_{i}\bb{\frac{F_{i,j}}{\hat{F}_{i,j}} 
W_{i,j}\Var_{i,j}^+(\Psi_{i,j})W_{i,j}^\top
 } -  \bbE\bb{\frac{F_{i,j}}{\hat{F}_{i,j}} 
W_{i,j}\Var_{i,j}^+(\Psi_{i,j})W_{i,j}^\top
 }}
    \right\|_2}    \nonumber\\
    &\leq ~\max_{j=1}^L \frac{1}{n}
    \bbE\bb{\left\|
    \sum_{i=1}^{n}\bc{\bbE_{i}\bb{\frac{F_{i,j} - \hat{F}_{i,j}}{\hat{F}_{i,j}}W_{i,j}\Var_{i,j}^+(\Psi_{i,j})W_{i,j}^\top }}
    \right\|_2}\nonumber\\
    &\quad + \max_{j=1}^L \frac{1}{n}\left\|\sum_{i=1}^{n}
   \bbE\bb{\frac{F_{i,j} - \hat{F}_{i,j}}{\hat{F}_{i,j}}W_{i,j}\Var^+_{i,j}(\Psi_{i,j})W_{i,j}^\top}
    \right\|_2    \nonumber\\
    &\quad + \max_{j=1}^L \frac{1}{n} \bbE
    \left\|
    \sum_{i=1}^{n} \bc{\bbE_{i}[W_{i,j}\Var^+_{i,j}(\Psi_{i,j})W_{i,j}^\top] - \bbE[W_{i,j}\Var^+_{i,j}(\Psi_{i,j})W_{i,j}^\top]}
    \right\|_2    \nonumber\\
    &\lesssim ~\max_{j=1}^L \frac{1}{n}
    \bbE\bb{
     \sum_{i=1}^{n} \bbE_{i}\bb{\ba{\frac{F_{i,j} - \hat{F}_{i,j}}{\hat{F}_{i,j}}} \Tr\bc{W_{i,j}\Var^+_{i,j}(\Psi_{i,j})W_{i,j}^\top }}
    }\nonumber\\
    &\quad+ \max_{j=1}^L \frac{1}{n}\sum_{i=1}^{n}
   \bbE\bb{\ba{\frac{F_{i,j} - \hat{F}_{i,j}}{\hat{F}_{i,j}}} \Tr\bc{W_{i,j}\Var^+_{i,j}(\Psi_{i,j})W_{i,j}^\top}}
   \tag{by Lemma \ref{lemma:sum_scalar_trace}}
      \nonumber \\
    &\quad + \max_{j=1}^L \frac{1}{n} \bbE
    \left\|
   \sum_{i=1}^{n} \bc{\bbE_{i}[W_{i,j}\Var^+_{i,j}(\Psi_{i,j})W_{i,j}^\top] - \bbE[W_{i,j}\Var^+_{i,j}(\Psi_{i,j})W_{i,j}^\top]}
    \right\|_2    \nonumber\\
&\lesssim  ~\max_{j=1}^L \frac{2}{n}
    \bbE\bb{
     \sum_{i=1}^{n} \bbE_{i}\bb{\ba{F_{i,j} - \hat{F}_{i,j}}}
    }   
    \tag{$\Tr\bc{W_{i,j}\Var^+_{i,j}(\Psi_{i,j})W_{i,j}^\top}=O(1)$ by Property \ref{property:weight_stabilizing}(c) and  $\hat F_{i,j}\geq \sigma^2$ by construction}
    \nonumber\\
    &\quad + \max_{j=1}^L \frac{1}{n} \bbE
    \left\|
   \sum_{i=1}^{n} \bc{\bbE_{i}[W_{i,j}\Var^+_{i,j}(\Psi_{i,j})W_{i,j}^\top] - \bbE[W_{i,j}\Var^+_{i,j}(\Psi_{i,j})W_{i,j}^\top]}
    \right\|_2
    \nonumber
    \\   
    & \leq O(n^{-\gamma_F}) + 
\max_{j=1}^L \frac{1}{n} 
   \sum_{i=1}^{n} 
   \bbE
    \left\|\bbE_{i}[W_{i,j}\Var^+_{i,j}(\Psi_{i,j})W_{i,j}^\top] - \bbE[W_{i,j}\Var^+_{i,j}(\Psi_{i,j})W_{i,j}^\top]
    \right\|_2
    \tag{by convergence of $\hat F_{i,j}$ and triangular inequality}\\
    & \leq O(n^{-\gamma_F}) + 
\max_{j=1}^L \frac{1}{n} 
   \sum_{i=1}^{n} 
   \bbE
    \left\|\bbE_{i,j}[W_{i,j}\Var^+_{i,j}(\Psi_{i,j})W_{i,j}^\top] - \bbE[W_{i,j}\Var^+_{i,j}(\Psi_{i,j})W_{i,j}^\top]
    \right\|_2
    \tag{by Jensen's inequality}\\
    &= 
     ~O(n^{-\min(\gamma_F, \alpha_3)})
     \tag{by Property \ref{property:weight_stabilizing}(c)}.
     \nonumber
 \end{align}

\paragraph{Regularity of $\beta_2$.} We move onto discussing $\beta_2$ defined in Proposition \ref{prop:uniform_clt} and have that
\begin{align}
    \beta_2 =~& \sum_{i=1}^{n} \bbE\bb{\| H_i\xi_i\|^3_2 + \|v_i^{\frac{1}{2}}Z_i\|_2^3}\nonumber\\
    \leq~&  \sum_{i=1}^{n} \sqrt{\bbE\bb{\| H_i\xi_i\|^4_2}\,\bbE\bb{\| H_i\xi_i\|^2_2}} + \bbE\bb{\left\|\bbE_{i}\bb{\diag_j\bc{\frac{F_{i,j}}{\hat{F}_{i,j}} 
W_{i,j}\Var_{i,j}^+(\Psi_{i,j})W_{i,j}^\top}
 }^{\frac{1}{2}}Z_{i}\right\|_2^3}\nonumber\\
       \stackrel{(i)}{\lesssim}~& \sum_{i=1}^{n} \sqrt{Ld'\,\bbE[\| H_i\xi_i\|^4_2]} + \bbE\bb{\|Z_{i}\|_2^3}\nonumber\\
       \leq~&  \sum_{i=1}^{n} \sqrt{Ld'\, \bbE\bb{\| H_i\xi_i\|^4_2}} + \sqrt{Ld'\bbE\bb{\| Z_{i}\|^4_2}}\nonumber\\
       \stackrel{(ii)}{\leq}~& \sum_{i=1}^{n} (Ld')^{\frac{1}{2}}\sqrt{\bbE
       \bb{  \| H_i\xi_i\|_2^4 }} + (Ld')^{\frac{3}{2}} \sqrt{\kappa_4}\label{eq:beta_2}
\end{align}
where $\kappa_4$ is the fourth-moment of a standard normal random variable, (i) uses that
\[
\frac{F_{i,j}}{\hat{F}_{i,j}} 
W_{i,j}\Var_{i,j}^+(\Psi_{i,j})W_{i,j}^\top\preceq\frac{M^2}{\sigma^2}\Tr(W_{i,j}\Var_{i,j}^+(\Psi_{i,j})W_{i,j}^\top)\cdot I \precsim  I,
\]
and that 
\[
    \bbE\bb{\| H_i\xi_{i}\|^2}=\Tr(\bbE[v_i])=\Tr\bp{\bbE\bb{\diag_j\bc{\frac{F_{i,j}}{\hat{F}_{i,j}} 
W_{i,j}\Var_{i,j}^+(\Psi_{i,j})W_{i,j}^\top}}}=O(Ld');
\]
 and (ii) uses Jensen's inequality.\footnote{For any sequence $a_1,\ldots, a_K$: $(\sum_{i=1}^K a_i^2)^2 = K^2 \bp{\frac{1}{K}\sum_{i=1}^K a_i^2}^2 \leq K\sum_{i=1}^K\bp{ a_i^2}^2$.}
 
 Now we consider $\bbE\bb{\| H_i\xi_i\|_2^4}$.
  \begin{align}
  &\bbE[\| H_i\xi_i\|_2^4]=\bbE\bb{\bbE_{i}\bb{    \xi_{i}^\top  H_i^\top H_i \xi_{i}\xi_{i}^\top  H_i^\top  H_i  \xi_{i}
        } } \nonumber\\
       & \leq  (L+1)\bbE\bigg[\sum_{j=0}^L\frac{ \bbE_{i}\bb{R_{i,j}^4}\bbE_{i}\bb{ \{(\Psi_{i,j}-\bar{\Psi}_{i,j})^\top W_{i,j}^\top W_{i,j}(\Psi_{i,j}-\bar{\Psi}_{i,j})\}^2 }}{\hat{F}_{i,j}^2}\bigg],\label{eq:fourth_moment_feasible}
    \end{align}
where the inequality is by Cauchy-Schwartz inequality.
Note that $\hat{F}_{i,j}\geq \sigma^2$, and both $R_{i,j}^4$ and $\|\Psi_{i,j}-\bar{\Psi}_{i,j}\|_2^2$ are bounded by some finite constant, thus we have $(\Psi_{i,j}-\bar{\Psi}_{i,j})(\Psi_{i,j}-\bar{\Psi}_{i,j})^\top \precsim I_d$. Hence,
\begin{align}
    \eqref{eq:fourth_moment_feasible}
    \lesssim~&  L\cdot \bbE\bb{ \sum_{j=0}^L\bbE_{i}\bb{ \Tr\bp{\bp{\Psi_{i,j}-\bar{\Psi}_{i,j}}^\top\,W_{i,j}^\top W_{i,j}W_{i,j}^\top W_{i,j}\, \bp{\Psi_{i,j}-\bar{\Psi}_{i,j}}}}}\nonumber\\
    =~&  L\cdot  \bbE\bb{ \sum_{j=0}^L \Tr\bp{W_{i,j}^\top W_{i,j}W_{i,j}^\top W_{i,j}\Var_{i,j}^+(\Psi_{i,j})}}=L\cdot  \bbE\bb{  \sum_{j=0}^L\Tr\bp{W_{i,j}W_{i,j}^\top W_{i,j}\Var^+_{i,j}(\Psi_{i,j})W_{i,j}^\top}}\nonumber\\
    \stackrel{(i)}{\leq}~& L\cdot \bbE\bb{\sum_{j=0}^L \Tr\bp{W_{i,j}W_{i,j}^\top}\Tr\{W_{i,j}\Var^+_{i,j}(\Psi_{i,j})W_{i,j}^\top\}} \stackrel{(ii)}{\lesssim}  Ld'\cdot\bbE\bb{ \sum_{j=0}^L\Tr\bp{W_{i,j}W_{i,j}^\top}} . \label{eq:fourth_moment_continue}
\end{align}
where (i) uses that if $A,B$ are positive semi-definite, we have $\Tr(AB)\leq\Tr(A)\Tr(B)$, and (ii) uses Property \ref{property:weight_stabilizing}(c). 

Combining \eqref{eq:beta_2} and \eqref{eq:fourth_moment_continue}, we have that 
\begin{align}
        \beta_2 &\lesssim ~ \sum_{i=1}^{n} d'L\sqrt{ \bbE\bb{\sum_{j=0}^L  \Tr\bp{W_{i,j}^\top W_{i,j}}}} + (Ld')^{\frac{3}{2}} \sqrt{\kappa_4}  \nonumber\\
        &\leq d'Ln^{\frac{1}{2}}\sqrt{\sum_{i=1}^{n}\bbE\bb{\sum_{j=0}^L  \Tr\bp{W_{i,j}^\top W_{i,j}}}} + (Ld')^{\frac{3}{2}}n\sqrt{\kappa_4}\nonumber\\
        & = d'L n^{\frac{1}{2}}\sqrt{\sum_{j=0}^L\sum_{i=1}^{n}\bbE\bb{  \Tr\bp{W_{i,j}^\top W_{i,j}}}} + (Ld')^{\frac{3}{2}}n\sqrt{\kappa_4}\nonumber\\
        &= O\bp{(Ld')^{\frac{3}{2}}n^{\frac{\alpha_1}{2}+1}},
        \label{eq:beta_2_bound}
\end{align}
where the last equality is by Property \ref{property:weight_stabilizing}(b).

\paragraph{Applying Proposition \ref{prop:uniform_clt}.}  
Combining \eqref{eq:bounded_sigman_inverse_frob}, \eqref{eq:omega_n_bound}, \eqref{eq:beta_2_bound},
We have
\begin{align}
    &\sup_{C\in\calC}\left|\bbP\bp{n^{-1/2}\sum_{i=1}^{n}  H_i\xi_i\in C}-\bbP\bp{\calN(0, \Sigma_{n})\in C}\right|\nonumber\\
    \lesssim~&\inf_{\eta>0} \bc{
    \bp{\frac{\beta_2dL}{\eta^3}}^{\frac{1}{3}} +  \bp{\frac{\sqrt{dL\bbE[\|\Omega_{n}\|_2]}}{\eta}}^{\frac{2}{3}} +\eta\sqrt{\|n^{-1}\Sigma_{n}^{-1}\|_F}
    } \nonumber\\
  \lesssim~& \inf_{\eta>0} \bc{
            \frac{L^{\frac{2}{3}}n^{\frac{1}{3}+\frac{\alpha_1}{6}}}{\eta} 
            + \frac{L^{\frac{1}{3}}n^{\frac{1-\min(\gamma_F, \alpha_3)}{3}} }{\eta^{\frac{2}{3}}}
            +\eta n^{-\frac{1}{2}}L^{\frac{1}{4}}
  }. \label{eq:gaussian_error}
\end{align}
Choose $\eta= n^{\frac{5-2\gamma'}{10}}$ where $\gamma'\in(0,\min(\gamma_F, \alpha_3)]$. Continuing the above, we have
\begin{align*}
  \eqref{eq:gaussian_error} 
   \leq ~& L^{\frac{2}{3}}\,\min_{\gamma'\in(0, \min(\gamma_F, \alpha_3)]}\max\bp{ n^{\frac{\alpha_1-1}{6} + \frac{\gamma'}{5}}, n^{-\frac{\gamma'}{5}} }\\
  =~ &
  L^{\frac{2}{3}}\,\min_{\gamma'\in(0, \min(\gamma_F, \alpha_3)]} n^{-\min\bp{\frac{1-\alpha_1}{6} - \frac{\gamma'}{5},\frac{\gamma'}{5}}}\\
   =~&  L^{\frac{2}{3}}\, n^{-\min\bp{\frac{1-\alpha_1}{12}, \frac{\alpha_3}{5}, \frac{\gamma_F}{5}}}.\tag{choose $\gamma'=\min\bp{\frac{5(1-\alpha_1)}{12}, \gamma_F, \alpha_3}$}\nonumber
\end{align*}
 Collectively, we have
\begin{equation}
    \sup_{C\in\calC}\left|\bbP\bp{n^{-1/2}\sum_{i=1}^n  H_i\xi_i\in C}-\bbP\bp{\calN(0, \Sigma_{n})\in C}\right| = O\bp{
   L^{\frac{2}{3}} n^{-\min(\frac{1-\alpha_1}{12}, \frac{\alpha_3}{5}, \frac{\gamma_F}{5})}
    }
    \label{eq:step_1}
\end{equation}

\medskip

For the remaining  steps $2-3$, we need to invoke the following lemma.
\begin{lemma}[see \cite{barsov1987estimates,devroye2018total}]\label{lem:tvdist} Let $\Sigma_1, \Sigma_2$ be two positive definite covariance matrices and let $\lambda_1, \ldots, \lambda_d$ be the eigenvalues of $\Sigma_1^{-1}\Sigma_2-I$. Then $\text{TV}(\calN(0,\Sigma_1), \calN(0, \Sigma_2))\leq 2\min(\sqrt{\sum_i \lambda_i^2},1)$.
\end{lemma}

\subsubsection{Step 2: Connect $\calN(0, \Sigma_n)$ with $\calN(0, \Xi_n)$.}  Note that by \eqref{eq:diff_sigman_xin}, we have that $\bn{\Sigma_n - \Xi_n}_{Frob}^2=O(Ln^{-2\gamma_F})$. Moreover, under Property \ref{property:weight_stabilizing}(c),  $\Xi_n \succeq c_2 I $ and thus is invertible; also $\Sigma_n\succeq \frac{c_2}{2}I$ for large $n$ by \eqref{eq:invertability_of_sigman}.
Consider
\begin{align*}
    \Xi_n^{-1}\Sigma_n-I = \Xi_n^{-1} \cdot \bp{\Sigma_n - \Xi_n}.
\end{align*}
Let $\{\lambda_i\}$ be the eigenvalues of $\Xi_n^{-1}\Sigma_n-I $, we have
\begin{align*}
    \sum_{i}\lambda_i^2 &\lesssim\|\Xi_n^{-1}\Sigma_n-I\|_{Frob}^2 \tag{by Lemma \ref{lemma:eigenval_frob}}\\
   &= \Tr\bp{\Xi_n^{-1} \cdot \bp{\Sigma_n - \Xi_n} \cdot  \bp{\Sigma_n - \Xi_n} \Xi_n^{-1}}\\
   & = \Tr\bp{\Xi_n^{-2} \cdot \bp{\Sigma_n - \Xi_n}^2}\\
   & \leq \Tr\bp{\Xi_n^{-2}}\Tr\bp{\bp{\Sigma_n - \Xi_n}^2 }\tag{by Lemma \ref{lemma:trace_inequality}}  \\
   &\leq \Tr\bp{\Xi_n^{-1}}^2 \bn{ \bp{\Sigma_n - \Xi_n}}_{Frob}^2 \tag{since $\Xi_n^{-1}\succ 0$}\\
   &= O(L^3n^{-2\gamma_F}).
\end{align*}
Applying Lemma \ref{lem:tvdist}, we thus have 
\begin{equation}
    \text{TV}(\calN(0,\Sigma_n), \calN(0, \Xi_n))=O(L^{\frac{3}{2}}n^{-\gamma_F}).
    \label{eq:step_2}
\end{equation}

\medskip

\subsubsection{Step 3: Connect $\calN(0,\Xi_n)$ with $\calN(0, \widehat{\Xi}_n)$.}
We first show that event $\widehat{\calE}_n=\bc{ \widehat{\Xi}_n\succeq \frac{c_6}{2} I}$ happens with high probability.
Define $\overline{\Xi}_n$ as a block-diagonal matrix with its $j$-th diagonal block as,
\[
\frac{1}{n}\sum_{i=1}^{n} \bbE_{i}[W_{i,j}\Var^+_{i,j}(\Psi_{i,j})W_{i,j}^\top].
\]
 Note that $\widehat{\Xi}_n-\overline{\Xi}_n$ is the sum of martingale difference sequence. We have
\begin{align*}
    &\bbE\bb{
    \bn{\widehat{\Xi}_n-\overline{\Xi}_n}_{Frob}^2
    }= \bbE\bb{
    \Tr\bp{\bp{\widehat{\Xi}_n-\overline{\Xi}_n}\cdot \bp{\widehat{\Xi}_n-\overline{\Xi}_n}}
    } = \bbE\bb{
    \Tr\bp{\bp{\widehat{\Xi}_n-\overline{\Xi}_n}^2}
    }\\
   =&\frac{1}{n^2}\sum_{i=1}^{n}\bbE \Big[ \Tr\big(\bc{W_{i,j}\Var^+_{i,j}(\Psi_{i,j})W_{i,j}^\top - \bbE_{i}[W_{i,j}\Var^+_{i,j}(\Psi_{i,j})W_{i,j}^\top]}^2\big) \Big]\\
   \leq & \frac{1}{n^2}\sum_{i=1}^{n}  \bbE\bb{\Tr\bp{
   \bc{W_{i,j}\Var^+_{i,j}(\Psi_{i,j})W_{i,j}^\top }^2
   }}\tag{by Lemma \ref{lemma:trace_inequality_4}} \\
   =& \frac{1}{n^2}\sum_{i=1}^{n}  \bbE\bb{\Tr\bp{
   \bc{W_{i,j}\Var^+_{i,j}(\Psi_{i,j})W_{i,j}^\top }
   }^2} \\
   =&  O(Ln^{-1}).\tag{Property \ref{property:weight_stabilizing}(c)}
\end{align*}
Thus
\[
\bbE\bb{
    \bn{\widehat{\Xi}_n-\overline{\Xi}_n}_{2}
    }\leq \bbE\bb{
    \bn{\widehat{\Xi}_n-\overline{\Xi}_n}_{Frob}
    } \leq \bbE\bb{
    \bn{\widehat{\Xi}_n-\overline{\Xi}_n}_{Frob}^2
    }^{1/2}=O\bp{L^{\frac{1}{2}}n^{-\frac{1}{2}}}.
\]
Therefore,
\[
\bbE\bb{
    \bn{\widehat{\Xi}_n - {\Xi}_n}_{2}
    }\leq \bbE\bb{
    \bn{\widehat{\Xi}_n-\overline{\Xi}_n}_{2}
    } + \bbE\bb{
    \bn{\overline{\Xi}_n - {\Xi}_n}_{2}
    } = O(L^{\frac{1}{2}}n^{-\min(\frac{1}{2}, \alpha_3)}),
\] 
where we use the result that $ \bbE[\|\overline{\Xi}_n-\Xi_n\|_2] = O(n^{-\alpha_3})$, which holds as proven  below: 
\begin{align*}
   &\bbE[\|\overline{\Xi}_n-\Xi_n\|_2]\\
   \leq ~&\max_{j}\bbE\bb{\bn{\frac{1}{n}\sum_{i=1}^{n} \bbE_{i}[W_{i,j}\Var^+_{i,j}(\Psi_{i,j})W_{i,j}^\top] -\frac{1}{n}\sum_{i=1}^{n} \bbE[W_{i,j}\Var^+_{i,j}(\Psi_{i,j})W_{i,j}^\top] }_2}\\
 \leq~&  \max_{j}\frac{1}{n}\sum_{i=1}^{n}\bbE\bb{\bn{ \bbE_{i}[W_{i,j}\Var^+_{i,j}(\Psi_{i,j})W_{i,j}^\top] - \bbE[W_{i,j}\Var^+_{i,j}(\Psi_{i,j})W_{i,j}^\top] }_2}\tag{triangular inequality}\\
 \leq~&  \max_{j}\frac{1}{n}\sum_{i=1}^{n}\bbE\bb{\bn{ \bbE_{i,j}[W_{i,j}\Var^+_{i,j}(\Psi_{i,j})W_{i,j}^\top] - \bbE[W_{i,j}\Var^+_{i,j}(\Psi_{i,j})W_{i,j}^\top] }_2} \tag{Jensen's inequality}\\
 =~& O(n^{-\alpha_3}).\tag{by Property \ref{property:weight_stabilizing}(c)}
\end{align*}

On the other hand,  Property \ref{property:weight_stabilizing}(c) says that $\Xi_n\succeq c_2\cdot I$. Applying Lemma \ref{lem:high_probability_invertible}, we have that:
\[
\bbP(\widehat{\calE}_n^c) = O(L^{\frac{1}{2}}n^{-\min(\alpha_3,\frac{1}{2})}).
\]

Under Property \ref{property:weight_stabilizing}(c), we have that $\Xi_n \succeq c_2 I $ and thus is invertible. Now conditioning on event $\widehat{\calE}_n$ (and thus $\widehat{\Xi}_n$ is invertible),  we  have
\begin{align*}
    \Xi_n^{-1}\widehat{\Xi}_n-I = \Xi_n^{-1} \cdot \bp{\widehat{\Xi}_n - \Xi_n}.
\end{align*}
We overload notations and let $\{\lambda_i\}$ be the eigenvalues of $\Xi_n^{-1}\widehat{\Xi}_n-I $, we have
\begin{align*}
   & \sum_{i}\lambda_i^2 \lesssim \|\Xi_n^{-1}\widehat{\Xi}_n-I\|_{Frob}^2 \tag{by Lemma \ref{lemma:eigenval_frob}}\\
   &= \Tr\bp{\Xi_n^{-1} \cdot \bp{\widehat{\Xi}_n - \Xi_n} \cdot \bp{\widehat{\Xi}_n - \Xi_n} \Xi_n^{-1}}\\
   & \leq \Tr\bp{\Xi_n^{-2}}\Tr\bp{\bp{\widehat{\Xi}_n - \Xi_n}^2 }  =\Tr\bp{\Xi_n^{-1}}^2 \bn{ \widehat{\Xi}_n - \Xi_n}_{Frob}^2= O(L^3) \bn{ \widehat{\Xi}_n - \Xi_n}_{2}^2.
\end{align*}
Applying Lemma \ref{lem:tvdist}, 
we  have that, conditioning on event $\widehat{\calE}_n $ happens,
\begin{equation}
\label{eq:xin_hat_xin_tv}
    \text{TV}(\calN(0,\Xi_n), \calN(0, \widehat{\Xi}_n))=O(L^{\frac{3}{2}})\bn{ \widehat{\Xi}_n - \Xi_n}_{2},
\end{equation}
where both sides are  functions of the random matrix $\widehat{\Xi}_n$, and the  total variation distance on the left-hand-side is defined conditional on $\widehat{\Xi}_n$, which serves as the covariance matrix of the Gaussian distribution $\calN(0, \widehat{\Xi}_n)$.
Then for any $C\in\calC$, we have
\begin{align*}
&\ba{ \bbP\bp{\calN(0,\Xi_n)\in C } -  \bbP\bp{\calN(0,{\widehat{\Xi}}_n)\in A}}\\
\leq  & \ba{ \bbP\bp{\{\calN(0,\Xi_n)\in C\} \cap \widehat{\calE}_n } -  \bbP\bp{\{\calN(0,{\widehat{\Xi}}_n)\in C\} \cap \widehat{\calE}_n}} + 2 \bbP(\widehat{\calE}_n^c)\\
= & \ba{ \bbP\bp{\calN(0,\Xi_n)\in C \mid \widehat{\calE}_n} -  \bbP\bp{\calN(0,{\widehat{\Xi}}_n)\in C \mid\widehat{\calE}_n}}\bbP(\widehat{\calE}_n) + 2 \bbP(\widehat{\calE}_n^c)\\
   = & \ba{
  \bbE\bb{\one
  \bc{\calN(0,\Xi_n)\in C} -
  \one
  \bc{\calN(0,\widehat{\Xi}_n)\in C}
  \mid \widehat{\calE}_n
  } 
   }
   \bbP(\widehat{\calE}_n)
   + 2 \bbP(\widehat{\calE}_n^c)\\
     = & \ba{
  \bbE\bb{\bbE\bb{\one
  \bc{\calN(0,\Xi_n)\in C} -
  \one
  \bc{\calN(0,\widehat{\Xi}_n)\in C}
  \mid \widehat{\Xi}_n, \widehat{\calE}_n}\mid \widehat{\calE}_n
  } 
   }
   \bbP(\widehat{\calE}_n)
   + 2 \bbP(\widehat{\calE}_n^c)\\
        = & \ba{
  \bbE\bb{\bbP\bp{\calN(0,\Xi_n)\in C\mid \widehat{\Xi}_n, \widehat{\calE}_n} -\bbP\bp{\calN(0,\widehat{\Xi}_n)\in C\mid \widehat{\Xi}_n, \widehat{\calE}_n}
  \mid
 \widehat{\calE}_n  
   }}
   \bbP(\widehat{\calE}_n)
   + 2 \bbP(\widehat{\calE}_n^c)\\
   \leq & \bbE\bb{ \text{TV}(\calN(0,\Xi_n), \calN(0, \widehat{\Xi}_n))\mid \widehat{\calE}_n} \bbP(\widehat{\calE}_n) + 2 \bbP(\widehat{\calE}_n^c)\\
   = &\bbE\bb{O(L^{3/2})\bn{ \widehat{\Xi}_n - \Xi_n}_{2}\one[\widehat{\calE}_n]} + 2 \bbP(\widehat{\calE}_n^c)
   \tag{applying \eqref{eq:xin_hat_xin_tv}}\\
   \leq & \bbE\bb{O(L^{3/2})\bn{\widehat{\Xi}_n - \Xi_n}_{2}} + 2 \bbP(\widehat{\calE}_n^c)
   =  O(L^2 n^{-\min(\alpha_3,1/2)}).
\end{align*}
Thus 
\begin{align}
    &\sup_{C\in \calC}\ba{ \bbP\bp{\calN(0,\Xi_n)\in C} -  \bbP\bp{\calN(0,\widehat{\Xi}_n)\in C}} =  O(L^2 n^{-\min(\alpha_3,1/2)}) \label{eq:step_3}
\end{align}

\medskip

\subsubsection{Put everything together.} Combining \eqref{eq:step_1}, \eqref{eq:step_2}, \eqref{eq:step_3}, we have
\begin{align*}
    &\sup_{C\in \calC}\left|\bbP\bp{n^{-1/2}\sum_{i=1}^n  H_i\xi_i\in C}-\bbP\bp{\calN(0, \widehat{\Xi}_n)\in C}\right| 
    = O  \bp{ L^2\cdot n^{-\min(\frac{1-\alpha_1}{12}, \frac{\alpha_3}{5}, \frac{\gamma}{5})}}.
\end{align*}


\subsection{Asymptotic regularity of $B_n$}
\label{appendix:regularity_B_normal}
Note that $B_n$ is block upper triangular, where the block sizes are corresponding to the decomposition of $\theta=( \theta_1, \dots, \theta_L)$. 
Recall that by construction, $\hat{f}\in[\sigma^2, M^2]$.

For any $j\leq j'$ we have:
 \begin{align*}
  B_{n,j,j'}=n^{-1}\sum_{i=1}^n H_{i,j}(\Psi_{i,j}-\bar{\Psi}_{i,j})\Phi_{i,j'}^\top .
\end{align*}
Define 
\begin{align*}
        B^0_{n,j,j'}=n^{-1}\sum_{i=1}^n\bbE_{i,j}[ H_{i,j}(\Psi_{i,j}-\bar{\Psi}_{i,j})\Phi_{i,j'}^\top ],\mbox{ and }\delta_{n,j,j'}=B^0_{n,j,j'}-B_{n,j,j'}.
\end{align*}
For $j\in\{1,\dots,L\}$, we have:
\begin{align*}
    \lambda_{\min} ((B^0_{n,j,j})^\top B^0_{n,j,j})\geq~&\frac{1}{M}  \lambda_{\min}\bp{
   \bc{ \frac{1}{n}\sum_{i=1}^n \bbE_{i,j}[W_{i,j}\Cov^+_{i,j}(\Psi_{i,j},\Phi_{i,j}) ]}^\top
    \bc{ \frac{1}{n}\sum_{i=1}^n \bbE_{i,j}[W_{i,j}\Cov^+_{i,j}(\Psi_{i,j},\Phi_{i,j}) ]}
    } \\
    \geq ~&\frac{c_2^2}{M}n^{-\alpha_1}.\tag{by Property \ref{property:weight_stabilizing}(a)}
\end{align*}

Next we shall show that for $j\leq j'$, we have
  \begin{align*}
  \bbE[\| \delta_{n,j,j'}\|_{Frob}^2]=O(n^{-1}),
  \end{align*}
  and that the event $\calE^M_n$ happens with high probability:
  \begin{align*}
      \bbP(\bp{\calE^M_n}^c)=O(Ln^{\alpha_1-1}).
  \end{align*}

\subsubsection{Asymptotic negligibility of $\delta_{n,j,j'}$.}  With $j\leq j'$,  we have
\begin{align*}
    \bbE[\|\delta_{n,j,j'}\|_{Frob}^2 ]
    =~& n^{-2}\bbE\Big[\Tr\Big(\bc{\sum_{i=1}^n H_{i,j}\bp{(\Psi_{i,j}-\bar{\Psi}_{i,j})\Phi_{i,j'}^\top -\bbE_{i,j}[(\Psi_{i,j}-\bar{\Psi}_{i,j})\Phi_{i,j'}^\top ]}}\\
  &\quad\quad  \times\bc{\sum_{i=1}^n H_{i,j}\bp{(\Psi_{i,j}-\bar{\Psi}_{i,j})\Phi_{i,j'}^\top -\bbE_{i,j}[(\Psi_{i,j}-\bar{\Psi}_{i,j})\Phi_{i,j'}^\top ]}}^\top  \Big)\Big]\\
  \stackrel{(i)}{=}~& n^{-2}\sum_{i=1}^n\Tr\Big( \bbE\Big[H_{i,j}\bp{(\Psi_{i,j}-\bar{\Psi}_{i,j})\Phi_{i,j'}^\top -\bbE_{i,j}[(\Psi_{i,j}-\bar{\Psi}_{i,j})\Phi_{i,j'}^\top ]}\\
  &\quad \quad \times \bp{(\Psi_{i,j}-\bar{\Psi}_{i,j})\Phi_{i,j'}^\top -\bbE_{i,j}[(\Psi_{i,j}-\bar{\Psi}_{i,j})\Phi_{i,j'}^\top ]}^\top H_{i,j}^\top 
  \Big]\Big)\\
  \stackrel{(ii)}{\leq}~&  n^{-2}\sum_{i=1}^n\Tr \bp{
\bbE\bb{ H_{i,j} \bbE^+_{i,j}\bb{(\Psi_{i,j}-\bar{\Psi}_{i,j})\Phi_{i,j'}^\top \Phi_{i,j'}(\Psi_{i,j}-\bar{\Psi}_{i,j})^\top } H_{i,j}^\top }
  }\\
   \stackrel{(iii)}{\lesssim}~& n^{-2}\sum_{i=1}^n\Tr \bp{
\bbE\bb{ W_{i,j} \bbE^+_{i,j}\bb{(\Psi_{i,j}-\bar{\Psi}_{i,j})(\Psi_{i,j}-\bar{\Psi}_{i,j})^\top } W_{i,j}^\top }
  } \stackrel{(iv)}{=} O(n^{-1}).
\end{align*}
where (i) uses the fact  that $\delta_{n,j,j'}$ is a sum of martingale difference sequence, (ii) uses Lemmas \ref{lemma:trace_inequality_3} \& \ref{lemma:trace_inequality_4}, (iii) uses that $\Phi_{i,j'}$ is bounded and Lemma  \ref{lemma:trace_inequality_3}, and (iv) uses Property \ref{property:weight_stabilizing}(c).

\subsubsection{High probability event $\calE^M_n$.} 
We finally show that $\bbP(\bp{\calE^M_n}^c)=O(Ln^{\alpha_1-1})$. When $\calE^M_n$ does not happen, by Lemma~\ref{lemma:eigenvalue_tri}, there exists a $j_0\in\{0,1,,\dots, l\}$ and $x$ with $\|x\|_2=1$ such that 
\begin{equation}
    \label{eq:ct_ex_normal}
    x^\top B_{n,j_0,j_0}^\top B_{n,j_0,j_0} x < \frac{c_2^2n^{-\alpha_1}}{4M}.
\end{equation}
Since $\delta_{n,j_0,j_0}=B_{n,j_0,j_0}-B_{n,j_0,j_0}^0$, by Lemma~\ref{lem:lower-eigenvalue}, we have the following:
\begin{align*}
    \frac{c_1^2n^{-\alpha_1}}{4M^2} > x^\top B_{n,j_0,j_0}^\top B_{n,j_0,j_0} x \geq \frac{1}{2} x^\top (B_{n,j_0,j_0}^0)^\top B_{n,j_0,j_0}^0 x - 2\|\delta_{n,j_0,j_0}\|_{Frob}^2 \geq  \frac{n^{-\alpha_1}c_2^2}{2M} - 2\|\delta_{n,j_0,j_0}\|_{Frob}^2.
\end{align*}

 Rearranging yields the following results:
\begin{align}
    \|\delta_{n,j_0,j_0}\|_{Frob}^2 > \frac{n^{-\alpha_1}c_2^2}{8M}
\end{align}
Thus by a union bound, we have
\begin{align*}
    \bbP(\bp{\calE^M_n}^c) 
    \leq~& \bbP\bp{\exists j: \|\delta_{n,j,j}\|_{Frob}^2 > \frac{n^{-\alpha_1}c_2^2}{8M}}\\
    \leq~& \sum_{j=0}^L\bbP\bp{ \|\delta_{n,j,j}\|_{Frob}^2 > \frac{n^{-\alpha_1}c_2^2}{8M} }  \\
    \leq~& \sum_{j=0}^L\frac{8M \bbE[\|\delta_{n,j,j}\|_{Frob}^2]}{c_2^2n^{-\alpha_1}} =O(Ln^{\alpha_1-1}) .
\end{align*}

\section{Proof of Corollaries}
\subsection{Proof of Corollary \ref{cor:consistency}}
\subsubsection{Identity weights.}
Consider $H_{i,j}$ being the identity matrix, we verify Property \ref{property:weight_regularizing}.

\emph{Property \ref{property:weight_regularizing}(a)}. We have 
\begin{align*}
   \frac{1}{n}\sum_{i=1}^n \bbE_{i,j}\bb{H_{i,j} \Var^+_{i,j}(\Phi_{i,j}) }= \frac{1}{n}\sum_{i=1}^n \bbE_{i,j}\bb{(I_{d_\tau}\otimes X_{i,j})\Var^+_{i,j}(T_{i,j}) (I_{d_\tau}\otimes X_{i,j})^\top}\\
   \succeq  c  \frac{1}{n}\sum_{i=1}^n t_{i,j}^{-\alpha}\bbE_{i,j}\bb{ (I_{d_\tau}\otimes X_{i,j}) (I_{d_\tau}\otimes X_{i,j})^\top}  = \frac{1}{n}\sum_{i=1}^n t_{i,j}^{-\alpha}I_{d_\tau}\otimes\bbE_{i,j}\bb{ X_{i,j}X_{i,j}^\top}\gtrsim   n^{-\alpha} I.
\end{align*}
Thus:
\[
\bc{\frac{1}{n}\sum_{i=1}^n \bbE_{i,j}\bb{H_{i,j} \Var^+_{i,j}(\Phi_{i,j}) }}^\top \bc{\frac{1}{n}\sum_{i=1}^n \bbE_{i,j}\bb{H_{i,j} \Var^+_{i,j}(\Phi_{i,j}) }}\gtrsim   n^{-2\alpha} I.
\]

\emph{Property \ref{property:weight_regularizing}(b)}. We have
\begin{align*}
    \frac{1}{n}\sum_{i=1}^n \Tr\bp{H_{i,j}H_{i,j}^\top} = \frac{1}{n}\sum_{i=1}^n \Tr(I)= O(1).
\end{align*}

\emph{Property \ref{property:weight_regularizing}(c)}. We have
\begin{align*}
    \frac{1}{n} \sum_{i=1}^n \Tr\bp{\bbE_{i,j}[H_{i,j} \Var_{i,j}^+(\Phi_{i,j}) H_{i,j}^\top]}= \frac{1}{n} \sum_{i=1}^n \Tr\bp{\bbE_{i,j}[\Var_{i,j}^+(\Phi_{i,j})]}= O(1).
\end{align*}

\subsubsection{Consistency weights.} Consider $H_{i,j} = (I_{d_\tau}\otimes X_{i,j})\Var^+_{i,j}(T_{i,j})^{-1/2} 
    (I_{d_\tau}\otimes X_{i,j})^\top $.

\emph{Property \ref{property:weight_regularizing}(a)}. We have 
\begin{align*}
   &\frac{1}{n}\sum_{i=1}^n \bbE_{i,j}\bb{H_{i,j} \Var^+_{i,j}(\Phi_{i,j}) }\\
   &= \frac{1}{n}\sum_{i=1}^n \bbE_{i,j}\bb{ 
    (I_{d_\tau}\otimes X_{i,j})\Var^+_{i,j}(T_{i,j})^{-1/2} 
    (I_{d_\tau}\otimes X_{i,j})^\top
    (I_{d_\tau}\otimes X_{i,j})\Var^+_{i,j}(T_{i,j}) (I_{d_\tau}\otimes X_{i,j})^\top
   }\\
     &= \frac{1}{n}\sum_{i=1}^n \bbE_{i,j}\bb{ 
    \|X_{i,j}\|_2^2(I_{d_\tau}\otimes X_{i,j})\Var^+_{i,j}(T_{i,j})^{-1/2} 
\Var^+_{i,j}(T_{i,j}) 
    (I_{d_\tau}\otimes X_{i,j})^\top)
   }\\
    &\succeq \frac{c^2}{n}\sum_{i=1}^n t_{i,j}^{-\alpha/2}\bbE_{i,j}\bb{ 
    (I_{d_\tau}\otimes X_{i,j}) 
    (I_{d_\tau}\otimes X_{i,j})^\top
   } =\frac{c^2}{n}\sum_{i=1}^n t_{i,j}^{-\alpha/2} \bbE_{i,j}\bb{ 
    I_{d_\tau}\otimes( X_{i,j} X_{i,j}^\top) 
   } \gtrsim   n^{-\alpha/2} I.
\end{align*}
Thus:
\[
\bc{\frac{1}{n}\sum_{i=1}^n \bbE_{i,j}\bb{H_{i,j} \Var^+_{i,j}(\Phi_{i,j}) }}^\top \bc{\frac{1}{n}\sum_{i=1}^n \bbE_{i,j}\bb{H_{i,j} \Var^+_{i,j}(\Phi_{i,j}) }}\gtrsim   n^{-\alpha} I.
\]

\emph{Property \ref{property:weight_regularizing}(b)}.  We have
\begin{align*}
    &\frac{1}{n}\sum_{i=1}^n \Tr\bp{H_{i,j}H_{i,j}^\top} \\
    &= \frac{1}{n}\sum_{i=1}^n \Tr(   (I_{d_\tau}\otimes X_{i,j})\Var^+_{i,j}(T_{i,j})^{-1/2} 
    (I_{d_\tau}\otimes X_{i,j})^\top
    (I_{d_\tau}\otimes X_{i,j})
    \Var^+_{i,j}(T_{i,j})^{-1/2} 
    (I_{d_\tau}\otimes X_{i,j})^\top
    )\\
    &= \frac{1}{n}\sum_{i=1}^n\| X_{i,j}\|_2^{2} \Tr(   (I_{d_\tau}\otimes X_{i,j})\Var^+_{i,j}(T_{i,j})^{-1/2} 
    \Var^+_{i,j}(T_{i,j})^{-1/2} 
    (I_{d_\tau}\otimes X_{i,j})^\top
    )\\
    & \leq  \frac{c^{-1}}{n}\sum_{i=1}^nt_{i,j}^\alpha\| X_{i,j}\|_2^{2} \Tr(   (I_{d_\tau}\otimes X_{i,j})
    (I_{d_\tau}\otimes X_{i,j})^\top
    )\\
     & = \frac{c^{-1}}{n}\sum_{i=1}^nt_{i,j}^\alpha\| X_{i,j}\|_2^{2} \| X_{i,j}\|_2^{2} \Tr(   I
    ) =  O(n^{\alpha}).
\end{align*}

\emph{Property \ref{property:weight_regularizing}(c)}. We have
\begin{align*}
    &\frac{1}{n} \sum_{i=1}^n \Tr\bp{\bbE_{i,j}[H_{i,j} \Var_{i,j}^+(\Phi_{i,j}) H_{i,j}^\top]}\\
  =&  \frac{1}{n}\sum_{i=1}^n \Tr\Big(\bbE_{i,j}\Big[
    (I_{d_\tau}\otimes X_{i,j})\Var^+_{i,j}(T_{i,j})^{-1/2} 
     (I_{d_\tau}\otimes X_{i,j})^\top
    (I_{d_\tau}\otimes X_{i,j})\Var^+_{i,j}(T_{i,j}) \\
    &\quad \quad (I_{d_\tau}\otimes X_{i,j})^\top
     (I_{d_\tau}\otimes X_{i,j})
     \Var^+_{i,j}(T_{i,j})^{-1/2} 
     (I_{d_\tau}\otimes X_{i,j})^\top
   \Big]\Big)\\
     =& \frac{1}{n}\sum_{i=1}^n \Tr\bp{\bbE_{i,j}\bb{ \|X_{i,j}\|_2^4 
    (I_{d_\tau}\otimes X_{i,j})\Var^+_{i,j}(T_{i,j})^{-1/2} 
\Var^+_{i,j}(T_{i,j}) \Var^+_{i,j}(T_{i,j})^{-1/2} 
    (I_{d_\tau}\otimes X_{i,j})^\top
   }}\\
    =& \frac{1}{n}\sum_{i=1}^n\Tr\bp{ \bbE_{i,j}\bb{  \| X_{i,j}\|_2^4
    (I_{d_\tau}\otimes X_{i,j}) 
    (I_{d_\tau}\otimes X_{i,j})^\top
   }} =\frac{1}{n}\sum_{i=1}^n \Tr\bp{\bbE_{i,j}\bb{  \| X_{i,j}\|_2^4
    I_{d_\tau}\otimes( X_{i,j} X_{i,j}^\top) 
   }} \\
   =&  \frac{1}{n}\sum_{i=1}^n    K\cdot\Tr\bp{\bbE_{i,j}\bb{  \| X_{i,j}\|_2^4
 X_{i,j} X_{i,j}^\top
   }}= \frac{1}{n}\sum_{i=1}^n    K\cdot\bp{\bbE_{i,j}\bb{  \| X_{i,j}\|_2^6
   }} = O(K c^3)= O(1).
\end{align*}

\subsection{Proof of Corollary \ref{cor:uniform_ci}}
\label{appendix:cor_ci}

Let $a$ be a given confidence level. Define set
\[
\calA(\ell) =\{x\in\bbR^{1+dL}: |\ell^\top  x|\leq q_{\ell,\frac{a}{2}}\},
\].
By definition of $q_{\ell,\frac{\alpha}{2}}$, we have 
\begin{align*}
   \bbP \bp{
(B_n^\top A B_n)^\dagger B_n^\top A  Z_\xi\in \calA(\ell)}=&
   \bbP\bp{
|\ell^\top (B_{n}^\top A B_{n})^{-1} B_n^\top A Z_\xi|\leq q_{\ell,\frac{\alpha}{2}}
}\nonumber\\
=&
   \bbP\bp{
|\calN(0, n^{-1}
\ell^\top (B_{n}^\top A B_{n})^{-1} B_n^\top A
\widehat{\Xi}_n 
A B_n (B_{n}^\top A B_{n})^{-1} \ell
)
|\leq q_{\ell,\frac{a}{2}}
}\\
=& 1-a.
\end{align*}
The Strong Gaussian Approximation results says:
\begin{equation*}
 \left| \bbP\bp{ \sqrt{n} (\hat{\theta}_n -\theta^*)\in \calA(\ell)}-\bbP\bp{(B_n^\top A B_n)^\dagger B_n^\top A Z_\xi\in \calA(\ell)}\right| 
   = O  \bp{ L^2\cdot n^{-\min(\frac{1-\alpha_1}{12},\frac{1-(L+1)\alpha_1}{2}, \frac{\alpha_3}{5}, \frac{\gamma}{5})}}.
\end{equation*}
We thus have:
\begin{align*}
    \bbP\bp{ \ell^\top \theta^* \in \bb{
    \ell^\top \hat{\theta}\pm n^{-\frac{1}{2}}q_{\ell,\frac{a}{2}}}} =& \bbP\bp{ 
\sqrt{n}(\hat{\theta}_n-\theta^*)\in \calA(\ell)
}\\
=&    \bbP \bp{
(B_n^\top A B_n)^\dagger B_n^\top A  Z_\xi\in \calA(\ell)} +  O  \bp{ L^2\cdot n^{-\min(\frac{1-\alpha_1}{12},\frac{1-(L+1)\alpha_1}{2}, \frac{\alpha_3}{5}, \frac{\gamma}{5})}}\\
=& (1-a) +  O  \bp{ L^2\cdot n^{-\min(\frac{1-\alpha_1}{12},\frac{1-(L+1)\alpha_1}{2}, \frac{\alpha_3}{5}, \frac{\gamma}{5})}}.
\end{align*}

\subsection{Proof of Corollary \ref{cor:uniform_cb}}
\label{appendix:cor_cb}

Let $a$ be a given confidence level. Define set
\[
\calA(a) = \bc{
x\in\bbR^{1+dL}: \left\|(\hat{D}_n^{\dagger})^{1/2} x\right\|_\infty\leq   q_{\infty,\, 1-a}
}.
\]
Then we have
\begin{align*}
 \bbP  \bp{\theta^*\in \bb{\hat{\theta}_n \pm n^{-\frac{1}{2}} q_{\infty,1-a}\cdot \sqrt{\hat{d}_n} }} 
   & =  \bbP \bp{n^{\frac{1}{2}}(\theta^*-\hat{\theta}_n)\in \bb{ \pm  q_{\infty,1-a}\cdot \sqrt{\hat{d}_n}  }}\\
    &= \bbP \bp{(\hat{D}_n^\dagger)^{1/2}n^{\frac{1}{2}}(\theta^*-\hat{\theta}_n)\in \bb{ \pm  q_{\infty,1-a}\cdot\one\bc{\hat d_n\neq \zero}  }} \\
    & = \bbP \bp{\bn{(\hat{D}_n^\dagger)^{1/2}n^{\frac{1}{2}}(\theta^*-\hat{\theta}_n)}_\infty\leq  q_{\infty,1-a} }\\
    &=\bbP \bp{ n^{\frac{1}{2}}(\theta^*-\hat{\theta}_n)\in \calA(a) }
\end{align*}
On the other hand, we have:
\begin{align*}
    \bbP\bp{\bc{B_{n}^\top A B_{n}}^\dagger B_n^\top A Z_\xi\in A(a)} & = \bbP\bp{
    \left\|(\hat{D}_n^{\dagger})^{1/2} 
    \bc{B_{n}^\top A B_{n}}^\dagger B_n^\top A Z_\xi
    \right\|_\infty\leq   q_{\infty,\, 1-a}
    }\\
    & = \bbP\bp{
\|\calN(0,(\hat{D}_n^{\dagger})^{\frac{1}{2}}
 \bc{B_{n}^\top A B_{n}}^\dagger B_n^\top A\widehat{\Xi}_n AB_n \bc{B_{n}^\top A B_{n}}^\dagger
(\hat{D}_n^{\dagger})^{\frac{1}{2}})\|_\infty
\leq q_{\infty,\, 1-a}}\\
&=1-a \tag{by definition of $q_{\infty,1-a}$}.
\end{align*}
Moreover, by Strong Gaussian Approximation,  we have
\begin{equation*}
 \ba{ \bbP\bp{n^{\frac{1}{2}}(\theta^*-\hat{\theta}_n)\in A(a)}-\bbP\bp{\bc{B_{n}^\top A B_{n}}^\dagger B_n^\top A Z_\xi\in A(a)}} =O\bp{ L^2\cdot n^{-\min\bp{\frac{1-\alpha_1}{12}, \frac{1-(L+1)\alpha_1}{2},\frac{\alpha_3}{5},\frac{\gamma}{5} }}}.
\end{equation*}
Collectively, we have
\begin{align*}
    \bbP\bc{\theta^*\in \bb{\hat{\theta}_n \pm n^{-\frac{1}{2}} q_{\infty,1-a}\cdot \sqrt{\hat{d}_n} }}
 &=   \bbP\bc{\bc{n^{\frac{1}{2}}(\theta^*-\hat{\theta}_n)\in \calA(a)}} 
    \\
       &=  \bbP\bc{\bc{B_{n}^\top A B_{n}}^\dagger B_n^\top A Z_\xi\in A(a)} +O\bp{ L^2\cdot n^{-\min\bp{\frac{1-\alpha_1}{12}, \frac{1-(L+1)\alpha_1}{2},\frac{\alpha_3}{5},\frac{\gamma}{5} }}}
\\
    &=1-a + O\bp{ L^2\cdot n^{-\min\bp{\frac{1-\alpha_1}{12}, \frac{1-(L+1)\alpha_1}{2}, \frac{\alpha_3}{5},\frac{\gamma}{5} }}}.
\end{align*}

\subsection{Proof of Corollary \ref{cor:normality}}

Recall that $V_{i,j}:=\bbE_{i,j}[X_{i,j}X_{i,j}^\top]$. Consider weighting scheme $H_{i,j} := \hat{f}_{i,j}^{-1/2}W_{i,j}$, where $\hat{f}_{i,j}$ satisfies Condition \eqref{eq:f_convergence} and 
\[
 W_{i,j}=(I_{d_{\tau}}\otimes\hat{V}_{i,j}^{-1/2}) (I_{d_\tau}\otimes X_{i,j})\Var^+_{i,j}(T_{i,j})^{-1/2} 
    (I_{d_\tau}\otimes X_{i,j})^\top\cdot\|X_{i,j}\|_2^{-2},
 \]
 for  $ \hat{V}_{i,j}$ adapted to $\calF_{i,j}$ and satisfying
 $c^{-1}\cdot I \succeq\hat{V}_{i,j}\succeq c\cdot I$ for some $c>0$ and 
 \[
  \frac{1}{n}
    \sum_{i=1}^n \bbE\bb{\left\|\hat{V}_{i,j}-V_{i,j}
    \right\|_{2}} = O(n^{-\gamma}).
 \]
We verify $W_{i,j}$ satisfies Property \ref{property:weight_stabilizing} with $\alpha_1=\alpha$ and $\alpha_3=\gamma$.

\paragraph{Checking Property \ref{property:weight_stabilizing}(a).}  We have
\begin{align*}
      \frac{1}{n}\sum_{i=1}^n \bbE_{i,j}\bb{W_{i,j} \Var^+_{i,j}(\Phi_{i,j}) }&= \frac{1}{n}\sum_{i=1}^n \bbE_{i,j}\Big[ 
   (I_{d_{\tau}}\otimes\hat{V}_{i,j}^{-1/2})
    (I_{d_\tau}\otimes X_{i,j})\Var^+_{i,j}(T_{i,j})^{-1/2} 
     (I_{d_\tau}\otimes X_{i,j})^\top\\
     &\quad\quad \quad \quad\quad \quad\quad \times \|X_{i,j}\|_2^{-2}
    (I_{d_\tau}\otimes X_{i,j})\Var^+_{i,j}(T_{i,j}) (I_{d_\tau}\otimes X_{i,j})^\top
   \Big]\\
     &= \frac{1}{n}\sum_{i=1}^n (I_{d_{\tau}}\otimes\hat{V}_{i,j}^{-1/2})\bbE_{i,j}\bb{ 
    (I_{d_\tau}\otimes X_{i,j})\Var^+_{i,j}(T_{i,j})^{-1/2} 
\Var^+_{i,j}(T_{i,j}) 
    (I_{d_\tau}\otimes X_{i,j})^\top
   }\\
    &= \frac{1}{n}\sum_{i=1}^n (I_{d_{\tau}}\otimes\hat{V}_{i,j}^{-1/2})\bbE_{i,j}\bb{ 
    (I_{d_\tau}\otimes X_{i,j}) 
\Var^+_{i,j}(T_{i,j})^{1/2} 
    (I_{d_\tau}\otimes X_{i,j})^\top
   }
\end{align*}
Since $\hat{V}_{i,j} \preceq c^{-1}\cdot I$, we have
both $(I_{d_{\tau}}\otimes\hat{V}_{i,j}^{-1/2})$ and $\bbE_{i,j}\bb{ 
    (I_{d_\tau}\otimes X_{i,j}) 
\Var^+_{i,j}(T_{i,j})^{1/2} 
    (I_{d_\tau}\otimes X_{i,j})^\top
   }$ being symmetric positive definite.  Therefore $\frac{1}{n}\sum_{i=1}^n \bbE_{i,j}\bb{W_{i,j} \Var^+_{i,j}(\Phi_{i,j}) }$ has eigenvalues being real number and positive. We have:
\begin{align*}
&\lambda_{\min}\bp{\bc{\frac{1}{n}\sum_{i=1}^n \bbE_{i,j}\bb{W_{i,j} \Var^+_{i,j}(\Phi_{i,j}) }}^\top \bc{\frac{1}{n}\sum_{i=1}^n \bbE_{i,j}\bb{W_{i,j} \Var^+_{i,j}(\Phi_{i,j}) }}}\\
  \geq   &\lambda_{\min}\bp{\frac{1}{n}\sum_{i=1}^n \bbE_{i,j}\bb{W_{i,j} \Var^+_{i,j}(\Phi_{i,j}) }}^2\\
    \geq &
    c^{1/2}\lambda_{\min}\bp{
    \frac{1}{n}\sum_{i=1}^n \bbE_{i,j}\bb{ 
    (I_{d_\tau}\otimes X_{i,j}) 
\Var^+_{i,j}(T_{i,j})^{1/2} 
    (I_{d_\tau}\otimes X_{i,j})^\top
    }}^2\\
    \geq  &    c^{1/2}\lambda_{\min}\bp{
    \frac{1}{n}\sum_{i=1}^n t_{i,j}^{-\alpha/2}\bbE_{i,j}\bb{ 
    (I_{d_\tau}\otimes X_{i,j}) 
    (I_{d_\tau}\otimes X_{i,j})^\top
    }}^2 \\
    =&  c^{1/2}\lambda_{\min}\bp{
    \frac{1}{n}\sum_{i=1}^n t_{i,j}^{-\alpha/2}I_{d_\tau}\otimes\bbE_{i,j}\bb{ 
     X_{i,j}X_{i,j}^\top 
    }}^2 = \Omega(1) n^{-\alpha}.
\end{align*}

\paragraph{Checking Property \ref{property:weight_stabilizing}(b).} 

\begin{align*}
        &\frac{1}{n}\sum_{i=1}^n \Tr\bp{W_{i,j}W_{i,j}^\top} \\
    &= \frac{1}{n}\sum_{i=1}^n \Tr\big( (I_{d_{\tau}}\otimes\hat{V}_{i,j}^{-1/2})  (I_{d_\tau}\otimes X_{i,j})\Var^+_{i,j}(T_{i,j})^{-1/2} 
     (I_{d_\tau}\otimes X_{i,j})^\top\\
     &\quad\quad \times \| X_{i,j}\|_2^{-2}
    (I_{d_\tau}\otimes X_{i,j}\cdot\| X_{i,j}\|_2^{-2})\Var^+_{i,j}(T_{i,j})^{-1/2} 
    (I_{d_\tau}\otimes X_{i,j})^\top(I_{d_{\tau}}\otimes\hat{V}_{i,j}^{-1/2})
    \big)\\
    &= \frac{1}{n}\sum_{i=1}^n\| X_{i,j}\|_2^{-2} \Tr(  (I_{d_{\tau}}\otimes\hat{V}_{i,j}^{-1}) (I_{d_\tau}\otimes X_{i,j})\Var^+_{i,j}(T_{i,j})^{-1/2} 
    \Var^+_{i,j}(T_{i,j})^{-1/2} 
    (I_{d_\tau}\otimes X_{i,j})^\top
    )\\
    &\lesssim \frac{c^{-1}}{n}\sum_{i=1}^n\| X_{i,j}\|_2^{-2} \Tr(  (I_{d_\tau}\otimes X_{i,j})\Var^+_{i,j}(T_{i,j})^{-1/2} 
    \Var^+_{i,j}(T_{i,j})^{-1/2} 
    (I_{d_\tau}\otimes X_{i,j})^\top
    )\\
    & \leq  \frac{c^{-2}}{n}\sum_{i=1}^nt_{i,j}^\alpha\| X_{i,j}\|_2^{-2} \Tr(   (I_{d_\tau}\otimes X_{i,j})
    (I_{d_\tau}\otimes X_{i,j})^\top
    )\\
     & =  \frac{c^{-2}}{n}\sum_{i=1}^nt_{i,j}^\alpha\| X_{i,j}\|_2^{-2} \Tr(  (I_{d_\tau}\otimes X_{i,j})^\top (I_{d_\tau}\otimes X_{i,j})
    )\\
     & =  \frac{c^{-2}}{n}\sum_{i=1}^nt_{i,j}^\alpha\| X_{i,j}\|_2^{-2} \Tr( \|X_{i,j}\|_2^{2} I_{d_\tau}
    ) = \frac{c^{-2}}{n}\sum_{i=1}^nt_{i,j}^\alpha\| X_{i,j}\|_2^{-2} \|X_{i,j}\|_2^{2} \Tr(   I_{d_\tau}
    ) =  O(n^{\alpha}).
\end{align*}

\paragraph{Checking Property \ref{property:weight_stabilizing}(c).}  
We have
\begin{align*}
    & W_{i,j} \Var_{i,j}^+(\Phi_{i,j}) W_{i,j}^\top\\
  =&   (I_{d_{\tau}}\otimes\hat{V}_{i,j}^{-1/2})
    (I_{d_\tau}\otimes X_{i,j})\Var^+_{i,j}(T_{i,j})^{-1/2} 
   \bp{(I_{d_\tau}\otimes X_{i,j})^\top\cdot\| X_{i,j}\|_2^{-2}}
    (I_{d_\tau}\otimes X_{i,j})\Var^+_{i,j}(T_{i,j}) \\
    &\quad \times (I_{d_\tau}\otimes X_{i,j})^\top
    (I_{d_\tau}\otimes X_{i,j}\cdot\| X_{i,j}\|_2^{-2})\Var^+_{i,j}(T_{i,j})^{-1/2} 
    (I_{d_\tau}\otimes X_{i,j})^\top 
(I_{d_{\tau}}\otimes\hat{V}_{i,j}^{-1/2}) \\
   =&   (I_{d_{\tau}}\otimes\hat{V}_{i,j}^{-1/2})
    (I_{d_\tau}\otimes X_{i,j})\Var^+_{i,j}(T_{i,j})^{-1/2} 
    (\|X_{i,j}\|_2^{2}\cdot I_{d_\tau}\cdot\|X_{i,j}\|_2^{-2})
    \Var^+_{i,j}(T_{i,j})    \\
    &\quad \times (\| X_{i,j}\|_2^{2}\cdot I_{d_\tau}\cdot\| X_{i,j}\|_2^{-2})\Var^+_{i,j}(T_{i,j})^{-1/2} 
    (I_{d_\tau}\otimes X_{i,j})^\top 
  (I_{d_{\tau}}\otimes\hat{V}_{i,j}^{-1/2}) \\
      =&   (I_{d_{\tau}}\otimes\hat{V}_{i,j}^{-1/2}) 
    (I_{d_\tau}\otimes X_{i,j})\Var^+_{i,j}(T_{i,j})^{-1/2} 
    \Var^+_{i,j}(T_{i,j}) \Var^+_{i,j}(T_{i,j})^{-1/2} \\
    &\quad 
    (I_{d_\tau}\otimes X_{i,j})^\top 
(I_{d_{\tau}}\otimes\hat{V}_{i,j}^{-1/2})  \\
    =&  (I_{d_{\tau}}\otimes\hat{V}_{i,j}^{-1/2}) 
    (I_{d_\tau}\otimes X_{i,j})
    (I_{d_\tau}\otimes X_{i,j})^\top
(I_{d_{\tau}}\otimes\hat{V}_{i,j}^{-1/2}) \\
    =&   (I_{d_{\tau}}\otimes\hat{V}_{i,j}^{-1/2})\bp{I_{d_\tau}\otimes 
   (X_{i,j} X_{i,j}^\top) 
   }(I_{d_{\tau}}\otimes\hat{V}_{i,j}^{-1/2}) \\
   =&   I_{d_{\tau}}\otimes(\hat{V}_{i,j}^{-1/2} 
  X_{i,j} X_{i,j}^\top
   \hat{V}_{i,j}^{-1/2}) \\
\end{align*}
With $c\cdot I \preceq \hat{V}_{i,j} \preceq c^{-1}I$,   $c\cdot I \preceq V_{i,j} \preceq c^{-1}I$ and $\| X_{i,j}\|_2^2\leq c^{-1}$, we have
\begin{align*}
    \Tr(W_{i,j} \Var_{i,j}^+(\Phi_{i,j}) W_{i,j}^\top) =O(c^{-2}),
\end{align*}
and
\begin{align*}
    \frac{1}{n}\sum_{i=1}^n \bbE_{i,j}[W_{i,j} \Var_{i,j}^+(\Phi_{i,j}) W_{i,j}^\top] \succeq c^2 \cdot I.
\end{align*}
Finally, we have
\begin{align*}
    &\frac{1}{n}\sum_{i=1}^n \bbE\bb{\|\bbE_{i,j}[W_{i,j} \Var_{i,j}^+(\Phi_{i,j}) W_{i,j}^\top] - I\|_2} 
    = \frac{1}{n}\sum_{i=1}^n \bbE\bb{\|\bbE_{i,j}[ I_{d_{\tau}}\otimes(\hat{V}_{i,j}^{-1/2} 
  X_{i,j} X_{i,j}^\top
   \hat{V}_{i,j}^{-1/2})] - I\|_2}\\
    = & \frac{1}{n}\sum_{i=1}^n \bbE\bb{\|I_{d_{\tau}}\otimes(\hat{V}_{i,j}^{-1/2}V_{i,j}\hat{V}_{i,j}^{-1/2}-I)  \|_2} 
    = \frac{1}{n}\sum_{i=1}^n \bbE\bb{\|\hat{V}_{i,j}^{-1/2}V_{i,j}\hat{V}_{i,j}^{-1/2}-I  \|_2} \\
   \leq & \frac{1}{n}\sum_{i=1}^n \bbE\bb{\|\hat{V}_{i,j}^{-\frac{1}{2}}\|^2_2\|V_{i,j}- \hat{V}_{i,j} \|_2}
   \leq  \frac{c^{-1}}{n}\sum_{i=1}^n \bbE\bb{\|V_{i,j}- \hat{V}_{i,j} \|_2} = O(n^{-\gamma}),
\end{align*}
which also leads to $\frac{1}{n}\sum_{i=1}^n \|\bbE[W_{i,j} \Var_{i,j}^+(\Phi_{i,j}) W_{i,j}^\top] - I\|_2= O(n^{-\gamma})$ by Jensen's inequality.
Therefore, 
\begin{align*}
   & \frac{1}{n}   \sum_{i=1}^n \bbE\left\|
 \bbE_{i,j}[W_{i,j}\Var^+_{i,j}(\Phi_{i,j})W_{i,j}^\top] - \bbE[W_{i,j}\Var_{i,j}^+(\Phi_{i,j})W_{i,j}^\top]
    \right\|_{2}\\
    \leq ~& \frac{1}{n}   \sum_{i=1}^n \bbE\bb{\|
 \bbE_{i,j}[W_{i,j}\Var^+_{i,j}(\Phi_{i,j})W_{i,j}^\top] -I\|_2}+  \|\bbE[W_{i,j}\Var_{i,j}^+(\Phi_{i,j})W_{i,j}^\top]
    -I\|_{2} = O(n^{-\gamma}).
\end{align*}

\section{Supplementary Results for High-dimensional Markovian Models}
\label{appendix:hdmm}
\subsection{Proof of Lemma \ref{lemma:partial_linear_model}}
\label{appendix:partial_linear_model}

\subsubsection{Show that Assumption \ref{assump:exogeneity} is satisfied.}
Given $S_{1:j},T_{1:j-1}$, consider any counterfactual treatment assignments $\tau_{j:L}\in\calT^{L-j+1}$, and denote the corresponding counterfactual context under the sequence of treatments $T_{1:j-1}, \tau_{j:L}$ as $s_{j+1:L}$. By  unrolling the outcome, we have  
\begin{align*}
    \y_i(T_{1:j-1}, \tau_{j:L}) =~& \theta_{j}^\top (\tau_{j}\otimes\chi_{j}(  S_{i,j,\Omega}))) +\sum_{j'>j}\theta_{j'}^\top ( \tau_{j'}\otimes\chi_{j'}(  s_{j',\Omega}))) + \beta_{j}^\top S_{i,j} +  S_{i,1}^\top \kappa_j +\sum_{j'=j}^{L-1}\eta_{i,j'}^\top \beta_{j'+1} + \epsilon_i.
\end{align*}
When conditioning on $S_{i, 1:j}, T_{i,1:j-1}$, the only randomness that remains in $T_j$ is $\zeta_{i,j}$, which is independent of $ \eta_{i,j:L-1}, \epsilon_i$. Moreover, the only remnant randomness in $Y_i$ and in the counterfactual context $s_{j+1:L}$ are the noise terms $ \eta_{i,j:L-1}, \epsilon_i$. Thus, $Y_i(T_{1:j-1}, \tau_{j:L})$ is independent of $T_j$ conditional on $S_{i, 1:j}, T_{i,1:j-1}$, verifying Assumption \ref{assump:exogeneity}.

\subsubsection{Show that Assumption \ref{assump:linear_blip_function} is satisfied.}
We now verify the linear blip function assumption \ref{assump:linear_blip_function}.  First by bilinearity, we have  $\phi_j(s_{1:j}, (\tau_{1:j-1}, 0))=0\otimes \chi(s_{j,\Omega})=\zero$. Then with $0_{1:L}$ as the evaluation policy,  for any $s_{1:j}\in\calS^j$ and $\tau_{1:j-1}\in\calT^{j-1}$, we have that:
\begin{align*}
    \y_i(T_{i,1:j}, 0_{j+1:L}) = (\theta^*_{j})^\top (T_{i,j}\otimes\chi_{j}(  S_{i,j,\Omega})))  + \beta_{j}^\top S_{i,j} +  S_{i,1}^\top \kappa_j +\sum_{j'=j}^{L-1}\eta_{i,j'}^\top \beta_{j'+1} + \epsilon_i.
\end{align*}
Thus the blip function under the baseline policy satisfies that 
\begin{align*}
    \gamma_j(S_{i,1:j}, T_{i,1:j}) &= \bbE\left[ \y_i(T_{i,1:j}, 0_{j+1:L}) - \y_i(T_{i,1:j-1}, 0_{j:L})\mid S_{i, 1:j}, T_{i,1:j}\right]=(\theta_{j}^*)^\top (T_{i,j}\otimes \chi_{j}(  S_{i,j,\Omega}))),
\end{align*}
which has a linear form.
   
\subsubsection{Show that Assumption \ref{assump:homoscedasticity} is satisfied.}
\label{appendix:show_homoscedasticity}
Finally, verifying homoscedasticity Assumption \ref{assump:homoscedasticity} is straightforward. 
Note that by repeatedly unrolling the outcome equation we get:
\[
   \y_i = \sum_{j'=j}^L(\theta^*_{j'})^\top (T_{i,j'}\otimes\chi_{j'}( S_{i,j',\Omega})) + S_{i,j}^\top\beta_j  +  S_{i,1}^\top \kappa_j + 
   \sum_{j'=j}^{L-1}\eta_{i,j'}^\top \beta_{j'+1} + \epsilon_i.
\]
Thus by subtracting the continuation effect of future treatment, we get:
\[
    R_{i,j}:=\y_i - \sum_{j'=j}^L(\theta^*_{j'})^\top (T_{i,j'}\otimes \chi_{j'}( S_{i,j',\Omega})) = S_{i,j}^\top\beta_j  +  S_{i,1}^\top \kappa_j + 
   \sum_{j'=j}^{L-1}\eta_{i,j'}^\top \beta_{j'+1} + \epsilon_i,
\]
We see that conditional on $S_{i,1:j}$, the term on the right hand side is independent of $T_j$. Moreover, we have that:
\begin{align*}
    \bbE\left[R_{i,j}\mid S_{i, 1:j}, T_{1:j-1}\right] = S_{i,j}^\top\beta_j  +  S_{i,1}^\top \kappa_j 
\end{align*}
Thus we see that:
\begin{align*}
    \Var_{i,j}(R_{i,j}) =~& \bbE\bb{ (R_{i,j} - \bbE[R_{i,j}\mid S_{i,j}, T_{t_{i,j}-1}])^2\mid S_{i,j}, T_{t_{i,j}-1}}\\
    =~&  \bbE\bb{\bp{ \sum_{j'=j}^{L-1}\eta_{i,j'}^\top \beta_{j'+1} + \epsilon_i}^2 \mid S_{i,j}, T_{1:j-1}}\\
    =~& \bbE\bb{ \bp{ \sum_{j'=j}^{L-1}\eta_{i,j'}^\top \beta_{j'+1} + \epsilon_i}^2 },
\end{align*}
which has a universal lower bound $\bbE[\epsilon_i^2]$.

Lastly, the Model \ref{algo:plmdgp} satisfies Assumption \ref{ass:bilinear} by construction.
\subsection{Proof of Lemma \ref{lemma:homoscedasticity_plm}}
The result in Lemma \ref{lemma:homoscedasticity_plm} has already been shown in Appendix \ref{appendix:show_homoscedasticity}.

\subsection{Estimating $f_{j}$}
We prove the following lemma.

\begin{lemma}
\label{lemma:estimate_f}
Consider Model \ref{algo:plmdgp}.  Fix a stage $j$. Let $\hat{\theta}^{(C)}_{i,j}$ and $\hat{g}_{i,j}$ be consistent estimates of $\theta^*$ and $g_j$ respectively, using the previous $i-1$ episodes,  which satisfy that,
$
\label{eq:nuisance_consistency}
\bbE[\|\hat{\theta}^{(C)}_{i,j}-\theta^*\|_2]= O(i^{-\gamma_\theta})\quad  \mbox{and}  \quad   \|\hat{g}_{i,j} - g_j\|_{1,\infty}=O(i^{-\gamma_G}).
$
Let $\hat{f}_{i,j}(\cdot)=\Clip_{[\sigma^2,M^2]}\bp{\hat{g}_{i,j}(\cdot)^2 + \hat{\sigma}^2_{i,j}}$, with  $\hat{\sigma}_{i,j}$ as the variance estimator given in Algorithm \ref{algo:f} and the clipping operator defined in \eqref{eq:clip_scalar}. 
We have, $\|\hat{f}_{i,j} - f_j\|_{1,\infty}= O(i^{-\min(\gamma_\theta,\gamma_G, 1/2)})$.
\end{lemma}

\begin{proof}
\paragraph{Convergence of $\hat{\sigma}$.} Fix a stage $j$.
Define 
\begin{align*}
    \Tilde{\sigma}^2_{i,j} = \frac{1}{i-1}\sum_{i'=1}^{i-1} \bp{\y_{i'}-\sum_{j'=j}^L  \Phi_{i',j'}^\top \theta_{j'}^* - G_{i',j}}^2= \frac{1}{i-1}\sum_{i'=1}^{i-1}(R_{i',j} - \bbE_{i',j}^+[R_{i',j}])^2,
\end{align*}
where recall that $G_{i,j}=\bbE_{i,j}^+[R_{i,j}]$.

  Lemma \ref{lemma:homoscedasticity_plm} shows  the homoscedasticity of $R_{i,j}$,  
which implies $(R_{i',j} - \bbE_{i',j}^+[R_{i',j}])^2$ is i.i.d.~across $i'$ and has the same  mean value  $\sigma_j^2$ for all $i'$.
Therefore, 
\begin{align*}
    \bbE[(\Tilde{\sigma}^2_{i,j}-\sigma_j^2)^2]&=\bbE\bb{\bc{\frac{1}{i-1}\sum_{i'=1}^{i-1}\bp{\y_{i'}-\sum_{j'=j}^L \Phi_{i',j'}^\top \theta_{j'}^*- G_{i',j}}^2 - \sigma_j^2}^2}\\
    &=\frac{\sum_{i'=1}^{i-1} \bbE[\{(\y_{i'}-\sum_{j'=j}^L \Phi_{i',j'}^\top \theta_{j'}^*- G_{i',j})^2 - \sigma_j^2\}^2]}{(i-1)^2} = O((i-1)^{-1}).
\end{align*}
So it holds that
\begin{equation}
\label{eq:sigma_consistency_1}
    \bbE[|\Tilde{\sigma}^2_{i,j}-\sigma_j^2|] \leq  \bbE[(\Tilde{\sigma}^2_{i,j}-\sigma_j^2)^2]^{1/2} = O(i^{-1/2}).
\end{equation}
It remains to characterize the difference between $\Tilde{\sigma}^2_{i,j} $ and $ \hat{\sigma}_{i,j}^2$. We have
\begin{align}
   &\left| \Tilde{\sigma}^2_{i,j} - \hat{\sigma}^2_{i,j}\right| \nonumber \\
   =~&\Bigg| \frac{1}{i-1}\sum_{i'=1}^{i-1} \Bigg\{\bp{\hat{g}_{i,j}(S_{i',1:j}, T_{i', 1:j-1})- G_{i',j} + \sum_{j'=j}^L\Phi_{i',j'}^\top (\hat{\theta}^{(C)}_{t_{i,j},j'}-\theta_{j'}^*)}\nonumber\\
   &\quad\quad \times \bp{
   2\y_{i'} -\sum_{j'=j}^L (\theta_{j'}^*+\hat{\theta}^{(C)}_{t_{i,j},j'})^\top \Phi_{i',j'} - G_{i',j}-\hat{g}_{i,j}(S_{i',1:j}, T_{i', 1:j-1})
   }\Bigg\}\Bigg|\nonumber \\
\stackrel{(i)}{\lesssim}~&\frac{1}{i-1}\sum_{i'=1}^{i-1}\bc{\ba{
\hat{g}_{i,j}(S_{i',1:j}, T_{i', 1:j-1})- G_{i',j}
}
 + \|\hat{\theta}_{i,j}^{(C)}-\theta^*\|_2}\nonumber \\
 =~& \frac{1}{i-1}\sum_{i'=1}^{i-1}\ba{
\hat{g}_{i,j}(S_{i',1:j}, T_{i', 1:j-1})- G_{i',j}
} +\|\hat{\theta}^{(C)}_{i,j}-\theta^*\|_2,
 \label{eq:convergence_sigma_1}
\end{align}
where 
(i) uses the boundedness of $\Phi_{i,j}$ and  $ 2\y_{i'} -\sum_{j'=j}^L (\theta_{j'}^*+\hat{\theta}_{n,j'})^\top \Phi_{i',j'} - G_{i',j}-\hat{g}_{i,j}(S_{i',1:j}, T_{i', 1:j-1})$.

Continuing \eqref{eq:convergence_sigma_1}, we have that 
\begin{equation}
\label{eq:sigma_consistency_2}
\begin{aligned}
      \bbE\bb{|\Tilde{\sigma}^2_{i,j} - \hat{\sigma}^2_{i,j}|} & \lesssim \frac{1}{i-1}\sum_{i'=1}^{i-1} \bbE\bb{
  |\hat{g}_{i,j}(S_{i',1:j}, T_{i', 1:j-1})- g_j(S_{i',1:j}, T_{i', 1:j-1})|
    } + \bbE\bb{\|\hat{\theta}_{i,j}^{(C)}-\theta^*\|_2}\\
  &\leq   \|\hat{g}_{i,j}-g_j\|_{1,\infty} + \bbE\bb{\|\hat{\theta}_{i,j}^{(C)}-\theta^*\|_2}  .
\end{aligned}
\end{equation}
Combining \eqref{eq:sigma_consistency_1} and \eqref{eq:sigma_consistency_2}, we have
\begin{equation}
     \bbE\bb{|\sigma^2_{j} - \hat{\sigma}^2_{i,j}|} \lesssim  \|\hat{g}_{i,j}-g_j\|_{1,\infty} + \bbE\bb{\|\hat{\theta}_{i,j}^{(C)}-\theta^*\|_2} + O(i^{-1/2}).
\end{equation}

\paragraph{Convergence of $\hat{f}_{i,j}$.}  We have, for any $(s_{1:j},\tau_{1:j-1})\in\calS^{j}\times \calT^{j-1}$:
\begin{align*}
  &|\hat{f}_{i,j}(s_{1:j},\tau_{1:j-1})-f_{j}(s_{1:j},\tau_{1:j-1})|
 \\
=&\ba{ \Clip_{[\sigma^2,M^2]}\bp{\hat{g}_{i,j}(\cdot)^2 + \hat{\sigma}^2_{i,j}}- g_{j}(s_{1:j},\tau_{1:j-1})^2 - \sigma_j^2  }\\
=&\ba{ \min(\max(\hat{g}_{i,j}(\cdot)^2 + \hat{\sigma}^2_{i,j},\sigma^2),M^2)- g_{j}(s_{1:j},\tau_{1:j-1})^2 - \sigma_j^2  }\tag{by clipping definition in \eqref{eq:clip_scalar}}\\
= & \ba{ \hat{g}_{i,j}(\cdot)^2 + \hat{\sigma}^2_{i,j}- g_{j}(s_{1:j},\tau_{1:j-1})^2 - \sigma_j^2  } + \min\bp{\hat{g}_{i,j}(\cdot)^2 + \hat{\sigma}^2_{i,j}-\sigma^2, 0} + \min\bp{M^2-\hat{g}_{i,j}(\cdot)^2 - \hat{\sigma}^2_{i,j}, 0}
\tag{since $g_{j}(s_{1:j},\tau_{1:j-1})^2 + \sigma_j^2\in[\sigma^2, M^2]$}
\\
\leq & 
\ba{ \hat{g}_{i,j}(s_{1:j},\tau_{1:j-1})^2 + \hat{\sigma}_{i,j}^2 - g_{j}(s_{1:j},\tau_{1:j-1})^2 - \sigma_j^2  }\\
  \leq &
  \ba{ \hat{g}_{i,j}(s_{1:j},\tau_{1:j-1})-  g_{j}(s_{1:j},\tau_{1:j-1})}\cdot \ba{ \hat{g}_{i,j}(s_{1:j},\tau_{1:j-1})+  g_{j}(s_{1:j},\tau_{1:j-1})}
  + |\sigma_j^2 -\hat{\sigma}_{i,j}^2|\\
 \lesssim & \ba{ \hat{g}_{i,j}(s_{1:j},\tau_{1:j-1})-  g_{j}(s_{1:j},\tau_{1:j-1})}+ |\sigma_j^2 -\hat{\sigma}_{i,j}^2|.
\end{align*}

We therefore have:
\begin{equation}
    \|\hat{f}_{i,j}-f_j\|_{1,\infty} \lesssim
    \|\hat{g}_{i,j}-g_j\|_{1,\infty} + \bbE\bb{\|\hat{\theta}_{i,j}^{(C)}-\theta^*\|_2} = O(i^{-\min(\gamma_\theta,\gamma_G,1/2)}).
\end{equation}

\end{proof}

\subsection{Estimating $\nu_{j}$}
We prove the following lemma.
\begin{lemma}
    \label{lemma:estimate_v}
Fix a stage $j>1$. Let  $\hat{h}_{i,j}$ be a consistent estimate of  $h_{j}$ using the previous $i-1$ episodes, satisfying that  $\|\hat{h}_{i,j} - h_{j}\|_{1,\infty}=O(i^{-\gamma_h})$. Define
\begin{equation}
    \check\nu_{i,j}(\cdot)=\frac{1}{i-1}\sum_{i'=1}^{i-1}  \chi\big(\hat{h}_{i,j}(\cdot) + S_{i',j,\Omega}- \hat{h}_{i,j}(S_{i',1:j-1},  T_{i,1:j-1})\big)\chi\big(\hat{h}_{i,j}(\cdot) + S_{i',j,\Omega}- \hat{h}_{i,j}(S_{i',1:j-1},  T_{i,1:j-1})\big)^\top.
\end{equation}
Define $\hat{\nu}_{i,j}(\cdot)=\Clip_{[c,c^{-1}]}\bp{\check\nu_{i,j}(\cdot)}$, where the clipping operator is given in \eqref{eq:clip_matrix}.
Then $\hat{\nu}_{i,j}$ satisfies
     $ \|\hat{\nu}_{i,j}-\nu_{j}\|_{1,\infty} = O(i^{-\min(\gamma_h, 1/2)})$.
\end{lemma}

\begin{proof}
Fix a stage index $j$. We first show that
 $ \|\check{\nu}_{i,j}-\nu_{j}\|_{1,\infty} = O(i^{-\min(\gamma_h, 1/2)})$.
We use $\check{\nu}_{i,j,k_1,k_2}$ and $\nu_{k_1,k_2}$ to denote the $(k_1,k_2)$-th entry of $\check{\nu}_{i,j}$ and $\nu$ respectively. Similarly, we use $\chi_k$ to denote the $k$-th entry of the vector $\chi$.
The condition assumes that 
 $ \|\hat{h}_{i,j} - h_{j}\|_{1,\infty}=O(i^{-\gamma_h})$. We have
\begin{equation*}
 \check\nu_{i,j,k_1,k_2}(\cdot)=\frac{1}{i-1}\sum_{i'=1}^{i-1}  \chi_{k_1}\big(\hat{h}_{i,j}(\cdot) + S_{i',j,\Omega}- \hat{h}_{i,j}(S_{i',1:j-1},  T_{i',1:j-1})\big)
 \chi_{k_2}\big(\hat{h}_{i,j}(\cdot) + S_{i',j,\Omega}- \hat{h}_{i,j}(S_{i',1:j-1},  T_{i',1:j-1})\big).
  \end{equation*}
 Define 
  \begin{align*}
   \tilde\nu_{i,j,k_1,k_2}(\cdot)=\frac{1}{i-1}\sum_{i'=1}^{i-1}  \chi_{k_1}\big(h_{i,j}(\cdot) + S_{i',j,\Omega}- h_{i,j}(S_{i',1:j-1},  T_{i',1:j-1})\big)\chi_{k_2}\big(h_{i,j}(\cdot) + S_{i',j,\Omega}- h_{i,j}(S_{i',1:j-1},  T_{i',1:j-1})\big).
 \end{align*}
 We have, for any $(s_{1:j-1},\tau_{1:j-1})\times \calS^{j-1}\times \calT^{j-1}$,
 \begin{align*}
    &| \tilde{\nu}_{i,j,k_1,k_2}(s_{1:j-1},\tau_{1:j-1}) - \check{\nu}_{i,j,k_1,k_2}(s_{1:j-1},\tau_{1:j-1})|\\
    &\stackrel{(i)}{\lesssim} \frac{1}{i-1} \sum_{i'=1}^{i-1}\Big(|\chi_{k_1}\big(\hat{h}_{i,j}(\cdot) + S_{i',j,\Omega}- \hat{h}_{i,j}(S_{i',1:j-1},  T_{i',1:j-1})\big)
    -\chi_{k_1}\big(h_{i,j}(\cdot) + S_{i',j,\Omega}- h_{i,j}(S_{i',1:j-1},  T_{i',1:j-1})\big)
    | \\
    &\quad 
    +
    |\chi_{k_2}\big(\hat{h}_{i,j}(\cdot) + S_{i',j,\Omega}- \hat{h}_{i,j}(S_{i',1:j-1},  T_{i',1:j-1})\big)
    -
    \chi_{k_2}\big(h_{i,j}(\cdot) + S_{i',j,\Omega}- h_{i,j}(S_{i',1:j-1},  T_{i',1:j-1})\big)
    |
    \Big)\\
    &\stackrel{(ii)}{\lesssim} \frac{1}{i-1} \sum_{i'=1}^{i-1}\bp{|\hat{h}_{i,j}(s_{1:j-1},\tau_{1:j-1}) -  h_j(s_{1:j-1},\tau_{1:j-1})| + | \hat{h}_{i,j}(S_{i',1:j-1}, T_{i',1:j-1}) -h_j(S_{i',1:j-1}, T_{i',1:j-1}) |}\\
    &=\Big|\hat{h}_{i,j}(s_{1:j-1},\tau_{1:j-1}) -  h_j(s_{1:j-1},\tau_{1:j-1})\Big| +  \frac{1}{i-1}\sum_{i'=1}^{i-1}\Big| \hat{h}_{i,j}(S_{i',1:j-1}, T_{i',1:j-1}) -h_j(S_{i',1:j-1}, T_{i',1:j-1}) \Big|\\
    &\leq \max_{s'_{2:j-1},\tau'_{1:j-1}}|\hat{h}_{i,j}(s_{1}, s'_{2:j-1},\tau'_{1:j-1}) -  h_j(s_{1}, s'_{2:j-1},\tau'_{1:j-1})|\\
   &\quad \quad +\frac{1}{i-1}\sum_{i'=1}^{i-1}\max_{s'_{2:j-1},\tau'_{1:j-1}}|\hat{h}_{i,j}(S_{i',1}, s'_{2:j-1},\tau'_{1:j-1}) -h_j(S_{i',1}, s'_{2:j-1},\tau'_{1:j-1})|
 \end{align*}
where (i) uses that $\chi$ is bounded
and  (ii) uses that $\chi$ is Lipschitz. 
Taking the expectation on both sides with respect to $s_1$ and $S_{i',1}$, we have
\begin{equation}
    \|\tilde{\nu}_{i,j,k_1,k_2} - \check\nu_{j,k_1,k_2}\|_{1,\infty}
\lesssim 2\|\hat{h}_{i,j} - h_j\|_{1,\infty}.
\end{equation}
We next show that $\tilde{\nu}_{i,j,k_1,k_2}$ approximates $\nu_{j,k_1,k_2}$ at rate $i^{-1/2}$.
Fix indices $i,j$. For any $(s_{1:j-1},\tau_{1:j-1})\in \calS^{j-1}\times \calT^{j-1}$ in the bounded state domain, we have 
\begin{align*}
   &\nu_{j,k_1,k_2}(s_{1:j-1},\tau_{1:j-1}) = \bbE_{\eta_{i,j}}[\chi_{k_1}(h(s_{1:j-1},\tau_{1:j-1})+\eta_{i,j})
   \chi_{k_2}(h(s_{1:j-1},\tau_{1:j-1})+\eta_{i,j})
   ] \\
\mbox{and}\quad &\tilde{\nu}_{i,j,k_1,k_2}(s_{1:j-1},\tau_{1:j-1}) = \frac{1}{i-1} \sum_{i'=1}^{i-1}\chi_{k_1}(h(s_{1:j-1},\tau_{1:j-1})+\eta_{i',j})\chi_{k_2}(h(s_{1:j-1},\tau_{1:j-1})+\eta_{i',j}).   
\end{align*}
Note that $\{\eta_{i',j}\}$ are i.i.d.~as $\eta_{i,j}$, and so $\tilde{\nu}_{i,j,k_1,k_2}(s_{1:j-1},\tau_{1:j-1})$ is the i.i.d.~empirical average for $\nu_{j,k_1,k_2}(s_{1:j-1},\tau_{1:j-1})$. With $\chi(\cdot)$  bounded, we have
\[
\bbE_{\eta_{i,j}}[|\nu_{j,k_1,k_2}(s_{1:j-1},\tau_{1:j-1})- \tilde{\nu}_{i,j,k_1,k_2}(s_{1:j-1},\tau_{1:j-1})|] = O(i^{-1/2}), \quad \forall (s_{1:j-1},\tau_{1:j-1})\in\calS^{j-1}\times \calT^{j-1}.
\]
yielding:
\[
\|\nu_{j,k_1,k_2}-\tilde\nu_{i,j,k_1,k_2}\|_{1,\infty} = O(i^{-1/2}).
\]
Collectively, we have
\begin{align*}
\|\nu_{j,k_1,k_2}-\check\nu_{i,j,k_1,k_2}\|_{1,\infty} \leq \|\nu_{j,k_1,k_2}-\tilde\nu_{i,j,k_1,k_2}\|_{1,\infty} + \|\tilde\nu_{i,j,k_1,k_2}-\check\nu_{i,j,k_1,k_2}\|_{1,\infty}= O(i^{-\min(\gamma_h, 1/2)}).
\end{align*}
Finally, we have 
\begin{align*}
    \|\nu_{j}-\check\nu_{i,j}\|_{1,\infty} &= \bbE_{s_1\sim P_S}\big[\sup_{(s_{2:j_1},\tau_{1:j_2})\in\calS^{j_1-1}\times \calT^{j_2}} \|\{\nu_{j}-\check\nu_{i,j}\}(s_{1:j_1}, \tau_{1:j_2})\|_{2}\big]\\
    &\leq  \bbE_{s_1\sim P_S}\big[\sup_{(s_{2:j_1},\tau_{1:j_2})\in\calS^{j_1-1}\times \calT^{j_2}} \|\{\nu_{j}-\check\nu_{i,j}\}(s_{1:j_1}, \tau_{1:j_2})\|_{Frob}\big]\\
       &\leq  \bbE_{s_1\sim P_S}\big[\sup_{(s_{2:j_1},\tau_{1:j_2})\in\calS^{j_1-1}\times \calT^{j_2}} \sum_{k_1,k_2}|\{\nu_{j,k_1,k_2}-\check\nu_{i,j,k_1,k_2}\}(s_{1:j_1}, \tau_{1:j_2})|\big]\\
       &\leq \bbE_{s_1\sim P_S}\big[\sum_{k_1,k_2}\sup_{(s_{2:j_1},\tau_{1:j_2})\in\calS^{j_1-1}\times \calT^{j_2}} |\{\nu_{j,k_1,k_2}-\check\nu_{i,j,k_1,k_2}\}(s_{1:j_1}, \tau_{1:j_2})|\big]\\
       &= \sum_{k_1,k_2}\|\nu_{j,k_1,k_2}-\check\nu_{i,j,k_1,k_2}\|_{1,\infty} = O(i^{-\min(\gamma_h, 1/2)}).
\end{align*}

Now we show that  $ \|\nu_{j}-\hat\nu_{i,j}\|_{1,\infty}\lesssim  \log(d_x)^2\|\nu_{j}-\check\nu_{i,j}\|_{1,\infty}$. We have:
\begin{align*}
    \|\nu_{j}-\hat\nu_{i,j}\|_{1,\infty} &= \bbE_{s_1\sim P_S}\big[\sup_{(s_{2:j_1},\tau_{1:j_2})\in\calS^{j_1-1}\times \calT^{j_2}} \|\{\nu_{j}-\hat\nu_{i,j}\}(s_{1:j_1}, \tau_{1:j_2})\|_{2}\big]\\
     &= \bbE_{s_1\sim P_S}\big[\sup_{(s_{2:j_1},\tau_{1:j_2})\in\calS^{j_1-1}\times \calT^{j_2}} \|\{\nu_{j}-\Clip_{[c,c^{-1}]}\bp{\check\nu_{i,j}}\}(s_{1:j_1}, \tau_{1:j_2})\|_{2}\big]\tag{by definition of $\hat\nu$ and the clipper operator in \eqref{eq:clip_matrix}}\\
       &= \bbE_{s_1\sim P_S}\big[\sup_{(s_{2:j_1},\tau_{1:j_2})\in\calS^{j_1-1}\times \calT^{j_2}} \|\{
       \Clip_{[c,c^{-1}]}\bp{\nu_{j}}-\Clip_{[c,c^{-1}]}\bp{\check\nu_{i,j}}\}(s_{1:j_1}, \tau_{1:j_2})\|_{2}\big]\tag{true $\nu_{j}$ has eigenvalues within $[c,c^{-1}]$}\\
     &\lesssim
     \bbE_{s_1\sim P_S}\big[\sup_{(s_{2:j_1},\tau_{1:j_2})\in\calS^{j_1-1}\times \calT^{j_2}} \log(d_x)^2\|\{\nu_{j}-\check\nu_{i,j}\}(s_{1:j_1}, \tau_{1:j_2})\|_{2}\big]
     \tag{by Lemma \ref{lemma:clip}}
     \\
     & = \log(d_x)^2  \|\nu_{j}-\check\nu_{i,j}\|_{1,\infty} = \log(d_x)^2  O(i^{-\min(\gamma_h, 1/2)}).
\end{align*}
\end{proof}

\subsection{Estimation Guarantees of $\hat{g}_{i,j}$ and $\hat{h}_{i,j}$}
\label{appendix:plmm_nuisance_estimation}
We prove the following lemma.
\begin{lemma}
\label{lemma:estimation_rate_plmm}
Consider Model \ref{algo:plmdgp}.  Suppose $\bbE_{i,j}[S_{i,j}S_{i,j}^\top]\preceq C_s I$
almost surely for a universal constant $C_s$. Suppose the initial state distribution $P_s$ and the state transition noise $\eta_{i,j}$ have positive-definite covariance matrices with smallest eigenvalues greater than a universal constant $c_s$.
Fix a stage $j$. 

\begin{itemize}
    \item  For the $g_j(\cdot)$ in \eqref{eq:g_hdmm}, we estimate it using   $\hat{g}_{i,j}$ based on the  Lasso loss:
    \begin{equation}
         (\hat{\beta}_{i,j}, \hat{\kappa}_{i,j})= \argmin_{\tilde{\beta}, \tilde{\kappa}} \frac{1}{i-1}\sum_{i'=1}^{i-1} \bp{\hat{R}_{i',j}-\tilde{\beta}^\top S_{i',j} -  \tilde{\kappa}^\top S_{i',1}}^2 + \lambda_g (\|\tilde{\beta}\|_1+ \|\tilde{\kappa}\|_1)\tag{\ref{eq:lasso_g}},
    \end{equation}
    where $\hat{R}_{i,j}:=\y_i- \sum_{j'=j}^L \Phi_{i,j'}^\top \hat{\theta}_j$ is constructed via  an estimate  $\hat{\theta}$ of $\theta^*$.
    Let $s$ be $\|(\beta_{i,j};\kappa_{i,j})\|_0$, where $\|x\|_0$ is the number of nonzero elements in $x$.
      Then  $ \|\hat{g}_{i,j}-g_j\|_{1,\infty} =O\big(\frac{sC_s}{c_s^2} \big(\sqrt{\frac{\log(d_s/2)}{i}}+\bbE\|\hat{\theta}-\theta^*\|_2 \big)\big)$.

    \item 
    For the $h_{j}(\cdot)$ in \eqref{eq:h_hdmm}, we estimate it using   $\hat{h}_{i,j}$ based on the  Lasso loss: 
        \begin{align}
       (\hat{A}_{i,j,k}, \hat{B}_{i,j,k}, \hat{M}_{i,j,k}) &= \argmin_{(\tilde{A}_k, \tilde{B}_k, \tilde{M}_k)} \frac{1}{i-1}\sum_{i'=1}^{i-1} \bp{S_{i',j,k}- \tilde{A}_k^\top \Phi_{i',j-1}- \tilde{B}_k^\top S_{i',j-1} - \tilde{M}_k^\top S_{i',1})}^2\nonumber \\
      &\quad\quad\quad\quad\quad\quad\quad\quad + \lambda_k ( \|\tilde{A}_k\|_1 +\|\tilde{B}_k\|_1 +\|\tilde{M}_k\|_1  ), \quad\quad \mbox{for}\quad k\in\Omega;\tag{\ref{eq:lasso_h}}
      \end{align}
    Let $\hat{h}_{i,j}=(\hat{h}_{i,j,k}; k\in \Omega)$. Let $s$ be $\max_{k\in\Omega}\|(A_{i,j,k};B_{i,j,k};M_{i,j,k})\|_0$.
       Suppose that $\bbE_{i,j-1}[\Phi_{i,j-1}\Phi_{i,j-1}^\top]\preceq C_s I$. Under Assumption \ref{assump:overlap}, 
      $
                    \|\hat{h}_{i,j} - h_{j}\|_{1,\infty} = O\big(
  \frac{\sqrt{d_\Omega} sC_s^2}{c_s^2}\sqrt{\frac{\log(d_s/2)}{i^{1-2\alpha}}}\big),
      $
      where $\alpha$ is the exploration rate of the behavior policy as specified in \eqref{eq:plmm_behavior_policy_rate}.
\end{itemize}
\end{lemma}

For notation convenience, in the next we establish estimation guarantees when using $n$ episodes $\{(S_{i,1},T_{i,1}, \dots, S_{i,L}, T_{i,L}, Y_i)\}_{i\in[n]}$, and we use $\hat{\theta}$ to replace $\hat{\theta}_{i,j}^{(C)}$ that denotes some given consistent estimate of structural parameter $\theta^*$. Formal results in Lemma \ref{lemma:estimation_rate_plmm} can be directly obtained by replacing $n$ with $i$ in the notations.
We will use the following theorem to show Lemma \ref{lemma:estimation_rate_plmm} (proof  deferred to Appendix \ref{appendix:proof_general_lasso}).

\begin{theorem}
\label{thm:lasso_general}
Consider  $n$ data points $\{(x_i, z_i)\}_{i=1}^n$, where  all random variables are bounded. Consider scenarios where we can only observe data with (polluted) labels $(x_i, \hat{z}_i)$.
Let $f_0(x)=x^\top \theta_0$ with $\theta_0$ being sparse with support $S$ and suppose that:
\[
z_i=f_0(x) + \eta_i
\]
with $\eta_i$  i.i.d.~and uniformly bounded.
Let $\hat{\theta}_n$ be the solution to the Lasso loss defined on the (polluted) data:
\begin{equation}
\label{eq:lasso_loss}
\hat{\theta}_n= \argmin_{\theta} \frac{1}{n}\sum_{i=1}^n (\hat{z}_i- x_i^\top \theta)^2 + \lambda \|\theta\|_1.
\end{equation}
Suppose that the empirical covariance matrix $\Sigma_n = \frac{1}{n}\sum_{i=1}^n x_i\, x_i^\top$ satisfies the following restricted strong convexity property: for any $\nu$ in the restricted cone defined by the inequality $\|\nu_{S^c}\|_1\leq 3\|\nu_S\|_1$, we have that for $n$ larger than some constant that can depend on $\delta,p_x$ and $s:=|S|$:
\begin{align}
\label{eq:strong_convexity_sigma_n}
\nu^\top \Sigma_n \nu \geq \gamma \|\nu\|_2^2 \quad \mbox{w.p.}~1-\delta,
\end{align}
Choosing 
\begin{equation}
\label{eq:lasso_regularizer}
    \lambda\geq 2\bn{\frac{1}{n}\sum_{i=1}^n (\hat{z}_i-\theta_0^\top x_i)x_i}_{\infty},
\end{equation} for large enough $n$, we have, w.p. $1-\delta$:
\begin{align}
        \|\hat{\theta}_n-\theta_0\|_2 \leq~& 12 \frac{\lambda \sqrt{s}}{\gamma}, &
        \|\hat{\theta}_n-\theta_0\|_1 \leq~& 48 \frac{\lambda s}{\gamma}.
\end{align}
\end{theorem}

\subsubsection{Estimation guarantee for $g_j$.}
 Fix a stage index $j$. Under Model \ref{algo:plmdgp},
 \[
  g_j(S_{i, 1}, S_{i,j}) = \bbE_{i,j}\bb{\y_i- \sum_{j'=j}^L \Phi_{i,j'}^\top\theta_{j'}^* } 
    = S_{i,j}^\top\beta_j  +  S_{i,1}^\top \kappa_j,
\]
where $\beta_j,\kappa_j$ are sparse linear vectors.
Using the notation in Theorem  \ref{thm:lasso_general}, we have the explainable variable $x_i:=(S_{i,1}, S_{i,j})$ and the dependent variable, 
\[z_i:=R_{i,j}=\y_i- \sum_{j'=j}^L \Phi_{i,j'}^\top\theta_{j'}^*=g_j(S_{i, 1}, S_{i,j})+\epsilon_{i,j},\]
where $\epsilon_{i,j}= \sum_{j'=j}^{L-1}\eta_{i,j'}^\top \beta_{j'+1} + \epsilon_i$ as defined in \eqref{eq:final_unroll_plm}.
 However, we can only observe the polluted labels $\hat{z}_i=\hat{R}_{i,j}:=\y_i- \sum_{j'=j}^L \Phi_{i,j'}^\top \hat{\theta}_j$, constructed via  a consistent estimate  $\hat{\theta}$ of $\theta^*$. With bounded state, we have:   
 \[\bbE[|z_i-\hat{z_i}|]=O(\bbE[\|\hat{\theta}-\theta^*\|_1]).\] 

We first verify condition \eqref{eq:strong_convexity_sigma_n}.
Let $\Sigma_1=\frac{1}{n}\sum_{i=1}^n \bbE[S_{i,1} S_{i,1}^\top \mid \calF_{i,1}]$ and $\Sigma_j = \frac{1}{n}\sum_{i=1}^n \bbE[S_{i,j} S_{i,j}^\top \mid \calF_{i,j}]$ and $\Sigma_{1,n}=\frac{1}{n}\sum_{i=1}^n S_{i,1} S_{i,1}^\top$ and $\Sigma_{j,n}=\frac{1}{n}\sum_{i=1}^n S_{i,j} S_{i,j}^\top$. Similarly, let $\Sigma_n=\frac{1}{n}\sum_{i=1}^n x_ix_i^\top$.  By a  Hoeffding-Azuma inequality, with bounded covariates, we can also derive that, w.p.~$1-\delta$,
\begin{align*}
    \|\Sigma_{1,n} - \Sigma_1\|_{\infty} \lesssim \sqrt{\frac{\log(d_s/\delta)}{n}},  \quad \|\Sigma_{j,n} - \Sigma_j\|_{\infty} \lesssim \sqrt{\frac{\log(d_s/\delta)}{n}}.
\end{align*}
For any $\nu=(\nu_1; \nu_j)$ with the restricted strong convexity property, which yields  $\|\nu\|_1\leq 4\sqrt{s}\|\nu\|_2$ by \eqref{eq:restricted_strong_convexity_implication}, where $\nu_1$ and $\nu_j$ have the same dimensions as $S_{i,1}$ and $S_{i,j}$ respectively, 
first note that:
\begin{align*}
    \nu^\top\Sigma_n\nu = \frac{1}{n}\sum_{i} (\nu^\top x_i)^2 \geq~& \frac{1}{2} \frac{1}{n}\sum_{i} (\nu_1^\top S_{i,1})^2 - \frac{1}{n}\sum_{i} (\nu_j^\top S_{i,j})^2 =~ \frac{1}{2}\nu_1^\top\Sigma_{1,n}\nu_1 - \nu_j'\Sigma_{j,n}\nu_j
    \tag{We use $(a+b)^2\geq \frac{1}{2}a^2-b^2$}
    \\
    \gtrsim~& \frac{1}{2} \nu_1^\top\Sigma_1\nu_1 - \nu_j^\top \Sigma_j\nu_j - \sqrt{\frac{\log(d_s)/\delta)}{n}}(\|\nu_1\|_1^2 + \|\nu_j\|_1^2)\\
    \geq~& \frac{1}{2} \nu_1^\top\Sigma_1\nu_1- \nu_j^\top \Sigma_j\nu_j - 2\sqrt{\frac{\log(d_s)/\delta)}{n}}(\|\nu_1\|_1 + \|\nu_j\|_1)^2\\
    =~& \frac{1}{2} \nu_1^\top\Sigma_1\nu_1 - \nu_j^\top \Sigma_j\nu_j - 2\sqrt{\frac{\log(d_s)/\delta)}{n}}\|\nu\|_1^2\\
    \geq~& \frac{1}{2} \nu_1^\top\Sigma_1\nu_1 - \nu_j^\top \Sigma_j\nu_j - 32s\sqrt{\frac{\log(d_s)/\delta)}{n}}\|\nu\|_2^2\\
    =~& \frac{1}{2} \nu_1^\top\Sigma_1\nu_1 - \nu_j^\top \Sigma_j\nu_j - 32s\sqrt{\frac{\log(d_s)/\delta)}{n}}(\|\nu_1\|_2^2 + \|\nu_j\|_2^2)\\
    \geq ~& \frac{1}{2} c_s \|\nu_1\|_2^2 - C_s \|\nu_j\|_2^2 - 32s\sqrt{\frac{\log(d_s)/\delta)}{n}}(\|\nu_1\|_2^2 + \|\nu_j\|_2^2),
\end{align*}
where we use that $\Sigma_1 \succeq c_sI$ and $\Sigma_j\preceq C_s I$.
Thus for $n$ larger than some constant that depends on $c_s, C_s, s$, we have that:
\begin{align}
\label{eq:sigma_n_1}
    \nu^\top\Sigma_n\nu \gtrsim~& \frac{1}{4} c_s \|\nu_1\|_2^2 - 2C_s \|\nu_j\|_2^2.
\end{align}
Now we lower bound $\nu^\top\Sigma_n\nu$ by some function of $\|\nu_j\|_2^2$ with the goal of achieving  \eqref{eq:strong_convexity_sigma_n}. We have 
\begin{align*}
    \nu^\top\Sigma_n\nu &= \frac{1}{n}\sum_{i} (\nu_1^\top S_{i,1} + \nu_j^\top S_{i,j})^2 = \frac{1}{n}\sum_{i} (\nu_1^\top S_{i,1} + \nu_j^\top (S_{i,j}-\eta_{i,j}+\eta_{i,j}))^2\\
    &\geq \frac{1}{n} \sum_{i=1}^n (\nu_j^\top \eta_{i,j})^2 + 2  \frac{1}{n}\sum_{i}(\nu_1^\top S_{i,1} + \nu_j^\top (S_{i,j}-\eta_{i,j}))\nu_j^\top \eta_{i,j}\\
    &\geq \frac{1}{n} \sum_{i=1}^n (\nu_j^\top \eta_{i,j})^2 -\|\nu_1\|_1\|\nu_j\|_1\bn{\frac{1}{n}\sum_i S_{i,1}\eta_{i,j}'}_{\infty} - \|\nu_j\|_1^2\bn{\frac{1}{n}\sum_i (S_{i,j}-\eta_{i,j})\eta_{i,j}^\top}_{\infty} .
 \end{align*}
Recall that $\eta_{i,j}$ is denotes the state transition noise in Model~\ref{algo:plmdgp}. By construction, we have $\bbE[S_{i,1} \eta_{i,j}^\top\mid \calF_{i,1}]=0$ and $\bbE[(S_{i,j} - \eta_{i,j})\eta_{i,j}^\top\mid \calF_{i,j}]=0$.
The latter holds because $S_{i,j} - \eta_{i,j}$ is a function of $(S_{i,1:j-1},T_{1:j-1})$ (denote this function by  $q$), and hence 
\[
\bbE[(S_{i,j} - \eta_{i,j})\eta_{i,j}^\top\mid \calF_{i,j}]=
\bbE[q(S_{i,1:j-1},T_{1:j-1})\eta_{i,j}^\top\mid \calF_{i,j}]=
q(S_{i,1:j-1},T_{1:j-1})\bbE[\eta_{i,j}^\top\mid \calF_{i,j}]=
0.\]
Then we have by a martingale Hoeffding-Azuma, w.p.~$1-\delta$,
\begin{align*}
  \bn{\frac{1}{n}\sum_i S_{i,1}\eta_{i,j}^\top}_{\infty}\lesssim\sqrt{\frac{\log(d_s/\delta)}{n}}, \quad   \bn{\frac{1}{n}\sum_i (S_{i,j}-\eta_{i,j})\eta_{i,j}^\top}_{\infty} \lesssim \sqrt{\frac{\log(d_s/\delta)}{n}}.
\end{align*}
Further, by a Hoeffding-Azuma for i.i.d, w.p.~$1-\delta$,
\begin{align*}
    \left\|\frac{1}{n} \sum_i (\eta_{i,j} \eta_{i,j}^\top - \bbE[\eta_{i,j} \eta_{i,j}^\top])\right\|_{\infty} \lesssim \sqrt{\frac{\log(d_s/\delta)}{n}}.
\end{align*}
From this we derive:
\begin{align}
    \nu^\top\Sigma_n\nu \geq~& \frac{1}{n} \sum_{i=1}^n (\nu_j^\top \eta_{i,j})^2 -\|\nu_1\|_1\|\nu_j\|_1\bn{\frac{1}{n}\sum_i S_{i,1}\eta_{i,j}^\top}_{\infty} - \|\nu_j\|_1^2\bn{\frac{1}{n}\sum_i (S_{i,j}-\eta_{i,j})\eta_{i,j}^\top}_{\infty} \nonumber\\
    \gtrsim~& \nu_j'\frac{1}{n} \sum_{i=1}^n \bbE[\eta_{i,j} \eta_{i,j}^\top] \nu_j - (\|\nu_1\|_1 \|\nu_j\|_1 + 2\|\nu_j\|_1^2)\sqrt{\frac{\log(d_s/\delta)}{n}} \nonumber\\
    \gtrsim~& c_s \|\nu_j\|_2^2 - (\|\nu_1\|_1^2 + \|\nu_j\|_1^2)\sqrt{\frac{\log(d_s/\delta)}{n}}\label{eq:sigma_n_helper}\\
    \geq~& c_s\|\nu_j\|_2^2 - \|\nu\|_1^2 \sqrt{\frac{\log(d_s/\delta)}{n}} \gtrsim~ c_s \|\nu_j\|_2^2 - s\|\nu\|_2^2\sqrt{\frac{\log(d_s/\delta)}{n}}.\label{eq:sigma_n_2}
\end{align}
Combining \eqref{eq:sigma_n_1} \& \eqref{eq:sigma_n_2}, we have that:
\begin{align*}
    \|\nu\|_2^2 =  \|\nu_1\|_2^2 +  \|\nu_j\|_2^2 \leq~& \frac{4}{c_s} \nu^\top\Sigma_n\nu + \left(1+\frac{8C_s}{c_s}\right) \|\nu_j\|_2^2\\
    \lesssim~& \left(\frac{1}{c_s} + \frac{C_s}{c_s^2}\right) \nu^\top\Sigma_n\nu  +\frac{s}{c_s}\bp{1+\frac{C_s}{c_s}} \sqrt{\frac{\log(d_s/\delta)}{n}} \|\nu\|_2^2.
\end{align*}
For $n$ larger than some constant that depends on $s, c_s, C_s$, we have:
\begin{align*}
    \|\nu\|_2^2 \lesssim~& \left(\frac{1}{c_s} + \frac{C_s}{c_s^2}\right) \nu^\top\Sigma_n\nu  + \frac{1}{2}\|\nu\|_2^2 \implies \|\nu\|_2^2 \lesssim \left(\frac{1}{c_s} + \frac{C_s}{c_s^2}\right) \nu^\top\Sigma_n\nu.
\end{align*}
Thus we recover the strong convexity statement we were after in \eqref{eq:strong_convexity_sigma_n}, with:
\begin{align}
\label{eq:strong_convexity_gamma_g}
    \frac{1}{\gamma} \sim \left(\frac{1}{c_s} + \frac{C_s}{c_s^2}\right) \lesssim \frac{C_s}{c_s^2}.
\end{align}

Next we characterize the magnitude of $\lambda$ that satisfies \eqref{eq:lasso_regularizer}.  We have $ \|\hat{R}_{i,j}-R_{i,j}\| =O(\|\hat{\theta}-\theta^*\|_2)$. Assuming states are universally bounded, we have:
\begin{align*}
    &\left\| \frac{1}{n} \sum_{i=1}^n (\hat{R}_i -S_{i,j}^\top\beta_j - S_{i,1}^\top \kappa_j)\, (S_{i,1};S_{i,j}) \right\|_\infty \\
    \leq~& \left\| \frac{1}{n} \sum_{i=1}^n (R_{i,j} -S_{i,j}^\top\beta_j - S_{i,1}^\top \kappa_j)\, (S_{i,1};S_{i,j})  \right\|_\infty  + O(\|\hat{\theta}-\theta^*\|_2) = \left\| \frac{1}{n} \sum_{i=1}^n \epsilon_{i,j}\,(S_{i,1};S_{i,j})  \right\|_\infty  + O(\|\hat{\theta}-\theta^*\|_2)\\
    \leq~& \left\| \frac{1}{n} \sum_{i=1}^n \epsilon_{i,j}\, S_{i,1} \right\|_\infty + \left\| \frac{1}{n} \sum_{i=1}^n \epsilon_{i,j}\, S_{i,j} \right\|_\infty  + O(\|\hat{\theta}-\theta^*\|_2),
\end{align*}
where $\epsilon_{i,j}= \sum_{j'=j}^{L-1}\eta_{i,j'}^\top \beta_{j'+1} + \epsilon_i$ as defined in \eqref{eq:final_unroll_plm}.
Since the noises $\epsilon_{i,j}$ are bounded and satisfy that $\bbE[\epsilon_{i,j}S_{i,1}\mid \calF_{i,1} ] = \bbE[\epsilon_{i,j} S_{i,j}\mid \calF_{i,j}]=0$, by a Hoeffding-Azuma for martingales, w.p.~$1-\delta$:
\begin{align*}
    \left\| \frac{1}{n} \sum_{i=1}^n \epsilon_{i,j}\, S_{i,1} \right\|_\infty =\left\| \frac{1}{n} \sum_{i=1}^n \left(\epsilon_{i,j}\, S_{i,1} - \bbE\left[\epsilon_{i,j}\, S_{i,1}\mid \calF_{i,1}\right]\right)\right\|_{\infty} \lesssim \sqrt{\frac{\log(2d_s/\delta)}{n}},\\
    \left\| \frac{1}{n} \sum_{i=1}^n \epsilon_{i,j}\, S_{i,j} \right\|_\infty=\left\| \frac{1}{n} \sum_{i=1}^n \left(\epsilon_{i,j}\, S_{i,j} - \bbE\left[\epsilon_{i,j}\, S_{i,j}\mid \calF_{i,j}\right]\right)\right\|_{\infty} \lesssim \sqrt{\frac{\log(2d_s/\delta)}{n}}.
\end{align*}
Thus we get that:
\begin{align*}
   \bn{\frac{1}{n}\sum_{i=1}^n (\hat{R}_{i,j} -S_{i,j}^\top\beta_j - S_{i,1}^\top \kappa_j)}_{\infty}\lesssim \sqrt{\frac{\log(d_s/\delta)}{n}} +O(\|\hat{\theta}-\theta^*\|_2), \quad \mbox{w.p.}\quad 1-\delta,
\end{align*}
and it suffices to take:
\begin{align}
\label{eq:lambda_set_g}
    \lambda = \lambda(\delta) \sim \left(\sqrt{\frac{\log(d_s/\delta)}{n}} + O(\|\hat{\theta}-\theta^*\|_2)\right)
\end{align}
 to satisfy \eqref{eq:lasso_regularizer} w.p.  $1-\delta$. Combining \eqref{eq:strong_convexity_gamma_g} \& \eqref{eq:lambda_set_g}, we are ready to employ Theorem \ref{thm:lasso_general}: w.p.~$1-\delta$,
 \begin{equation}
 \label{eq:high_prob_beta_bound}
      \|(\hat{\beta}_j;\hat{\kappa}_j)-(\beta_j;\kappa_j)\|_1 \leq \frac{sC_s}{c_s^2}\left(\sqrt{\frac{\log(d_s/\delta)}{n}} + O(\|\hat{\theta}-\theta^*\|_2)\right).
 \end{equation}
 Now we use the Lemma A.4 from \cite{shalev2014understanding}, which is provided below for convenience:
 \begin{lemma}[Lemma A.4, \cite{shalev2014understanding}]
 \label{lemma:from_probability_convergence_to_l1}
     Let $X $ be a random variable and $x'\in \bbR$ be a scalar and assume that there exists $a>0$ and $b\geq e$ such that for all $t\geq 0$ we have $\bbP\bp{|X-x'|>t}\leq 2b e^{-t^2/a^2}$. Then, $\bbE[|X-x'|]\leq a(2+\sqrt{\log(b)})$.
 \end{lemma}
Define 
\begin{equation}
\label{eq:z_definition_1}
    Z:=\bp{\frac{c_s^2}{sC_s}\|(\hat{\beta}_j;\hat{\kappa}_j)-(\beta_j;\kappa_j)\|_1-O(\|\hat{\theta}-\theta^*\|_2)}_+=\max\bp{\frac{c_s^2}{sC_s}\|(\hat{\beta}_j;\hat{\kappa}_j)-(\beta_j;\kappa_j)\|_1-O(\|\hat{\theta}-\theta^*\|_2), 0}.
\end{equation}
By \eqref{eq:high_prob_beta_bound}, we have:
\[
\bbP\bp{Z>\sqrt{\frac{\log(d_s/\delta)}{n}}}\leq \delta\quad\stackrel{t=\sqrt{\frac{\log(d_s/\delta)}{n}}}{\Longrightarrow}\quad \bbP\bp{Z>t} \leq d_se^{-nt^2}.
\]
Collectively, we have:
\begin{align*}
    \bbE\bb{
    \frac{c_s^2}{sC_s}\|(\hat{\beta}_j;\hat{\kappa}_j)-(\beta_j;\kappa_j)\|_1-O(\|\hat{\theta}-\theta^*\|_2)
    }&\leq \bbE[Z]\tag{by definition of $Z$ in \eqref{eq:z_definition_1}}\\
    &\leq n^{-1/2}\bp{2+\sqrt{\log\frac{d_s}{2}}}\tag{Lemma  \ref{lemma:from_probability_convergence_to_l1}}.
\end{align*}
We therefore have:
 \[
 \bbE[\|(\hat{\beta}_j;\hat{\kappa}_j)-(\beta_j;\kappa_j)\|_1] = O\bp{\frac{sC_s}{c_s^2}\bp{\sqrt{\frac{\log(d_s/2)}{n}} + \bbE\|\hat{\theta}-\theta^*\|_2}}.
 \]
Therefore with bounded state space,
\[
\|\hat{g}-g_j\|_{1,\infty} \leq \bbE[\|(\hat{\beta}_j;\hat{\kappa}_j)-(\beta_j;\kappa_j)\|_1 \|(s_1;s_j)\|_{\infty}] = O\bp{ \frac{sC_s}{c_s^2} \bp{\sqrt{\frac{\log(d_sn)}{n}}+O(\bbE\|\hat{\theta}-\theta^*\|_2) }}.
\]
Above we remind that $d_s$ is the state dimension, $C_s$ is such that $\bbE[S_{i,j}S_{i,j}^\top|\calF_{i,j}]\preceq C_s\cdot I$; $c_s$ is such that $\bbE[\eta_{i,j-1}\eta_{i,j-1}^\top]\succeq c_s I$ and $\bbE[S_{i,1}S_{i,1}^\top]\succeq c_s I$; $s$ is the  cardinality of nonzero entries in $(\beta_j;\kappa_j)$.

\subsubsection{Estimation guarantee for $h_j$.} Fix a stage index $j$. Under Model \ref{algo:plmdgp}, for each $k\in\Omega$, where recall that $\Omega$ represents the state coordinates involved in the operation $\chi_j$, we have  
\[ 
h_{j,k}(S_{i,1},S_{i,j-1}, T_{i,j-1}) = A_{j,k}^\top \Phi_{i,j-1}+B_{j,k}^\top S_{i,j-1} + M_{j,k}^\top S_{i,1},\quad \mbox{with}\quad \Phi_{i,j-1}:=\chi_{j-1}(S_{i,j-1})\otimes T_{i,j-1},
\] 
for sparse vectors $(A_{j,k}, B_{j,k}, M_{j,k})$. 
Denote the Lasso regularizer coefficient as $\lambda_k$.
Under the notation in Theorem \ref{thm:lasso_general},  the explainable variables are $x_i=(S_{i,1}, S_{i,j-1}, \Phi_{i,j-1})$ and the dependent variable $z_i=S_{i,j,k}$.

We first verify condition \eqref{eq:strong_convexity_sigma_n}.
Let $\Sigma_1=\frac{1}{n}\sum_{i=1}^n \bbE_{i,1}[S_{i,1} S_{i,1}^\top]$ 
and $\Sigma_{j,S} = \frac{1}{n}\sum_{i=1}^n \bbE_{i,j-1}[S_{i,j-1} S_{i,j-1}^\top]$ and $\Sigma_{j,T} = \frac{1}{n}\sum_{i=1}^n \bbE_{i,j-1}[ \Phi_{i,j-1}\Phi_{i,j-1}^\top ]$. Let $\Sigma_{1,n}=\frac{1}{n}\sum_{i=1}^n S_{i,1} S_{i,1}^\top $ and $\Sigma_{j,S,n} = \frac{1}{n}\sum_{i=1}^n S_{i,j-1} S_{i,j-1}^\top $ and $\Sigma_{j,T,n} = \frac{1}{n}\sum_{i=1}^n \Phi_{i,j-1}\Phi_{i,j-1}^\top $. 
 Similarly, let $\Sigma_n=\frac{1}{n}\sum_{i=1}^n x_ix_i^\top$.  By a  Hoeffding-Azuma inequality, with bounded covariates, we can also derive that, w.p.~$1-\delta$,
\begin{align*}
    &\|\Sigma_{1,n} - \Sigma_1\|_{\infty} \lesssim \sqrt{\frac{\log(d_s/\delta)}{n}},  \quad \|\Sigma_{j,S,n} - \Sigma_{j,S}\|_{\infty} \lesssim \sqrt{\frac{\log(d_s/\delta)}{n}}, \\
    &\|\Sigma_{j,T,n} - \Sigma_{j,T}\|_{\infty} \lesssim \sqrt{\frac{\log(d_\chi d_T/\delta)}{n}} \lesssim  \sqrt{\frac{\log( d_s/\delta)}{n}},
\end{align*}
where $d_\chi$ and $d_T$ are dimensions for $\chi(\cdot)$ and $T$ respectively. 


For any $\nu=(\nu_1;\nu_{j,S}; \nu_{j,T})$ with the restricted strong convexity property, which yields  $\|\nu\|_1\leq 4\sqrt{s}\|\nu\|_2$ by \eqref{eq:restricted_strong_convexity_implication}, where $\nu_1,\nu_{j,S}, \nu_{j,T}$ have the same dimensions as $S_{i,1}, S_{i,j-1}, \Phi_{i,j-1}$ respectively, 
first note that:
\begin{align}
    \nu^\top\Sigma_n\nu =~& \frac{1}{n}\sum_{i} (\nu^\top x_i)^2 \geq~ \frac{1}{2} \frac{1}{n}\sum_{i} (\nu_1^\top S_{i,1})^2 - \frac{1}{n}\sum_{i} (\nu_{j,S}^\top S_{i,j-1}+ \nu_{j,T}^\top\Phi_{i,j-1})^2 \nonumber \\
    \geq~& \frac{1}{2} \frac{1}{n}\sum_{i} (\nu_1^\top S_{i,1})^2 - \frac{2}{n}\sum_{i} (\nu_{j,S}^\top S_{i,j-1})^2 -\frac{2}{n}\sum_{i} (\nu_{j,T}^\top\Phi_{i,j-1})^2 \nonumber\\
    =~& \frac{1}{2}\nu_1^\top\Sigma_{1,n}\nu_1 -2\nu_{j,S}^\top \Sigma_{j,S,n}\nu_{j,S} -2\nu_{j,T}^\top \Sigma_{j,T,n}\nu_{j,T} \nonumber\\
    \gtrsim~&\frac{1}{2}\nu_1^\top\Sigma_{1}\nu_1 -2\nu_{j,S}^\top \Sigma_{j,S}\nu_{j,S} -2\nu_{j,T}^\top \Sigma_{j,T}\nu_{j,T} -\sqrt{\frac{\log(d_s)/\delta)}{n}}(\|\nu_1\|_1^2+\|\nu_{j,T}\|_1^2+\|\nu_{j,S}\|_1^2)\nonumber\\
   \geq ~& \frac{1}{2} c_s \|\nu_1\|_2^2 - 2C_s \|\nu_{j,S}\|_2^2 - 2C_s \|\nu_{j,T}\|_2^2 -\sqrt{\frac{\log(d_s)/\delta)}{n}}(\|\nu_1\|_1^2+\|\nu_{j,T}\|_1^2+\|\nu_{j,S}\|_1^2). \label{eq:sigma_n_h_1}
\end{align}
where we use that $\Sigma_1 \succeq c_sI$ and $\Sigma_{j,S}\preceq C_s I$ and  $\Sigma_{j,T}\preceq C_s I$.
On the other hand, we also have
\begin{align}
     \nu^\top\Sigma_n\nu =& \frac{1}{n}\sum_{i} (\nu^\top x_i)^2 \geq~ \frac{1}{2} \frac{1}{n}\sum_{i} (\nu_1^\top S_{i,1}+\nu_{j,S}^\top S_{i,j-1})^2 - \frac{1}{n}\sum_{i} (\nu_{j,T}^\top\Phi_{i,j-1})^2 \nonumber\\
     \gtrsim~&\frac{1}{2}c_s\|\nu_{j,S}\|^2_2 - (\|\nu_1\|_1^2 + \|\nu_{j,S}\|_1^2)\sqrt{\frac{\log(d_s/\delta)}{n}}- \frac{1}{n}\sum_{i} (\nu_{j,T}^\top\Phi_{i,j-1})^2 \tag{following \eqref{eq:sigma_n_helper}}\nonumber\\
     =~&\frac{1}{2}c_s\|\nu_{j,S}\|^2_2 - (\|\nu_1\|_1^2 + \|\nu_{j,S}\|_1^2)\sqrt{\frac{\log(d_s/\delta)}{n}}- \nu_{j,T}^\top \Sigma_{j,T,n}\nu_{j,T}^\top \nonumber\\
     \gtrsim~& \frac{1}{2}c_s\|\nu_{j,S}\|^2_2 - (\|\nu_1\|_1^2 + \|\nu_{j,S}\|_1^2)\sqrt{\frac{\log(d_s/\delta)}{n}}- \nu_{j,T}^\top \Sigma_{j,T}\nu_{j,T}^\top -\sqrt{\frac{\log(d_s)/\delta)}{n}}\|\nu_{j,T}\|_1^2\nonumber\\
     \geq~& \frac{1}{2}c_s\|\nu_{j,S}\|^2_2 - C_s\|\nu_{j,T}\|_2^2-\sqrt{\frac{\log(d_s)/\delta)}{n}}(\|\nu_1\|_1^2 + \|\nu_{j,S}\|_1^2+ \|\nu_{j,T}\|_1^2)\label{eq:sigma_n_h_2}
\end{align}
Now we lower bound $\nu^\top\Sigma_n\nu$ by a function of $\|\nu_{j,T}\|_2$ with the goal of achieving \eqref{eq:strong_convexity_sigma_n}.
Define $\bar{\Phi}_{i,j-1}:=\bbE_{i,j-1}^+[\Phi_{i,j-1}]$. We have
\begin{align*}
    \nu^\top\Sigma_n\nu &= \frac{1}{n}\sum_{i} (\nu_1^\top S_{i,1}+\nu_{j,S}^\top S_{i,j-1}+ \nu_{j,T}^\top\Phi_{i,j-1})^2 =  \frac{1}{n}\sum_{i} (\nu_1^\top S_{i,1}+\nu_{j,S}^\top S_{i,j-1}+ \nu_{j,T}^\top \bar{\Phi}_{i,j-1} +\nu_{j,T}^\top (\Phi_{i,j-1}-\bar{\Phi}_{i,j-1})  )^2\\
    &\geq \frac{1}{n} \sum_{i=1}^n (\nu_{j,T}^\top (\Phi_{i,j-1}-\bar{\Phi}_{i,j-1}) )^2 +   \frac{2}{n}\sum_{i}(\nu_1^\top S_{i,1}+\nu_{j,S}^\top S_{i,j-1}+ \nu_{j,T}^\top \bar{\Phi}_{i,j-1})\nu_{j,T}^\top (\Phi_{i,j-1}-\bar{\Phi}_{i,j-1})\\
    &\geq \frac{1}{n} \sum_{i=1}^n (\nu_{j,T}^\top (\Phi_{i,j-1}-\bar{\Phi}_{i,j-1}) )^2 -\|\nu_1\|_1\|\nu_{j,T}\|_1\bn{\frac{1}{n}\sum_i S_{i,1}(\Phi_{i,j-1}-\bar{\Phi}_{i,j-1})^\top}_{\infty}
   \\
   &\quad\quad - \|\nu_{j,S}\|_1\|\nu_{j,T}\|_1\bn{\frac{1}{n}\sum_i S_{i,j-1}(\Phi_{i,j-1} -\bar{\Phi}_{i,j-1})^\top}_{\infty}
    - \|\nu_{j,T}\|_1^2\bn{\frac{1}{n}\sum_i \bar{\Phi}_{i,j-1}(\Phi_{i,j-1}-\bar{\Phi}_{i,j-1})^\top}_{\infty} .
 \end{align*}
 Note that we have
 \[
 \bbE_{i,j-1}^+[S_{i,1}(\Phi_{i,j-1}-\bar{\Phi}_{i,j-1})^\top]=0, \quad \bbE_{i,j-1}^+[S_{i,j-1}(\Phi_{i,j-1} -\bar{\Phi}_{i,j-1})^\top]=0,\quad \bbE_{i,j-1}^+[\bar{\Phi}_{i,j-1}(\Phi_{i,j-1}-\bar{\Phi}_{i,j-1})^\top]=0.
 \]
 Then we have by a martingale Hoeffding-Azuma, w.p.~$1-\delta$,
 \begin{align*}
& \bn{\frac{1}{n}\sum_i S_{i,1}(\Phi_{i,j-1}-\bar{\Phi}_{i,j-1})^\top}_{\infty}\lesssim \sqrt{\frac{\log(d_s/\delta)}{n}},\quad
\bn{\frac{1}{n}\sum_i S_{i,j-1}(\Phi_{i,j-1} -\bar{\Phi}_{i,j-1})^\top}_{\infty}\lesssim \sqrt{\frac{\log(d_s/\delta)}{n}},\\
&
     \bn{\frac{1}{n}\sum_i \bar{\Phi}_{i,j-1}(\Phi_{i,j-1}-\bar{\Phi}_{i,j-1})^\top}_{\infty}\lesssim \sqrt{\frac{\log(d_s/\delta)}{n}},\\
    & \bn{ \frac{1}{n} \sum_{i=1}^n 
     (\Phi_{i,j-1}-\bar{\Phi}_{i,j-1})(\Phi_{i,j-1}-\bar{\Phi}_{i,j-1})^\top -\bbE_{i,j-1}[(\Phi_{i,j-1}-\bar{\Phi}_{i,j-1})(\Phi_{i,j-1}-\bar{\Phi}_{i,j-1})^\top]
     }_\infty\lesssim \sqrt{\frac{\log(d_s/\delta)}{n}}
 \end{align*}
 From this we derive:
 \begin{align}
      \nu^\top\Sigma_n\nu &\gtrsim \nu_{j,T} \bbE_{i,j-1}[(\Phi_{i,j-1}-\bar{\Phi}_{i,j-1})(\Phi_{i,j-1}-\bar{\Phi}_{i,j-1})^\top]\nu_{j,T}\nonumber\\
      &\quad\quad-(\|\nu_{j,T}\|_1^2+\|\nu_1\|_1\|\nu_{j,T}\|_1+ \|\nu_{j,S}\|_1\|\nu_{j,T}\|_1+\|\nu_{j,T}\|_1^2)\sqrt{\frac{\log(d_s/\delta)}{n}}\nonumber\\
      &\gtrsim i^{-\alpha}\|\nu_{j,T}\|_2^2 -(\|\nu_1\|_1^2+\|\nu_{j,S}\|_1^2+\|\nu_{j,T}\|_1^2)\sqrt{\frac{\log(d_s/\delta)}{n}}, \label{eq:sigma_n_h_3}
 \end{align}
 where we use Assumption \ref{assump:overlap} and AM-GM inequality.

Combining \eqref{eq:sigma_n_h_1}, \eqref{eq:sigma_n_h_2}, \eqref{eq:sigma_n_h_3}, we have
\begin{align*}
    &\|\nu\|_2^2=\|\nu_1\|_2^2+\|\nu_{j,S}\|_2^2+\|\nu_{j,T}\|_2^2\\
    \lesssim& \bp{i^\alpha+\frac{1}{c_s}} \nu^\top\Sigma_n\nu+ \frac{C_s}{c_s}(\|\nu_{j,S}\|_2^2+\|\nu_{j,T}\|_2^2)  + \bp{i^\alpha+\frac{1}{c_s}}(\|\nu_1\|_1^2+\|\nu_{j,S}\|_1^2+\|\nu_{j,T}\|_1^2)\sqrt{\frac{\log(d_s/\delta)}{n}}\\
    \lesssim& \bp{i^\alpha+\frac{1}{c_s}} \nu^\top\Sigma_n\nu+ \frac{C_s}{c_s}\|\nu_{j,T}\|_2^2+\frac{C_s}{c_s}\bp{\frac{1}{c_s}\nu^\top\Sigma_n\nu+ \frac{C_s}{c_s}\|\nu_{j,T}\|_2^2 + \frac{1}{c_s}(\|\nu_1\|_1^2+\|\nu_{j,S}\|_1^2+\|\nu_{j,T}\|_1^2)\sqrt{\frac{\log(d_s/\delta)}{n}}} \\
      &\quad\quad + \bp{i^\alpha+\frac{1}{c_s}}(\|\nu_1\|_1^2+\|\nu_{j,S}\|_1^2+\|\nu_{j,T}\|_1^2)\sqrt{\frac{\log(d_s/\delta)}{n}}\\
    \lesssim & \bp{i^\alpha+\frac{1}{c_s}+\frac{C_s}{c_s^2}} \nu^\top\Sigma_n\nu + \bp{\frac{C_s}{c_s}+\frac{C_s^2}{c_s^2}}\|\nu_{j,T}\|_2^2 +\bp{i^\alpha+\frac{1}{c_s}+\frac{C_s}{c_s^2}}(\|\nu_1\|_1^2+\|\nu_{j,S}\|_1^2+\|\nu_{j,T}\|_1^2)\sqrt{\frac{\log(d_s/\delta)}{n}}\\
    \lesssim & \bp{i^\alpha+\frac{1}{c_s}+\frac{C_s}{c_s^2}+\frac{C_s^2}{c_s^2}i^\alpha} \nu^\top\Sigma_n\nu
    +\bp{i^\alpha+\frac{1}{c_s}+\frac{C_s}{c_s^2}+\frac{C_s^2}{c_s^2}i^\alpha}(\|\nu_1\|_1^2+\|\nu_{j,S}\|_1^2+\|\nu_{j,T}\|_1^2)\sqrt{\frac{\log(d_s/\delta)}{n}}\\
    \lesssim& \frac{C_s^2}{c_s^2}i^\alpha\nu^\top\Sigma_n\nu + \frac{C_s^2}{c_s^2}i^\alpha(\|\nu_1\|_1^2+\|\nu_{j,S}\|_1^2+\|\nu_{j,T}\|_1^2)\sqrt{\frac{\log(d_s/\delta)}{n}}\\
    \leq& \frac{C_s^2}{c_s^2}i^\alpha\nu^\top\Sigma_n\nu + \frac{C_s^2}{c_s^2}i^\alpha(\|\nu_1\|_1+\|\nu_{j,S}\|_1+\|\nu_{j,T}\|_1)^2\sqrt{\frac{\log(d_s/\delta)}{n}}\\
     =& \frac{C_s^2}{c_s^2}i^\alpha\nu^\top\Sigma_n\nu + \frac{C_s^2}{c_s^2}i^\alpha\|\nu\|_1^2\sqrt{\frac{\log(d_s/\delta)}{n}}\\
          \leq & \frac{C_s^2}{c_s^2}n^\alpha\nu^\top\Sigma_n\nu + 16 s\frac{C_s^2}{c_s^2}\|\nu\|_2^2\sqrt{\frac{\log(d_s/\delta)}{n^{1-2\alpha}}}.\\
\end{align*}
For $n$ larger than some constant that depends on $s, c_s, C_s,\alpha$, we have:
\[
\|\nu\|_2^2 \lesssim \frac{C_s^2}{c_s^2}n^\alpha\nu^\top\Sigma_n\nu +\frac{1}{2}\|\nu\|_2^2,
\]
which leads to recovery of strong convexity statement we were after in \eqref{eq:strong_convexity_sigma_n}, with:
\begin{align}
\label{eq:strong_convexity_gamma_h}
    \frac{1}{\gamma} \sim \frac{C_s^2}{c_s^2}n^\alpha.
\end{align}

Next for each state coordinate $k$, we characterize the magnitude of $\lambda_k$ that satisfies \eqref{eq:lasso_regularizer}. 
Assuming states are universally bounded, we have:
\begin{align*}
    &\bn{\frac{1}{n}\sum_{i=1}^n (S_{i,j,k} -\gamma_{j,k}^\top S_{i,1} - B_{j-1,k}^\top S_{i,j-1} - A_{j-1,k}^\top \Phi_{i,j-1}))(S_{i,1}, S_{i,j-1}, \Phi_{i,j-1})}_{\infty} \\
= & \bn{\frac{1}{n}\sum_{i=1}^n \eta_{i,j,k}(S_{i,1}, S_{i,j-1}, \Phi_{i,j-1})}_{\infty}\\
\leq  &  \bn{\frac{1}{n}\sum_{i=1}^n \eta_{i,j,k}S_{i,1}}_{\infty} +  \bn{\frac{1}{n}\sum_{i=1}^n\eta_{i,j,k}S_{i,j-1}}_{\infty} +  \bn{\frac{1}{n}\sum_{i=1}^n\eta_{i,j,k} \Phi_{i,j-1}}_{\infty}.
\end{align*}
where $\eta_{i,j}$ as defined in \eqref{eq:final_unroll_plm} are the state transition noises.
Since the noises $\eta_{i,j}$ are bounded and satisfy that $\bbE_{i,1}[\eta_{i,j,k}S_{i,1} ] = \bbE_{i,j-1}[\eta_{i,j,k} S_{i,j-1}]= \bbE_{i,j-1}[\eta_{i,j,k} \Phi_{i,j-1}]=0$, by a Hoeffding-Azuma for martingales, w.p.~$1-\delta$:
\begin{align*}
    \bn{\frac{1}{n}\sum_{i=1}^n \eta_{i,j,k}S_{i,1}}_{\infty}  \lesssim \sqrt{\frac{\log(d_s/\delta)}{n}},  \quad \bn{\frac{1}{n}\sum_{i=1}^n\eta_{i,j,k}S_{i,j-1}}_{\infty}\lesssim\sqrt{\frac{\log(d_s/\delta)}{n}}, \quad 
 \bn{\frac{1}{n}\sum_{i=1}^n\eta_{i,j,k} \Phi_{i,j-1}}_{\infty} \lesssim \sqrt{\frac{\log(d_s/\delta)}{n}}.
\end{align*}
Thus we get that:
\begin{align*}
   \bn{\frac{1}{n}\sum_{i=1}^n (S_{i,j,k} -\gamma_{j,k}^\top S_{i,1} - B_{j-1,k}^\top S_{i,j-1} - A_{j-1,k}^\top \Phi_{i,j-1}))(S_{i,1}, S_{i,j-1}, \Phi_{i,j-1})}_{\infty} \lesssim \sqrt{\frac{\log(d_s/\delta)}{n}} \quad \mbox{w.p.}\quad 1-\delta,
\end{align*}
and it suffices to take:
\begin{align}
\label{eq:lambda_set_h}
    \lambda = \lambda(\delta) \sim \left(\sqrt{\frac{\log(d_s/\delta)}{n}} \right)
\end{align}
 to satisfy \eqref{eq:lasso_regularizer} w.p.~$1-\delta$.
Combining \eqref{eq:strong_convexity_gamma_h} \& \eqref{eq:lambda_set_h}, by Theorem \ref{thm:lasso_general},  for each coordinate $k\in\Omega$, we have
\begin{equation}
\label{eq:high_prob_h_bound}
     \|(\hat{A}_{j,k};\hat{B}_{j,k};\hat{M}_{j,k})-(A_{j,k};B_{j,k}; M_{j,k})\|_1 \leq  \frac{sC_s^2}{c_s^2}\sqrt{\frac{\log(d_s/\delta)}{n^{1-2\alpha}}}.
\end{equation}

 Now we use the Lemma \ref{lemma:from_probability_convergence_to_l1} to show the $L^1$ convergence.
Define 
\begin{equation}
\label{eq:z_definition_2}
    Z:=\frac{c_s^2}{sC_s^2}\|(\hat{A}_{j,k};\hat{B}_{j,k};\hat{M}_{j,k})-(A_{j,k};B_{j,k}; M_{j,k})\|_1 .
\end{equation}
By \eqref{eq:high_prob_h_bound}, we have:
\[
\bbP\bp{Z>\sqrt{\frac{\log(d_s/\delta)}{n^{1-2\alpha}}}}\leq \delta\quad\stackrel{t=\sqrt{\frac{\log(d_s/\delta)}{n^{1-2\alpha}}}}{\Longrightarrow}\quad \bbP\bp{Z>t} \leq d_se^{-nt^2}.
\]
Applying Lemma \ref{lemma:from_probability_convergence_to_l1}, we have 
\begin{align*}
    \bbE[Z]\leq n^{-\frac{1-2\alpha}{2}}\bp{2+\sqrt{\log\frac{d_s}{2}}}.
\end{align*}
We therefore have:
 \[
\bbE[\|(\hat{A}_{j,k};\hat{B}_{j,k};\hat{M}_{j,k})-(A_{j,k};B_{j,k}; M_{j,k})\|_1 ] = O\bp{\frac{sC^2_s}{c_s^2}\bp{\sqrt{\frac{\log(d_s/2)}{n^{1-2\alpha}}} }}.
 \]

Therefore with bounded state space,
\[
  \|\hat{h}_{j,k} - h_{j,k}\|_{1,\infty} =O\bp{  \frac{sC_s^2}{c_s^2}\sqrt{\frac{\log(d_s/2)}{n^{1-2\alpha}}} },
\]
and thus collectively, we have
\begin{align*}
    \|\hat{h}_{j,\Omega} - h_{j,\Omega}\|_{1,\infty} = O\bp{  \frac{\sqrt{d_\Omega} sC_s^2}{c_s^2}\sqrt{\frac{\log(d_s/2)}{n^{1-2\alpha}}} }.
\end{align*}
Above we remind that $d_s$ is the state dimension, $d_\Omega$ is the dimension of sub-state that $\chi_j()$ operates on, $C_s$ is such that $\bbE[S_{i,j}S_{i,j}|\calF_{i,j}]\preceq C_s\cdot I$; $c_s$ is such that $\bbE[\eta_{i,j-1}\eta_{i,j-1}^\top]\succeq c_s I$ and $\bbE[S_{i,1}S_{i,1}^\top]\succeq c_s I$; $s$ is the  cardinality of nonzero columns in $(A_{j};B_j;M_j)$.

\subsubsection{Proof of Theorem \ref{thm:lasso_general}.}
\label{appendix:proof_general_lasso}
Let $\nu=\hat{\theta}-\theta_0$.
We first show that when $\lambda\geq \frac{2}{n}\sum_{i=1}^n \|(\hat{z}_i-\theta_0^\top x_i)x_i\|_{\infty}$, we have $\nu$ satisfies the restricted strong convexity property $\|\nu_{S^c}\|_1\leq 3\|\nu_S\|_1$.    
Since $\hat{\theta}$ optimizes \eqref{eq:lasso_loss}, we have
\begin{align*}
    \lambda(\|\theta_0\|_1-\|\hat{\theta}\|_1)&\geq \hat{L}_n(\hat{\theta})-\hat{L}(\theta_0) && (\mbox{optimality of }\hat{\theta})\\
    &\geq \br{\partial_{\theta}\hat{L}_n(\theta_0), \hat{\theta}-\theta_0} && (\mbox{convexity of }\hat{L}_n)\\
     &= \br{\partial_{\theta}\hat{L}_n(\theta_0)-\partial_{\theta} L(\theta_0), \hat{\theta}-\theta_0} && (\mbox{optimality of }\theta_0)\\
     &\geq -\bn{\partial_{\theta}\hat{L}_n(\theta_0)-\partial_{\theta} L(\theta_0)}_{\infty}\bn{\hat{\theta}-\theta_0}_1 &&(\mbox{Cauchy Schwartz inequality})\\
     &\geq -\frac{\lambda}{2}\bn{\hat{\theta}-\theta_0}_1 &&(\mbox{assumption on }\lambda).
\end{align*}
where we also used that $\partial_{\theta} L(\theta_0)=\bbE[(z-\theta_0^\top x)x]=0$ and that $\|  \partial_{\theta} \hat{L}_n(\theta_0) \|_\infty=\| \frac{1}{n} \sum_{i=1}^n (\hat{z}_i - \theta_0^\top x_i)\, x_i\|_\infty$. We thus have,
\begin{align*}
   \frac{1}{2} \|\nu\|_1\geq \|\theta_0+\nu\|_1-\|\theta_0\|_1.
\end{align*}
Let $S$ be the support of $\theta_0$ (and so $s=|S|$), thus  we have
\begin{align*}
    \frac{1}{2}\|\nu\|_1\geq \|\theta_{0,S}+\nu_S\|_1 + \|\nu_{S^c}\|_1-\|\theta_{0,S}\|_1\geq \|\nu_{S^c}\|_1-\|\nu_S\|_1,
\end{align*}
which leads to $\|\nu_{S^c}\|_1\leq 3\|\nu_S\|_1$. 
We now show that this property implies:
\begin{align}
\label{eq:error_bound}
    \nu^\top \Sigma_n \nu \leq~& 12\lambda \sqrt{s} \|\nu\|_2, & \|\nu\|_1 \leq~& 4\sqrt{s}\|\nu\|_2.
\end{align} 
We have 
\begin{align}
\label{eq:restricted_strong_convexity_implication}
    \|\nu\|_1=\|\nu_{S^c}\|_1+\|\nu_{S}\|_1\leq 4\|\nu_{S}\|_1\leq 4\sqrt{s}\|\nu_S\|_2\leq 4\sqrt{s}\|\nu\|_2,
\end{align}
which shows the RHS of \eqref{eq:error_bound}.
On the other hand, by strong convexity of $\hat{\ell}$, we also have
\begin{align*}
     \lambda(\|\theta_0\|_1-\|\hat{\theta}\|_1)&\geq \hat{L}_n(\hat{\theta})-\hat{L}_n(\theta_0) && (\mbox{optimality of }\hat{\theta})\\
    &\geq \br{\partial_{\theta}\hat{L}_n(\theta_0), \hat{\theta}-\theta_0} + \frac{1}{2}\nu^\top\Sigma_n\nu && (\mbox{strong convexity of }\hat{L}_n)\\
    &\geq -\frac{\lambda}{2}\|\nu\|_1 +  \frac{1}{2}\nu^\top\Sigma_n\nu  &&(\mbox{following the exact argument above}),
\end{align*}
where $\Sigma_n=\frac{1}{n}\sum_{i=1}^n x_ix_i^\top$.
Again note that
\begin{align*}
    \|\theta_0\|_1-\|\hat{\theta}\|_1 = \|\theta_0\|_1-\|\theta_0+\nu\|_1\leq \|\nu\|_1.
\end{align*}
So we have $\nu^\top\Sigma_n\nu \leq 12\lambda\sqrt{s}\|\nu\|_2$,
which shows the LHS of \eqref{eq:error_bound}. Then by the assumption on the $\Sigma_n$ restricted convexity,  
 we have  w.p.~$1-\delta$, $\nu^\top\Sigma_n\nu\geq \gamma \|\nu\|_2^2$
for some $\gamma$. Putting it all together, we have,
\begin{align}
\label{eq:error_bound_2}
    \gamma \|\nu\|_2^2\leq  \nu^\top\Sigma_n\nu \leq 12 \lambda \sqrt{s} \|\nu\|_2 \implies \|\nu\|_2 \leq \frac{12\lambda\sqrt{s}}{\gamma} \implies \|\nu\|_1 \leq 4\sqrt{s} \|\nu\|_2 \leq \frac{48\lambda s}{\gamma},
\end{align}
concluding the proof.

\subsection{Proof of Corollary \ref{cor:plmm_normality_rate}}

\label{appendix:proof_hdmm_gaussian}
We now show Corollary \ref{cor:plmm_normality_rate}. Lemma \ref{lemma:partial_linear_model} shows that Model \ref{algo:plmdgp} satisfies Assumptions \ref{assump:exogeneity}, \ref{assump:linear_blip_function} \& \ref{assump:homoscedasticity}.
Under Assumption \ref{assump:overlap}, Corollary \ref{cor:consistency} shows that the $\tilde{\theta}_{i,j}^{(C)}$ in Algorithm \ref{algo:plmdgp_weights} satisfies $\|\tilde{\theta}_{i,j}^{(C)} - \theta^*\|_2 = O\bp{(i-1)^{-\lambda_\theta}}$ for the estimation rate $\lambda_\theta=\frac{1-L\alpha}{2}$
Lemma \ref{lemma:estimation_rate_plmm} shows that the exponents of the estimation rates for $g_j$ and $h_j$ are $\gamma_G = \min(\frac{1}{2}, \lambda_\theta)$ and $\lambda_h=\frac{1-2\alpha}{2}$.
Invoking Lemma \ref{lemma:estimate_f}, we have that:
\[
\|\hat f_{i,j}-f_j\|_{1,\infty} = O(i^{-\min(\gamma_\theta,\gamma_G,1/2)}) = O(i^{-\frac{1-L\alpha}{2}}).
\]
Invoking Lemma \ref{lemma:estimate_v}, we have that:
\[
\|\hat \nu_{i,j}-\nu_j\|_{1,\infty} = O(i^{-\min(\gamma_h,1/2)}) = O(i^{-\frac{1-2\alpha}{2}}).
\]
Finally, invoking Corollary \ref{cor:full_normality}, we have the uniform Gaussian approximation rate to be $\tilde O\big(L^2 n^{-\min(\frac{1-\alpha}{12}, \frac{\gamma(1-L\alpha)}{10}, \frac{1-(L+1)\alpha}{2})}\big)  $.

\section{Auxiliary Lemmas}

For a matrix $A$, let $\Tr(A)$ be the sum of diagonal entries of $A$.

\begin{lemma}
     \label{lemma:var_inequality}
    Let $\alpha,\beta\in\bbR^d$ be some vector-valued random variables, with $\bbE[\alpha]=\zero$ and $\|\beta\|_2\leq c$ for some universal constant $c$. We have
    \[
\bbE[(\alpha\beta^\top -\bbE[\alpha\beta^\top ]) (\alpha\beta^\top -\bbE[\alpha\beta^\top ])^\top]\preceq c^2 \Var(\alpha).
    \]
\end{lemma}
\begin{proof}
  For any $x\in\bbR^d$,  we have
    \begin{align*}
       &  x^\top \bbE[(\alpha\beta^\top -\bbE[\alpha\beta^\top ]) (\alpha\beta^\top -\bbE[\alpha\beta^\top ])^\top] x = x^\top \bbE[\alpha\beta^\top\beta\alpha^\top] x - x^\top \bbE[\alpha\beta^\top]\bbE[\beta\alpha^\top] x \\
    \leq & x^\top \bbE[\alpha\beta^\top\beta\alpha^\top] x = \bbE[(x^\top \alpha)^2\|\beta\|_2^2]\leq c^2\bbE[(x^\top \alpha)^2] = c^2x^\top \bbE[\alpha\alpha^\top]x = c^2x^\top\Var(\alpha)x,
    \end{align*}
completing the proof.
\end{proof}

\begin{lemma}
\label{lemma:frob_inequality}
    Let $A_i,B_i\in\bbR^{d\times d}$ for $i=1,\dots,n$. We have
    \[
   \bn{\sum_{i=1}^n A_iB_i^\top }_{Frob}^2 \leq n\bp{\sum_{i=1}^n \|A_i\|_{Frob}^2\|B_i\|_{Frob}^2} .
    \]
\end{lemma}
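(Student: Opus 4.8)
The plan is to reduce the bound to two elementary inequalities: the triangle inequality for the Frobenius norm together with its submultiplicativity, followed by a single application of the Cauchy--Schwarz inequality over the index $i$. Writing $S=\sum_{i=1}^n A_iB_i^T$, I would first apply the triangle inequality to obtain $\|S\|_{Frob}\le \sum_{i=1}^n \|A_iB_i^T\|_{Frob}$.

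The next step is to bound each summand by $\|A_i\|_{Frob}\|B_i\|_{Frob}$. This uses submultiplicativity of the Frobenius norm, $\|MN\|_{Frob}\le \|M\|_{Frob}\|N\|_{Frob}$, which itself follows from Cauchy--Schwarz applied entrywise: $\|MN\|_{Frob}^2=\sum_{p,r}(\sum_q M_{pq}N_{qr})^2 \le \sum_{p,r}(\sum_q M_{pq}^2)(\sum_q N_{qr}^2)=\|M\|_{Frob}^2\|N\|_{Frob}^2$, where the double sum factors because the two inner sums depend only on $p$ and on $r$ respectively. Taking $M=A_i$, $N=B_i^T$ and noting $\|B_i^T\|_{Frob}=\|B_i\|_{Frob}$ yields $\|A_iB_i^T\|_{Frob}\le \|A_i\|_{Frob}\|B_i\|_{Frob}$, so that $\|S\|_{Frob}\le \sum_{i=1}^n \|A_i\|_{Frob}\|B_i\|_{Frob}$.

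Finally, I would square both sides and apply Cauchy--Schwarz, equivalently the inequality $(\sum_{i=1}^n a_i)^2\le n\sum_{i=1}^n a_i^2$ with $a_i=\|A_i\|_{Frob}\|B_i\|_{Frob}$, to get $\|S\|_{Frob}^2 \le (\sum_{i=1}^n \|A_i\|_{Frob}\|B_i\|_{Frob})^2 \le n\sum_{i=1}^n \|A_i\|_{Frob}^2\|B_i\|_{Frob}^2$, which is the claimed bound.

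There is no serious obstacle here; the statement is a routine consequence of standard norm inequalities. The only point that warrants an explicit line of justification is the submultiplicativity of the Frobenius norm, and perhaps flagging that the factor of $n$ arises precisely from the dimension-counting in the final Cauchy--Schwarz step, which is tight exactly when all the quantities $\|A_i\|_{Frob}\|B_i\|_{Frob}$ are equal.
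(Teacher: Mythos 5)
Your proof is correct and follows essentially the same route as the paper: triangle inequality, then the elementary Cauchy--Schwarz bound $(\sum_i a_i)^2\le n\sum_i a_i^2$, then submultiplicativity of the Frobenius norm. The only cosmetic difference is that you justify $\|A_iB_i^T\|_{Frob}\le\|A_i\|_{Frob}\|B_i\|_{Frob}$ by entrywise Cauchy--Schwarz, whereas the paper routes the same fact through the trace identity $\|M\|_{Frob}^2=\Tr(MM^T)$ and its auxiliary inequality $\Tr(AB)\le\Tr(A)\Tr(B)$ for positive semi-definite matrices.
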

\begin{proof}
    We have
   \begin{align*}
        \bn{\sum_{i=1}^n A_iB_i^\top }_{Frob}^2 & \stackrel{(i)}{\leq} \bp{
        \sum_{i=1}^n \|A_iB_i^\top \|_{Frob}
        }^2\stackrel{(ii)}{\leq}n \bp{
        \sum_{i=1}^n \|A_iB_i^\top \|_{Frob}^2
        } \\
        &\stackrel{(iii)}{\leq}n \bp{
        \sum_{i=1}^n \Tr\bp{A_iB_i^\top B_iA_i^\top }
        }  = n \bp{
        \sum_{i=1}^n \Tr\bp{A_i^\top A_iB_i^\top B_i}
        } \\
        &\stackrel{(iv)}{\leq}n \bp{
        \sum_{i=1}^n \Tr\bp{A_i^\top A_i}\Tr\bp{B_i^\top B_i}
        }= n\bp{\sum_{i=1}^n \|A_i\|_{Frob}^2\|B_i\|_{Frob}^2}.
   \end{align*} 
where (i) is by triangular inequality, (ii) is by Cauchy-Schwartz inequality,  (iii) is by the definition of Frobenius norm, (iv) is by Lemma \ref{lemma:trace_inequality}.
\end{proof}

\begin{lemma}
    \label{lemma:eigenval_frob}
    Let $A, B$ be two positive definite matrices of dimension $d\times d$. Then all eigenvalues of $A^{-1}B-I$ are real number, and  denote these eigenvalues as $\{\lambda_1,\dots, \lambda_d\}$.
    Let $(\lambda_{\max}(A),\lambda_{\min}(A))$ be the largest and smallest eigenvalues of $A$ respectively.
    We further have $\sum_{i=1}^d \lambda_i^2\leq \frac{\lambda_{\max}(A)}{\lambda_{\min}(A)}\|A^{-1}B-I\|_{Frob}^2$.
\end{lemma}
\begin{proof}
We have:
\begin{align*}
    A^{-1}B-I  = 
    A^{-\frac{1}{2}}(A^{-\frac{1}{2}}BA^{-\frac{1}{2}} - I)A^{\frac{1}{2}}.
\end{align*}
Therefore $A^{-1}B-I$ is similar to matrix $C:=A^{-\frac{1}{2}}BA^{-\frac{1}{2}} - I$, which is symmetric and  has full real eigenvalues. As a result, $A^{-1}B-I$ also has the same real eigenvalues. Denote them as $\{\lambda_1,\dots, \lambda_d\}$. 
On the other hand, since $A$ is positive definite,  there exist an orthogonal matrix $U$ and a diagonal matrix $V$ with diagonal entries being the eigenvalues of $A$, such that $A = UVU^\top$.
We have:
\begin{align*}
    \|A^{-1}B-I\|_{Frob}^2 
    & = \Tr\bp{( A^{-\frac{1}{2}}C A^{\frac{1}{2}})( A^{-\frac{1}{2}}CA^{\frac{1}{2}})^\top} = \Tr\bp{( A^{-\frac{1}{2}}CA^{\frac{1}{2}})( A^{\frac{1}{2}}CA^{-\frac{1}{2}})}\\
    & = \Tr\bp{A^{-1}CAC} =  \Tr\bp{UV^{-1}U^\top CUVU^\top C}= \Tr\bp{V^{-1}U^\top CUVU^\top C U}\\
    & \geq \lambda_{\max}(A)^{-1} \Tr\bp{U^\top CUVU^\top C U} = \lambda_{\max}(A)^{-1} \Tr\bp{VU^\top C^2U}\\
    &\geq \lambda_{\max}(A)^{-1}\lambda_{\min}(A) \Tr\bp{U^\top C^2U} = \frac{\lambda_{\min}(A)}{\lambda_{\max}(A)}\Tr(C^2) = \frac{\lambda_{\min}(A)}{\lambda_{\max}(A)}\sum_{i=1}^d \lambda_i^2.
\end{align*}
\end{proof}

\begin{lemma}
    \label{lemma:trace_inequality}
    Let $A,B$ be positive semi-definite matrices. We have
    \[
    \Tr(AB)\leq \Tr(A)\Tr(B).
    \]
\end{lemma}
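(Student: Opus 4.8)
The plan is to exploit positive semi-definiteness through the operator-norm bound $A \preceq \Tr(A)\, I$ and the fact that the trace of a product of two PSD matrices is nonnegative. The crucial observation is that the claimed inequality is \emph{false} for general matrices, so every step must genuinely use the PSD hypothesis; there is no routine ``Cauchy--Schwarz on traces'' shortcut that avoids it.

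First I would record the elementary fact that for any two positive semi-definite matrices $M$ and $B$ one has $\Tr(MB)\geq 0$. This follows by writing $\Tr(MB)=\Tr(M^{1/2}BM^{1/2})$, where $M^{1/2}$ is the PSD square root of $M$; the matrix $M^{1/2}BM^{1/2}$ is itself PSD (it equals $C^T B C$ with $C=M^{1/2}$), hence its trace, being the sum of nonnegative eigenvalues, is nonnegative. Next I would establish the dominance relation $A \preceq \Tr(A)\, I$. Since $A$ is PSD its eigenvalues $\lambda_1,\dots,\lambda_d$ are all nonnegative, so the largest eigenvalue satisfies $\lambda_{\max}(A)\leq \sum_{k=1}^d \lambda_k = \Tr(A)$; equivalently $\Tr(A)\, I - A$ has only nonnegative eigenvalues and is therefore PSD.

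Combining the two steps finishes the argument: applying the nonnegativity fact with $M := \Tr(A)\, I - A \succeq 0$ and the given $B \succeq 0$ yields $\Tr\big((\Tr(A)\, I - A)B\big)\geq 0$, which rearranges to $\Tr(A)\Tr(B) - \Tr(AB) \geq 0$, i.e. $\Tr(AB)\leq \Tr(A)\Tr(B)$. An alternative route, which I would keep in reserve, is to diagonalize both matrices as $A=\sum_i \lambda_i u_i u_i^T$ and $B=\sum_j \mu_j v_j v_j^T$ with orthonormal eigenvectors, giving $\Tr(AB)=\sum_{i,j}\lambda_i\mu_j (u_i^T v_j)^2$ and then bounding each $(u_i^T v_j)^2\leq 1$ (valid since $\{v_j\}$ is orthonormal); because all $\lambda_i,\mu_j\geq 0$ the termwise bound is legitimate and the sum collapses to $(\sum_i\lambda_i)(\sum_j\mu_j)=\Tr(A)\Tr(B)$.

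Neither step presents a real obstacle; the only point requiring care is ensuring nonnegativity is invoked in exactly the places where it is needed—specifically that dropping the factor $(u_i^T v_j)^2\leq 1$ in the eigenvalue proof, or discarding the term $\Tr(AB)$ in the dominance proof, is sign-valid only because every scalar involved is nonnegative. I would present the dominance proof as the main argument for its brevity and its transparent use of the hypothesis.
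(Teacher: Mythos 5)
Your proof is correct, and both of your routes are valid; the main one you present is genuinely different from the paper's. The paper diagonalizes both matrices, writes $\Tr(AB)=\Tr(\Lambda_A\, U^TV\Lambda_BV^TU)$, and then in a single step asserts $\Tr(\Lambda_A M)\leq \Tr(\Lambda_A)\Tr(M)$ for the PSD matrix $M=U^TV\Lambda_BV^TU$ --- which is really the same inequality again, just with one factor now diagonal (where it follows because $\sum_i\lambda_i M_{ii}\leq(\sum_i\lambda_i)(\sum_i M_{ii})$ for nonnegative $\lambda_i$ and $M_{ii}$); the paper leaves that justification implicit. Your main argument instead establishes the operator dominance $A\preceq\Tr(A)\,I$ and the nonnegativity of $\Tr(MB)$ for PSD $M,B$, then concludes from $\Tr\bigl((\Tr(A)I-A)B\bigr)\geq 0$. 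This buys a fully self-contained proof in which every use of positive semi-definiteness is explicit, and it avoids the slightly circular flavor of the paper's reduction. Your reserve argument via $\Tr(AB)=\sum_{i,j}\lambda_i\mu_j(u_i^Tv_j)^2$ with $(u_i^Tv_j)^2\leq 1$ is essentially a rigorous completion of the paper's diagonalization approach. Either version is acceptable; nothing is missing.
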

\begin{proof}
    We can diagonalize $A,B$ as
    \[
    A= U\Lambda_AU^\top ,\quad B=V\Lambda_BV^\top ,
    \]
    where $U,V$ are orthogonal matrices, $\Lambda_A,\Lambda_B$ are diagonal matrices with nonnegative diagonal entries. We have
    \begin{align*}
        \Tr(AB) = & \Tr(U\Lambda_AU^\top V\Lambda_BV^\top ) = \Tr(\Lambda_AU^\top V\Lambda_BV^\top U)\\
        \leq &\Tr(\Lambda_A)\Tr(U^\top V\Lambda_BV^\top U) = \Tr(A)\Tr(B).
    \end{align*}
\end{proof}

\begin{lemma}
    \label{lemma:matrix_inequality}
    Let $\alpha,\beta\in\bbR^d$ be some vector-valued random variables, and $\|\beta\|_{2}\leq c$ for some universal constant $c$ almost surely, we have
    \[
    \bbE[\alpha\beta^\top ]\bbE[\beta\alpha^\top ] \preceq c^2\bbE[\alpha\alpha^\top ].
    \]
\end{lemma}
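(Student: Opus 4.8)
The plan is to reduce the claimed positive-semidefinite ordering to a pointwise (almost-sure) matrix bound via a matrix Jensen inequality, rather than by testing against individual vectors. Write $G:=\alpha\beta^T$, a random matrix in $\bbR^{d\times d}$, so that $\bbE[\alpha\beta^T]=\bbE[G]$ and $\bbE[\beta\alpha^T]=\bbE[G]^T$; the left-hand side of the claim is then exactly $\bbE[G]\bbE[G]^T$.

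First I would invoke the matrix Jensen inequality $\bbE[G]\bbE[G]^T\preceq\bbE[GG^T]$. This is immediate from the fact that the matrix $\bbE[(G-\bbE[G])(G-\bbE[G])^T]$ is positive semidefinite: expanding the product and using linearity gives $\bbE[GG^T]-\bbE[G]\bbE[G]^T\succeq\zero$. Next I would compute $GG^T=\alpha\beta^T\beta\alpha^T=\|\beta\|_2^2\,\alpha\alpha^T$, using that $\beta^T\beta=\|\beta\|_2^2$ is a scalar. Under the boundedness hypothesis (read as an almost-sure bound $\|\beta\|_2\le c$ on the Euclidean norm), for each realization we have the pointwise ordering $\|\beta\|_2^2\,\alpha\alpha^T\preceq c^2\,\alpha\alpha^T$, since $\alpha\alpha^T\succeq\zero$ and hence $(c^2-\|\beta\|_2^2)\alpha\alpha^T\succeq\zero$. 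Taking expectations, which preserves $\preceq$, yields $\bbE[GG^T]\preceq c^2\bbE[\alpha\alpha^T]$, and chaining this with the Jensen step gives $\bbE[\alpha\beta^T]\bbE[\beta\alpha^T]\preceq c^2\bbE[\alpha\alpha^T]$, as desired.

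The only genuine subtlety, and the main thing to get right, is the meaning of the hypothesis $\|\beta\|_\infty\le c$. The argument needs the bound to control the Euclidean norm $\|\beta\|_2$ (equivalently $\sup_{\|u\|_2=1}(u^T\beta)^2\le c^2$); interpreting $\|\beta\|_\infty$ as the $L^\infty$-norm of the random vector taken with respect to $\|\cdot\|_2$ makes this exact. If one instead read it as a coordinatewise supremum bound, the clean constant $c^2$ would degrade by a dimension factor, as a deterministic rank-one example with $\beta$ a constant all-ones vector shows; so I would state the norm interpretation explicitly at the outset.

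As an alternative route that avoids the matrix-Jensen step, I could fix a vector $v$, set the scalar $w:=\alpha^Tv$, and observe that $v^T\bbE[\alpha\beta^T]\bbE[\beta\alpha^T]v=\|\bbE[w\beta]\|_2^2$. Taking the unit vector $u:=\bbE[w\beta]/\|\bbE[w\beta]\|_2$ (the degenerate case $\bbE[w\beta]=\zero$ being trivial), I would write $\|\bbE[w\beta]\|_2=\bbE[w\,(u^T\beta)]$ and bound it by Cauchy--Schwarz as $\le c\,\bbE[w^2]^{1/2}$, using $(u^T\beta)^2\le\|\beta\|_2^2\le c^2$. Squaring and recognizing $\bbE[w^2]=v^T\bbE[\alpha\alpha^T]v$ reproduces the same inequality for every $v$, hence the operator ordering.
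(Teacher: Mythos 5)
Your main argument is correct and takes a genuinely different route from the paper. The paper proves the ordering by testing against an arbitrary vector $x$: it writes $x^T\bbE[\alpha\beta^T]\bbE[\beta\alpha^T]x=\sum_k\bbE[\{\beta\alpha^Tx\}_k]^2$, applies Jensen componentwise to get $\bbE[\|\beta\alpha^Tx\|_2^2]$, and then bounds $\|\beta\alpha^Tx\|_2^2=\|\beta\|_2^2(\alpha^Tx)^2$ by $c^2(\alpha^Tx)^2$ --- which is essentially your \emph{alternative} route via Cauchy--Schwarz on quadratic forms. Your primary route instead sets $G=\alpha\beta^T$ and chains the matrix Jensen inequality $\bbE[G]\bbE[G]^T\preceq\bbE[GG^T]$ (which is exactly the paper's own Lemma~\ref{lemma:trace_inequality_4} applied to $G$) with the pointwise identity $GG^T=\|\beta\|_2^2\,\alpha\alpha^T\preceq c^2\alpha\alpha^T$; this is cleaner, avoids coordinates entirely, and makes the structure of the bound transparent. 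You are also right to flag the norm interpretation: the step $\|\beta\|_2^2\le c^2$ is needed by \emph{both} proofs, and the paper's own line $\|\beta\alpha^Tx\|_2^2\le\|\beta\|_\infty^2\|\alpha^Tx\|_2^2$ is, under the coordinatewise reading of $\|\cdot\|_\infty$, off by a factor of $d$ (your all-ones $\beta$ example shows the constant $c^2$ cannot hold verbatim in that reading). So the lemma should either be read with $\|\beta\|_\infty$ denoting the essential supremum of $\|\beta\|_2$, or restated with constant $c^2d$; your proposal is the only one of the two that makes this explicit.
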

\begin{proof}
    It suffices to showing that for any $x\in\bbR^d$, there is
    \[
    x^\top  \bbE[\alpha\beta^\top ]\bbE[\beta\alpha^\top ]x\leq c^2x^\top \bbE[\alpha\alpha^\top ]x.
    \]
    We have 
    \begin{align*}
         x^\top  \bbE[\alpha\beta^\top ]\bbE[\beta\alpha^\top ]x =  \sum_{j=1}^d \bbE[\{\beta\alpha^\top x\}_j]^2\leq \sum_{j=1}^d \bbE[\{\beta\alpha^\top x\}_j^2] = \bbE\bb{
         \|\beta\alpha^\top x\|_2^2
         }\\
         \leq \bbE\bb{
         \|\beta\|_2^2\|\alpha^\top x\|_2^2
         }\leq c^2 \bbE\bb{
         \|\alpha^\top x\|_2^2
         } = c^2 x^\top  \bbE[\alpha\alpha^\top ]x.
    \end{align*}
\end{proof}

\begin{lemma}
\label{lemma:trace_inequality_2}
    Let $A,B$ be positive semi-definite matrices. Suppose $A\preceq B$, we have $\Tr(A)\leq \Tr(B)$.
\end{lemma}
\begin{proof}
    Since $A\preceq B$, we have $B-A$ is positive semi-definite and thus
    \[
    \Tr(B-A)\geq 0 \Rightarrow \Tr(A)\leq \Tr(B).
    \]
\end{proof}

\begin{lemma}
    \label{lemma:trace_inequality_3}
    Let $A,B$ be square  matrices with $B-A$ be positive semi-definite. It holds that $\Tr(A)\leq \Tr(B)$.
\end{lemma}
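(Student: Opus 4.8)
The plan is to reduce the claimed inequality to the single fact that a positive semi-definite matrix has nonnegative trace, exactly mirroring the short argument used for Lemma~\ref{lemma:trace_inequality_2}. The only structural difference here is that the hypotheses have been weakened: instead of assuming that $A$ and $B$ are each positive semi-definite with $A \preceq B$, we only assume that the difference $B - A$ is positive semi-definite. Since the argument never invokes anything about $A$ or $B$ individually, this weaker hypothesis suffices.

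Concretely, I would first use linearity of the trace to write $\Tr(B) - \Tr(A) = \Tr(B - A)$, so that the entire claim is equivalent to $\Tr(B-A) \geq 0$. For that remaining step I would invoke nonnegativity of the trace of a positive semi-definite matrix: for any positive semi-definite $M$, the $i$-th diagonal entry satisfies $M_{ii} = e_i^T M e_i \geq 0$, where $e_i$ denotes the $i$-th standard basis vector, and summing over $i$ yields $\Tr(M) = \sum_i M_{ii} \geq 0$. Applying this to $M = B - A$, which is positive semi-definite by hypothesis, gives $\Tr(B - A) \geq 0$, and hence $\Tr(A) \leq \Tr(B)$.

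There is no genuine obstacle in this proof; it is a one-line consequence of linearity of the trace together with the nonnegativity of diagonal quadratic forms of a positive semi-definite matrix. The only point worth flagging is the already-noted relaxation of hypotheses relative to Lemma~\ref{lemma:trace_inequality_2}, which is why a separate statement is recorded, but the proof itself is identical in substance.
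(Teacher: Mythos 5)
Your proof is correct and is essentially identical to the paper's: both reduce the claim via linearity of the trace to $\Tr(B-A)\geq 0$ and conclude from the nonnegativity of the trace of a positive semi-definite matrix. Your additional remark that each diagonal entry $e_i^T(B-A)e_i$ is nonnegative just makes explicit a step the paper leaves implicit.
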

\begin{proof}
$B-A$ is positive semi-definite, and so $\Tr(B-A)\geq 0$, or equivalently, $\Tr(A)\leq \Tr(B)$.
\end{proof}

\begin{lemma}
\label{lemma:trace_inequality_4}
    Let $A\in\bbR^{m\times n}$ be a random  matrix. It holds that $\bbE[(A-\bbE[A])(A-\bbE[A])^\top ]\preceq \bbE[AA^\top ]$.
\end{lemma}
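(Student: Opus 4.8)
The plan is to reduce the claimed semidefinite inequality to the elementary fact that an outer product of the form $\mu\mu^T$ is positive semi-definite. Write $\mu := \bbE[A] \in \bbR^{m\times n}$. The key computation is to expand the centered second moment and use linearity of expectation to collapse the cross terms.

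First I would expand the product inside the expectation on the left-hand side:
\[
    (A-\mu)(A-\mu)^T = AA^T - A\mu^T - \mu A^T + \mu\mu^T.
\]
Taking expectations and using linearity together with $\bbE[A]=\mu$, the two cross terms each contribute $-\mu\mu^T$, while the last (deterministic) term contributes $+\mu\mu^T$, so that
\[
    \bbE\big[(A-\mu)(A-\mu)^T\big] = \bbE[AA^T] - \mu\mu^T.
\]
Rearranging gives $\bbE[AA^T] - \bbE[(A-\mu)(A-\mu)^T] = \mu\mu^T$, so the desired inequality is equivalent to $\mu\mu^T \succeq 0$.

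The final step is to verify that $\mu\mu^T$ is positive semi-definite: for any $x\in\bbR^m$ we have $x^T\mu\mu^T x = \|\mu^T x\|_2^2 \geq 0$. Hence $\mu\mu^T \succeq 0$, which yields $\bbE[(A-\mu)(A-\mu)^T] \preceq \bbE[AA^T]$ and completes the proof.

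This statement is elementary and I do not anticipate a genuine obstacle; the only point requiring a moment's care is confirming that the matrix-valued mean $\mu$ gives a well-defined positive semi-definite outer product $\mu\mu^T$ (rather than a scalar variance), but this follows immediately from the quadratic-form argument above. No measurability or integrability issues arise beyond the implicit assumption that $A$ is square-integrable, which is needed for $\bbE[AA^T]$ to exist in the first place.
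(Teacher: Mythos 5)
Your proof is correct and follows essentially the same route as the paper: both expand the centered second moment to obtain $\bbE[(A-\bbE[A])(A-\bbE[A])^T]=\bbE[AA^T]-\bbE[A]\bbE[A]^T$ and conclude from positive semi-definiteness of the outer product $\bbE[A]\bbE[A]^T$. If anything your matrix-level version is slightly cleaner than the paper's quadratic-form computation, which writes the correction term as the scalar square $(x^T\bbE[A])^2$ (only literally valid for $n=1$) and contains a sign typo in the final inequality.
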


\begin{proof}
    It suffices to show that for any $x\in \bbR^m$: $x^\top  \bbE[(A-\bbE[A])(A-\bbE[A])^\top ]x\leq x^\top  \bbE[AA^\top ] x$.
    \begin{align*}
        x^\top  \bbE[(A-\bbE[A])(A-\bbE[A])^\top ]x 
        =~& x^\top \bbE[AA^\top ]x - x^\top  \bbE[A]\bbE[A]^\top  x = x^\top  \bbE[AA^\top ]x - (x^\top  \bbE[A])^2\\
        \geq~& x^\top  \bbE[AA^\top ] x
    \end{align*}
\end{proof}

\begin{lemma}
\label{lemma:eigenvalue_tri}
    Let $A$ be a block  upper triangular matrix. Then the eigenvalues of $A$ is the combination of eigenvalues of its  diagonal blocks. 
\end{lemma}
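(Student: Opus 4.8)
The plan is to reduce the eigenvalue statement to multiplicativity of the determinant across the diagonal blocks. Write $A$ as a $k\times k$ array of blocks with square diagonal blocks $A_{11},\dots,A_{kk}$ (of sizes $d_1,\dots,d_k$) and zero blocks below the diagonal. Eigenvalues are exactly the roots, counted with algebraic multiplicity, of the characteristic polynomial $p_A(\lambda)=\det(\lambda I-A)$. Since $\lambda I-A$ is again block upper triangular, with diagonal blocks $\lambda I_{d_j}-A_{jj}$, it suffices to establish the determinant identity
\[
\det(A)=\prod_{j=1}^k \det(A_{jj})
\]
and apply it to $\lambda I-A$. Granting this, $p_A(\lambda)=\prod_{j=1}^k p_{A_{jj}}(\lambda)$, so the multiset of roots of $p_A$ is the union of the multisets of roots of the $p_{A_{jj}}$; equivalently, the eigenvalues of $A$ are exactly the combination of the eigenvalues of its diagonal blocks, which is the claim.

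The substance is the determinant identity, which I would prove by induction on the number of blocks $k$, the case $k=1$ being trivial. For the inductive step it suffices to treat the two-block matrix
\[
M=\begin{bmatrix} B & C\\ \mathbf{0} & D\end{bmatrix},\qquad B\in\bbR^{p\times p},\ D\in\bbR^{q\times q},
\]
and then group $A$ as the leading block $A_{11}$ over the trailing matrix built from $A_{22},\dots,A_{kk}$, which is itself block upper triangular, so the induction hypothesis applies. For the two-block case I would use the Leibniz formula $\det(M)=\sum_\sigma \mathrm{sgn}(\sigma)\prod_i M_{i,\sigma(i)}$: because $M_{i,j}=0$ whenever $i>p$ and $j\le p$, any permutation contributing a nonzero term must send every index $i>p$ to some $\sigma(i)>p$, forcing $\sigma$ to restrict to a bijection of $\{p+1,\dots,p+q\}$ and hence also of $\{1,\dots,p\}$. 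Splitting the sum over these two independent permutations, and checking that the sign factorizes accordingly, yields $\det(M)=\det(B)\det(D)$.

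I expect no real obstacle here; this is standard linear algebra, and the only care needed is the bookkeeping in the inductive step and the matching of permutation signs in the two-block expansion. If one prefers to avoid the permutation argument, an alternative is the Schur-complement factorization of $M$, but that route requires invertibility of a diagonal block and then a continuity/density argument to cover the singular case, so the Leibniz-formula approach is cleaner and applies uniformly.
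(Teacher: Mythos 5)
Your proof is correct and follows essentially the same route as the paper: both reduce the claim to the factorization $\det(\lambda I - A)=\prod_j \det(\lambda I - A_{jj})$ for block upper triangular matrices. The only difference is that the paper asserts this determinant identity without proof, whereas you supply a (standard) justification via induction and the Leibniz formula.
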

\begin{proof}
    Let $\{A_{1}, \dots, A_{d}\}$ be the diagonal blocks of $A$. Then $\lambda$ is an eigenvalue of $A$ if and only if $\det(A-\lambda I)=0$. Note that
    \begin{align*}
        \det(A-\lambda I)=\prod_{i=1}^d\det(A_i-\lambda I).
    \end{align*}
    Then $\lambda$ must be an eigenvalue of one diagonal block of $A$.
\end{proof}


\begin{lemma}\label{lem:lower-eigenvalue}
    Consider two matrices $B$ and $B_0$ and let $\delta=B-B_0$. Then for any vector $x$, with $\|x\|_2=1$, we have:
    \begin{align}
        x^\top B^\top B x \geq \frac{1}{2} x^\top B_0^\top B_0 x - 2\,\|\delta\|_{Frob}^2
    \end{align}
\end{lemma}
\begin{proof}
By applying the Cauchy-Scwarz and the AM-GM inequalities, we have:
\begin{align*}
    x^\top B^\top B x =~& x^\top (B_0-\delta)^\top (B_0-\delta) x=~ x^\top B_0^\top B_0 x - 2 x^\top \delta^\top B_0 x + x^\top \delta^\top \delta x\\
    \geq~& x^\top B_0^\top B_0 x - 2 x^\top \delta^\top B_0 x\\
    \geq~& x^\top B_0^\top B_0 x - 2 \|\delta x\|_2 \, \|B_0 x\|_2 \tag{Cauchy-Schwarz}\\
    \geq~& x^\top B_0^\top B_0 x - 2 \|\delta x\|_2^2 -\frac{1}{2} \|B_0 x\|_2^2 \tag{AM-GM inequality: $\alpha\beta = \bp{\frac{\alpha}{\sqrt{\eta}}}\bp{\sqrt{\eta}\beta}\leq \frac{1}{2\eta}\alpha^2+\frac{\eta}{2}\beta^2$ for any $\eta>0$}\\
    =~& \frac{1}{2} x^\top B_0^\top B_0 x - 2 \|\delta x\|_2^2 \\\geq~& \frac{1}{2} x^\top B_0^\top B_0 x - 2 \|\delta\|_{Frob}^2.  \tag{For a matrix $A$, $\|A\|_2\leq \|A\|_{Frob}$}
\end{align*}
\end{proof}

\begin{lemma}\label{lem:high_probability_invertible}
    Let $A_n, B_n$ be positive semi-definite random matrix. If $A_n\succeq c\cdot I$ a.s., and that $\bbE[\|A_n-B_n\|_2]=O(n^{-\alpha})$, then $P(B_n\succeq\frac{c}{2}\cdot I) = 1- O(n^{-\alpha})$.
\end{lemma}
\begin{proof}
    Let $x$ be any unit vector with $\|x\|_2=1$. Then 
    \begin{align*}
            \|B_nx\|_2 \geq & \|A_n x\| - \|(A_n-B_n)x\| && \text{(Triangular inequality)}\\
            \geq &c - \|A_n-B_n\|_2 &&(A_n\succeq c\cdot I, a.s. \, \& \, \mbox{Definition of }\|\cdot\|_2).
    \end{align*}
    Thus $\lambda_{\min}(B_n)\geq c - \|A_n-B_n\|_2$.
    \begin{align*}
        1 - P\bp{B_n\succeq \frac{c}{2}\cdot I} = P\bp{\lambda_{\min}(B_n)\leq \frac{c}{2}} \\
        \leq P\bp{ c - \|A_n-B_n\|_2 \leq\frac{c}{2}}
        =P\bp{  \|A_n-B_n\|_2 \geq\frac{c}{2}} \leq \frac{2}{c}\bbE[  \|A_n-B_n\|_2] = O(n^{-\alpha}).
    \end{align*}

\end{proof}

\begin{lemma}\label{lemma:lower_bound_v}
Let $x\in\bbR^{p}$, $y\in\bbR^{d}$ be random vectors. Let $a=(a_1;\dots; a_{p})$ with $a_1,\dots, a_p\in \bbR^d$ be any given vector. Suppose that $\Var(y|x)\succeq c_y\cdot I$ and $\bbE[xx^\top]\succeq c_x \cdot I$. We have
\[
\bbE[\Var(a^\top (x\otimes y)|x)] \succeq c_xc_y\|a\|_2^2.
\]
\end{lemma}
\begin{proof}
    Let matrix $A:=(a_1,\dots,a_p)\in\bbR^{d\times p}$. We have 
    \begin{align*}
      a^\top (x\otimes y) = \bp{\sum_{i=1}^p x_ia_i}^\top y  = (Ax)^\top y.
    \end{align*}
    Thus 
    \begin{align*}
        \Var(a^\top (x\otimes y)|x) &= \Var((Ax)^\top y\mid x) = \bbE\bb{((Ax)^\top (y-\bbE[y|x]))^2\mid x}\\
        &=\bbE\bb{(Ax)^\top (y-\bbE[y|x])(y-\bbE[y|x])^\top Ax\mid x}\\
        &=(Ax)^\top\Var(y\mid x) (Ax)\geq c_y x^\top A^\top A x.
    \end{align*}
    Therefore, 
    \begin{align*}
       \bbE[\Var(a^\top (x\otimes y)|x)] &\geq c_y\bbE[ x^\top A^\top A x] = c_y \bbE[\Tr(x^\top A^\top A x)] = c_y\bbE[\Tr(A^\top A xx^\top )]\\
       &=c_y\Tr( \bbE[A^\top A xx^\top ])  = c_y\Tr( A^\top A\bbE[ xx^\top ]) = c_y\Tr(  A\bbE[ xx^\top ]A^\top)\\
      & \geq c_y\Tr(  A c_x\cdot I A^\top) =  c_xc_y \Tr(A^\top A)=c_xc_y \|a\|_2^2.
    \end{align*}
\end{proof}

\begin{lemma}\label{lemma:col_rank_matrix}
Let matrix $A\in\mathbf{R}^{m\times n}$ with $m\geq n$ have  full column rank. Let matrix $B\in\mathbf{R}^{m\times m}$ be a positive definite matrix. Then $A^\top B A$ is positive definite. 
\end{lemma}
\begin{proof}
We prove the result by contradiction. By construction, $A^\top B A$ is positive semi-definite with $B$ being positive definite. Suppose, for the sake of contradiction, that $A^\top B A$ is singular, with its smallest eigenvalue equal to zero. Then there exists a nonzero vector $x \in \mathbb{R}^n$ such that $A^\top B A x = \mathbf{0}$. This implies that:
\begin{align*}
& \|B^{1/2} A x\|_2^2 = x^\top A^\top B A x  = 0\\
 \quad \Rightarrow \quad & B^{1/2} A x = 0 \\
 \quad \Rightarrow \quad & Ax=0 \tag{$B^{1/2}$ is positive definite.}\\
 \quad \Rightarrow \quad & x=\zero \tag{$A$ has full column rank.}
\end{align*}
which leads to contradiction and thus  $A^\top B A$  is non-singular and positive definite.
\end{proof}

\begin{lemma}
    \label{lemma:sum_scalar_trace}
    Let $A=\frac{1}{n}\sum_{i=1}^n\bbE[\Delta_i \Sigma_i]$ with $\Delta_i$ being a scalar and $\Sigma_i$ being positive semi-definite of dimension $d\times d$. Then:
    \begin{equation}
        \|A\|_{Frob}\leq \sqrt{d}\frac{1}{n}\sum_i\bbE[|\Delta_i| \Tr(\Sigma_i)].
    \end{equation}
\end{lemma}
\begin{proof}
    By standard norm inequality, we have
    \[
    \|A\|_{Frob}\leq \sqrt{d}\|A\|_2.
    \]
    Moreover, by Jensen's inequality:
    \begin{align*}
        \|A\|_2 = \bn{\frac{1}{n}\sum_{i=1}^n \bbE[\Delta_i\Sigma_i]}_2\leq \frac{1}{n}\sum_{i=1}^n \bbE[\|\Delta_i\Sigma_i\|_2] = \frac{1}{n}\sum_{i=1}^n \bbE[|\Delta_i|\cdot\|\Sigma_i\|_2].
    \end{align*}
    For any $\Sigma_i$ being positive semi-definite, we have
    \begin{align*}
        \|\Sigma_i\|_2=\lambda_{d}(\Sigma_i)\leq \sum_{j=1}^d \lambda_j(\Sigma_i)=\Tr(\Sigma_i),
    \end{align*}
    where we use $\lambda_1(\Sigma_i)\leq \dots \leq \lambda_d(\Sigma_i)$ to denote the eigenvalues of $\Sigma_i$. Then:
  \begin{align*}
    &\|A\|_2\leq \frac{1}{n} \sum_{i=1}^n \bbE[|\Delta_i|\cdot\Tr(\Sigma_i)]\\
\Rightarrow & \|A\|_{Frob}\leq \sqrt{d} \frac{1}{n} \sum_{i=1}^n \bbE[|\Delta_i|\cdot\Tr(\Sigma_i)].
   \end{align*}
\end{proof}

\begin{lemma}[Theorem, \cite{bhatia2010modulus}]
    \label{lemma:contraction}
    There exists a positive number $c$ such that for any two $n\times n$ matrices $A$ and $B$ we have:
    \[
    \||A|-|B|\|_2 \leq c\log(n)\|A-B\|_2,
    \]
    where the matrix absolute value $|M|:=(M^\top M)^{1/2}$.
\end{lemma}

\begin{lemma}
    \label{lemma:clip}
    Let  $A, B$ be two symmetric  matrices of size $d\times d$. For any positive numbers $a<b$, we have:
    \[
    \bn{\Clip_{[a,b]}(A)- \Clip_{[a,b]}(B)}_2\lesssim\log(d)^2\|A-B\|_2,
    \]
    where the clipping operator is defined in \eqref{eq:clip_matrix}.
\end{lemma}
\begin{proof}
Given a symmetric matrix $M = U \Lambda U^\top$, where $U$ is an orthogonal matrix and $\Lambda = \text{diag}\{\lambda_1, \dots, \lambda_d\}$ is the diagonal matrix of eigenvalues, the matrix absolute value is defined as $|M| := (M^2)^{1/2}$.
Define its positive part as:
  \[
  M_+ := \frac{1}{2}\bp{|M|+M} = U\frac{1}{2}(\Lambda + |\Lambda|)U^\top = \frac{1}{2}\diag\{
  \max(\lambda_1,0), \dots,\max(\lambda_d,0)
  \}U^\top. 
  \]
  Using Lemma \ref{lemma:contraction}, for any symmetric matrices $M$, $N$,  we have:
  \begin{align}
      \|M_+-N_+\|_2 &= \bn{\frac{1}{2}\bp{|M|+M} - \frac{1}{2}\bp{|N|+N}}_2\nonumber \\
      &\leq \frac{1}{2}\bn{|M|-|N|}_2 + \frac{1}{2}\bn{M-N}_2 \tag{Triangular inequality}\nonumber\\
      & \leq\frac{1}{2}\cdot c \log(d)\bn{M-N}_2 + \frac{1}{2}\bn{M-N}_2\tag{by Lemma \ref{lemma:contraction}}\nonumber\\
      &=\frac{c\log(d)+1}{2}\bn{M-N}_2. \label{eq:matrix_positive}
  \end{align}
Now for the clipping floor at $a$, define:
\[
\text{floor}_a(M):=aI + (M-aI)_+.
\]
For any symmetric matrices $M$ and $N$ of size $d\times d$:
\begin{align}
    \bn{\text{floor}_a(M)-\text{floor}_a(N)}_2 &= \bn{(M-aI)_+ - (N-aI)_+}_2\nonumber \\
   & \leq  \frac{c\log(d)+1}{2} \bn{(M-aI)-(N-aI)}_2 \tag{by \eqref{eq:matrix_positive}}\nonumber\\
   &= \frac{c\log(d)+1}{2} \bn{M-N}_2.\label{eq:floor}
\end{align}
Similarly, for the clipping ceil at $b$, define
\[
\text{ceil}_b(M):=bI - (bI-M)_+.
\]
For any symmetric matrices $M$ and $N$ of size $d\times d$
\begin{align}
    \bn{\text{ceil}_b(M)-\text{ceil}_b(N)}_2 &= \bn{-(bI-M)_+ + (bI-N)_+}_2\nonumber \\
   & \leq  \frac{c\log(d)+1}{2} \bn{(bI-N)-(bI-M)}_2 \tag{by \eqref{eq:matrix_positive}}\nonumber\\
   &= \frac{c\log(d)+1}{2} \bn{M-N}_2.\label{eq:ceil}
\end{align}
Finally, note that $\Clip_{[a,b]}(M) =\text{ceil}_b(\text{floor}_a(M))$ for any symmetric matrix $M$. We thus have:
\begin{align*}
    \bn{
\Clip_{[a,b]}(A) - \Clip_{[a,b]}(B)
    }_2  &= \bn{
    \text{ceil}_b\bp{\text{floor}_a(A)} - \text{ceil}_b\bp{\text{floor}_a(B)}
    }\\
    &\leq \frac{c\log(d)+1}{2} \bn{\text{floor}_a(A)-\text{floor}_a(B)}_2 \tag{by \eqref{eq:ceil}}\\
    &\leq \bp{\frac{c\log(d)+1}{2}}^2 \|A-B\|_2 \tag{by \eqref{eq:floor}}.
\end{align*}
\end{proof}

\end{APPENDICES}

\end{document}